\definecolor{ao}{rgb}{0.0, 0.5, 0.0}
\DeclareSymbolFont{rsfs}{U}{rsfs}{m}{n}
\DeclareSymbolFontAlphabet{\mathscrsfs}{rsfs}
\numberwithin{equation}{section}
\newtheoremstyle{myexample} 
    {\topsep}                    
    {\topsep}                    
    {\rm }                   
    {}                           
    {\bf }                   
    {.}                          
    {.5em}                       
    {}  
\newtheoremstyle{myremark} 
    {\topsep}                    
    {\topsep}                    
    {\rm}                        
    {}                           
    {\bf}                        
    {.}                          
    {.5em}                       
    {}  
\newtheorem{claim}{Claim}[section]
\newtheorem{lemma}[claim]{Lemma}
\newtheorem{theorem}[claim]{Theorem}
\newtheorem{proposition}[claim]{Proposition}
\newtheorem*{theorem*}{Theorem}
\def\H{\mathcal{H}}
\def\L{\mathcal{L}}
\def\M{\mathcal{M}}
\def\R{\mathbb{R}}
\def\bB{{\boldsymbol B}}
\def\bA{{\boldsymbol A}}
\def\bx{{\boldsymbol x}}
\def\v{{\boldsymbol v}}
\def\u{{\boldsymbol u}}
\def\bGamma{{\boldsymbol \Gamma}}
\def\by{{\boldsymbol y}}
\def\bI{{\boldsymbol I}}
\def\0{{\boldsymbol 0}}
\def\1{{\boldsymbol 1}}
\def\x{{\boldsymbol x}}
\def\a{{\boldsymbol a}}
\def\b{{\boldsymbol b}}
\def\A{{\boldsymbol A}}
\def\B{{\boldsymbol B}}
\def\D{{\boldsymbol D}}
\def\U{{\boldsymbol U}}
\def\V{{\boldsymbol V}}
\def\Q{{\boldsymbol Q}}
\def\P{{\boldsymbol P}}
\def\T{{\boldsymbol T}}
\def\X{{\boldsymbol X}}
\def\Y{{\boldsymbol Y}}
\def\J{{\boldsymbol J}}
\def\bR{{\boldsymbol R}}
\def\bE{{\boldsymbol E}}
\def\Z{{\boldsymbol Z}}
\def\M{{\boldsymbol M}}
\def\N{{\boldsymbol N}}
\def\O{{\boldsymbol O}}
\def\u{{\boldsymbol u}}
\def\I{{\boldsymbol I}}
\def\g{{\boldsymbol g}}
\def\H{{\boldsymbol H}}
\def\1{{\boldsymbol 1}}
\def\bSigma{{\boldsymbol \Sigma}}
\def\Lam{{\boldsymbol\Lambda}}
\def\S{{\boldsymbol{S}}}
\def\E{{\mathbb{E}}}
\DeclareMathOperator*{\argmin}{arg\,min}
\newcommand{\tr}[1]{\mathrm{Tr}\left[ #1 \right]}
\newcommand{\norm}[1]{\left\lVert #1 \right\rVert}
\newcommand{\psin}[1]{\left\lVert #1 \right\rVert_{\psi_2}}
\newcommand{\abs}[1]{\left\lvert#1\right\rvert}
\newcommand\ind{{\ensuremath {\mathds 1} }}
\def\lam{\lambda}
\newcommand{\opn}[1]{\left\lVert #1 \right\rVert_{op}}
\newcommand{\opv}[1]{\left\lVert #1 \right\rVert_{2}}
\newcommand{\myfnsymbol}[1]{%
  \expandafter\@myfnsymbol\csname c@#1\endcsname
}
\newcommand{\@myfnsymbol}[1]{%
  \ifcase #1
  \or 1
  \or 2
  \or \TextOrMath{\textasteriskcentered}{*}
  \or \TextOrMath{\textdagger}{\dagger}
  \fi
}
\newcommand{\affiliationA}{\@myfnsymbol{1}}
\newcommand{\affiliationB}{\@myfnsymbol{2}}
\newcommand{\equalcontributor}{\@myfnsymbol{3}}
\newcommand{\correspondingA}{\@myfnsymbol{4}}
\title{Fundamental Limits of Two-layer Autoencoders,\\ and Achieving Them with Gradient Methods
}
\date{}
\author{%
  Alexander Shevchenko\textsuperscript{\affiliationA,\equalcontributor}
   \and
   Kevin Kögler
   \textsuperscript{\affiliationA,\equalcontributor}
  \and
  Hamed Hassani\textsuperscript{\affiliationB}
  \and
  Marco Mondelli\textsuperscript{\affiliationA}
 }
\begin{document}

\renewcommand{\thefootnote}{\myfnsymbol{footnote}}
\maketitle
\footnotetext[1]{Institute of Science and Technology Austria}%
\footnotetext[2]{Department of Electrical and Systems Engineering, University of Pennsylvania}%
\footnotetext[3]{Authors contributed equally. Corresponding authors: \texttt{alex.shevchenko@ist.ac.at}, \texttt{kevin.koegler@ist.ac.at}}%

\setcounter{footnote}{0}
\renewcommand{\thefootnote}{\arabic{footnote}}

\begin{abstract}
Autoencoders are a popular model in many branches of machine learning and lossy data compression. However, their fundamental limits, the performance of gradient methods and the features learnt during optimization remain poorly understood, even in the two-layer setting. In fact, earlier work has considered either linear autoencoders or specific training regimes (leading to vanishing or diverging compression rates). 
Our paper addresses this gap by focusing on non-linear two-layer autoencoders trained in the challenging proportional regime in which the input dimension scales linearly with the size of the representation. Our results characterize 
the minimizers of the population risk, and show that such minimizers are achieved by gradient methods; their structure is also unveiled, thus leading to a concise description of the features obtained via training. 
For the special case of a sign activation function, 
our analysis establishes the fundamental limits for the lossy compression of Gaussian sources via (shallow) autoencoders. Finally, while the results are proved for Gaussian data, numerical simulations on standard datasets display the universality of the theoretical predictions.
\end{abstract}

\section{Introduction}
Autoencoders represent a
key building block in many branches of machine learning \cite{kingmaauto, rezende2014stochastic}, including generative modeling \cite{bengio2013generalized} and representation learning \cite{tschannen2018recent}. Prompted by the fact that autoencoders learn succinct representations,
neural autoencoding techniques have also
achieved remarkable success in lossy data compression, even outperforming classical methods, such as jpeg \cite{Balle2017, Theis2017a, SoftToHardVQ}. However, despite the large body of empirical work considering neural autoencoders and compressors, the most basic theoretical questions remain poorly understood even in the shallow case: 

\vspace{0.4em}
\begin{center}   
\begin{minipage}{0.85\textwidth}
\textit{What are the fundamental performance limits
of autoencoders? Can we achieve such limits with gradient methods? What features does the optimization procedure learn?
}
\end{minipage}
\end{center}
\vspace{0.4em}

Prior work has focused either on linear autoencoders \cite{baldi1989neural,kunin2019loss,gidel2019implicit}, on the severely under-parameterized setting in which the input dimension is much larger than the number of neurons \cite{pmlr-v162-refinetti22a}, or on specific training regimes (lazy training \cite{nguyen2021benefits} and mean-field regime with a polynomial number of neurons \cite{nguyen2021analysis}), see Section \ref{sec:related} for more details. In contrast, in this  paper we consider \emph{non-linear} autoencoders trained in the \emph{challenging proportional regime}, in which the number of inputs to compress scales linearly with the size of the representation.
More specifically, we consider the prototypical model of a two-layer autoencoder 
\begin{equation}\label{eq:model}
\hat{\x}(\x) := \hat{\x}(\x,\A,\B) = \A \sigma(\B\x).
\end{equation}
Here, $\x\in\mathbb{R}^d$ is the input vector to compress, $\hat{\x}\in\mathbb{R}^n$ the reconstruction, $\B\in\mathbb{R}^{n\times d}$ the encoding matrix, and $\A\in\mathbb{R}^{d\times n}$ the decoding matrix; the activation $\sigma:\mathbb{R}\rightarrow\mathbb{R}$ is applied \emph{element-wise} on its argument. We aim at minimizing the population risk
\begin{equation}\label{eq:loss}
\mathcal{R}(\A,\B) := d^{-1}\E_{\x} \opv{\x - \hat{\x}(\x)}^2,
\end{equation}
where the expectation is taken over the distribution of the input $\x$. Our focus is on Gaussian input data, i.e., $\x \sim \mathcal{N}(\0,\boldsymbol{\Sigma})$.
When the activation $\sigma$ is the sign function, the encoder $\sigma(\B\x)$ can be interpreted as a \emph{compressor}, namely, it compresses the $d$-dimensional input signal into $n$ bits. The problem \eqref{eq:loss} of compressing a Gaussian source with quadratic distortion has been studied in exquisite detail in the information theory literature \cite{CoverThomas}, and the optimal performance for general encoder/decoder pairs is known through the so-called rate-distortion formalism which characterizes the lowest achievable distortion in terms of the rate $r = n/d$. Here, we focus on encoders and decoders that form the two-layer autoencoder \eqref{eq:model}: we study the fundamental limits of this learning problem,
as well as the performance achieved by commonly used gradient descent methods.

\begin{figure}[t]
\begin{tabular}{@{}cc@{}@{}cc@{}}
    \raisebox{-\height}{\includegraphics[width=0.35\textwidth]{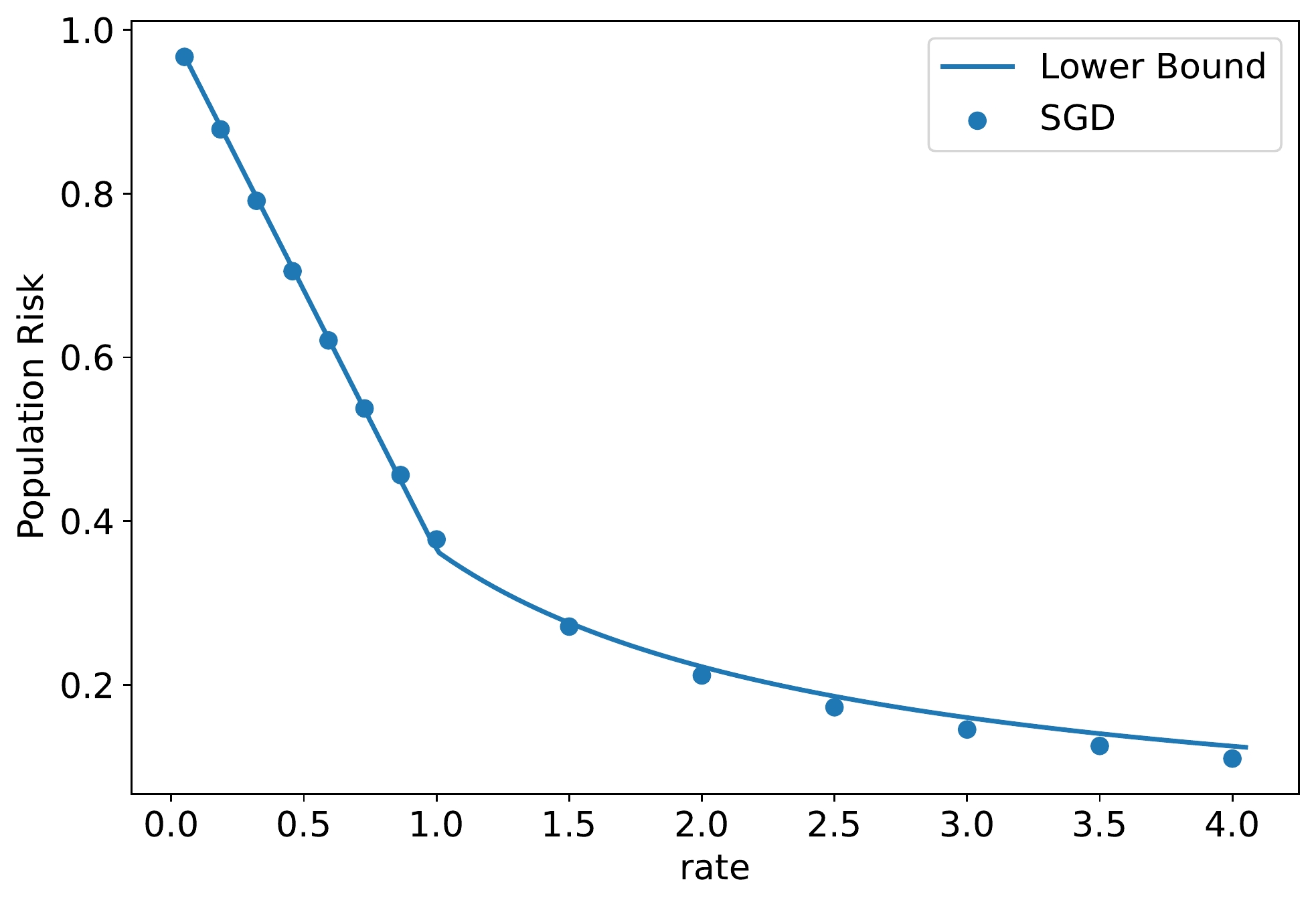}} & 
    \begin{tabular}[t]{@{}cc@{}}
        \raisebox{-\height}{\includegraphics[width=0.11\textwidth]{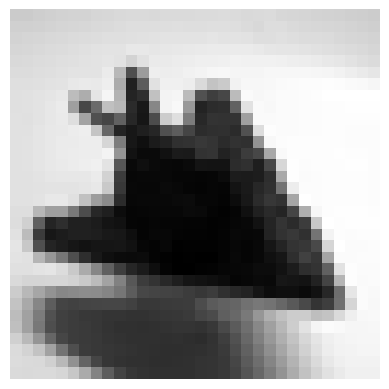}} &  \\[1cm]
        \raisebox{-\height}{\includegraphics[width=0.11\textwidth]{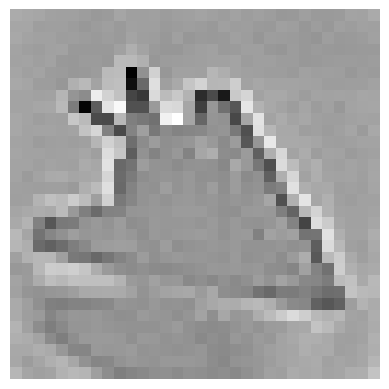}} & 
    \end{tabular}
    \raisebox{-\height}{\includegraphics[width=0.35\textwidth]{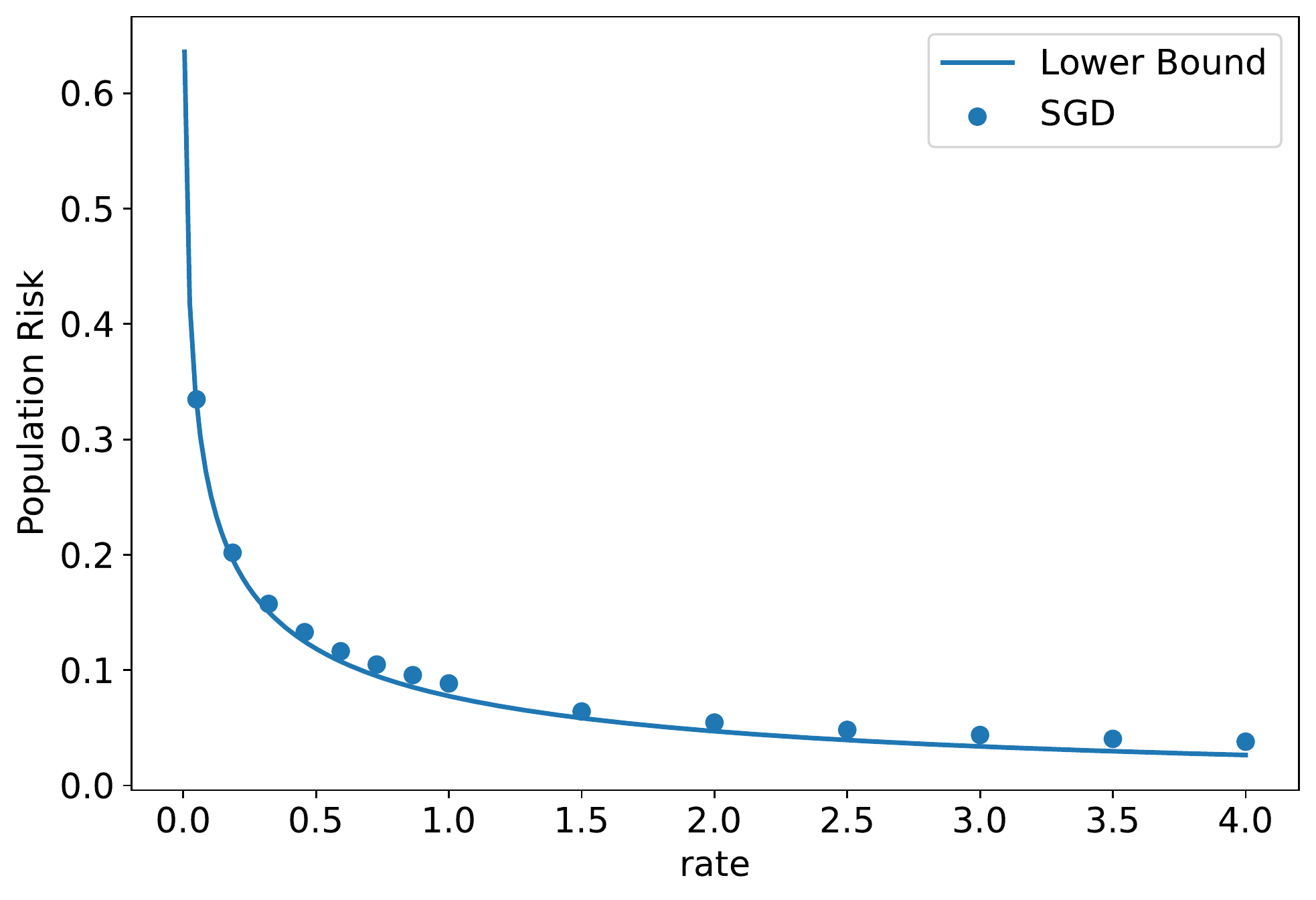}} & 
     \hspace{1.2em}\begin{tabular}[t]{@{}cc@{}}
        \raisebox{-\height}{\includegraphics[width=0.11\textwidth]{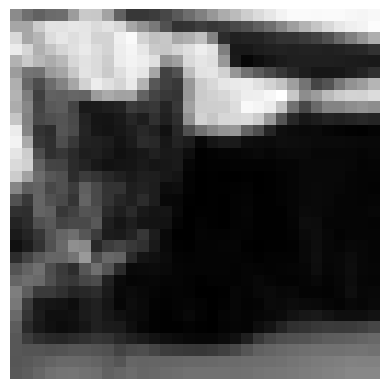}} &  \\[1cm]
        \raisebox{-\height}{\includegraphics[width=0.11\textwidth]{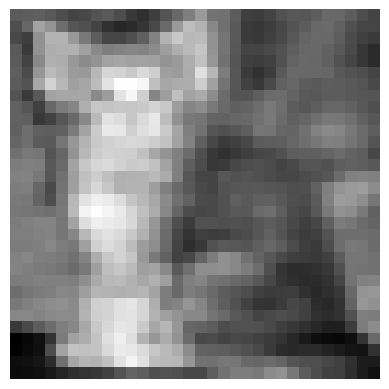}} & 
    \end{tabular}
\end{tabular}
\caption{ \textbf{Left plot.} Compression ($\sigma\equiv {\rm sign}$) of the grayscale CIFAR-10 ``airplane'' class with a two-layer autoencoder. The data is \emph{whitened} so that $\boldsymbol{\Sigma}=\bI$: on top, an example of a grayscale image; on the bottom, the corresponding whitening. The blue dots are the population risk obtained via SGD, and they agree well with the solid line corresponding to the lower bounds of Theorem \ref{thm:tightlb_lowrate} and Proposition \ref{prop:highratelb}. \textbf{Right plot.} Compression ($\sigma\equiv {\rm sign}$) of the grayscale CIFAR-10 ``cat'' class with a two-layer autoencoder. The data is \emph{not whitened} ($\boldsymbol{\Sigma}\neq \bI$). The blue dots are the SGD population risk, and they are close to the lower bound of Theorem \ref{thm:wtD_lb}.\vspace{-1.2em}}

\label{fig:cifar_white}
\end{figure}

\paragraph{Main contributions.} Taken all together, our results show that, for two-layer autoencoders, gradient descent methods achieve a global minimizer of the population risk: this is rigorously proved in the isotropic case ($\boldsymbol{\Sigma}=\bI$) and corroborated by numerical simulations for a general covariance $\boldsymbol{\Sigma}$. Furthermore, we unveil the structure of said minimizer: for $\boldsymbol{\Sigma}=\bI$, the optimal decoder has unit singular values; for general covariance, the spectrum of the decoder exhibits the same block structure as $\bSigma$, and it can be explicitly obtained from $\bSigma$ via a water-filling criterion; in all cases, weight-tying is optimal, i.e., $\A$ is proportional to $\B^\top$. Specifically, our technical results can be summarized as follows.

\begin{itemize}[leftmargin=0.1in]
    \item Section \ref{subsec:lb} characterizes the minimizers of the risk \eqref{eq:loss} for isotropic data: 
    Theorem \ref{thm:tightlb_lowrate} provides a tight lower bound, which is achieved by the set \eqref{haar:min} of weight-tied \emph{orthogonal} matrices, when the compression rate $r=n/d\le 1$; for $r>1$, Propositions \ref{prop:highratelb} and \ref{prop:highrate_min} give a lower bound, which is approached (as $d\to\infty$) by the set \eqref{eq:defoptB} of weight-tied \emph{rotationally invariant} matrices. 
     
    \item Section \ref{subsection:opt_results} shows that the above minimizers are reached by gradient descent methods for $r\le 1$: Theorem \ref{thm:wt_gf} shows linear convergence of \emph{gradient flow} for general initializations, under a weight-tying condition; Theorem \ref{thm:gd_min} considers a Gaussian initialization and proves global convergence of the \emph{projected gradient descent} algorithm, in which the encoder matrix $\B$ is optimized via a gradient method and the decoder matrix $\A$ is obtained directly via linear regression.

\item Section \ref{section:general_cov} focuses on data with general covariance $\boldsymbol{\Sigma}\neq \bI$. We observe that experimentally weight-tying is optimal and then derive the corresponding lower bound (see Theorem \ref{thm:wtD_lb}), which is also asymptotically achieved (as $d\to\infty$) by rotationally invariant matrices with a carefully designed spectrum (depending on $\boldsymbol{\Sigma}$), see Proposition \ref{prop:Dlb_achievability}.
\end{itemize}

When $\sigma \equiv {\rm sign}$, 
our analysis characterizes the fundamental limits of the lossy compression of a Gaussian source via two-layer autoencoders. Remarkably, if we restrict to a certain class of \emph{linear encoders} for compression, two-layer autoencoders achieve optimal performance \cite{tulino2013support}, which can be generally obtained via a message passing decoding algorithm \cite{rangan2019vector}. However, for \emph{general encoder/decoder pairs}, shallow autoencoders fail to meet the information-theoretic bound given by the rate-distortion curve, see
Section \ref{sec:discussion}. 

Going beyond the Gaussian assumption on the data, we provide numerical validation to our theoretical predictions on standard datasets, both in the isotropic case and for the general covariance (see Figure \ref{fig:cifar_white}). Additional numerical results -- together with the details of the experimental setting -- are in Appendix \ref{appendix:numerics}.

\paragraph{Proof techniques.} 
The lower bound on the population risk of Theorem \ref{thm:tightlb_lowrate} comes from a sequence of relaxations of the objective function, which eventually allows to apply a trace inequality. For $r\ge 1$, Proposition \ref{prop:highratelb} crucially exploits an inequality for the Hadamard product of PSD matrices \cite{khare2021sharp}, and the asymptotic achievability of Proposition \ref{prop:highrate_min} takes advantage of concentration-of-measure tools for orthogonal matrices. The key quantity in the analysis of gradient methods is the encoder Gram matrix at iteration $t$, i.e., $\B(t)\B(t)^\top$. In particular, for gradient flow (Theorem \ref{thm:wt_gf}), due to the weight-tying condition, tracking $\log \det \B(t)\B(t)^\top$ leads to a quantitative convergence result. However, when the weights are not tied, this quantity does not appear to decrease along the optimization trajectory. Thus, for projected gradient descent (Theorem \ref{thm:gd_min}), the idea is to decompose $\B(t)\B(t)^\top$ into \emph{(i)} its value at the optimum (given by the identity), \emph{(ii)} the contribution due to the spectrum evolution (keeping the eigenbasis fixed), and \emph{(iii)} the 
change in the eigenbasis. Via a sequence of careful approximations, we are able to show that the term \emph{(iii)} vanishes. Having obtained that, we can study explicitly the evolution of the spectrum and obtain the desired convergence.

\section{Related Work}\label{sec:related}

\paragraph{Theory of autoencoders.} A popular line of work has focused on two-layer \emph{linear} autoencoders: \cite{oftadeh2020eliminating} analyzes the loss landscape; 
 \cite{kunin2019loss} shows that the minimizers of the regularized loss recover the principal components of the data and, notably, the corresponding autoencoder is weight-tied; \cite{bao2020regularized} 
proves that
stochastic gradient descent -- after a slight perturbation --  escapes the saddles and eventually converges; \cite{gidel2019implicit} also analyzes the gradient dynamics and characterizes the time-steps at which the  network learns different sets of features. 
 \cite{rangamani2018sparse,nguyen2019dynamics} prove local convergence for weight-tied two-layer ReLU autoencoders. \cite{nguyen2021benefits} focuses on the lazy training regime \cite{chizat2019lazy,jacot2018neural} and bounds the over-parameterization needed for global convergence. \cite{radhakrishnan2020overparameterized} takes a dynamical systems perspective and shows that over-parameterized autoencoders learn solutions that are contractive around the training examples. The latent spaces of autoencoders are studied in \cite{jain2021mechanism}, where it is shown that such latent spaces can be aligned by stretching along the left singular vectors of the data. More closely related to our work, \cite{nguyen2021analysis} and \cite{pmlr-v162-refinetti22a} track the gradient dynamics of \emph{non-linear} two-layer autoencoders via the mean-field PDE and a system of ODEs, respectively. However, these analyses are restricted to diverging and vanishing rates: \cite{nguyen2021analysis}
 considers 
 weight-tied autoencoders with polynomially many neurons in the input dimension (so that $r\to\infty$); \cite{pmlr-v162-refinetti22a} considers the other extreme regime in which the input dimension diverges (so that $r\to 0$).

\paragraph{Neural compression.}
In recent years, compressors based on neural networks have been able to 
outperform traditional schemes 
on real-world data in terms of minimizing distortion and producing visually pleasing reconstructions at reasonable complexity \cite{Balle2017, Theis2017a, SoftToHardVQ, NTC}. These methods typically use an autoencoder architecture with quantization of the latent variables, which is trained over samples drawn from the source. More recently, other architectures such as attention models or diffusion-based models have been incorporated into neural compressors \cite{cheng2020learned, liu2019non, yang2022lossy,theis2022lossy}, and improvements have been observed.  We refer to \cite{yang2022introduction} for a detailed review on this topic.  Given the remarkable success of neural compressors, it is imperative to understand the fundamental limits of compression using neural architectures. In this regard, \cite{Wagner2021NeuralNO} considers a highly-structured and  low-dimensional random process, dubbed the \emph{sawbridge}, and 
 shows numerically that the rate-distortion function 
 is achieved by a compressor based on deep neural networks trained via stochastic gradient descent. In contrast, our work considers Gaussian sources, which are high-dimensional in nature, and provides the fundamental limits of compression when two-layer autoencoders are used. Our results also imply that two-layer autoencoders cannot achieve the rate-distortion limit on Gaussian data, see Section \ref{sec:discussion}.

\paragraph{Rate-distortion formalism.} 
 Lossy compression of stationary sources is a classical problem in information theory, and 
 several approaches have been proposed, including vector quantization \cite{gray1984vector}, or the usage of powerful channel codes 
 \cite{korada2010polar,ciliberti2006message,wainwright2010lossy}.
 The rate-distortion function  characterizes the optimal trade-off between error and size of the representation for the compression of an i.i.d. source \cite{shannon48, shannonRD, CoverThomas}. However,  computing the rate-distortion function is by itself a challenging task. The Blahut-Arimoto scheme \cite{blahut, arimoto} provides a systematic approach, but it suffers from the issue of scalability \cite{lei2022neural}. Consequently, to compute the rate-distortion of empirical datasets, approximate methods based on generative modeling have been proposed \cite{yang2021towards, lei2022neural}. 

\paragraph{Non-linear inverse problems.} The task of estimating a signal $\x$ from non-linear measurements $\by = \sigma(\B \x)$ has appeared in many areas, such as 1-bit compressed sensing where $\sigma(z) = \text{sign}(z)$ \cite{boufounos20081}, or phase retrieval where $\sigma(z) = |z|$ \cite{candes2013phaselift, candes2015phase}.  While the focus of these problems is different from ours (e.g., compressed sensing has often an additional sparsity assumption), the ideas and proof techniques developed in this paper might be beneficial to characterize the fundamental limits and the performance of gradient-based methods for general inverse reconstruction tasks, see e.g.  \cite{ma2021analysis, matsumoto2022binary}.

\section{Preliminaries}

\paragraph{Notations.} We use plain symbols for real numbers (e.g., $a,b$), bold symbols for vectors (e.g., $\boldsymbol{a}, \boldsymbol{b}$), and capitalized bold symbols for matrices (e.g., $\boldsymbol{A}, \boldsymbol{B}$). We define $[n] = \{1, \ldots, n\}$, denote by $\boldsymbol{I}$ the identity matrix and by $\1$ the column vector containing ones.
Given a matrix $\A$, we denote its operator norm by $\opn{\A}$ and its Frobenius norm by $\|\A\|_F$.
Given two matrices $\A$ and $\B$ of the same shape, we denote their element-wise (Hadamard/Schur) product by $\A \circ \B$ and the $k$-th element-wise power by $\A^{\circ k}$. We write $L^2(\mathbb{R}, \mu)$ for the space of $L^2$ integrable functions on $\mathbb{R}$ w.r.t. the standard Gaussian measure $\mu$ and $h_k(x)$ for the $k$-th normalized Hermite polynomial (see e.g. \cite{o2014analysis}).

\paragraph{Setup.} We consider the two-layer autoencoder \eqref{eq:model} and aim at minimizing the population risk \eqref{eq:loss} for a given rate $r = n/d$. In particular, we provide tight lower bounds on the minimum of the population risk computed on Gaussian input data with covariance $\boldsymbol{\Sigma}$, i.e.,
\begin{equation}\label{eq:mmse}
    \widehat{\mathcal{R}}(r) := \min_{\A,\B} \mathcal{R}(\A, \B),
\end{equation}
In the isotropic case ($\boldsymbol{\Sigma}=\boldsymbol{I}$), our results hold for any odd activation $\sigma \in L^2(\mathbb{R}, \mu)$ after restricting the rows of the encoding matrix $\B$ to have unit norm. We remark that, when $\sigma(x)={\rm sign}(x)$, 
the restriction is unnecessary since the activation is homogeneous.\footnote{We say that a function $\sigma$ is homogeneous if there exists an integer $k$ s.t. $\sigma(\alpha x) = \alpha^k \sigma(x)$ for all $\alpha \neq 0$.}
We also note that restricting the norms of the rows of $\B$ prevents the model from entering the ``linear'' regime.
In fact, when $\|\B\|_{F} \approx 0$, by linearizing the activation around zero, \eqref{eq:model} reduces to the linear model
$
\hat{\x}(\x) \approx \A\B \x,
$
which exhibits a PCA-like behaviour.
 For general covariance $\boldsymbol{\Sigma}$, we consider odd homogeneous activations, which includes the sign function and monomials of arbitrary odd degree.

Any function $\sigma \in L^2(\mathbb{R}, \mu)$ admits an expansion in terms of Hermite polynomials. This allows to perform Fourier analysis in the Gaussian space $L^2(\mathbb{R}, \mu)$, and it provides a natural tool because of the Gaussian assumption on the data. In particular, for odd $\sigma$, only odd Hermite polynomials occur, i.e.,
\begin{equation}\label{eq:hermite_expansion}
    \sigma(x) = \sum_{\ell=0}^\infty c_{2\ell+1} h_{2\ell + 1}(x),
\end{equation}
where $\{c_{\ell}\}_{\ell\in \mathbb N}$ denote the Hermite coefficients of $\sigma$.
We also
consider the following auxiliary quantity
\begin{equation}\label{eq:mmse_restricted}
    \widetilde{\mathcal{R}}(r) := \min_{\A,\|(\B\D)_{i,:}\|_2=1} \mathcal{R}(\A, \B),
\end{equation}
that defines a minimum of the population risk for the autoencoder \eqref{eq:model} with a certain norm constraint on the encoder weights $\B$. Here, $\D$ contains the square roots of the eigenvalues of $\boldsymbol{\Sigma}$ (i.e., $\boldsymbol{\Sigma} = \U \D^2 \U^\top$ for an orthogonal matrix $\U$), and $(\B\D)_{i,:}$ stands for the $i$-th row of the matrix $\B\D$.
A few remarks about the restricted population risk \eqref{eq:mmse_restricted} are in order. First of all, if $\sigma$ is homogeneous, the minimum of the restricted population risk \eqref{eq:mmse_restricted} and of the unconstrained one \eqref{eq:mmse} coincide (see Lemma \ref{lemma:closed_from_of_population_risk} and Lemma \ref{lemma:popriskD}). Thus, in this case, the analysis of $\widetilde{R}(r)$ will directly provide results on the quantity of interest, i.e., $\widehat{\mathcal{R}}(r)$. The technical advantage of analysing \eqref{eq:mmse_restricted} over \eqref{eq:mmse} comes from fact that the expectation with respect to the Gaussian inputs, which arises in the constrained objective, can be explicitly computed via the reproducing property of Hermite polynomials (see, e.g., \cite{o2014analysis}). To exploit this reproducing property, it is crucial that the inner products
$\langle\B_{i,:}, \x\rangle$ have the same scale, which is ensured by picking $\|(\B\D)_{i,:}\|_2 = 1$. The sole dependence of the constraint on the spectrum $\D$ (and, not on a particular choice of $\U$) stems from the rotational invariance of the isotropic Gaussian distribution.

\section{Main Results}\label{section:iso}

In this section, we consider isotropic Gaussian data, i.e., $\boldsymbol{\Sigma} =  \D=\boldsymbol{I}$. First, we derive a closed form expression for the population risk in Lemma \ref{lemma:closed_from_of_population_risk}. Then, in Theorem \ref{thm:tightlb_lowrate} we give a lower bound on the population risk for $r \leq 1$ and provide a complete characterization of the autoencoder parameters $(\A,\B)$ achieving it. Surprisingly, the minimizer exhibits a \emph{weight-tying} structure and the corresponding matrices are \emph{rotationally invariant}. Later, in Proposition \ref{prop:highratelb} we derive an analogous lower bound for $r > 1$. While it is hard to characterize the minimizer structure explicitly for a finite input dimension $d$ (and $r>1$), we provide a sequence $\{(\A_d,\B_d)\}_{d\in \mathbb N}$ that meets the lower bound in the high-dimensional limit ($d \rightarrow \infty$), see Proposition \ref{prop:highrate_min}. Notably, the elements of this sequence share the key features (weight-tying, rotational invariance) of the minimizers for $r \leq 1$. 
In Section \ref{subsection:opt_results} we describe gradient methods that provably achieve the optimal value of the population risk. Specifically, we consider gradient flow under a weight-tying constraint and projected (on the sphere) gradient descent with Gaussian initialization. The corresponding results are stated in Theorem \ref{thm:wt_gf} and Theorem \ref{thm:gd_min}.

We start by expanding $\sigma$ in a Hermite series and applying the reproducing property of Hermite polynomials to obtain 
a closed-form expression for the population risk.  This is summarized in the following lemma, which is proved in  Appendix \ref{app:cf}.

\begin{lemma}\label{lemma:closed_from_of_population_risk} Consider any odd $\sigma \in L^2(\mathbb{R},\mu)$ and its Hermite expansion given by \eqref{eq:hermite_expansion}. Then, $\widetilde{\mathcal{R}}(r)$ is equal to
\begin{equation}  \label{eq:poprisklemma}  
\min_{\A,\|\B_{i, :}\|_2=1}\frac{1}{d}\left(\tr{\A^\top \A f(\B\B^\top)} - 2c_1\cdot\tr{\B\A}\right) + 1,
\end{equation}
where $f(x):= \sum_{\ell=0}^\infty (c_{2\ell + 1})^2 x^{2\ell + 1}$ is applied element-wise. In particular, if $\sigma(x) = \mathrm{sign}(x)$, then $f(x) = c_1^2 \cdot \arcsin(x)$ and $c_1 = \sqrt{2/\pi}$. Moreover, for any homogeneous $\sigma$, we have that $\widehat{\mathcal{R}}(r) = \widetilde{\mathcal{R}}(r)$.
\end{lemma}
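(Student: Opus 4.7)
The plan is to compute the population risk in closed form by expanding the squared reconstruction error, taking the Gaussian expectation term by term using the Hermite expansion \eqref{eq:hermite_expansion}, and then handling the homogeneous case by a rescaling argument.

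First, I would expand
\begin{equation*}
\|\x - \A\sigma(\B\x)\|_2^2 = \|\x\|_2^2 - 2\x^\top \A\sigma(\B\x) + \sigma(\B\x)^\top \A^\top\A\,\sigma(\B\x),
\end{equation*}
take expectations, and treat the three terms separately. In the isotropic setting $\E\|\x\|_2^2 = d$, which gives the additive $+1$ after dividing by $d$. For the quadratic term, write the rows of $\B$ as unit vectors $\bb_i$. Since each $\bb_i$ has unit norm, the pair $(\langle\bb_i,\x\rangle,\langle\bb_j,\x\rangle)$ is jointly Gaussian with correlation $\rho_{ij}=\bb_i^\top\bb_j$. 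The reproducing (Mehler) property of normalized Hermite polynomials gives $\E[h_k(\langle\bb_i,\x\rangle)h_m(\langle\bb_j,\x\rangle)] = \delta_{km}\rho_{ij}^{k}$, so using the odd Hermite expansion of $\sigma$,
\begin{equation*}
\E\bigl[\sigma(\langle\bb_i,\x\rangle)\sigma(\langle\bb_j,\x\rangle)\bigr] = \sum_{\ell=0}^\infty c_{2\ell+1}^2\rho_{ij}^{2\ell+1} = f(\rho_{ij}),
\end{equation*}
i.e.\ $\E[\sigma(\B\x)\sigma(\B\x)^\top] = f(\B\B^\top)$ element-wise.

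For the cross term, fix $j$ and decompose $\x = \bb_j\langle\bb_j,\x\rangle + \x^{\perp}$ with $\x^{\perp}\!\perp\!\langle\bb_j,\x\rangle$; thus $\E[x_i\sigma(\langle\bb_j,\x\rangle)] = (\bb_j)_i\E[Z\sigma(Z)] = c_1(\bb_j)_i$, since $\E[Z\sigma(Z)] = c_1\E[h_1(Z)^2] = c_1$. This yields $\E[\sigma(\B\x)\x^\top] = c_1\B$, so $\E[\x^\top\A\sigma(\B\x)] = c_1\tr{\B\A}$. Combining the three terms and dividing by $d$ produces \eqref{eq:poprisklemma}. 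For $\sigma=\mathrm{sign}$, the identity $f(x) = c_1^2\arcsin(x)$ with $c_1=\sqrt{2/\pi}$ follows from the classical Grothendieck/arcsine formula $\E[\mathrm{sign}(Z_1)\mathrm{sign}(Z_2)] = (2/\pi)\arcsin(\rho)$ and the computation $c_1 = \E[|Z|] = \sqrt{2/\pi}$ for $Z\sim\mathcal N(0,1)$.

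Finally, to show $\widehat{\mathcal{R}}(r) = \widetilde{\mathcal{R}}(r)$ for homogeneous $\sigma$, I would absorb the row norms of $\B$ into $\A$: given an arbitrary $\B$ with row norms $\lambda_i$, write $\B=\Lam\tilde\B$ where $\Lam=\mathrm{diag}(\lambda_i)$ and $\tilde\B$ has unit rows. If $\sigma(\alpha x) = \alpha^k\sigma(x)$, then $\sigma(\B\x) = \Lam^k\sigma(\tilde\B\x)$, so $\A\sigma(\B\x) = (\A\Lam^k)\sigma(\tilde\B\x)$. Thus any unconstrained pair $(\A,\B)$ is equivalent to a constrained pair $(\A\Lam^k,\tilde\B)$ achieving the same risk, proving $\widehat{\mathcal{R}}(r)\ge\widetilde{\mathcal{R}}(r)$; the reverse inequality is trivial.

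The only minor obstacle is being careful with normalization conventions for Hermite polynomials (the normalized choice gives $h_1(x)=x$ with $\E h_k h_m = \delta_{km}$, which is exactly what makes the identities $\E[Z\sigma(Z)]=c_1$ and $\E[\sigma(\langle\bb_i,\x\rangle)\sigma(\langle\bb_j,\x\rangle)] = f(\rho_{ij})$ come out cleanly); nothing else in the argument requires more than standard Gaussian integration by parts.
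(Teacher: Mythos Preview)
Your proposal is correct and follows essentially the same approach as the paper: expand the squared error, handle the quadratic term via the Hermite reproducing property to obtain $f(\B\B^\top)$, handle the cross term by Gaussian conditioning to extract $c_1$, invoke Grothendieck's identity for the sign case, and absorb row norms into $\A$ for the homogeneous case. The only cosmetic difference is that the paper writes out the $2\times 2$ covariance of $(\langle\b_j,\x\rangle, x_i)$ explicitly, whereas you use the equivalent orthogonal decomposition $\x = \bb_j\langle\bb_j,\x\rangle + \x^{\perp}$; both yield $\E[x_i\sigma(\langle\bb_j,\x\rangle)] = c_1(\bb_j)_i$ in one line.
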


Note that, if $c_1=0$, it is easy to see that the minimum of $\widetilde{R}(r)$ equals $1$ and it is attained when $\A^\top\A$ is the zero-matrix. Furthermore, if $\sum_{\ell=1}^\infty (c_{2\ell + 1})^2 = 0$, then $\sigma(x) = c_1^2 x$ and we fall back into the simpler case of a linear autoencoder \cite{baldi1989neural,kunin2019loss,gidel2019implicit}. Thus, for the rest of the section, we will assume that $c_1\neq 0$ and $\sum_{\ell=1}^\infty (c_{2\ell + 1})^2 \neq 0$.

\subsection{Fundamental Limits: Lower Bound on Risk}\label{subsec:lb}

We begin by providing a tight lower bound for $r\leq 1$, which is \emph{uniquely} achieved on the set of \emph{weight-tied} orthogonal matrices $\mathcal{H}_{n,d}$ defined as
\begin{equation}\label{haar:min}
   \mathcal{H}_{n,d}:= \left\{\widetilde{\A}, \widetilde{\B}^\top\hspace{-.5em} \in \mathbb{R}^{d\times n}: \widetilde{\A} = \frac{c_1}{f(1)} \cdot \widetilde{\B}^\top,   \widetilde{\B}\widetilde{\B}^\top = \boldsymbol{I}\right\}.
\end{equation}

\begin{theorem}\label{thm:tightlb_lowrate}
    Consider any odd $\sigma \in L^2(\mathbb{R}, \mu)$ and fix $r \leq 1$. Then, the following holds
    $$
    \widetilde{\mathcal{R}}(r) \geq \mathrm{LB}_{r\leq 1}(\I) := 1 - \frac{c_1^2}{f(1)} \cdot r,
    $$
and equality is achieved iff $(\A,\B) \in \mathcal{H}_{n,d}$.
\end{theorem}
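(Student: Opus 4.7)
The plan is to reduce the minimization to a single trace inequality on the encoder Gram matrix $\N := \B\B^\top$, which lies in the set $\mathcal{C}_n$ of PSD matrices of size $n \times n$ with unit diagonal (since $\|\B_{i,:}\|_2 = 1$), and then to establish this trace inequality through a chain of relaxations. Starting from Lemma \ref{lemma:closed_from_of_population_risk}, I first note that for fixed $\B$ the objective $\tr{\A^\top \A f(\N)} - 2 c_1 \tr{\B\A}$ is a convex quadratic in $\A$, since the Hessian $f(\N) = \sum_{\ell \geq 0} c_{2\ell+1}^2\, \N^{\circ(2\ell+1)}$ is PSD (non-negative combination of Hadamard powers of a PSD matrix, by Schur's product theorem). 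Moreover $\mathrm{range}(\N) \subseteq \mathrm{range}(f(\N))$, so the first-order condition $f(\N) \A^\top = c_1 \B$ is always solvable; completing the square yields the minimizer $\A^\star = c_1 \B^\top f(\N)^{\dagger}$ (pseudoinverse handling the rank-deficient case), and the residual optimization reduces to
\[
    \widetilde{\mathcal{R}}(r) \,=\, 1 \,-\, \frac{c_1^2}{d}\, \max_{\N \in \mathcal{C}_n}\, \tr{\N\, f(\N)^{\dagger}}.
\]
The theorem thus reduces to the \emph{trace inequality} $\tr{\N\, f(\N)^{\dagger}} \leq n/f(1)$ for every $\N \in \mathcal{C}_n$, with equality iff $\N = \I$.

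The main obstacle will be this trace inequality. The proposed approach relaxes by passing to $\P := f(\N)/f(1)$, which is again a correlation matrix (PSD by Schur's theorem, with unit diagonal since $f(\N)_{ii} = f(1)$); moreover $\P$ is an \emph{entrywise contraction} of $\N$, namely $|\P_{ij}| \leq |\N_{ij}|$, as follows from $|f(x)/f(1)| \leq |x|$ for $|x| \leq 1$ (itself due to non-negativity of the Taylor coefficients of $x \mapsto f(x)/x$). Writing $\P$ as the convex combination $\P = \sum_{\ell \geq 0} (c_{2\ell+1}^2/f(1))\, \N^{\circ(2\ell+1)}$ of PSD correlation matrices and invoking operator convexity of the matrix inverse (Kubo--Ando), it would suffice to prove the Hadamard-power trace inequality $\tr{\N (\N^{\circ k})^{\dagger}} \leq n$ for every odd $k \geq 1$. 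The case $k = 1$ gives $\tr{\N \N^{\dagger}} = \mathrm{rank}(\N) \leq n$; for $k \geq 3$ I would exploit the tensor factorization $(\N^{\circ k})_{ij} = \langle \B_{i,:}^{\otimes k}, \B_{j,:}^{\otimes k}\rangle$ to recast the bound as a Cauchy--Schwarz-type statement between the two Gram matrices $\N$ and $\N^{\circ k}$.

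For the equality case, the entrywise contraction $|f(x)/f(1)| \leq |x|$ is strict whenever $0 < |x| < 1$ (under the standing assumption $\sum_{\ell \geq 1} c_{2\ell+1}^2 \neq 0$), so tightness in the chain of relaxations forces every off-diagonal entry $\N_{ij}$ to lie in $\{-1, 0, 1\}$, while tightness in the $k=1$ step additionally forces $\N$ to be full rank. Together these force $\N = \I$, whence $\B\B^\top = \I$ and $\A^\star = (c_1/f(1))\, \B^\top$, yielding exactly the weight-tied set $\mathcal{H}_{n,d}$ from \eqref{haar:min}.
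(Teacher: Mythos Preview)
Your reduction to the trace inequality $\tr{\N\,f(\N)^{\dagger}}\le n/f(1)$ (with $\N:=\B\B^\top$) by first optimizing out $\A$ is correct and is a genuinely different route from the paper's, which keeps both $\A$ and $\B$ and completes a square in each Hadamard degree separately (Lemma~\ref{lemma:allin}). The operator-convexity reduction to the per-power inequalities $\tr{\N\,(\N^{\circ k})^{\dagger}}\le n$ is also sound in the positive-definite regime (some care is needed with pseudoinverses and with the infinite convex combination, but these are technicalities).

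The genuine gap is the per-power bound itself for odd $k\ge 3$: you only gesture at a ``Cauchy--Schwarz-type'' argument via the tensor realization $(\N^{\circ k})_{ij}=\langle \b_i^{\otimes k},\b_j^{\otimes k}\rangle$, and this does not work in the obvious way. If $M$ denotes the linear map sending $\b_i^{\otimes k}\mapsto \b_i$ (and vanishing on the orthogonal complement of the span), then $\tr{\N(\N^{\circ k})^{-1}}=\|M\|_F^2$, so one is tempted to bound $\|M\|_{op}\le 1$ and use $\|M\|_F^2\le \operatorname{rank}(M)\le n$. But $\|M\|_{op}\le 1$ is equivalent to $\N\preceq \N^{\circ k}$, which is \emph{false} already for $n=2$ (the matrix $\N-\N^{\circ k}$ has zero diagonal and nonzero off-diagonal, hence indefinite). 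So the step is not a Cauchy--Schwarz. The inequality $\tr{\N(\N^{\circ k})^{-1}}\le n$ is in fact true, but substituting $\A=c\,\B^\top(\N^{\circ k})^{-1}$ into the paper's Lemma~\ref{lemma:allin} shows it is \emph{equivalent} to that lemma; your reduction repackages the same difficulty rather than bypassing it, and Lemma~\ref{lemma:allin} is proved by a nontrivial relaxation (introducing an auxiliary PSD $\Q$ with unit diagonal and decomposing via its SVD), not by any direct tensor/Cauchy--Schwarz argument.

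Your equality analysis has a separate gap. You invoke the strict entrywise contraction $|f(x)/f(1)|<|x|$ for $0<|x|<1$ to force $\N_{ij}\in\{-1,0,1\}$, but this contraction is never used in your chain of inequalities (which runs through operator convexity and the per-power trace bounds), so its strictness says nothing about when your chain is tight. Equality must instead be traced through the operator-convexity step and through each $\tr{\N(\N^{\circ k})^{-1}}\le n$, neither of which you analyze. The paper handles uniqueness by an entirely different argument (Lemma~\ref{uniqueunder}): the $\ell=0$ term forces $\A\B=(c_1/f(1))\I$, and then for $\ell\ge 1$ one reduces to the Schur-inverse inequality $\N\circ \N^{-1}\succeq \I$ and shows that equality there forces $\N=\I$.
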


We note that the minimizers $\mathcal{H}_{n,d}$ of $\widetilde{\mathcal{R}}(r)$ do not directly correspond to the minimizers of the unconstrained population risk $\widehat{\mathcal{R}}(r)$, since in general $\widetilde{\mathcal{R}}(r) \neq \widehat{\mathcal{R}}(r)$. However, 
if $\sigma$ is homogeneous, the ``inverse'' mapping can be readily obtained. For instance, when $\sigma(x) = \mathrm{sign}(x)$, rescaling the norms of the rows of $\B$ does not affect the compression, i.e.,
$\mathrm{sign}(\B\x) = \mathrm{sign}(\S\B\x)$
for any diagonal $\boldsymbol{S}$ with positive entries. Hence, to obtain a minimizer, it suffices that the rows of $\B$ form any set of orthogonal (not necessarily normalized) vectors. In contrast, note that $\A$ is still defined with respect to the row-normalized version of $\B$. Similar arguments hold for homogeneous activations. 

We now provide a proof sketch for Theorem \ref{thm:tightlb_lowrate} and defer the full argument to Appendix \ref{appendix:lowrate}.

\textit{Proof sketch of Theorem \ref{thm:tightlb_lowrate}.} Using the series expansion of $f(\cdot)$, we can write
\begin{equation}\label{eq:f_taylor}
    \begin{split}
        &\tr{\A^\top \A f(\B\B^\top)} - 2c_1\cdot\tr{\B\A}= \sum_{\ell = 0}^\infty c_{2\ell +1}^2 \Bigg(\tr{\A^\top \A \left(\B\B^\top\right)^{\circ 2\ell+1}} 
        - 2\frac{c_1}{f(1)}\tr{\B\A} \Bigg).
    \end{split}
\end{equation}
Thus,
the minimization problem in Lemma \ref{lemma:closed_from_of_population_risk} can be reduced to analysing each Hadamard power individually:
\begin{equation}\label{eq:mmse_reduced}
    \min_{\A,\|\B_{i, :}\|_2=1}\tr{\A^\top\A (\B\B^\top)^{\circ \ell}} - \frac{2c_1}{f(1)} \cdot \tr{\B\A}.
\end{equation}
The crux of the argument is to provide a suitable sequence of relaxations of \eqref{eq:mmse_reduced}. The first relaxation gives
\begin{equation}\label{eq:mmse_reduced2}
\tr{(\A^\top\A \circ \boldsymbol{Q}) (\B\B^\top \circ \boldsymbol{Q})} - \frac{2c_1}{f(1)} \cdot \tr{\B\A},
\end{equation}
where $\Q$ is \emph{any} PSD matrix with unit diagonal. Using the properties of the SVD of $\Q$, \eqref{eq:mmse_reduced2} can be further relaxed to 
\begin{equation}\label{eq:mmse_reduced3}
\sum_{i,j=1}^n\tr{\A_j\A_j^\top \B_j\B_j^\top} - \frac{2c_1}{f(1)} \cdot \sum_{i=1}^n\tr{\B_i\A_i},
\end{equation}
where now $\A_i, \B^\top_i \in \mathbb{R}^{d\times n}$ are arbitrary matrices.
The key observation is that 
$$
\sum_{i=1}^n\left\|\frac{c_1}{f(1)} \cdot \sqrt{\boldsymbol{X}}^{-1}\A_i^{\top} - \sqrt{\X}\B_i\right\|_F^2 = \eqref{eq:mmse_reduced3} + \frac{c_1^2}{(f(1))^2}\cdot n,
$$
with $\X = \sum_{i=1}^n \A^\top_i\A_i$. As each relaxation lower bounds \eqref{eq:mmse_reduced} and the Frobenius norm is positive, this argument leads to the lower bound on $\widetilde{R}(r)$. The fact that the lower bound is met for any $(\A,\B)\in \mathcal{H}_{n,d}$ can be verified via a direct calculation.
The uniqueness follows by taking the intersection of the minimizers of \eqref{eq:mmse_reduced} for different values of $\ell$. \qed

Next, we move to the case $r > 1$. 
\begin{proposition}\label{prop:highratelb}
      Consider any odd $\sigma \in L^2(\mathbb{R}, \mu)$ and fix $r>1$, then the following holds:
      $$
    \widetilde{R}(r) \geq \mathrm{LB}_{r>1}(\I) := 1 - \frac{r}{r + \left(\frac{f(1)}{c^2_1} - 1\right)}.
      $$
\end{proposition}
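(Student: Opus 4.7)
My plan is to eliminate the decoder $\A$ in closed form, reduce Proposition~\ref{prop:highratelb} to a spectral inequality about $\X:=\B\B^{\top}$, and close it via the sharp Schur--Hadamard inequality of \cite{khare2021sharp} together with a scalar concavity argument.

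\textbf{Step 1 (eliminate $\A$).} The inner minimization in \eqref{eq:poprisklemma} is convex and quadratic in $\A$. Since $f(\X)\succeq c_1^{2}\X$ by Schur's product theorem applied termwise to the Hermite expansion of $\sigma$, one has $\mathrm{Range}(\X)\subseteq\mathrm{Range}(f(\X))$, so the first-order condition $\A f(\X)=c_1\B^{\top}$ is consistent and admits the minimizer $\A^{\star}=c_1\B^{\top}f(\X)^{\dagger}$. Substituting back gives
\[
\widetilde{\mathcal R}(r)=1-\frac{c_1^{2}}{d}\sup_{\X\in\mathcal C_d}\tr{f(\X)^{\dagger}\X},\qquad \mathcal C_d:=\{\X\succeq 0:\ \mathrm{diag}(\X)=\1,\ \rank(\X)\le d\},
\]
so the proposition is equivalent to $\sup_{\X\in\mathcal C_d}\tr{f(\X)^{\dagger}\X}\le \tfrac{nd}{nc_1^{2}+d(f(1)-c_1^{2})}$.

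\textbf{Step 2 (Schur--Hadamard spectral estimate).} Decompose $f(\X)=c_1^{2}\X+\sum_{\ell\ge 1}c_{2\ell+1}^{2}\X^{\circ(2\ell+1)}$. The naive PSD bound on the tail only yields $\tr{f(\X)^{\dagger}\X}\le d/c_1^{2}$, which is too weak whenever $f(1)>c_1^{2}$. Instead one invokes the sharp Hadamard-product inequality of \cite{khare2021sharp}: combined with the rank constraint $\rank(\X)\le d$, it produces, after diagonalizing in an eigenbasis of $\X$ with nonzero eigenvalues $\lambda_1,\dots,\lambda_k>0$ ($k\le d$, $\sum_i\lambda_i=\tr\X=n$), the spectral estimate
\[
\tr{f(\X)^{\dagger}\X}\;\le\;\sum_{i=1}^{k}\frac{\lambda_i}{c_1^{2}\lambda_i+(f(1)-c_1^{2})}.
\]

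\textbf{Step 3 (concave scalar maximization).} The map $\lambda\mapsto \lambda/(c_1^{2}\lambda+(f(1)-c_1^{2}))$ is concave on $\lambda\ge 0$, and the value $k\cdot (n/k)/(c_1^{2}n/k+(f(1)-c_1^{2}))$ is increasing in $k$. Hence, under $\sum_i\lambda_i=n$ and $k\le d$, Jensen's inequality pins the maximum at $k=d$ with $\lambda_i=n/d=r$, yielding $\tfrac{dr}{c_1^{2}r+(f(1)-c_1^{2})}=\tfrac{nd}{nc_1^{2}+d(f(1)-c_1^{2})}$. Substituting back into Step~1 and rearranging delivers $\widetilde{\mathcal R}(r)\ge 1-r/(r+f(1)/c_1^{2}-1)$, as claimed.

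\textbf{Main obstacle.} The crux is Step~2. A tempting Loewner upgrade of the shape $f(\X)\succeq c_1^{2}\X+(f(1)-c_1^{2})\P_{\X}$ actually \emph{fails} on certain rank-deficient correlation matrices (e.g.\ those with repeated rows), so the Khare--Schur inequality must be used at the trace/eigenvalue level rather than pointwise in the PSD order; it is precisely the missing $n-d$ eigen-directions of $\X$ that convert the higher Hermite coefficients into the additive $(f(1)-c_1^{2})$ in the scalar estimate. A minor technicality is the possible rank-deficiency of $f(\X)$ itself, dispatched by the routine regularization $\X\mapsto\X+\varepsilon\I$ and continuity as $\varepsilon\to 0$.
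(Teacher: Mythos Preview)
Your Steps~1 and~3 are fine, but Step~2 is a genuine gap, not a routine invocation of \cite{khare2021sharp}. The inequality you assert,
\[
\tr{f(\X)^{\dagger}\X}\ \le\ \sum_{i=1}^{k}\frac{\lambda_i}{c_1^{2}\lambda_i+(f(1)-c_1^{2})},
\]
is in fact \emph{false} at the level of generality you state it. Take $n=k=2$, $\X=\bigl(\begin{smallmatrix}1&t\\t&1\end{smallmatrix}\bigr)$ and $f(x)=x+x^{3}$ (so $c_1^{2}=1$, $\alpha=1$). Here $\X$ and $f(\X)$ commute, and a direct computation gives, as $t\to 1$,
\[
\tr{f(\X)^{-1}\X}=\frac{1+t}{2+t+t^{3}}+\frac{1}{2+t+t^{2}}\ \longrightarrow\ \tfrac12+\tfrac14=\tfrac34,
\]
while your bound gives $\frac{1+t}{2+t}+\frac{1-t}{2-t}\to\tfrac23<\tfrac34$. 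So the spectral estimate cannot follow from Khare's inequality ``at the trace/eigenvalue level'' without \emph{essentially} using the rank deficiency $k<n$, and you give no argument that does so. You yourself note the Loewner upgrade $f(\X)\succeq c_1^{2}\X+\alpha\P_{\X}$ fails; but that is precisely what your displayed bound encodes (since $(c_1^{2}\X+\alpha\P_{\X})^{\dagger}\X$ has trace $\sum_i\lambda_i/(c_1^{2}\lambda_i+\alpha)$), and no substitute mechanism is offered.

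The paper's proof avoids this trap by \emph{not} eliminating $\A$ and \emph{not} attempting an eigenvalue-by-eigenvalue estimate. It applies the Khare inequality only to the $\ell=0$ term, producing the scalar bound $\tr{\A^{\top}\A\,\B\B^{\top}}\ge \tfrac1d(\tr{\B\A})^{2}$; it then introduces a free parameter $\beta\in[0,1]$ splitting the linear term $-2c_1\tr{\B\A}$, handles the higher Hadamard powers $\ell\ge 1$ by zero-padding $\A,\B$ to $n\times n$ and invoking the already-proved Lemma~\ref{lemma:allin}, and finally optimizes over $\beta$. This yields the target bound directly, with no intermediate spectral inequality. If you want to salvage your route, the honest replacement for Step~2 is the single inequality $\tr{f(\X)^{\dagger}\X}\le \tfrac{nd}{nc_1^{2}+d(f(1)-c_1^{2})}$, and its proof is essentially the paper's argument rewritten after substituting the optimal $\A$---at which point Step~3 becomes superfluous.
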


The key difference with the proof of the lower bound in Theorem \ref{thm:tightlb_lowrate} is that the term
$
\tr{\A^\top\A \B\B^\top}$
requires a tighter estimate. This is due to the fact that the matrix $\B\B^\top$
is no longer full-rank when $r>1$. We obtain the desired tighter bound by exploiting the following result by \cite{khare2021sharp}: 
\begin{equation}\label{eq:khare_general}
    \A^\top\A \circ \B\B^\top \succeq \frac{1}{d} \cdot \mathrm{Diag}(\B\A)\mathrm{Diag}(\B\A)^\top,
\end{equation}
where $\mathrm{Diag}(\B\A)$ stands for the vector containing the diagonal entries of $\B\A$. The full argument is contained in Appendix \ref{app:pflbr1}.

As for $r\le 1$, the bound is met (here, in the limit $d\to\infty$) by considering weight-tied matrices:
\begin{equation}\label{eq:defoptB}
    \hat{\B}^{\top} = \sqrt{r} \cdot [\I_d, \mathbf{0}_{d,n-d}]  \U^{\top}, \ \b_i = \frac{\hat{\b}_i}{\|\hat{\b}_i\|_2},  \ \A = \beta  \B^\top,
\end{equation}
where $\beta = \frac{r}{r+\left({f(1)}/{c_1^2} - 1\right)}$ and $\U$ is uniformly sampled from the group of rotation matrices. 
The idea behind the choice \eqref{eq:defoptB} is that, as $d\to\infty$, $(\B\B^\top)^{\circ 2\ell}$ for $\ell \geq 2$ is close to the identity matrix, and \eqref{eq:khare_general} is attained exactly.
The formal statement is provided below, and it is proved in Appendix \ref{appendix:highrate}.
\begin{proposition}\label{prop:highrate_min}
Consider any odd $\sigma \in L^2(\mathbb{R}, \mu)$ and fix $r>1$. Let $\A, \B $ be defined as in \eqref{eq:defoptB}. Then, for any $\epsilon >0$ the following holds
$$
\left|\mathcal{R}(\A,\B) - \mathrm{LB}_{r>1}(\I)\right|  \le C d^{-\frac{1}{2}+\epsilon},
$$
with probability $1-c/d^2$.
Here, the constants $c, C$ depend only on $r$ and $\epsilon$.
\end{proposition}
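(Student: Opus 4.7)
\textit{Proof proposal.} The plan is to invoke the closed-form expression from Lemma \ref{lemma:closed_from_of_population_risk}, substitute the weight-tied ansatz $\A = \beta \B^\top$, and use concentration of measure on the orthogonal group to show that the resulting scalar equals $\mathrm{LB}_{r>1}(\I)$ up to the claimed $O(d^{-1/2+\epsilon})$ error. Since Proposition \ref{prop:highratelb} already gives $\mathcal{R}(\A,\B) \geq \mathrm{LB}_{r>1}(\I)$ deterministically, only the matching upper bound needs to be proved.

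Substituting $\A = \beta \B^\top$ into the expression of Lemma \ref{lemma:closed_from_of_population_risk} and using $\|\B_{i,:}\|_2 = 1$, the population risk simplifies to
\[
\mathcal{R}(\A,\B) \;=\; \frac{\beta^2}{d}\tr{\M f(\M)} - 2 c_1 \beta r + 1, \qquad \M := \B\B^\top.
\]
Hence the task reduces to estimating the scalar $\tr{\M f(\M)}$ up to additive error of order $n^{1/2+\epsilon}$. Structurally, the unnormalized Gram matrix $\hat{\M} := \hat{\B}\hat{\B}^\top = r\,\U\P\U^\top$, where $\P := \mathrm{diag}(\I_d, \0_{n-d})$, is $r$ times a Haar-distributed rank-$d$ orthogonal projection: its non-zero eigenvalues are all equal to $r$, so $\tr{\hat{\M}} = rd = n$ and $\tr{\hat{\M}^2} = r\,\tr{\hat{\M}} = rn$ hold exactly.

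I would then use concentration of measure for Haar orthogonal matrices (equivalently, tail bounds on the Beta distribution governing the diagonal of $\U\P\U^\top$) combined with a union bound over the $n^2$ entries to obtain, with probability at least $1 - c/d^2$: (i) $\max_i |\hat{\M}_{ii} - 1| \leq C \sqrt{\log n/n}$, so row normalization perturbs each entry of $\hat{\M}$ multiplicatively by $1 + O(\sqrt{\log n/n})$; and (ii) $\max_{i\neq j}|\hat{\M}_{ij}| \leq C \sqrt{\log n/n}$. Together, $\M_{ii} = 1$ holds exactly and $|\M_{ij}| \leq C'\sqrt{\log n/n}$ for $i \neq j$. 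Writing $\tr{\M f(\M)} = n f(1) + \sum_{i \neq j}\M_{ij}f(\M_{ij})$ and expanding the off-diagonal part via the Hermite series $f(x) = \sum_{\ell \geq 0}c_{2\ell+1}^2 x^{2\ell+1}$, the $\ell = 0$ contribution equals $c_1^2 \sum_{i \neq j}\M_{ij}^2 = c_1^2(r-1)n + O(n^{1/2+\epsilon})$, following from $\sum_{i,j}\M_{ij}^2 = rn + O(n^{1/2+\epsilon})$ (obtained by combining $\tr{\hat{\M}^2} = rn$ with the perturbation in (i)). For $\ell \geq 1$, the pointwise bound $\M_{ij}^{2\ell+2} \leq \M_{ij}^4$, valid since $|\M_{ij}| < 1$ on the good event, yields
\[
\sum_{\ell \geq 1}c_{2\ell+1}^2 \sum_{i\neq j}\M_{ij}^{2\ell+2} \;\leq\; f(1) \sum_{i \neq j}\M_{ij}^4 \;\leq\; f(1)\max_{i \neq j}\M_{ij}^2 \cdot \sum_{i \neq j}\M_{ij}^2 \;=\; O(\log n),
\]
which is negligible compared to $n^{1/2+\epsilon}$.

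Assembling these estimates, $\tr{\M f(\M)} = n(f(1) + c_1^2(r-1)) + O(n^{1/2+\epsilon})$, and substituting back gives $\mathcal{R}(\A,\B) = \beta^2 r(f(1) + c_1^2(r-1)) - 2c_1\beta r + 1 + O(d^{-1/2+\epsilon})$. A direct algebraic verification, using the value of $\beta$ specified in \eqref{eq:defoptB} (which is precisely the minimizer of this deterministic quadratic in $\beta$), shows that the leading deterministic expression equals $\mathrm{LB}_{r>1}(\I)$, completing the proof. The main obstacle is to control the elementwise transformation $f(\M)$ uniformly across all orders of the infinite Hermite expansion: I must simultaneously exploit sharp Frobenius-norm concentration (for the $\ell = 0$ piece) and a single deterministic max-entry bound (which, combined with $f(1) < \infty$ via a max-times-sum estimate, tames the full $\ell \geq 1$ tail), while carefully propagating the $1/\sqrt{\hat{\M}_{ii}\hat{\M}_{jj}}$ normalization through both estimates without degrading the claimed $d^{-1/2+\epsilon}$ rate.
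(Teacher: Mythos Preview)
Your proposal is correct and follows essentially the same route as the paper: reduce to $\tr{\M f(\M)}$ via weight-tying, control the row-normalization multiplicatively by concentration of $\|\hat{\b}_i\|_2$ around $1$ (the paper's Lemma~\ref{lem:optBconc}), use $\tr{\hat{\M}^2}=rn$ exactly to handle the $\ell=0$ piece (the paper's Lemma~\ref{lem:optBl0}), and bound the $\ell\ge 1$ tail via the max off-diagonal entry (the paper's Lemma~\ref{lem:optBhigh}). Your max-times-Frobenius estimate $\sum_{i\neq j}\M_{ij}^4\le \max_{i\neq j}\M_{ij}^2\cdot\sum_{i\neq j}\M_{ij}^2=O(\log n)$ is in fact slightly sharper than the paper's $O(\log^2 n)$, but the difference is immaterial for the claimed rate.
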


\paragraph{Degenerate isotropic Gaussian data.} All the arguments of this part directly apply for $\bx \sim \mathcal{N}(\0, \sigma^2 \I)$, the only differences being the scaling of the term $\tr{\B\A}$ (which is additionally multiplied by $\sigma$) and the constant variance term $\sigma^2$ (in place of $1$) in \eqref{eq:poprisklemma}.
Our results can be also easily extended to the case of degenerate isotropic Gaussian data, i.e., $\bx\sim \mathcal{N}(\0, \boldsymbol{\Sigma})$ with $\lambda_i(\bSigma) = \sigma^2$ for $i \leq d - k$ and $\lambda_i(\bSigma) = 0$ for $i > d - k$, where $\lambda_i(\bSigma)$ stands for the $i$-th eigenvalue of $\bSigma$ in non-increasing order. In fact, by the rotational invariance of the Gaussian distribution, we can assume without loss of generality that $\x = [x_1,\cdots,x_{d-k},0,\cdots,0]$, where $(x_i)\sim_{\rm i.i.d.} \mathcal{N}(0,\sigma^2)$. Hence, by considering $\A \in \mathbb{R}^{(d-k)\times n}$ and $\B \in \mathbb{R}^{n\times(d-k)}$ and substituting $d$ with $d-k$ where suitable, analogous results follow.

\subsection{Gradient Methods Achieve the Lower Bound}\label{subsection:opt_results}

In this section, we discuss the achievability of the lower bound obtained in the previous section via gradient methods. We study two procedures which find the minimizer of the population risk $\mathcal{R}(\A,\B)$ under the constraint $\|\B_{i,:}\|_2=1$. Namely, we analyse \emph{(i)} \emph{weight-tied gradient flow} on the sphere and \emph{(ii)} its discrete version (with finite step size) \emph{without} weight-tying, i.e., \emph{projected gradient descent}.

The optimization objective in Lemma \ref{lemma:closed_from_of_population_risk} is equivalent (up to a scaling independent of $(\A,\B)$) to
\begin{equation}\label{eq:reduce_opres} 
    \min_{\A, \|\B_{i,:}\|_2=1}\tr{\A^\top\A \cdot f(\B\B^\top)} - 2\cdot \tr{\B\A},
\end{equation}
where we have rescaled the function $f$ by $1/c_1^2$. This follows from the fact that the multiplicative factor $c_1$ can be pushed inside $\A$. Note that such scaling does not affect the properties of gradient-based algorithms (modulo a constant change in their speed). Hence, without loss of generality, we will state and prove all our results for the problem \eqref{eq:reduce_opres}.

\paragraph{Weight-tied gradient flow.} We start with the weight-tied setting, in which 
\begin{equation}\label{eq:weight_tying}
    \A = \beta \B^\top , \quad \beta\in\mathbb R.
\end{equation}
This is motivated by the fact that the lower bounds on the population risk are approached by weight-tied matrices (see Theorem \ref{thm:tightlb_lowrate} and Proposition \ref{prop:highrate_min}).
Under the weight-tying constraint \eqref{eq:weight_tying}, the objective \eqref{eq:reduce_opres} has the following form
\begin{equation}\label{eq:wt_risk_main}
\begin{split}
    \Psi(\beta,\B)&:= 
    \beta^2 \cdot \tr{\B^\top\B \cdot f(\B\B^\top)} - 2\beta n\\
    &= \beta^2 \cdot \sum_{i,j=1}^n {\langle \b_i, \b_j \rangle}  \cdot f\left({\langle \b_i, \b_j \rangle}\right) - 2 \beta n,
\end{split}
\end{equation}
where $\|\b_i\|_2=1$ for all $i$.
Note that the optimal $\beta^{*}$ can be found exactly, since \eqref{eq:wt_risk_main} is a quadratic polynomial in $\beta$. In this view, to optimize \eqref{eq:wt_risk_main}, we perform a gradient flow on $\{\b_i\}_{i=1}^n$, which are regarded as vectors on the unit sphere, and pick the optimal $\beta^{*}$ at each time $t$. 
Formally,
\begin{equation}\label{eq:wt_gf_dynamics}
\begin{split}
    &\beta(t) = \frac{n}{\sum_{i,j=1}^n {\langle \b_i, \b_j \rangle}  \cdot f\left({\langle \b_i, \b_j \rangle}\right)}, \\
    & \frac{\partial \b_i(t)}{\partial t} = -\J_i(t) \nabla_{\b_i} \Psi(\beta(t), \B(t)),
\end{split}
\end{equation}
where $\boldsymbol{J}_i(t) := \boldsymbol{I} - \b_i(t) \b_i(t)^\top$ projects 
the gradient $\nabla_{\b_i} \Psi(\beta(t), \B(t))$ onto the tangent space at the point $\b_i(t)$ (see \eqref{appendix_c:gflow} in Appendix \ref{appendix:wt_glow} for the closed form expression). This ensures that $\|\b_i(t)\|_2 = 1$ along the gradient flow trajectory. The described procedure can be viewed as Riemannian gradient flow, due to the projection of the gradient $\nabla_{\b_i} \Psi(\beta(t), \B(t))$ on the tangent space of the unit sphere.

\begin{theorem}\label{thm:wt_gf} Fix $r\leq 1$. Let $\B(t)$ be obtained via the gradient flow \eqref{eq:wt_gf_dynamics} applied to $\Psi$ defined in  \eqref{eq:wt_risk_main}. Let the initialization $\B(0)$ have unit-norm rows and $\mathrm{rank}(\B(0)) = n$. Then, as $t\to\infty$, $\B(t)\B(t)^\top$ converges to $\I$, which is the unique global optimum of \eqref{eq:wt_risk_main}. Moreover, define the residual
\begin{equation}\label{eq:}
    \phi(t) = \tr{(\B(t)\B(t)^\top - \I) \cdot f(\B(t)^\top\B(t))} \geq 0,
\end{equation}
which vanishes at the minimizer, and let $T$ be the first time such that $\phi(T) = \delta$. Then, 
\begin{equation}\label{eq:quantconv}
\begin{split}
     T \leq -\ind\{\phi(0)>nf(1)\} &\cdot f(1) \cdot {\log \det(\B(0)\B(0)^{\top})}
 - \ind\{\delta \leq nf(1)\} \cdot \frac{2f^2(1)}{\delta} \cdot {\log \det(\B(0)\B(0)^{\top})}.
\end{split}
\end{equation}
\end{theorem}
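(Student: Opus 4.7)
The plan is to introduce the Lyapunov function $L(t) := \log\det(\B(t)\B(t)^\top)$ and track its evolution. Writing $\M := \B\B^\top$ with $M_{ij} = \langle\b_i,\b_j\rangle$ and $g(x) := xf(x) = \sum_\ell c_{2\ell+1}^2 x^{2\ell+2}$, the unit-norm row constraint $\tr{\M} = n$ together with AM--GM applied to the eigenvalues of $\M$ gives $L(t) \leq 0$ with equality iff $\M = \I$; the rank-$n$ assumption on $\B(0)$ ensures $L(0) > -\infty$. I will establish the differential inequality $\dot L(t) \geq 4(\beta^*(t))^2\phi(t)$, from which both the quantitative bound \eqref{eq:quantconv} and the global convergence $\M(t) \to \I$ follow.

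First I would observe that $T(t) := \sum_{i,j} g(M_{ij}) = nf(1) + \phi(t)$ (using $g(1) = f(1)$ and $M_{ii} = 1$), so that $\beta^*(t) = n/T(t)$, and that $\nabla_{\b_i}\Psi = 2(\beta^*)^2 \sum_j g'(M_{ij})\b_j$. Substituting $\dot \b_i = -2(\beta^*)^2 \J_i \sum_j g'(M_{ij})\b_j$ yields $\dot\B\B^\top = -2(\beta^*)^2(G\M - V\M)$, where $G_{ij} := g'(M_{ij})$ and $V := \mathrm{Diag}((G\circ \M)\mathbf{1})$. Then $\dot L = 2\tr{\M^{-1}\dot\B\B^\top}$, and cyclicity of the trace collapses $\tr{\M^{-1}G\M} = \tr{G}$ and $\tr{\M^{-1}V\M} = \tr{V}$; the diagonal terms cancel via $M_{ii}=1$, leaving the exact identity $\dot L = 4(\beta^*)^2 \sum_{i \neq j} g'(M_{ij})M_{ij}$. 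Combining the pointwise identity $g'(x)x = g(x) + x^2 f'(x)$ with the pointwise non-negativity $f'(x) = \sum_\ell (2\ell+1)c_{2\ell+1}^2 x^{2\ell} \ge 0$ yields the core inequality $\dot L \geq 4(\beta^*)^2\phi(t)$.

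The next step is a two-regime analysis matching the two indicators in \eqref{eq:quantconv}. In Phase 1, when $\phi(t) > nf(1)$, one has $T < 2\phi$ and $\phi \leq n(n-1)f(1) < n^2 f(1)$, so $(\beta^*)^2\phi = n^2\phi/T^2 > 1/(4f(1))$ and hence $\dot L \geq 1/f(1)$. In Phase 2, when $\phi(t) \leq nf(1)$, $T \leq 2nf(1)$ gives $(\beta^*)^2 \ge 1/(4f(1)^2)$, so $\dot L \geq \phi(t)/f(1)^2 \geq \delta/f(1)^2$ whenever $\phi(t) \ge \delta$. Because $L$ is non-decreasing and bounded above by $0$, the total ascent is at most $-L(0)$; partitioning this budget between the two phases and reading off each duration gives $T_1 \leq -f(1)L(0)$ and $T_2 \leq -f(1)^2 L(0)/\delta$, reproducing the structure of \eqref{eq:quantconv} (the factor $2$ in the second term coming from a more careful accounting of the budget shared between phases).

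Finally, unconditional convergence follows from the sharper structural fact that $\dot L = 0$ iff $\M = \I$: since $g'(x)x = 2c_1^2 x^2 + O(x^4)$ with $c_1 \neq 0$ vanishes only at $x=0$, any subsequential limit $\M_\infty$ of $\M(t)$ on the compact set $\{\M : M_{ii}=1,\,\det \M \geq e^{L(0)}\}$ must be $\I$, and therefore $\M(t)\to \I$; that this is the unique global optimum of \eqref{eq:wt_risk_main} is already known from Theorem \ref{thm:tightlb_lowrate}. The main technical obstacle is the derivative computation itself: it is crucial that the $V$-term produced by the Riemannian projector $\J_i$ cancels cleanly against $\tr{G}$ under $\tr{\M^{-1}\,\cdot\,}$, and this cancellation hinges precisely on the unit-norm constraint $M_{ii}=1$ together with the cyclic-trace identities. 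A subsidiary concern is that $\M(t)$ remain positive definite along the flow, but this follows a posteriori from the monotonicity of $L$ established above.
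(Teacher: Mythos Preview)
Your proposal is correct and follows essentially the same approach as the paper: the Lyapunov function $L=\log\det\M$, the exact identity $\dot L = 4(\beta^*)^2\sum_{i\neq j}g'(M_{ij})M_{ij}$ obtained via the cancellation $\tr{\M^{-1}G\M}=\tr{G}$, $\tr{\M^{-1}V\M}=\tr{V}$, and the two-regime time bound all coincide with the paper's Lemmas on $\log\det$ monotonicity and rate of convergence (the paper works in time rescaled by a factor $2$ and uses the notation $g(x)=f(x)+xf'(x)$ for your $g'$).

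Two small remarks. First, your parenthetical about the factor $2$ is unnecessary: your Phase~2 estimate $\dot L\ge \delta/f(1)^2$ already yields $T_2\le -f(1)^2 L(0)/\delta$, which is \emph{stronger} than the theorem's $-2f(1)^2 L(0)/\delta$; summing with $T_1\le -f(1)L(0)$ proves \eqref{eq:quantconv} directly, with no extra budget accounting needed. Second, your compactness argument for convergence is slightly incomplete as stated: integrability of $\dot L$ does not by itself force $\dot L\to 0$ along every subsequence. The clean fix (which is essentially what the paper does) is to note that $\phi$ is monotone (since $\Psi(\beta^*,\B)=-n^2/(nf(1)+\phi)$ decreases along the flow) and integrable (from $\dot L\ge c\phi$), hence $\phi\to 0$; then $\phi\ge c_1^2\sum_{i\neq j}M_{ij}^2$ gives $\M\to\I$ directly.
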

In words, if the residual at initialization is bigger than $nf(1)$, then it takes at most constant time to reach the regime in which the convergence is linear in the precision $\delta$. 
We also note that by choosing the optimal $\beta^*$, the function $\phi$ can be related to the objective \eqref{eq:wt_risk_main} by
$\Psi(\beta^*, \B(t)) = - \frac{n}{f(1) + \frac{\phi(t)}{n}}$. Hence, \eqref{eq:quantconv} gives a quantitative convergence in terms of the objective function as well. 
We give a sketch of the argument below and defer the complete proof to Appendix \ref{appendix:wt_glow}. 

\textit{Proof sketch of Theorem \ref{thm:wt_gf}.} It can be readily shown that $\B\B^\top = \I$ is a minimizer of \eqref{eq:wt_risk_main} and a stationary point of the gradient flow \eqref{eq:wt_gf_dynamics}. However, if the gradient flow \eqref{eq:wt_gf_dynamics} ends up in points for which $\mathrm{rank}(\B) < n$, such subspaces are never escaped (see Lemma \ref{wt_stationary}) and the procedure fails to converge to the full-rank global minimizer. Thus, our strategy is to show that, if at initialization $\mathrm{rank}(\B)=n$, the gradient flow will never collapse to $\mathrm{rank}(\B)<n$. To do so, the key intuition is to track the quantity $\log\det{(\B(t)\B(t)^\top)}$ during training. In particular, we show in  
Lemma \ref{logdet_lb} 
that
\begin{equation}\label{eq:logdet_main}
    \frac{\partial \log\det{(\B(t)\B(t)^\top)}}{\partial t} \geq \phi(t) \geq 0.
\end{equation}
The inequality \eqref{eq:logdet_main} implies that the determinant is non-decreasing and,  hence, the smallest eigenvalue of 
$\B(t)\B(t)^\top$ is bounded away from $0$ (uniformly in $t$), which gives the desired full-rank property. The convergence speed also follows from \eqref{eq:logdet_main} by a careful integration in time (see Lemma \ref{wt_speed}).
\qed

We remark that Theorem \ref{thm:wt_gf} holds for any $d$ and for all full-rank initializations. 

\paragraph{Projected gradient descent.} We now move to the setting where the encoder and decoder weights are not weight-tied. In this case, we consider the commonly used Gaussian initialization and prove a result for sufficiently large $d$. The Gaussian initialization allows us to relax the requirement on $f$: we only need $c_2=0$, as opposed to the previous assumption that $c_{2\ell} = 0$ for any $\ell\in\mathbb N$ (see the statement of Lemma \ref{lemma:closed_from_of_population_risk}). Specifically, we consider the following algorithm to minimize \eqref{eq:reduce_opres}:
\begin{equation}\label{eq:GDminnew}
    \begin{split}
\A(t) &= \B(t)^\top \left(f(\B(t)\B(t)^{\top})\right)^{-1}\\
\B'(t) &:= \B(t) - \eta \nabla_{\B(t)}, \quad \B(t+1) := \mathrm{proj}(\B'(t)),        
    \end{split}
\end{equation}
where $\A(t)$ is the optimal matrix for a fixed $\B(t)$ and $\nabla_{\B(t)}$ (see \eqref{eq:defnablaBt} in Appendix \ref{appendix:no_wtgauss}) is the projected gradient of the objective \eqref{eq:reduce_opres} with respect to $\B(t)$. Furthermore, $\mathrm{proj}(\B'(t))$ rescales
all the rows to have unit norm. It will become apparent from the proof of Theorem \ref{thm:gd_min} that the inversion in the definition of $\A(t)$ is indeed well defined.
We remark that 
\eqref{eq:GDminnew} can be viewed as the discrete counterpart of the  Riemannian gradient flow \eqref{eq:wt_gf_dynamics} (with the optimal $\A(t)$ in place of the weight-tying), where the application of $\mathrm{proj}(\cdot)$ keeps the rows of $\B(t)$ of unit norm. In the related literature, this procedure 
is often referred to as Riemannian gradient descent (see, e.g., \cite{absil2009optimization}). Alternatively, \eqref{eq:GDminnew} may be viewed as coordinate descent \cite{wright2015coordinate} on the objective \eqref{eq:reduce_opres}, where the step in $\A$ is performed exactly.

Our main result is that 
the projected gradient descent \eqref{eq:GDminnew} converges to the global optimum of \eqref{eq:reduce_opres} for large enough $d$ (with high probability). We give a sketch of the argument and defer the complete proof to Appendix \ref{appendix:no_wtgauss}.

\begin{theorem}\label{thm:gd_min}Consider the projected gradient descent \eqref{eq:GDminnew} applied to the objective \eqref{eq:reduce_opres} for any $f$ of the form $f(x) = x + \sum_{\ell=3} c_{\ell}^2 x^{\ell}$, where $\sum_{\ell=3} c_{\ell}^2 < \infty$. Initialize the algorithm with $\B(0)$ equal to a row-normalized Gaussian, i.e., $\B'_{i,j}(0) \sim \mathcal{N}(0,1/d)$, $\B(0) = \mathrm{proj}(\B'(0))$. Let the step size $\eta$ be $\Theta(1/\sqrt{d})$.
    Then, for any $r<1$ and sufficiently large $d$, with probability at least $1-Ce^{-cd}$, we have that
 $\B(t)\B(t)^\top$ converges to $\I$, which is the unique global optimum  of \eqref{eq:reduce_opres}. 
 Moreover, defining $t = T/\eta$, we have the following bound on the rate of convergence
    $$\opn{\B(t)\B(t)^\top-\I} \leq C(1-c)^T,$$
    where $C>0$ and $c\in (0, 1]$ are universal constants depending only on $r$ and $f$.

\end{theorem}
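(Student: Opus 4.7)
The plan is to execute the three-part decomposition outlined in the introduction. Writing the Gram matrix $\boldsymbol{G}(t) := \B(t)\B(t)^\top = \V(t)\Lam(t)\V(t)^\top$ in its eigendecomposition, I track its distance to the target $\I$ by separating the spectral deviation $\Lam(t) - \I$ (with the eigenbasis held fixed) from the drift of $\V(t)$. The strategy is to prove that $\V(t)$ is essentially frozen along the trajectory, so that the eigenvalues of $\boldsymbol{G}(t)$ satisfy an approximate scalar recursion that contracts geometrically toward $1$.

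\textbf{Initialization and well-posedness.} Because $\B(0)$ is the row-normalization of a Gaussian matrix with entries of variance $1/d$, a Davidson--Szarek singular value estimate combined with $\chi^2$ concentration on the row norms give $\opn{\boldsymbol{G}(0) - \I} = O(\sqrt r) + O(d^{-1/2})$ with probability at least $1 - Ce^{-cd}$. Hence, for $r < 1$, the spectrum of $\boldsymbol{G}(0)$ lies in a compact subinterval of $(0,+\infty)$ strictly containing $1$, so $f(\boldsymbol{G}(0))$ is invertible and $\A(0) = \B(0)^\top (f(\boldsymbol{G}(0)))^{-1}$ is well-defined. Substituting this minimizing $\A(t)$ into \eqref{eq:reduce_opres} yields the reduced functional $F(\B) = -\tr{\B\B^\top (f(\B\B^\top))^{-1}}$, which depends on $\B$ only through $\boldsymbol{G}(t)$. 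The projected gradient $\nabla_{\B(t)}$ of \eqref{eq:GDminnew} is exactly the Riemannian gradient of $F$ on the product of unit spheres.

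\textbf{One-step expansion.} Expanding $\boldsymbol{G}(t+1)$ in powers of $\eta$ and absorbing the row-renormalization as a diagonal rescaling $\I + O(\eta)$, the leading-order increment is $-\eta(\nabla_{\B(t)} \B(t)^\top + \B(t) \nabla_{\B(t)}^\top)$. Because $\nabla_{\B(t)}$ is proportional to $M(\boldsymbol{G}(t))\,\B(t)$ for a matrix function $M$ that commutes with $\boldsymbol{G}(t)$, the symmetric combination above is block-diagonal in the eigenbasis $\V(t)$ up to off-diagonal terms of size $O(d^{-1/2})$, and acts on the $i$-th eigenvalue as a scalar map $\lambda_i \mapsto \lambda_i - \eta\, h(\lambda_i)$, where $h(1)=0$ and $h'(1) > 0$ on the relevant spectral range. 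The small off-diagonal couplings are the only obstruction to the eigenbasis $\V(t)$ being exactly stationary.

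\textbf{Eigenbasis stability (the main obstacle) and conclusion.} The hardest step is an inductive argument showing that, for every $t \le T/\eta$, the off-diagonal blocks of $\nabla_{\B(t)}$ in the basis $\V(t)$ stay $O(d^{-1/2})$ in operator norm, so that $\V(t)$ drifts by at most $O(\eta / \sqrt d) = O(1/d)$ per step. The choice $\eta = \Theta(1/\sqrt d)$ is calibrated precisely so that the cumulative drift of the eigenbasis over $O(\sqrt d)$ iterations remains $o(1)$, while the scalar recursion $\lambda_i(t+1) = \lambda_i(t) - \eta\, h(\lambda_i(t)) + O(\eta^2 + \eta/\sqrt d)$ still contracts. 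I would close the induction by applying Hanson--Wright concentration to the quadratic forms that appear in $\nabla_{\B(t)}$ when expressed in the eigenbasis of $\boldsymbol{G}(t)$, exploiting the Gaussian law of the initialization and the near-isotropy of $\B(t)$ inherited from it through the short trajectory. Once the eigenbasis is shown to be essentially frozen, a discrete Gr\"onwall inequality on the scalar recursion yields $|\lambda_i(t) - 1| \le C(1 - c\eta)^t$, and re-parametrizing $t = T/\eta$ gives $\opn{\boldsymbol{G}(t) - \I} \le C(1-c)^T$. The fact that $\boldsymbol{G}(\infty) = \I$ is the unique global optimum is then inherited from Theorem \ref{thm:tightlb_lowrate}.
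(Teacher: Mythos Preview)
Your high-level plan (freeze the eigenbasis, reduce to a scalar recursion on the eigenvalues) matches the paper's, but the key step of your argument rests on a claim that is not true: you assert that the projected gradient is of the form $M(\boldsymbol G(t))\,\B(t)$ for some matrix function $M$ that commutes with $\boldsymbol G(t)$. This fails for two independent reasons. First, $f$ in \eqref{eq:reduce_opres} is applied \emph{element-wise}, so $f(\boldsymbol G)$ is not a spectral function of $\boldsymbol G$ and does not commute with it; the derivative of $\tr{\boldsymbol G\,(f(\boldsymbol G))^{-1}}$ with respect to $\boldsymbol G$ therefore involves Hadamard products $(\boldsymbol G-\I)^{\circ \ell}$ rather than matrix powers. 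Second, the sphere constraint introduces the row-wise projectors $\J_k=\I-\b_k\b_k^\top$, which in matrix form produce $\mathrm{Diag}(\cdot)$ terms (see the paper's gradient formula, where $\frac{1}{2}\nabla_{\B}^1=-\A^\top+\A^\top\A\B+\mathrm{Diag}(\B\A)\B-\mathrm{Diag}(\A^\top\A\,\B\B^\top)\B$); diagonal-extraction does not commute with $\boldsymbol G$. Consequently the ``symmetric combination'' $\nabla_{\B}\B^\top+\B\nabla_{\B}^\top$ is \emph{not} block-diagonal in the eigenbasis of $\boldsymbol G$ up to $O(d^{-1/2})$ corrections for the reason you give, and the induction you describe does not close without an additional mechanism.

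The paper supplies that mechanism explicitly. It fixes the eigenbasis $\U$ of $\boldsymbol G(0)$ once and for all (rather than tracking a time-varying $\V(t)$) and writes $\boldsymbol G(t)=\I+\Z_t+\X_t$ with $\Z_t=\U(\Lam_t-\I)\U^\top$ and $\X_t$ the residual. Two concentration facts, both proved via Lipschitz concentration on the Haar measure for the fixed $\U$, do the real work: (i) the off-diagonal entries of $\U\Lam_t\U^\top$ are $O(\sqrt{\log d/d})$, which forces the Hadamard powers $(\boldsymbol G-\I)^{\circ \ell}$ (and hence the nonlinear part of $f$) to be negligible in operator norm; and (ii) $\mathrm{Diag}(\U\,\phi(\Lam_t)\,\U^\top)\approx \frac{1}{n}\tr{\phi(\Lam_t)}\I$, which turns the offending $\mathrm{Diag}$ terms into scalars that \emph{do} commute with $\Z_t$. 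These are exactly the steps that your Hanson--Wright sketch does not cover: Hanson--Wright controls entries of $\B(0)\B(0)^\top$, but the argument needs entrywise control of $\U\Lam_t\U^\top$ for the evolving $\Lam_t$, uniformly in $t$, which the paper gets by a union bound exploiting the exponential decay of $\Lam_t-\I$. Once (i) and (ii) are in place, the paper derives coupled recursions for $\Z_t$ and $\X_t$ and closes them with a discrete Gr\"onwall-type argument, which is the part of your outline that is correct.
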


\textit{Proof sketch of Theorem \ref{thm:gd_min}.} Let $\B(0)\B(0)^\top = \U\Lam(0) \U^\top$ be the singular value decomposition (SVD) of the encoder Gram matrix. Then, the idea is to decompose $\B(t)\B(t)^\top$ 
at each step of the projected gradient descent dynamics as
\begin{equation}\label{eq:decBB_main}
\B(t)\B(t)^\top = \I + \Z(t) + \X(t),     
\end{equation}
where $\Z(t) = \U(\Lam(t)-\I) \U^\top$. Here, $\I$ is the global optimum towards which we want to converge; $\Z(t)$ captures the evolution of the eigenvalues while keeping the eigenbasis fixed, as $\U$ comes from the SVD at initialization; and $\X(t)$ is the remaining error term capturing the change in the eigenbasis.
The update on $\Lam(t)$ is given by $\Lam(t+1) = g(\Lam(t))$, where $g:\mathbb{R}^n\rightarrow\mathbb{R}^n$ admits an explicit expression. Hence, in light of this explicit expression, if we had $\X(t) \equiv 0$, then the desired convergence would follow from the analysis of the recursion for $\Lam(t)$ (see Lemma \ref{lemma:spectrum_updates}). 

The main technical difficulty lies in carefully controlling the error term $\X(t)$. In particular, we will show that $\X(t)$ decays for large enough $d$, which means that dynamics \eqref{eq:decBB_main} is well approximated by  
$\I + \Z(t)$. The proof can be broken down in four steps. In the \emph{first step}, we compute the leading order term of $\nabla_{\B(t)}$ (see Lemma \ref{lemma:finv_bound} and \ref{lemma:AAT_schur}). This simplifies the formula for $\nabla_{\B(t)}$, which can then be expressed as an explicit nonlinear function of $\Z(t)$ and $\X(t)$. In the \emph{second step}, we perform a Taylor expansion of $\nabla_{\B(t)}$, seen as a matrix-valued function in $\Z(t)$ and $\X(t)$ (see Lemma \ref{lemma:gradient_approx}). The intuition for this expansion comes from the fact that $\X(t)$ is a small quantity, and also $\opn{\Z(t)} \to 0$ as $t \to \infty$. In the \emph{third step}, we show that the norm of $\nabla_{\B(t)}$ vanishes sufficiently fast (see Lemma \ref{lemma:nablanablaT}), which implies that the projection step $\B(t+1) := \mathrm{proj}(\B'(t))$ has a negligible effect (see Lemma \ref{proj_Z}). After doing these estimates, we finally obtain an explicit recursion for $\X(t)$. In the \emph{fourth step}, we analyse this recursion (see Lemma \ref{lemma:xtexpconv}): first, we show that the error does not amplify too strongly (as in Gronwall's inequality); then, armed with this worst-case estimate, we can prove an exponential decay for $\X(t)$, which suffices to conclude the argument.
\qed

\paragraph{Scaling of the learning rate.} 
Theorem \ref{thm:gd_min} is stated for $\eta = \Theta(1/\sqrt{d})$, as this corresponds to the biggest learning rate for which our argument works (thus requiring the least amount of steps for convergence). The same result can be proved for $\eta =\Theta(d^{-\kappa})$ with $\kappa \geq 1/2$. The only piece of the proof affected by this change is the third part of Lemma \ref{lem_D_diagonal} (in particular, the chain of inequalities \eqref{eq:neweq}), which continues to hold as long as $\eta$ is polynomial in $d^{-1}$.

\paragraph{Assumptions on compression rate $r$.} We expect an analog of Theorem \ref{thm:wt_gf} to hold for $r>1$, as long as $d$ is \emph{sufficiently large}. In fact, for a fixed $d$, it appears to be difficult to even characterize the global minimizer: the choice \eqref{eq:defoptB} approaches the lower bound $\mathrm{LB}_{r>1}(\I)$ only as $d\to\infty$, see Proposition \ref{prop:highrate_min}. 
We also expect Theorem \ref{thm:gd_min} to hold for $r\ge 1$. Here, an additional challenge is that the minimizer has non-zero off-diagonal entries. In combination with the lack of an exact characterization of the minimizer, this leads to an additional error term that would be difficult to control with the current tools.
At the same time, the restriction $r<1$ is likely to be an artifact of the proof as 
experimentally (see, for instance, Figure \ref{fig:comparison}) the algorithm still converges to the global optimum for $r\ge 1$.

\paragraph{Gaussian initialization in Theorem \ref{thm:gd_min}.} The Gaussian initialization ensures that, with high probability, the off-diagonal entries of $\B(t)\B(t)^\top$ are small. This allows us to approximate higher-order Hadamard powers of $\B(t)\B(t)^\top$ with $\I$. However, in experiments the Gaussian assumption seems to be unnecessary, and we expect the convergence result to hold for all (non-degenerate) initializations.

\section{Extension to General Covariance}\label{section:general_cov}
In this section, we consider a Gaussian source with general covariance structure, i.e., $\bSigma = \U \D^2 \U^\top$. Without loss of generality, the matrix $\D$ can be written as
\begin{equation}\label{eq:defDblock}
    \D = \mathrm{Diag}([\underbrace{D_1, \cdots, D_1}_{\times k_1}|\cdots|\underbrace{D_K, \cdots, D_K}_{\times k_K}]),
\end{equation}

where $\sum_{i=1}^K k_i = d$, $k_i \geq 1$ and $D_i > D_{i+1} \geq 0$. 
We start by deriving a closed-form expression for the population risk, similar to that of Lemma \ref{lemma:closed_from_of_population_risk}. Its proof is given in Appendix \ref{app:cf}. 

\begin{lemma}\label{lemma:popriskD} Let $\sigma \in L^2(\mathbb{R},\mu)$ be an odd homogeneous activation, then $\widetilde{\mathcal{R}}(r)$ is equal to the  minimum of 
\begin{equation}\label{eq:DPR_obj}
    \frac{1}{d}\left(\tr{\A^\top \A f(\B\B^\top)} - 2c_1\cdot\tr{\B\D\A} + \tr{\D^2 }\right)
\end{equation}

under the constraint $\|\B_{i,:}\|_2=1$.
Moreover, $\widehat{\mathcal{R}}(r) = \widetilde{\mathcal{R}}(r)$.
\end{lemma}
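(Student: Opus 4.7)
The plan is to mirror the proof of Lemma \ref{lemma:closed_from_of_population_risk}, generalizing it by tracking the covariance scaling through the Hermite-polynomial computation. Since the eigenbasis $\U$ of $\bSigma = \U \D^2 \U^\top$ enters the risk only through rotations of the input, and the reparametrization $\B \mapsto \B\U$, $\A \mapsto \U^\top \A$ leaves $\mathcal{R}(\A,\B)$ invariant by orthogonality of $\U$, I would first reduce without loss of generality to $\bSigma = \D^2$. Under this reduction, the constraint $\|(\B\D)_{i,:}\|_2 = 1$ becomes exactly the requirement that each preactivation $(\B\x)_i$ have unit variance, which is what unlocks the reproducing property of Hermite polynomials.

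Once in this setup, I would write $\x = \D\bz$ with $\bz \sim \mathcal{N}(\0, \I)$ and introduce $\tilde\B := \B\D$, so that $\|\tilde\b_i\|_2 = 1$ and $\langle\tilde\b_i,\bz\rangle$ is standard Gaussian. Expanding $\|\D\bz - \A\sigma(\tilde\B\bz)\|^2$ and taking the expectation splits into three pieces. The constant term is $\E\|\D\bz\|^2 = \tr{\D^2}$. The cross term is handled by Stein's identity: for unit $\tilde\b_j$, $\E[z_i \sigma(\langle\tilde\b_j,\bz\rangle)] = c_1(\tilde\b_j)_i$, so $\E[\D\bz\,\sigma(\tilde\B\bz)^\top] = c_1 \D \tilde\B^\top$, contributing $-2c_1 \tr{\tilde\B \D \A}$. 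The quadratic term is handled by the Hermite reproducing property $\E[h_k(\langle\tilde\b_i,\bz\rangle) h_\ell(\langle\tilde\b_j,\bz\rangle)] = \delta_{k\ell} \langle\tilde\b_i, \tilde\b_j\rangle^k$: for the odd Hermite expansion of $\sigma$ this gives $\E[\sigma(\tilde\B\bz)\sigma(\tilde\B\bz)^\top] = f(\tilde\B\tilde\B^\top)$ entrywise with the same $f$ as in Lemma \ref{lemma:closed_from_of_population_risk}, contributing $\tr{\A^\top \A f(\tilde\B\tilde\B^\top)}$. Collecting the pieces, dividing by $d$, and renaming $\tilde\B \to \B$ (the variable with unit-norm rows appearing in \eqref{eq:DPR_obj}) produces the claimed formula.

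For the ``moreover'' statement $\widehat{\mathcal{R}}(r) = \widetilde{\mathcal{R}}(r)$, the inequality $\widehat{\mathcal{R}}\le\widetilde{\mathcal{R}}$ is immediate. For the reverse, given unconstrained $(\A, \B)$ with $\sigma$ homogeneous of degree $k$, I would set $s_i := \|(\B\D)_{i,:}\|_2$ and, when $s_i > 0$, define $\B'_{i,:} := \B_{i,:}/s_i$ and $\A'_{:,i} := s_i^k \A_{:,i}$. Homogeneity then gives $\A'\sigma(\B'\x) = \A\sigma(\B\x)$ almost surely, so the risk is preserved while the constraint is satisfied. Rows with $s_i = 0$ correspond to preactivations that vanish almost surely (since $\x$ lies in the column span of $\D$), so $\sigma(0) = 0$ forces those neurons to contribute nothing, and their rows can be set to arbitrary unit-norm values without altering $\mathcal{R}$.

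The main subtlety I anticipate is the initial reduction: the constraint $\|(\B\D)_{i,:}\|_2 = 1$ does not cleanly transform under the rotation needed to diagonalize $\bSigma$, so one must argue that $\widetilde{\mathcal{R}}(r)$ depends on $\bSigma$ only through the spectrum $\D$ (this is the content of the rotational-invariance remark made after \eqref{eq:mmse_restricted}). After that, the remaining computations are essentially direct adaptations of those in Lemma \ref{lemma:closed_from_of_population_risk}, with the only bookkeeping care being to preserve the $\D$ factor in the cross term and to handle degenerate rows during the homogeneity rescaling when some diagonal entries of $\D$ vanish.
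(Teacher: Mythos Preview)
Your proposal is correct and follows essentially the same route as the paper's proof: both reduce to diagonal covariance by pushing the rotation $\U$ into $\A,\B$, write $\x=\D\tilde\x$ with $\tilde\x$ standard Gaussian, compute the three terms via the Hermite reproducing property (your ``Stein's identity'' step is the same computation the paper does explicitly in \eqref{eq:elaborate_bulbasaur}), and then invoke homogeneity to absorb the row norms of $\B\D$ into $\A$. Your treatment is in fact slightly more careful than the paper's in two places---you flag the rotation/constraint compatibility issue and you handle the degenerate case $s_i=0$ explicitly---but neither adds a new idea.
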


The result of Lemma \ref{lemma:popriskD} can be extended to any odd $\sigma\in L^2(\mathbb{R},\mu)$ 
at the cost of losing the equivalence between the objectives $\widehat{\mathcal{R}}(r)$ and $\widetilde{\mathcal{R}}(r)$.

We restrict the theoretical analysis to proving a lower bound on \eqref{eq:DPR_obj} in the weight-tied setting \eqref{eq:weight_tying}. This lower bound can be achieved via a careful choice of the matrices $\bA, \bB$ (see Proposition \ref{prop:Dlb_achievability}), and we provide numerical evidence (see Figure \ref{fig:noniso_exps}) that gradient descent saturates the bound \emph{without} the weight-tying constraint.
Thus, we expect our lower bound to hold also for general (not necessarily weight-tied) matrices.

The lower bound is given by the minimum
\begin{equation}\label{eq:popriskDLB_cv} 
\frac{1}{d}\left(\frac{g(1)}{c_1^2n} \left(\sum_{i=1}^{K} \beta_i \right)^2 + \sum_{i=1}^K \left(c_1^2\frac{\beta_i^2}{s_i} - 2 c_1  D_i \beta_i+ D_i^2\right)\right)
\end{equation}
over all $\beta_i \geq0$ and
\begin{equation}\label{eq:rank_constraints}
\begin{cases}
    0 \leq s_i \leq \min\{k_i, n\},\\
    1 \leq \sum_{i=1}^K s_i \leq \min\{d,n\}.
\end{cases}
\end{equation}
Here $g(x) = f(x)- c_1^2x$, and we use the convention that $\frac{0^2}{0}= 0$ and $\frac{c}{0} = +\infty $ for $c > 0$.
We can also explicitly characterize the optimal $s_i, \beta_i$. 
The optimal $s_i$ are obtained via a \emph{water-filling criterion}: 
\begin{equation}\label{eq:water_filling_main}
    \begin{cases}
    \boldsymbol{s} = [n,0,\cdots,0], & n \leq k_1,\\
    \boldsymbol{s} = [k_1,k_2,\cdots,k_K], & d \leq n,\\
    \boldsymbol{s} = [k_1,\cdots,k_{\mathrm{id}(n) - 1},\mathrm{res}(n),0,\cdots,0] & \text{otherwise},
    \end{cases}
\end{equation}
where $\boldsymbol{s} = [s_1,\cdots,s_k]$, $\mathrm{id}(n)$ denotes the first position at which
$
\min\{n,d\} - \sum_{i=1}^{\mathrm{id}(n)} k_i < 0,
$
and the residual is defined by
$
\mathrm{res}(n) := \min\{n, d\} - \sum_{i=1}^{\mathrm{id}(n) - 1} k_i.
$
The $\beta_i$ can also be expressed explicitly in terms of $f, s_i, D_i$.
This is summarized in the following theorem.

\begin{theorem}\label{thm:wtD_lb}
    Consider the objective \eqref{eq:DPR_obj} under the weight-tied constraint \eqref{eq:weight_tying}.
    Then, 
    \begin{equation}\label{eq:LBD_def}
    \eqref{eq:DPR_obj} \geq  \mathrm{LB}(\D):= \min_{s_i, \beta_i} \eqref{eq:popriskDLB_cv},
    \end{equation}
    where $\beta_i \geq0$ and the $s_i$ satisfy \eqref{eq:rank_constraints}.
    Furthermore, the minimizers of \eqref{eq:popriskDLB_cv} are the $s_i$ obtained via the \emph{water-filling} criterion \eqref{eq:water_filling_main} and
    \begin{equation}\label{eq:betaimain}
    \beta_i =
        \begin{cases}
         \frac{s_i}{c_1} \cdot \left(\frac{\frac{g(1)}{c_1^2n} \sum_{j=1}^{M^{*}}s_j\Delta_j + D_1 }{\frac{g(1)}{c_1^2n}\sum_{j=1}^{M^{*}}s_j + 1} - \Delta_i\right) & \text{if } i \leq M^*, \\
         0 & \text{otherwise,}
    \end{cases}
    \end{equation}
    where $\Delta_j = D_1 - D_j$ and $M^*$ is smallest index such that
    $$\frac{g(1)}{c_1^2n}\sum_{j=1}^{M^*+1} s_j (D_{M^{*} + 1} - D_j) + D_{M^{*} + 1} \leq 0.$$
    If no such index exists, then $M^* = K$.
\end{theorem}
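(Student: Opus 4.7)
The plan is to (a) show that the weight-tied objective \eqref{eq:DPR_obj} is bounded below by $\min_{s_i,\beta_i} \eqref{eq:popriskDLB_cv}$, and (b) solve this finite-dimensional minimization in closed form.

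For (a), I would start from Lemma \ref{lemma:popriskD}: under $\A = \beta\B^\top$, the objective reads $\tfrac{1}{d}\bigl(\beta^2\tr{\M f(\M)} - 2c_1\beta\tr{\B\D\B^\top} + \tr{\D^2}\bigr)$, where $\M := \B\B^\top$ is PSD with $\M_{ii}=1$. Using the block structure \eqref{eq:defDblock}, I would decompose $\B = [\B^{(1)}|\cdots|\B^{(K)}]$ and set $\M_p := \B^{(p)}\B^{(p)\top}$, $\tau_p := \tr{\M_p}$, $s_p := \rank(\M_p)$. Then $\sum_p\tau_p = n$ (from $\|\B_{i,:}\|=1$), $\tr{\B\D\B^\top}=\sum_p D_p\tau_p$, and the dimension constraints $s_p \le \min(n,k_p)$, $\sum_p s_p \le \min(n,d)$ from \eqref{eq:rank_constraints} hold by construction.

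Next, using the Hermite splitting $f(x) = c_1^2 x + g(x)$ with $g(x)=\sum_{\ell\ge 1}c_{2\ell+1}^2 x^{2\ell+1}$, I would bound each piece. For the linear part, subadditivity $\|\M\|_F^2 \ge \sum_p \|\M_p\|_F^2$ (valid since $\tr{\M_p\M_q}=\|\B^{(p)\top}\B^{(q)}\|_F^2 \ge 0$) combined with the power-mean inequality $\|\M_p\|_F^2 \ge \tau_p^2/s_p$ gives $c_1^2\beta^2\|\M\|_F^2 \ge \sum_p c_1^2\beta^2\tau_p^2/s_p$. For the nonlinear part $\beta^2\tr{\M g(\M)}$, the diagonal contributes $n g(1)\beta^2$, and the key step is to harvest the off-diagonal contribution via the PSD structure of each Hadamard power $\M^{\circ(2\ell+1)}$ (in the spirit of the Khare--Rajaratnam inequality \eqref{eq:khare_general} used in Proposition \ref{prop:highratelb}), summed against $c_{2\ell+1}^2$ to collapse to the factor $g(1)$ and produce a coupling of the form $g(1)(\sum_p\tau_p)^2/n$. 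Identifying $\beta_p := c_1\beta\tau_p$ as the natural per-block signal strength, all terms in the resulting bound match those of \eqref{eq:popriskDLB_cv} block by block; taking the infimum over feasible $(s_p, \beta_p)$ yields $\mathrm{LB}(\D)$.

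For (b), the inner minimization over $\beta_i \ge 0$ with $s_i$ fixed is a non-negatively constrained convex quadratic. The KKT stationarity conditions $(c_1^2/s_i)\beta_i + (g(1)/(c_1^2 n))\sum_j \beta_j = c_1 D_i - \mu_i$ (with $\mu_i \ge 0$ complementary to $\beta_i$) give $\beta_i = (s_i/c_1)(D_i - \lambda)$ on the active set; summing over active indices and solving self-consistently for the scalar $\lambda$ recovers the closed-form \eqref{eq:betaimain}, with cutoff $M^*$ determined precisely by the sign criterion in the theorem (since the $D_i$ are strictly decreasing, the active set is always a prefix). Minimization over $s_i$ subject to \eqref{eq:rank_constraints}, after substituting the optimal $\beta_i$, rests on the observation that increasing $s_p$ for a block with $p \le M^*$ strictly decreases the objective (allocating more rank to a direction with larger $D_p$ strictly reduces distortion there), so the greedy allocation $s_1 = k_1, s_2 = k_2, \ldots$ up to the budget $\min(n,d)$ is optimal, producing the three cases of \eqref{eq:water_filling_main}.

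The hard part will be the lower bound on the nonlinear trace $\beta^2\tr{\M g(\M)}$ in terms of the $\tau_p$: the naive diagonal estimate gives only $\beta^2 n g(1)$ and loses the essential block coupling, so one must combine (i) the PSD Schur product $\M^{\circ(2\ell+1)}\succeq 0$, (ii) the unit-diagonal constraint $\M_{ii}=1$, and (iii) the identity $\sum_p \tau_p = n$ into a tight quantitative estimate that depends on the full vector $(\tau_1,\ldots,\tau_K)$ through $(\sum_p \tau_p)^2$. A secondary technical challenge is aligning the KKT solution with the closed-form \eqref{eq:betaimain}--\eqref{eq:water_filling_main}, which requires careful case analysis of the three regimes ($n \le k_1$, $d \le n$, intermediate) governing the water-filling pattern, and verifying that the greedy rank allocation is indeed consistent with the implicit definition of $M^*$.
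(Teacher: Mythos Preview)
Your overall plan matches the paper's proof, but you have misidentified the hard step. The nonlinear trace bound you flag as ``the hard part'' is in fact trivial: since $g$ has nonnegative odd Taylor coefficients, $x\,g(x)\ge 0$ for all $|x|\le 1$, hence
\[
\tr{\M\, g(\M)}=\sum_{i,j} M_{ij}\,g(M_{ij})\ \ge\ \sum_i M_{ii}\,g(M_{ii})=n\,g(1),
\]
which is precisely your ``naive diagonal estimate''. No off-diagonal harvesting is needed, and the quantity $g(1)(\sum_p\tau_p)^2/n$ you aim for equals $g(1)\,n$ because $\sum_p\tau_p=n$. The coupling term $\frac{g(1)}{c_1^2 n}\bigl(\sum_i\beta_i\bigr)^2$ in \eqref{eq:popriskDLB_cv} does not arise from a sharper trace inequality; it appears purely from the change of variables $\beta_i=c_1\beta\gamma_i$, which is a bijection from $\{\beta\ge 0,\ \gamma_i\ge 0,\ \sum_i\gamma_i=n\}$ onto $\{\beta_i\ge 0\}$, and under which $g(1)\,\beta^2 n=\frac{g(1)}{c_1^2 n}\bigl(\sum_i\beta_i\bigr)^2$ identically. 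This is exactly the paper's route: drop the nonnegative off-diagonal of $\M\circ g(\M)$, bound each $\|\M_p\|_F^2\ge \tau_p^2/s_p$ (your power-mean argument is the same content as the paper's use of \cite{khare2021sharp}), drop the nonnegative cross terms $\tr{\M_p\M_q}$, and then change variables.

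For part (b), your KKT analysis for the $\beta_i$ and the greedy rank allocation for the $s_i$ are essentially the paper's Lemma~\ref{lem:LBD_KKT_general}, Lemma~\ref{lem:LBD_KKT}, and Lemma~\ref{lemma:two_buckets}, respectively; no additional ideas are required there beyond what you sketched.
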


\begin{figure}[t!]
    \centering
    \subfloat{\includegraphics[width=0.5\columnwidth]{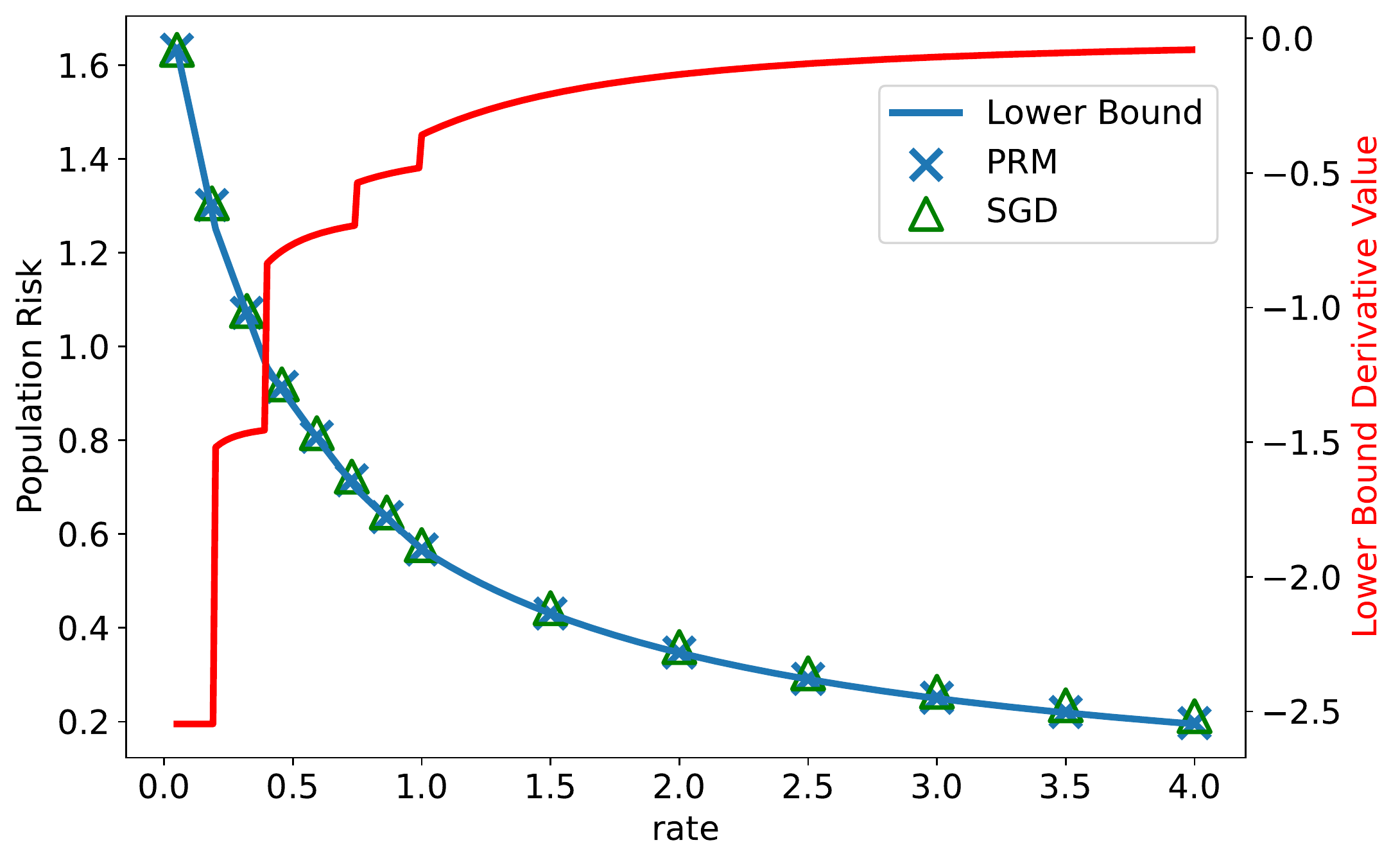}}
    \subfloat{\includegraphics[width=0.5\columnwidth]{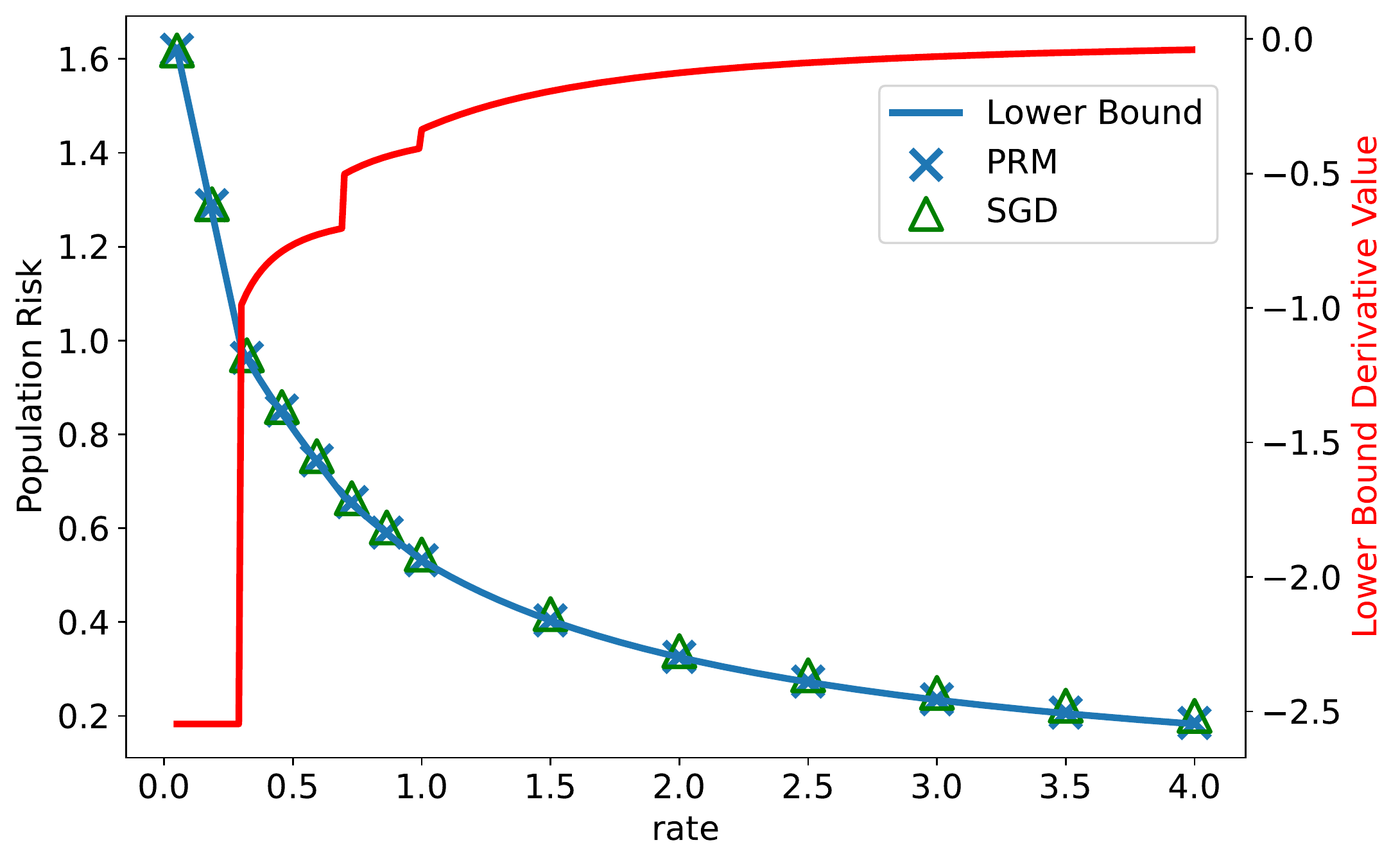}}
\caption{Compression ($\sigma\equiv {\rm sign}$) of a non-isotropic Gaussian source, whose covariance matrix is obtained by taking $\boldsymbol{k}=(20,20,35,25)$ and $(D_1, D_2, D_3, D_4)=(2,1.5,1,0.8)$ for the left plot, and $\boldsymbol{k}=(30, 40, 30)$ and $(D_1, D_2, D_3)=(2, 1, 0.7)$ for the right plot. The blue crosses (Population Risk Minimizer, PRM) are obtained by optimizing \eqref{eq:DPR_obj} via GD. The green triangles are obtained by training an autoencoder via SGD
on Gaussian samples with the given covariance structure. The red solid line plots the derivative of the population risk computed using a finite differences scheme. Note that the derivative jumps when the corresponding blocks are getting filled, although this may not happen in general, see Appendix \ref{appendix:numerics}. A similar behavior can be observed in the isotropic case at $r=1$, as there is only one block to fill (see Figure \ref{fig:comparison}).\vspace{-2mm}}\label{fig:noniso_exps}
\end{figure}

We give a high-level overview of the proof below, and the complete argument is provided in Appendix \ref{appendix:water_fill_proofs}.

\textit{Proof sketch of Theorem \ref{thm:wtD_lb}.}
In the first step, we show that \eqref{eq:LBD_def} 
holds. 
Consider the following block decomposition of $\B$ having the same block structure as $\D$:
\begin{equation}\label{eq:weight_tying_noniso}
    \B = [\bGamma_1\B_1|\cdots|\bGamma_K \B_K],
\end{equation}
where $\B_j \in \mathbb{R}^{n\times k_j}$ with $\|(\B_j)_{i,:}\|_2=1$ and $\{\bGamma_j\}_{j=1}^K$ are diagonal matrices with $\sum_{j=1}^K\bGamma_j^2 = \I$. 
Each $\B_i$ will play a similar role to the $\B$ in the isotropic case.
The crucial bound for this step comes from Theorem {A} in \cite{khare2021sharp}:
$$
(\bGamma_i\B_i\B_i^\top\bGamma_i)^{\circ{2}} \succeq \frac{1}{s_i}  \cdot \mathrm{Diag}(\bGamma_i^2) \mathrm{Diag}(\bGamma_i^2)^\top,
$$
where $s_i = \mathrm{rank}(\B_i\B_i^\top)$.
Now, ignoring the (PSD) cross-terms for $i\neq j$ we can proceed as in the proof of Proposition \ref{prop:highratelb} to arrive at the lower bound
\begin{equation}\label{eq:popriskDLBmain}
   \frac{1}{d}\left( \beta^2 \left(g(1) \cdot n + \sum_{i=1}^K \frac{\gamma_i^2}{s_i}\right) - 2 \beta \cdot \sum_{i=1}^K D_i \gamma_i + \sum_{i=1}^KD_i^2\right),
\end{equation}
where, with an abuse of notation, we have re-defined $g(x):=g(x)/c_1^2$ and $\beta := c_1\beta$.
 Note that for $\D=\I$ one can easily find an expression for the minimum of \eqref{eq:popriskDLBmain} in terms of $r$ and verify that it coincides with the previous bounds in Theorem \ref{thm:tightlb_lowrate} and  Proposition \ref{prop:highratelb}.
 Now by choosing $\beta_i := \beta \gamma_i$ and using that $\sum_{i=1}^K\gamma_i = n$, the objective \eqref{eq:popriskDLBmain} is seen to be equivalent to \eqref{eq:popriskDLB_cv}, hence \eqref{eq:LBD_def} holds. 

 Next, 
the optimal $s_i$ are water-filled as defined in \eqref{eq:water_filling_main}, which follows from the standard convex analysis argument of Lemma \ref{lemma:two_buckets}. 
Finally, given the form of the optimal $s_i$,
it remains to find the optimal $\beta_i$.
This is done by considering a slightly more general problem in Lemma \ref{lem:LBD_KKT_general}. In fact, the problem of minimizing \eqref{eq:popriskDLB_cv} is of the form:
$$
\eqref{eq:popriskDLB_cv} =  \min_{m_i \geq 0 }f\left(\sum_{i=1}^K m_i \right) + \sum_{i=1}^{K} f_i(m_i),
$$
where importantly $f$ and $\{f_i\}_{i=1}^{K}$ are \emph{strictly convex} differentiable functions. 
The proof is based on techniques from convex analysis. The explicit calculations for our case are 
 then carried out in Lemma \ref{lem:LBD_KKT}.
\qed

\paragraph{Asymptotic achievability.} 
We show that the lower bound in Theorem \ref{thm:wtD_lb} can be asymptotically (i.e, as $d\to\infty$) achieved by using the block form \eqref{eq:weight_tying_noniso}, after carefully picking $\B_i$ for each block. Specifically, 
first we generate a matrix $\U \in \R^{n \times n}$ which is sampled uniformly from the group of orthogonal matrices.
Next, we choose each $\B_i$ such that $\hat{\B}_i \hat{\B}_i^\top = \frac{n}{k_i}\U \D_i \U^\top$, where $\D_i$ is a diagonal matrix with
$$(\D_i)_{v,v} = \begin{cases} 1,  \quad \textrm{if} \quad \sum_{j=1}^{i-1}k_j < v \leq \sum_{j=1}^i k_j, \\
0, \quad \mathrm{otherwise},
\end{cases}
$$
and the rows of $\B$ are composed of normalized $\hat{\b}_i$, i.e.,  $\b_i = \frac{\hat{\b}_i}{\|\hat{\b}_i\|_2}$. Furthermore, 
we pick $\bGamma_i^2 = \frac{\gamma_i}{n} \I$ and
 $\A = \beta \B^\top $.
 The scalings $\gamma_i$ and $\beta$ are chosen to be the minimizers of \eqref{eq:popriskDLBmain} for  $s_i$ as in \eqref{eq:rank_constraints}. This is formalized in the following proposition. 
 
 \begin{proposition}\label{prop:Dlb_achievability}
     Assume $\A, \B$ are constructed as described above and fix $r >0$.
     Also assume that, for all $i$, $\frac{k_i}{n}$ converges to a strictly positive number as $d \to \infty$.
     Then, for any $\epsilon >0$ 
with probability $1-\frac{c}{d^2}$, the following holds
$$
\left|\mathcal{R}(\A,\B) - \mathrm{LB}(\D)\right|  \le C d^{-\frac{1}{2}+\epsilon},
$$
where $\mathrm{LB}(\D)$ is defined in \eqref{eq:LBD_def}, and the constants $c, C$ only depend on $r$, $\epsilon$ and $\lim_{d\to \infty}\frac{k_i}{n}$.

 \end{proposition}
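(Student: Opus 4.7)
The approach mirrors that of Proposition \ref{prop:highrate_min}: compute the population risk of the constructed weight-tied pair in closed form, and use concentration of measure on the orthogonal group to show that it matches $\mathrm{LB}(\D)$ up to an error of order $d^{-1/2+\epsilon}$. Starting from Lemma \ref{lemma:popriskD} and the weight-tying $\A = \beta\B^\top$, the risk reduces to
\begin{equation*}
\mathcal{R}(\A,\B) = \frac{1}{d}\left(\beta^2\tr{\B\B^\top f(\B\B^\top)} - 2c_1\beta\tr{\B\D\B^\top} + \tr{\D^2}\right),
\end{equation*}
so the task is to evaluate the two traces involving $\B$.

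The construction yields the explicit spectral form $\hat{\B}\hat{\B}^\top = \U\Lam\U^\top$ with $\Lam = \sum_j(\gamma_j/k_j)\D_j$, whose nonzero eigenvalues encode the block structure of $\D$. Since each entry $(\U\Lam\U^\top)_{ij}$ is an $O(\opn{\Lam}/\sqrt{n})$-Lipschitz function of $\U$, Haar-measure concentration on the orthogonal group (as deployed in the proof of Proposition \ref{prop:highrate_min}) gives, with probability at least $1-c/d^2$ after a union bound over the $n^2$ entries,
\begin{equation*}
\bigl|(\hat{\B}\hat{\B}^\top)_{ii} - a\bigr| \le C\,d^{-1/2+\epsilon}, \qquad \bigl|(\hat{\B}\hat{\B}^\top)_{ij}\bigr| \le C\,d^{-1/2+\epsilon} \quad (i \ne j),
\end{equation*}
where $a = n^{-1}\tr{\Lam}$. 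Row-normalization $\b_i = \hat{\b}_i/\|\hat{\b}_i\|$ then makes the diagonal of $\B\B^\top$ identically $1$ while leaving its off-diagonals of the same order. An analogous argument applied to $\hat{\B}\D\hat{\B}^\top = \U\Lam'\U^\top$, with $\Lam' = \sum_j (D_j\gamma_j/k_j)\D_j$, controls the second trace.

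Expanding $f$ element-wise and splitting diagonal and off-diagonal contributions gives
\begin{equation*}
\tr{\B\B^\top f(\B\B^\top)} = n f(1) + c_1^2\bigl(\|\B\B^\top\|_F^2 - n\bigr) + \sum_{\ell\ge1} c_{2\ell+1}^2\sum_{i\ne j}(\B\B^\top)_{ij}^{2\ell+2}.
\end{equation*}
The leading off-diagonal contribution reduces, through the concentration bounds, to an explicit quadratic functional of $\Lam$; each higher-order term is of size at most $d^2\cdot d^{-(\ell+1)+O(\epsilon)}$ and hence negligible after division by $d$. Substituting these expressions into $\mathcal{R}(\A,\B)$ and invoking the optimality of $\beta,\gamma_i$ for the water-filled $s_i$ of \eqref{eq:water_filling_main} produces a quantity matching the minimum of \eqref{eq:popriskDLBmain}, which equals $\mathrm{LB}(\D)$ by Theorem \ref{thm:wtD_lb}.

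The main technical obstacle is the third step: the normalization factor $a$ depends on the block structure through $a = n^{-1}\sum_j \gamma_j s_j/k_j$, and one must verify the algebraic identity that produces $\sum_i\gamma_i^2/s_i$ (the expression appearing in the lower bound) out of $a^{-2}\|\Lam\|_F^2$. This identity is transparent when the water-filled $s_i$ equals $k_i$ for each $i$ (the case $d\le n$, where $a = 1$), but requires more bookkeeping in the boundary case where $s_{\mathrm{id}(n)} = \mathrm{res}(n) < k_{\mathrm{id}(n)}$; there one must carefully combine the row-normalization with the block-dependent eigenvalue scaling to recover the lower-bound expression. Once this algebraic matching is established, the concentration estimates translate cleanly into the stated $d^{-1/2+\epsilon}$ error with the prescribed probability.
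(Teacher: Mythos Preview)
Your plan is sound in outline---Haar concentration controls the entries, the higher Hadamard powers are negligible, and the first-order term must reproduce the quadratic form in \eqref{eq:popriskDLBmain}---but the paper organizes the computation differently, and that organization dissolves exactly the obstacle you flag at the end.

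Rather than collapsing everything into a single spectrum $\Lam=\sum_j(\gamma_j/k_j)\D_j$ and then trying to recover $\sum_i\gamma_i^2/s_i$ from $a^{-2}\|\Lam\|_F^2$, the paper keeps the block decomposition $\B\B^\top=\sum_i\M_i$ with $\M_i=\bGamma_i\B_i\B_i^\top\bGamma_i$ throughout. Each diagonal contribution $\tr{\M_i f(\M_i)}$ is then handled verbatim by the argument of Proposition~\ref{prop:highrate_min}, and the only new content is the linear cross-terms $\tr{\M_i\M_j}$ for $i\neq j$. These are dispatched in one line: since the projectors have disjoint supports, $\tr{\U\D_i\U^\top\U\D_j\U^\top}=\tr{\D_i\D_j}=0$ exactly, and the row-normalizing perturbations $\P_i\approx\I$ cost only $O(n^{1/2+\epsilon})$. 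No algebraic identity relating $\|\Lam\|_F^2$ to $\sum_i\gamma_i^2/s_i$ is ever needed.

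A second point: the construction normalizes each $\hat\B_i$ block-wise (so that every $\B_i$ in \eqref{eq:weight_tying_noniso} has unit-norm rows), not the full $\hat\B$ by a single scalar $a$. With block-wise normalization the effective eigenvalue on block $i$ self-corrects from $\gamma_i/k_i$ to $\gamma_i/s_i$, which is why the boundary case $s_{\mathrm{id}(n)}<k_{\mathrm{id}(n)}$ causes no difficulty in the paper's proof. Your single-$a$ normalization would indeed require the extra bookkeeping you anticipate, and is not quite the construction as stated.
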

The proof of this lemma is similar to that of Proposition \ref{prop:highrate_min}, and it is provided in Appendix \ref{appendix:water_fill_proofs}. We remark that Proposition \ref{prop:Dlb_achievability} can be extended to $D_i$ being sampled from a compactly supported measure, at the price of a worse rate of convergence. This is due to the fact that we can approximate compact measures with discrete measures. We omit the details here. 
 
 Taken together, Proposition \ref{prop:Dlb_achievability} and Theorem \ref{thm:wtD_lb} show that the optimal $\B$ exhibits the block structure \eqref{eq:weight_tying_noniso}, which agrees with the block structure \eqref{eq:defDblock} of the covariance matrix of the data. The individual blocks are orthogonal in the sense that $ \B_i^\top \bGamma_i \bGamma_j\B_j = \0$.
 Furthermore,  each block has the same form as the minimizers in the isotropic case,
 up to some scaling.
 Such a structure is also confirmed by the numerical experiments: for instance, it is observed in the settings considered for Figure \ref{fig:noniso_exps}.

\section{Discussion}\label{sec:discussion}

\paragraph{Population vs. empirical loss.} All our results hold for the optimization of the population loss. Extending them to the empirical loss is an interesting direction for future research. One possible way forward is to exploit recent progress towards relating the landscape of empirical and population losses, see e.g.  \cite{mei2018landscape}. We remark that, in the simulations of gradient descent, we always use the tempered straight-through estimator of the sign activation (see Appendix \ref{appendix:numerics} for details). 
Thus, another promising direction is to show that, in the low-temperature regime (i.e., when the differentiable approximation of the sign becomes almost perfect), the gradient-based scheme converges to the minimizer of the population risk.

\paragraph{Optimality of two-layer autoencoders.} \begin{wrapfigure}{r}{0.48\textwidth}
  \begin{center}
    \vspace{-2.3em}\hspace{-1em}\includegraphics[width=0.46\textwidth]{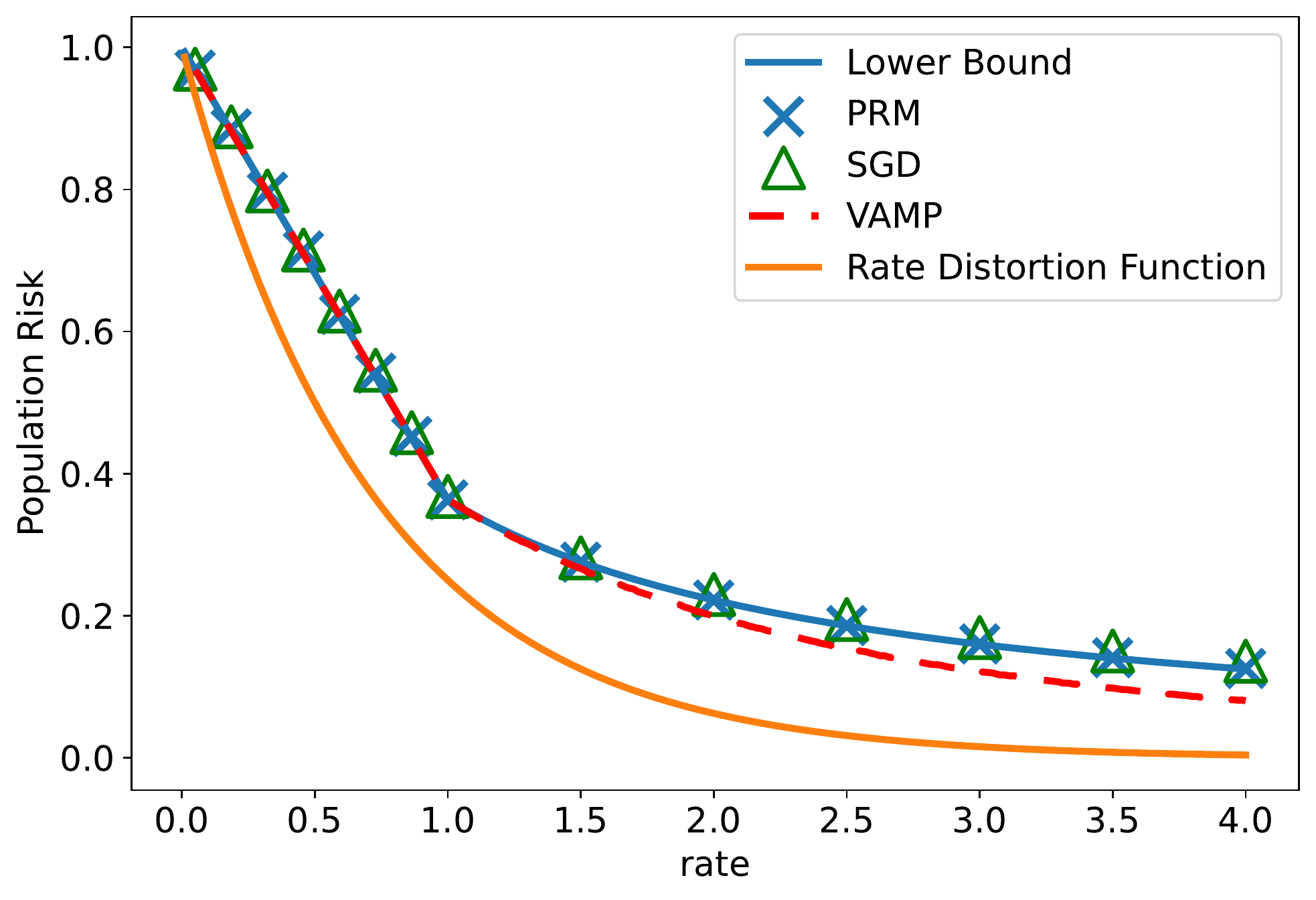}
  \end{center}
 \vspace{-1em}\caption{Performance comparison for the compression ($\sigma\equiv {\rm sign}$) of an isotropic Gaussian source.}\vspace{-1.em}
    \label{fig:comparison}
\end{wrapfigure}This paper characterizes the minimizers of the expected $\ell_2$ error incurred by two-layer autoencoders, and it shows that the minimum error is achieved, under certain conditions, by gradient-based algorithms. Thus, for the special case in which $\sigma\equiv {\rm sign}$, a natural question is to what degree the model \eqref{eq:model} is suitable for data compression.

Let us 
fix the encoder to be a rotationally invariant matrix, i.e., $\B = \U \Lam \V^\top$ with $\U, \V$ independent and distributed according to the Haar measure and $\Lam$ having bounded entries. Then, the information-theoretically optimal reconstruction error can be computed via the replica method from statistical mechanics \cite{tulino2013support} and, in a number of scenarios, it coincides with the error of a Vector Approximate Message Passing (VAMP) algorithm \cite{rangan2019vector,schniter2016vector}. Furthermore, it is also possible to optimize the spectrum $\Lam$ to minimize the error, which leads to the singular values of $\B$ being all $1$ \cite{ma2021analysis}.\footnote{More specifically, \cite{ma2021analysis} consider an expectation propagation (EP) algorithm \cite{minka2001expectation,opper2005expectation,fletcher2016expectation,he2017generalized}, which has been related to  various forms of approximate message passing \cite{ma2017orthogonal,rangan2019vector}.} Surprisingly, for a compression rate $r\le 1$, the optimal error found in \cite{ma2021analysis} \emph{coincides} with the minimizer of the population loss given by Theorem \ref{thm:tightlb_lowrate}. Hence, two-layer autoencoders are optimal compressors under two conditions: \emph{(i)} $r\le 1$, and \emph{(ii)} fixed encoder given by a rotationally invariant matrix. Both conditions are sufficient and also necessary. For $r>1$, VAMP outperforms the two-layer autoencoder. Moreover, for \emph{a general encoder/decoder pair}, the information-theoretically optimal reconstruction error is given by the rate-distortion function, which outperforms two-layer autoencoders for all $r>0$. This picture is summarized in Figure \ref{fig:comparison}: the blue curve represents the lower bound of Theorem \ref{thm:tightlb_lowrate} (for $r\le 1$) and Proposition \ref{prop:highratelb} (for $r>1$), which is met by either running GD on the population risk (blue crosses) or SGD on samples taken from a isotropic Gaussian (green triangles) when $d=100$;\footnote{For further details on the experimental setup, see Appendix \ref{appendix:numerics}.} this lower bound meets the performance of VAMP (red curve) if and only if $r\le 1$; finally, the rate distortion function (orange curve) provides the best performance for all $r>0$.

\paragraph{Universality of Gaussian predictions.} Figures \ref{fig:noniso_exps} and \ref{fig:comparison} show that gradient descent achieves the minimum of the population risk for the compression of Gaussian sources. Going beyond Gaussian inputs, to real-world datasets,  Figure \ref{fig:cifar_white} (as well as those in Appendix \ref{appendix:numerics}) shows  an excellent agreement between our predictions (using the empirical covariance of the data) and the performance of autoencoders trained on standard datasets (CIFAR-10, MNIST). As such, this agreement  provides a clear indication of the universality of our predictions. 
 In this regard, a flurry of recent research (see e.g. \cite{hastie2022surprises,hu2022universality,loureiro2021learning,goldt2022gaussian,dudeja2022spectral, montanari2022universality} and references therein) has proved that the Gaussian predictions actually hold in a much wider range of models.  While none of the existing works exactly fits the setting considered in this paper, this gives yet another indication that our predictions should remain true more generally. 
The rigorous characterization of this universality is left for future work. 

\section*{Acknowledgements}

Alexander Shevchenko, Kevin Kögler and Marco Mondelli are supported by the 2019 Lopez-Loreta Prize.
Hamed Hassani acknowledges the support by the NSF CIF award (1910056) and the NSF Institute for CORE Emerging Methods in Data Science (EnCORE).

\bibliography{refs}
\bibliographystyle{amsalpha}

\newpage
\appendix
\onecolumn
\section{Closed Forms for the Population Risk}\label{app:cf}

\begin{proof}[Proof of Lemma \ref{lemma:closed_from_of_population_risk}] Opening up the two-norm gives
\begin{equation}\label{eq:cfpr1}
    \E\|\x - \A\sigma(\B\x)\|_2^2 = \E\|\x\|_2^2 + \E\|\A\sigma(\B\x)\|_2^2 - 2\E\langle \x, \A\sigma(\B\x) \rangle.
\end{equation}
Since $\x\sim\mathcal{N}(\0,\boldsymbol{I})$, we get 
\begin{equation}\label{eq:rtilde_piece0}
    \E \|\x\|_2^2=d.
\end{equation}
Let $\B^\top = [\b_1,\dots,\b_n]\in \mathbb R^{d\times n}$ and $\A = [\a_1,\dots,\a_n]\in \mathbb R^{d\times n}$, with $\|\b_i\|_2=\|\B_{i, :}\|=1$. Rewriting the second term in  \eqref{eq:cfpr1} gives
\begin{equation}\label{eq:cfpr2}
    \E \|\A\sigma(\B \x)\|_2^2 = \sum_{i,j=1}^n \langle \a_i, \a_j \rangle \cdot \E \left[\sigma(\langle \b_i,\x \rangle) \cdot \sigma(\langle \b_j,\x \rangle)\right].
\end{equation}
Using the reproducing property of Hermite coefficients (see, e.g., Chapter 11 in \cite{o2014analysis}), since the random variables $\langle \b_i,\x \rangle$ and $\langle \b_j,\x \rangle$ are $\langle \b_i,\b_j \rangle$-correlated, we have
$$
\E\left[ h_{2\ell+1}(\langle \b_i,\x \rangle) \cdot h_{2\ell+1}(\langle \b_j,\x \rangle)\right] = \langle \b_i,\b_j \rangle^{2\ell + 1}, \quad \E\left[ h_{2\ell+1}(\langle \b_i,\x \rangle) \cdot h_{2k+1}(\langle \b_j,\x \rangle)\right] = 0,
$$
for $k\neq \ell$. This gives that
$$
\E \left[\sigma(\langle \b_i,\x \rangle) \cdot \sigma(\langle \b_j,\x \rangle)\right] = \sum_{\ell=0}^\infty (c_{2\ell+1})^2 \langle \b_i,\b_j \rangle^{2\ell + 1} = f(\langle \b_i, \b_j \rangle),
$$
and, hence, using \eqref{eq:cfpr2} we arrive to
\begin{equation}\label{eq:rtilde_piece1}
    \E \|\A\sigma(\B \x)\|_2^2 = \sum_{i,j=1}^n \langle \a_i, \a_j \rangle \cdot f(\langle \b_i, \b_j \rangle) = \tr{\A^\top\A\cdot f(\B\B^\top)}.
\end{equation}
Rearranging the last term in \eqref{eq:cfpr1} gives
\begin{equation}\label{eq:cfpr3}
    \E \langle \x , \A\sigma(\B\x)  \rangle = \sum_{i=1}^d\sum_{j=1}^n a_j^i \cdot \E [ x_i  \sigma(\langle \b_j, \x \rangle)],
\end{equation}
where $a_j^i$ stands for the $i$-th coordinate of the vector $\a_j$ and $x_i$ stands for the $i$-th coordinate of the vector $\x$. Let us now compute the inner expected value for each pair $(i,j)$. Notice that the random variables $\langle \b_j , \x \rangle$ and $x_i$ are jointly Gaussian with zero mean and covariance matrix $\widetilde{\boldsymbol{\Sigma}}\in\mathbb R^{2\times 2}$:
$$
\widetilde{\boldsymbol{\Sigma}}_{21} = \widetilde{\boldsymbol{\Sigma}}_{12} = \E x_i\langle \b_j , \x \rangle = \E b_j^i x_i^2 = b_j^i,\quad \ \widetilde{\boldsymbol{\Sigma}}_{11} = \E \langle \b_j , \x \rangle^2 = \|\b_j\|_2^2 = 1, \quad \widetilde{\boldsymbol{\Sigma}}_{22} = \E x_i^2 = 1.
$$
Hence, the random vectors $(\langle \b_j , \x \rangle,x_i)$ and
$$
\left(y_1, b_j^i \cdot y_1 + \sqrt{1-(b_j^i)^2}\cdot y_2\right), \quad \mbox{ with }(y_1, y_2) \sim \mathcal{N}(0, \boldsymbol{I})
$$
are identically distributed. In this view, we obtain
\begin{equation}\label{eq:elaborate_bulbasaur}
\begin{split}
    \E [ x_i \sigma(\langle \b_j, \x \rangle)] &= \E \left[\left(b_j^i \cdot y_1 + \sqrt{1-(b_j^i)^2}\cdot y_2\right)\sigma(y_1)\right]\\ 
    &= b_j^i \cdot \E [y_1 \sigma(y_1)] + \sqrt{1-(b_j^i)^2} \cdot \E [y_2] \cdot \E[\sigma(y_1)] = c_1 \cdot b_j^i,
\end{split}
\end{equation}
where we applied the reproducing property to conclude that $\E [y_1 \sigma(y_1)] = c_1$. Consequently, by combining \eqref{eq:cfpr3} and \eqref{eq:elaborate_bulbasaur}, we get that
\begin{equation}\label{eq:rtilde_piece2}
    \E \langle \x , \A\sigma(\B\x)  \rangle = c_1 \cdot \sum_{i=1}^d\sum_{j=1}^n a_j^i b_j^i = c_1 \cdot \tr{\B\A}. 
\end{equation}
By combining \eqref{eq:cfpr1}, \eqref{eq:rtilde_piece0}, \eqref{eq:rtilde_piece1} and \eqref{eq:rtilde_piece2}, we obtain the desired expression for $\widetilde{R}(r)$.

Assume now that $\sigma$ is homogeneous. Then, in \eqref{eq:cfpr2} and \eqref{eq:cfpr3}, the norm of $\b_i$ can be pushed into the corresponding $\a_i$ and, hence, we obtain
$$
\min_{\A, \B} \E \|\x - \A\sigma(\B\x)\|_2^2 = \min_{\A, \|\B_i\|_2=1} \E \|\x - \A\sigma(\B\x)\|_2^2,
$$
which proves that $
\widehat{\mathcal{R}}(r) = \widetilde{\mathcal{R}}(r)
$.

Finally, consider the case $\sigma(x)= \mathrm{sign}(x)$. Then, Grothendieck’s identity (see, e.g., Lemma 3.6.6 in \cite{vershynin2018high}) gives 
$$
\E \sigma(\langle \b_i,\x \rangle) \sigma(\langle \b_j,\x \rangle) = \frac{2}{\pi} \mathrm{arcsin}(\langle \b_i , \b_j \rangle) \Rightarrow f(x) = \frac{2}{\pi} \mathrm{arcsin}(x).
$$
Recalling that the first Hermite coefficient of $\sigma(x)= \mathrm{sign}(x)$ is equal to $\sqrt{\frac{2}{\pi}}$ finishes the proof.
\end{proof}

\begin{proof}[Proof of Lemma \ref{lemma:popriskD}] The proof of Lemma \ref{lemma:popriskD} follows from similar arguments as that of Lemma \ref{lemma:closed_from_of_population_risk}. Given this, we only explain the key differences. 
We first show that it is enough to consider $\bSigma = \D^2$.
Given the SVD $\bSigma=\U \D^2 \U^\top$, we have $\x = \U \D \tilde{\x}$, where $\tilde{\x} \sim \mathcal{N}(\0, \I)$.
Now, we can push the rotation $\U$ in $\A, \B$:
$$\norm{\x - \A\sigma(\B\x)}_2 = \norm{\D\tilde{\x} - \U^\top\A\sigma(\B\U\D\tilde{\x})}_2.$$
Thus, after replacing $\A$ with $\U^\top \A$ and $\B $ with $\B \U$, we may assume that $\x = \D \tilde{\x}$.

We again open up the two-norm 
\begin{equation}\label{eq:open2normD}
    \E\|\x - \A\sigma(\B\x)\|_2^2 = \E\|\x\|_2^2 + \E\|\A\sigma(\B\x)\|_2^2 - 2\E\langle \x, \A\sigma(\B\x) \rangle.
\end{equation}
For the first term, we clearly have
$$ \E\|\x\|_2^2 = \tr{\D^2 }.$$
Now, for the second term we write
$$\E\|\A\sigma(\B\x)\|_2^2 =\E\|\A\sigma(\B \D \tilde{\x})\|_2^2,$$
where $\tilde{\x} \sim \mathcal{N}(\0,\boldsymbol{\I})$ .
Thus, as in the proof of Lemma \ref{lemma:closed_from_of_population_risk}, we have
$$\E\|\A\sigma(\B \D \tilde{\x})\|_2^2 = \tr{\A^\top\A \cdot f(\B\D^2\B^\top)}. $$
Similarly, for the last term we obtain
$$\E\langle \x, \A\sigma(\B\x) \rangle = \E\langle \tilde{\x}, \D\A\sigma(\B\D\tilde{\x}) \rangle= c_1\tr{\D\A\B\D}.$$
Finally, since $\sigma$ is homogeneous, by abuse of notation we can replace $\B\D$ by any $\B$ with unit-norm rows. This follows from the fact that, similarly to the proof of Lemma \ref{lemma:closed_from_of_population_risk} (namely, equations \eqref{eq:cfpr2} and \eqref{eq:cfpr3}), we have that
\begin{equation*}
\begin{split}
    &\E\|\A\sigma(\B \D \tilde{\x})\|_2^2 = \sum_{i,j=1}^n \langle \a_i, \a_j \rangle \cdot \E \left[\sigma(\langle (\B\D)_{i,:},\tilde{\x} \rangle) \cdot \sigma(\langle (\B\D)_{j,:},\tilde{\x} \rangle)\right], \\
    &\E\langle \tilde{\x}, \D\A\sigma(\B\D\tilde{\x}) \rangle = \sum_{i=1}^d\sum_{j=1}^n a_j^i \cdot \E [ (D_{i,i} \cdot \tilde{x}_i)  \cdot \sigma(\langle (\B\D)_{j,:}, \tilde{\x} \rangle)],
\end{split}
\end{equation*}
which, by homogeneity, readily gives that the norm of $(\B\D)_{i,:}$ can be pushed into the corresponding $\a_i$.

As a result, the statement of Lemma \ref{lemma:popriskD} readily follows by comparing the terms.
\end{proof}

\section{Proofs of Lower Bound on Loss (Section \ref{subsec:lb})}\label{appendix:lb}

\subsection{Case $r \leq 1$}\label{appendix:lowrate}

\subsubsection{Lower bound on $\widetilde{R}(r)$}

\begin{lemma}\label{lemma:allin}
Let $\A = [\a_1,\ldots,\a_n]\in \mathbb R^{d\times n}$ and $\B^\top=[\b_1, \ldots, \b_n]\in \mathbb R^{d\times n}$, with $\|\b_i\|_2=1$ for $i\in [n]$. Let $c_1$ and $f(\cdot)$ be defined as per Lemma \ref{lemma:closed_from_of_population_risk}. Then, the following bound holds:
\begin{equation}\label{eq:lower-bound}
    \mathcal{L}_l(\A,\B):=\tr{\A^{\top}\A \cdot (\B\B^{\top})^{\circ(2\ell+1)}} - \frac{2c_1}{f(1)} \cdot \tr{\B\A} \geq -\frac{c_1^2}{(f(1))^2} \cdot n.
\end{equation}
\end{lemma}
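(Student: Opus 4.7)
My plan follows the outline given for Theorem~\ref{thm:tightlb_lowrate} in the body of the paper, specialized to the single Hadamard power $(\B\B^\top)^{\circ(2\ell+1)}$: the idea is to rewrite $\mathcal{L}_\ell$ as a quadratic-plus-linear expression in auxiliary matrices $(\A_k,\B_k)$ and then complete the square.

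\textbf{Step 1 (algebraic rewriting).} I would factor $(\B\B^\top)^{\circ(2\ell+1)} = (\B\B^\top)^{\circ\ell} \circ (\B\B^\top)^{\circ(\ell+1)}$ and apply the trace--Hadamard identity $\tr{M(A \circ B)} = \tr{(M \circ A^\top) B}$ (valid whenever $A$ is symmetric), together with $(\B\B^\top)^{\circ(\ell+1)} = \B\B^\top \circ (\B\B^\top)^{\circ\ell}$, to obtain
\[
\tr{\A^\top\A\,(\B\B^\top)^{\circ(2\ell+1)}} \;=\; \tr{(\A^\top\A \circ \Q)(\B\B^\top \circ \Q)},
\]
where $\Q := (\B\B^\top)^{\circ\ell}$ is PSD by the Schur product theorem and has unit diagonal because $\|\b_i\|_2 = 1$.

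\textbf{Step 2 (spectral bookkeeping).} Diagonalize $\Q = \sum_{k=1}^n \lambda_k \u_k \u_k^\top$ with $\lambda_k \ge 0$ and $\{\u_k\}$ orthonormal. The two facts I will need are $\sum_k \lambda_k = \tr\Q = n$ and $\sum_k \lambda_k (\u_k)_j^2 = \Q_{jj} = 1$ for every $j$. Setting
\[
\A_k := \sqrt{\lambda_k}\,\A\,\mathrm{diag}(\u_k) \in \mathbb{R}^{d\times n}, \qquad \B_k := \sqrt{\lambda_k}\,\mathrm{diag}(\u_k)\,\B \in \mathbb{R}^{n\times d},
\]
a short computation based on $\mathrm{diag}(\u)\,M\,\mathrm{diag}(\u) = M \circ (\u\u^\top)$ yields $\sum_{i,j=1}^n \tr{\A_i^\top\A_i\,\B_j\B_j^\top} = \tr{(\A^\top\A\circ\Q)(\B\B^\top\circ\Q)}$ and, crucially, $\sum_{k=1}^n \tr{\A_k\B_k} = \tr{\B\A}$; the second identity is precisely where the unit-diagonal property of $\Q$ enters. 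Consequently $\mathcal{L}_\ell = \sum_{i,j}\tr{\A_i^\top\A_i\,\B_j\B_j^\top} - \tfrac{2c_1}{f(1)}\sum_k \tr{\A_k\B_k}$.

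\textbf{Step 3 (completing the square).} I would set $\X := \sum_k \A_k^\top\A_k$ (PSD, $n\times n$) and, for any $\epsilon > 0$, expand the nonnegative quantity
\[
0 \;\le\; \sum_{k=1}^n \Bigl\lVert \tfrac{c_1}{f(1)}\,(\X+\epsilon\I)^{-1/2}\A_k^\top - (\X+\epsilon\I)^{1/2}\B_k \Bigr\rVert_F^{\,2}.
\]
Recognizing $\sum_{i,j}\tr{\A_i^\top\A_i\,\B_j\B_j^\top} = \tr{\X\,\sum_k \B_k\B_k^\top}$, this rearranges to
\[
\mathcal{L}_\ell \;\ge\; -\,\frac{c_1^2}{(f(1))^2}\,\tr{(\X+\epsilon\I)^{-1}\X} \;-\; \epsilon\,\sum_k \tr{\B_k\B_k^\top},
\]
and sending $\epsilon \to 0^+$ gives $\tr{(\X+\epsilon\I)^{-1}\X} \to \rank(\X) \le n$ while the $\epsilon$-term vanishes, producing the claimed bound. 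The main technical point is the Step~2 bookkeeping---in particular, verifying that $\|\b_i\|_2 = 1$ is exactly what collapses $\sum_k\tr{\A_k\B_k}$ into $\tr{\B\A}$; the mild subtlety of a possibly singular $\X$ in Step~3 is absorbed by the $\epsilon$-regularization above (equivalently, by switching to the Moore--Penrose pseudoinverse).
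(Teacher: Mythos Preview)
Your proposal is correct and follows essentially the same route as the paper's proof: the same trace--Hadamard rewriting with $\Q=(\B\B^\top)^{\circ\ell}$, the same decomposition into $\A_k,\B_k$ via the rank-one pieces of $\Q$ (the paper absorbs $\sqrt{\lambda_k}$ into the $\u_k$'s, you keep it separate), and the same completion of the square with $\X=\sum_k\A_k^\top\A_k$. The only cosmetic difference is that the paper handles a possibly singular $\X$ by a continuity/perturbation argument whereas you use an explicit $\epsilon$-regularization; both are equivalent.
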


\begin{proof}[Proof of Lemma \ref{lemma:allin}]
For any symmetric $\P, \Q, \T\in \mathbb R^{n\times n}$, a direct computation readily gives that
\begin{equation}\label{eq:dcomp}
 \tr{\P\cdot(\Q\circ \T)}=\tr{(\P\circ \Q)\cdot \T)}.   
\end{equation}
Thus, by taking $\P=\A^{\top}\A$, $\Q=(\B\B^{\top})^{\circ \ell}$ and $\T=(\B\B^{\top})^{\circ(\ell+1)}$, we obtain
$$
\tr{\A^{\top}\A \cdot (\B\B^{\top})^{\circ(2\ell+1)}} = \tr{(\A^{\top}\A \circ (\B\B^{\top})^{\circ \ell} ) \cdot (\B\B^{\top} \circ (\B\B^{\top})^{\circ \ell})}.
$$
Note that $\B\B^{\top}$ is PSD and, therefore, $(\B\B^{\top})^{\circ \ell}$ is also PSD by Schur product theorem. Furthermore, as the rows of $B$ have unit norm, $(\B\B^{\top})^{\circ \ell}$ has unit diagonal. As a result, if we show that, for any PSD matrix $\Q$ with unit diagonal entries,
\begin{equation}\label{relax:1}
  \tr{(\A^{\top}\A \circ \Q ) \cdot (\B\B^{\top} \circ \Q)}-\frac{2c_1}{f(1)}\cdot\tr{\B\A}\geq -\frac{c^2_1}{(f(1))^2} \cdot n,
\end{equation}
then the claim \eqref{eq:lower-bound} immediately follows. 

As $\Q$ is a PSD matrix with unit diagonal, it admits the following decomposition
\begin{equation}\label{Q:SVD}
    \Q = \sum_{i=1}^n \u_i \u_i^{\top}, \quad \D_i = \mathrm{Diag}(\u_i), \quad \sum_{i=1}^n \D_i^2 = \I.
\end{equation}
In this view, defining
$$
\A_i = \A \D_i, \quad \B_i = \D_i \B,
$$
we can rewrite the LHS of \eqref{relax:1} in a more convenient form for further analysis. In particular, for the second term we deduce the following
\begin{equation*}
\begin{split}
\tr{\B\A} = \tr{\A\B} = \tr{\A\cdot \left(\sum_{i=1}^n \D_i^2\right) \cdot \B } = \sum_{i=1}^n \tr{\A \cdot \D_i^2 \cdot \B} &= \sum_{i=1}^n \tr{(\A \D_i) \cdot (\D_i\B)}
\\ &= \sum_{i=1}^n\tr{\A_i \B_i}.
\end{split}
\end{equation*}
Let us now rearrange the first term of \eqref{relax:1}. Notice that
\begin{align*}
    (\A^{\top}\A \circ \Q)_{i,j} = \sum_{k=1}^n \langle \a_i, \a_j \rangle \cdot u_k^i u_k^j = \sum_{k=1}^n \langle \a_i \cdot u_k^i, \a_j \cdot u_k^j \rangle = \sum_{k=1}^n ((\A\D_k)^{\top} \cdot (\A\D_k))_{i,j} = \sum_{k=1}^n (\A_k^{\top} \A_k)_{i,j}.
\end{align*}
In the same fashion we get
$$
(\B\B^{\top} \circ \Q)_{i,j} = \sum_{k = 1}^n (\B_k \B_k^{\top})_{i,j},
$$
from which we deduce that
$$
 \tr{(\A^{\top}\A \circ \Q ) \cdot (\B\B^{\top} \circ \Q )} = \sum_{i,j=1}^n \tr{\A_i^{\top}\A_i\B_j\B_j^{\top}}.
$$
Therefore, the proof of \eqref{relax:1} can be obtained by proving that, for \emph{any} matrices $\A_1, \ldots, \A_n\in \mathbb R^{d\times n}$ and $\B_1, \ldots, \B_n\in \mathbb R^{n\times d}$, 
\begin{equation}\label{relax:2}
    \sum_{i,j=1}^n \tr{\A_i^{\top}\A_i\B_j\B_j^{\top}} - \frac{2c_1}{f(1)} \cdot \sum_{i=1}^n \tr{\A_i \B_i} + \frac{c_1^2}{(f(1))^2} \tr{\I} \geq 0.
\end{equation}
To show the last claim, let us define the following matrices
$$
\X = \sum_{i=1}^n \A_i^{\top} \A_i, \quad \Y = \sum_{i=1}^n \B_i \B_i^{\top}, \quad \Z=\sum_{i=1}^n \B_i \A_i,
$$
which allows us to rewrite the statement of \eqref{relax:2} as 
\begin{equation}\label{relax:3}
    \tr{\X\Y - \frac{2c_1}{f(1)} \cdot \Z + \frac{c_1^2}{(f(1))^2} \cdot \I} \geq 0.
\end{equation}
Note that $\X$ is PSD, hence it has a symmetric square root, which we denote by $\sqrt{\X}$. Using the continuity of the quantities involved in the LHS of \eqref{relax:3}, we can assume without loss of generality that $\X$ is invertible. In fact, the following quantities are continuous: trace, matrix product, matrix transpose. In addition, we can always introduce a small perturbation to $\A_i$'s which makes $\X$ full-rank. Thus, it suffices to show that  \eqref{relax:3} holds for $\A_i$'s such that $\X$ is invertible.

In this view, for any matrix $\T \in \mathbb{R}^{n\times n}$, we have
\begin{align}\label{relax:4}
    0 \leq \sum_{i=1}^n\left\|\frac{c_1}{f(1)} \cdot \T\A_i^{\top} - \sqrt{\X}\B_i\right\|_F^2 &= \sum_{i=1}^n \tr{\left(\frac{c_1}{f(1)} \cdot \T\A_i^{\top} - \sqrt{\X}\B_i\right) \cdot \left(\frac{c_1}{f(1)} \cdot \A_i\T^{\top} - \B_i^{\top}\sqrt{\X}\right)} \nonumber \\ 
    &= \sum_{i=1}^n \tr{\frac{c^2_1}{(f(1))^2} \cdot \T \A_i^{\top} \A_i \T^{\top} - \frac{2c_1}{f(1)}\sqrt{\X}\B_i\A_i \T^{\top} + \X \B_i \B_i^{\top}} \nonumber \\ 
    &= \tr{\frac{c^2_1}{(f(1))^2} \cdot \T\X\T^{\top} - \frac{2c_1}{f(1)} \sqrt{\X} \Z \T^{\top} + \X\Y},
\end{align}
where in the second line we used that  $\tr{\M}=\tr{\M^\top}$ for any $\M$, and $\tr{\M\N}=\tr{\N\M}$ for any $\M, \N$. 

As $\X$ is invertible, its square root $\sqrt{\X}$ is invertible. As $\X$ is also PSD, its inverse, i.e., $\X^{-1}$, is PSD and, hence, it has a symmetric square root, i.e., $\sqrt{\X^{-1}}$. In this view, we get that
$$
\sqrt{\X^{-1}} = (\sqrt{\X})^{-1}.
$$
Thus, by picking $\T = (\sqrt{\X})^{-1}$, we obtain
$$
\T^{\top}\T = \T^2 = \X^{-1}, \quad  \T^{\top} \sqrt{\X} = \T \sqrt{\X} = \I.
$$
Using these observations, we deduce that the RHS of \eqref{relax:4} is equal to the LHS
of \eqref{relax:3}, which concludes the proof.
\end{proof}

\subsubsection{Matrices in $\mathcal{H}_{n,d}$ Are the Only Minimizers}

\begin{lemma}\label{uniqueunder} Let $\A\in \mathbb R^{d\times n}$ and $\B^\top=[\b_1, \ldots, \b_n]\in \mathbb R^{n\times d}$, with $\|\b_i\|_2=1$ for $i\in [n]$. Let $c_1$ and $f(\cdot)$ be defined as per Lemma \ref{lemma:closed_from_of_population_risk}. Then, we have that the set of minimizers of
\begin{equation}\label{eq:minarcsin}
    \tr{\A^{\top}\A \cdot f(\B\B^{\top})} - 2 c_1 \cdot \tr{\B\A}
\end{equation}
coincides with the set $\mathcal{H}_{n,d}$ of weight-tied orthogonal matrices .
\end{lemma}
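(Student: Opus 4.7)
The plan is to combine the Hermite expansion of $f$ with the per-Hadamard-power lower bound of Lemma \ref{lemma:allin}, then pin down the minimizers by intersecting the equality conditions across different Hermite orders. Since the rows of $\B$ are unit-norm, $f(1) = \sum_{\ell\ge 0} c_{2\ell+1}^2$, which allows rewriting
$$\tr{\A^\top \A \cdot f(\B\B^\top)} - 2c_1 \tr{\B\A} = \sum_{\ell=0}^\infty c_{2\ell+1}^2\, \mathcal{L}_\ell(\A,\B),$$
with $\mathcal{L}_\ell$ as in Lemma \ref{lemma:allin}. Applying that lemma termwise yields \eqref{eq:minarcsin} $\ge -c_1^2 n/f(1)$. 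A direct substitution shows the bound is attained on $\mathcal{H}_{n,d}$: if $\A = (c_1/f(1))\B^\top$ and $\B\B^\top = \I$, then $f(\B\B^\top) = f(1)\I_n$, and the two traces combine to $-c_1^2 n/f(1)$ exactly.

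For uniqueness, fix a minimizer $(\A,\B)$. Equality must hold in $\mathcal{L}_\ell$ for every $\ell$ with $c_{2\ell+1}\ne 0$; by assumption this happens at $\ell=0$ (since $c_1\ne 0$) and at some $\ell=k\ge 1$ (since $\sum_{\ell\ge 1} c_{2\ell+1}^2\ne 0$). The $\ell=0$ step is elementary: using $\tr{\A\B}\le \|\A\B\|_*$ and completing the square singular-value by singular-value gives
$$\mathcal{L}_0(\A,\B) = \|\A\B\|_F^2 - \tfrac{2c_1}{f(1)}\tr{\A\B} \ge -\frac{c_1^2 \rank(\A\B)}{(f(1))^2} \ge -\frac{c_1^2 n}{(f(1))^2}.$$
Equality forces $\rank(\A\B)=n$ (so $\rank(\A)=\rank(\B)=n$), every nonzero singular value of $\A\B$ equal to $c_1/f(1)$, and $\tr{\A\B}=\|\A\B\|_*$. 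Together these pin down $\A\B = (c_1/f(1))\P$ for a rank-$n$ orthogonal projector $\P$ in $\mathbb{R}^d$.

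To exploit the $\ell=k$ condition, I would revisit the Frobenius-sum identity in the proof of Lemma \ref{lemma:allin} with $\T = \X^{-1/2}$ and $\Q=(\B\B^\top)^{\circ \ell}$: unpacking the block decomposition $\A_i=\A\D_i$, $\B_i=\D_i\B$, equality in $\mathcal{L}_\ell$ is equivalent to
$$\tfrac{c_1}{f(1)}\A^\top = \bigl(\A^\top \A \circ (\B\B^\top)^{\circ 2\ell}\bigr)\B.$$
Subtracting the $\ell=0$ and $\ell=k$ instances yields $\bigl(\A^\top\A \circ (\mathbf{1}\mathbf{1}^\top - (\B\B^\top)^{\circ 2k})\bigr)\B = 0$. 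Since $\rank(\B)=n$, $\B$ has a right inverse and the prefactor vanishes: off the diagonal, $(\A^\top\A)_{ij}\bigl(1-\langle \b_i,\b_j\rangle^{2k}\bigr)=0$. Because $\rank(\B)=n$ rules out $\b_i=\pm\b_j$ for $i\ne j$, the second factor is strictly positive and $\A^\top\A$ must be diagonal. Plugging back into $(c_1/f(1))\A^\top = \A^\top\A\,\B$ gives $(c_1/f(1))\a_i = \|\a_i\|^2\,\b_i$ row by row; since $\rank(\A)=n$ forces $\|\a_i\|>0$, this yields $\a_i = (c_1/f(1))\b_i$, and hence $\A=(c_1/f(1))\B^\top$. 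Comparing $\A^\top\A = (c_1/f(1))^2\B\B^\top$ to its diagonal form forces $\B\B^\top=\I$, so $(\A,\B)\in \mathcal{H}_{n,d}$.

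The main obstacle is justifying the displayed equality condition when $\X = \A^\top\A\circ(\B\B^\top)^{\circ\ell}$ is singular, since its derivation in Lemma \ref{lemma:allin} uses $\X^{-1/2}$. This is handled by the same density/continuity argument used there: a small perturbation of $\A$ makes $\X$ invertible, the equality condition is derived in the perturbed problem, and one passes to the limit using continuity of traces and of the Frobenius residual.
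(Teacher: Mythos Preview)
Your argument is correct and takes a genuinely different route from the paper's. The paper first reduces to the square case $r=1$ via rotational invariance, obtains $\A\B=\tfrac{c_1}{f(1)}\I$ from $\ell=0$, substitutes $\A^\top\A=\tfrac{c_1^2}{f(1)^2}(\B\B^\top)^{-1}$, and then for $\ell\ge1$ invokes Visick's inequality $\P\circ\P^{-1}\succeq\I$ together with a dual-basis argument to force $\P:=\B\B^\top=\I$. You instead stay in the rectangular setting throughout: from $\ell=0$ you extract $\A\B=\tfrac{c_1}{f(1)}\P$ for a rank-$n$ projector, which already gives $\rank\A=\rank\B=n$ and (since $\P\A=\A$) the relation $\tfrac{c_1}{f(1)}\A^\top=\A^\top\A\,\B$; then you read off the equality condition of Lemma~\ref{lemma:allin} at level $\ell$ as $\tfrac{c_1}{f(1)}\A^\top=\bigl(\A^\top\A\circ(\B\B^\top)^{\circ 2\ell}\bigr)\B$ and subtract to conclude $\A^\top\A$ is diagonal. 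This is more elementary---no Visick, no dual basis---at the cost of unpacking the internals of Lemma~\ref{lemma:allin}; the paper's path keeps the higher-$\ell$ analysis self-contained but needs external matrix inequalities.

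One remark on your last paragraph: the singularity concern does not actually arise, and your proposed density patch would not work as stated (perturbing $\A$ destroys the minimizer property, so the perturbed equality condition need not hold). Fortunately you can sidestep it cleanly: once $\rank\A=n$ is established from the $\ell=0$ step, $\A^\top\A$ is positive definite, and $(\B\B^\top)^{\circ k}$ is PSD with unit diagonal; by Oppenheim's inequality their Hadamard product $\X$ has $\det\X\ge\det(\A^\top\A)>0$, hence $\X$ is invertible and the equality condition at level $k$ is rigorously available. With that observation, your proof is complete.
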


\begin{proof}[Proof of Lemma \ref{uniqueunder}]
A direct computation immediately shows that the lower bound \eqref{eq:lower-bound} is achieved for all $\ell \in \mathbb{N}$ by matrices $(\A,\B)$ that belong to the set $\mathcal{H}_{d, n}$. Define the sets of minimizers of \eqref{eq:lower-bound} as follows 
$$
\mathcal{M}_\ell := \argmin_{\A,\B:\|\b_i\|_2=1} \mathcal{L}_\ell(\A,\B) = \left\{(\A_\B,\B): \A_\B \in \argmin_\A \mathcal{L}_{\ell}(\A,\B),\  \B \in \argmin_{\B:\|\b_i\|_2=1} \mathcal{L}_{\ell}(\A_\B,\B)\right\}.
$$
We will now show that
\begin{equation}\label{eq:Ml}
\bigcap\limits_{l=0}^{\infty} \  \mathcal{M}_\ell = \mathcal{H}_{n, d}.
\end{equation}
As the Taylor coefficients of $f(\cdot)$ are non-negative, \eqref{eq:Ml} readily gives that the set of minimizers of \eqref{eq:minarcsin} coincides with $\mathcal{H}_{n, d}$. Futher, recall that $c_1 \neq 0$ and $\sum_{l=1}^{\infty} (c_{2l+1})^2 \neq 0$ and, hence, \eqref{eq:Ml} is the union of the linear term ($l=0$) and at least one non-linear ($l > 0$) term.

We first prove that it is enough to consider the case $r=1$. Thus, assume that the result holds for $n=d$ and consider now $ n< d$. We have that, for any orthogonal matrix $\O \in \mathbb{R}^{d \times d}$,
\begin{equation}\label{eq:inv}
\begin{split}
    \E_{\x}\norm{\x - \A\sigma(\B\x)}_2^2 &= 
        \E_{\x}\norm{\O\x - \A\sigma(\B\O\x)}_2^2 \\
    &=    \E_{\x}\norm{\x - \O^\top \A\sigma(\B\O\x)}_2^2,
\end{split}    
\end{equation}
where in the first step we have used the rotational invariance of $\x$, and in the second step we have multiplied the argument of the norm by the orthogonal matrix $\O^\top$. Thus, \eqref{eq:inv} gives that $(\A, \B)\in \mathcal{H}_{n, d}$ if and only if $(\O^\top \A, \B\O)\in \mathcal{H}_{n, d}$. 

Let us write the SVD of $\B$ as  $\U\D\V^\top$, where $\U \in \mathbb{R}^{n \times n}, \V \in \mathbb{R}^{d \times d}$ are orthogonal matrices and $\D \in \mathbb{R}^{n \times d}$ is a (rectangular) diagonal matrix. Thus, by taking $\O=\V$, one can assume that $\B$ has the form $(\B_{1:n,1:n}, \0_{1:n, 1:d-n})$, where $\B_{1:n,1:n}$ denotes the left $n\times n$ sub-matrix of $\B$ and $\0_{1:n, 1:d-n}$ denotes a $n\times (d-n)$ matrix of 0's.
We also write the decompositions $\A = ((\A_{1:n, 1:n})^\top, (\A_{n+1:d, 1:n})^\top)^\top $ and $\x = (\x_{1:n}, \x_{n+1:d})$, where $\A_{1:n, 1:n}$ (resp. $\A_{n+1:d, 1:n}$) denotes the top $n\times n$ (resp. bottom $(d-n)\times n$) sub-matrix of $\A$, and $\x_{1:n}$ (resp. $\x_{n+1:d}$) denotes the first $n$ (resp. last $d-n$) components of $\x$. Hence, the objective \eqref{eq:loss} can be expressed (up to the constant multiplicative factor $d^{-1}$) as the sum of 
$$\mathcal{R}_1(\A, \B) = \E\left[ \norm{\x_{1:n} - \A_{1:n,1:n} \sigma(\B_{1:n,1:n} \x_{1:n})}^2\right]$$
and
$$\mathcal{R}_2(\A, \B) = \E\left[ \norm{\x_{n+1:d} - \A_{n+1:d,1:n} \sigma(\B_{1:n,1: n} \x_{1:n})}^2\right].$$
As $\x_{n+1:d}$ has zero mean and it is independent from $\x_{1:n}$, we have that
$$
\mathcal{R}_2(\A, \B) = d - n + \E\left[ \norm{\A_{n+1:d,1:n} \sigma(\B_{1:n,1: n} \x_{1:n})}^2\right],
$$
which is minimized by setting $\A_{n+1:d,1:n}$ to $\0$. Note that $\mathcal{R}_1$ depends only on $\A_{1:n,1:n}, \B_{1:n,1:n}$ (and not on  $\A_{n+1:d,1:n}$), hence its minimizers are $(\A_{1:n,1:n}, \B_{1:n,1:n}) \in \mathcal{H}_{n, n}$ by our assumption on the $r=1$ case. As a result, by using that $(\A, \B)\in \mathcal{H}_{d, n}$ if and only if $(\O^\top \A, \B\O)\in \mathcal{H}_{d, n}$, we conclude that all the minimizers of the desired objective have the form $\O ((\A_{1:n, 1:n})^\top, (\0_{1:n-d, 1:n})^\top)^\top$ and $(\B_{1:n,1:n}, \0_{1:n, 1:d-n})\O^\top$, i.e., they form the set $\mathcal{H}_{n,d}$ defined in \eqref{haar:min}.


It remains to prove the result for $r=1$.
First, consider $\ell=0$. In this case, we have 
\begin{align}\label{loloss}
    \L_0(\A, \B) &= \tr{\A^\top \A\B\B^\top} - \frac{2c_1}{f(1)} \cdot \tr{\B\A}\nonumber\\
    & = \tr{\B^\top \A^\top \A\B} - \frac{2c_1}{f(1)} \cdot  \tr{\A\B}\nonumber\\
    & = \norm{\A\B}_F^2 - \frac{2c_1}{f(1)} \cdot \tr{\A\B},
\end{align}
where we have used that the trace is invariant under cyclic permutation. 
Notice that the minimizer of \eqref{loloss} is clearly $\A\B=\frac{c_1}{f(1)}\I_d$.


Consider some $\ell \geq 1$. As $\A\B = \frac{c_1}{f(1)}\I_d$ and $\A, \B$ are square matrices, $\B$ is invertible and $\A^\top \A = \frac{c_1^2}{(f(1))^2}\cdot (\B\B^\top)^{-1}$. Thus, 
\begin{equation}\label{eq:Llrisk}
\begin{split}
    \L_\ell(\A, \B) &= \tr{\A^\top \A (\B\B^{\top})^{\circ (2\ell+1)}} - \frac{2c_1}{f(1)}\cdot\tr{\B\A} \\
    &= \frac{c_1^2}{(f(1))^2} \cdot \tr{(\B\B^\top)^{-1} (\B\B^{\top})^{\circ (2\ell+1)}} - \frac{2c_1^2}{(f(1))^2} \cdot n.
\end{split}    
\end{equation}
Let $\P = \B\B^\top$. Note that $\P$ is symmetric and, hence, also its inverse is symmetric. Then, by using \eqref{eq:dcomp}, we have that 
\begin{equation}\label{eq:shursim}
   \tr{\P^{-1} \P^{\circ (2\ell +1)}}
= \tr{(\P^{-1} \circ \P) \P^{\circ 2l}}.
\end{equation}
An application of Theorem {5} in \cite{visick2000quantitative} gives that
\begin{equation}\label{schurinv}
   \P \circ \P^{-1} \succeq \I, 
\end{equation}
where $\succeq$ denotes majorization in the PSD sense. We now show that $ \P \circ \P^{-1} = \I$. 
To do so, suppose by contradiction that 
$$
\P \circ \P^{-1} = \I + \bR,
$$
for some $\bR \succeq \0$ such that $\bR \neq \0$. 
Hence, 
\begin{equation}\label{eq:cdiction}
\tr{(\P^{-1} \circ \P) \P^{\circ 2\ell}} = \tr{\P^{\circ 2\ell}} + \tr{\bR \P^{\circ 2\ell}} = n +  \tr{\bR \P^{\circ 2\ell}},
\end{equation}
where in the last equality we use that $\P$ (and, consequently, $\P^{\circ 2\ell}$) has unit diagonal. By the Schur product theorem, $\P^{\circ 2\ell}\succ \0$ and, hence, it admits a square root. Thus, we get
$$
\tr{\bR \P^{\circ 2\ell}} = \tr{\sqrt{ \P^{\circ 2\ell}}\cdot \bR \cdot \sqrt{ \P^{\circ 2\ell}}}.
$$
It is easy to see that the matrix $\sqrt{\P^{\circ 2\ell}}\cdot \bR \cdot \sqrt{ \P^{\circ 2\ell}}$ is PSD and, thus, 
$$
\tr{\sqrt{ \P^{\circ 2\ell}}\cdot \bR \cdot \sqrt{ \P^{\circ 2\ell}}} \geq 0,
$$
where the inequality is strict if and only if the corresponding matrix has only zero eigenvalues.
However, for any non-zero $\v\in\mathbb{R}^n$, we have that
$$
\u_\v := \sqrt{ \P^{\circ 2\ell}} \cdot \v \neq 0,
$$
since $\sqrt{\P^{\circ 2\ell}}$ is strictly positive definite (as $\P^{\circ 2\ell} \succ \0$) and, thus, it does not have $0$ eigenvalues. Hence, if
$$
\v^\top \cdot \sqrt{ \P^{\circ 2\ell}} \cdot \bR \cdot \sqrt{ \P^{\circ 2\ell}} \cdot \v = \u_\v^\top \bR \u_\v = 0,
$$
then $\u_\v \neq \0$ is an eigenvector of $\bR$ corresponding to a zero eigenvalue. In this view, if $\sqrt{ \P^{\circ 2\ell}}\cdot \bR \cdot \sqrt{ \P^{\circ 2\ell}}$ has all zero eigenvalues, then all eigenvalues of $\bR$ are zero. As $\bR$ cannot be the zero matrix, by using \eqref{eq:cdiction}, we conclude that 
\begin{equation}\label{eq:cdiction2}
\tr{(\P^{-1} \circ \P) \P^{\circ 2\ell}} > n .
\end{equation}

By combining \eqref{eq:Llrisk}, \eqref{eq:shursim} and \eqref{eq:cdiction2}, we have that $\L_\ell(\A, \B)>-c_1^2n/(f(1))^2$, which contradicts with the fact that $(\A,\B)$ is a minimizer (since any $(\A',\B') \in \mathcal{H}_{n, d}$ achieves the value of $-c_1^2n/(f(1))^2$). Therefore, we conclude that $ \P \circ \P^{-1} = \I$.

At this point, we show that $ \P \circ \P^{-1} = \I$ implies that $ \P = \I$.  Note that $\P$ is a Gram matrix, and let its basis be $\{\b_1,\cdots,\b_n\}$. Define
$$
\b'_i = \b_i - \tilde{\b}_i,
$$
where $\tilde{\b}_i$ is orthogonal projection of $\b_i$ onto the space spanned by $\{\b_j\}^n_{j\neq i}$. From a well-known result (see, for instance, Theorem {2.1} in \cite{del1995statistical}) we have that
\begin{equation}\label{dualprop}
    \P^{-1}_{ii} = \frac{1}{\|\b_i'\|^2_2}.
\end{equation}
Hence, we obtain that
\begin{equation}\label{dualnorm}
    \|\b'_i\|_2 \leq \|\b_i\|_2 = 1,
\end{equation}
where the inequality is sharp only if $\b_i$ is orthogonal to all $\{\b_j\}^n_{j\neq i}$. Then, from \eqref{dualprop}, we deduce
\begin{equation}\label{eq:sharp}
n = \tr{\I} = \tr{\P \circ \P^{-1}} = \sum_{i=1}^{n} \|\b_i\|_2^2 \cdot \frac{1}{\|\b_i'\|^2_2} = \sum_{i=1}^{n} \frac{1}{\|\b_i'\|^2_2}.
\end{equation}
 By combining \eqref{dualnorm} and \eqref{eq:sharp}, we conclude that $\{\b_i\}_{i\in [n]}$ form an orthonormal basis, and, hence, $\P=\I$. This means that \eqref{eq:Ml} holds for $r=1$ since
$$
\eqref{eq:minarcsin} = \sum_{\ell=1}^{\infty} (c_{2\ell + 1})^2 \cdot \mathcal{L}_{\ell} (\A,\B),
$$
which concludes the proof.
\end{proof}

\begin{proof}[Proof of Theorem \ref{thm:tightlb_lowrate}] It follows by combining the results of Lemma \ref{lemma:allin} and \ref{uniqueunder}.
\end{proof}

\subsection{Case $r > 1$}\label{appendix:highrate}

\subsubsection{Lower bound on $\widetilde{R}(r)$}\label{app:pflbr1}

\begin{proof}[Proof of Proposition \ref{prop:highratelb}] An application of Theorem {A} in \cite{khare2021sharp} gives that
$$
\tr{\A^{\top}\A\B\B^{\top}} = \langle \mathbf{1}, (\A^{\top}\A \circ \B\B^{\top}) \mathbf{1} \rangle \geq \frac{1}{d}\langle \mathbf{1}, (\mathrm{Diag}(\B\A)\mathrm{Diag}(\B\A)^{\top}) \mathbf{1} \rangle = \frac{1}{d}\left(\tr{\B\A}\right)^2,
$$
where $\mathrm{Diag}(\B\A) \in \mathbb{R}^n$ stands for the vector with entries corresponding to the diagonal of the matrix $\B\A$. Hence, we have
\begin{equation}\label{2.4.1}
    \tr{\A^{\top}\A \cdot f(\B\B^{\top})} - 2c_1\cdot\tr{\B\A} \geq \frac{c_1^2}{d}\left(\tr{\B\A}\right)^2 + \sum_{\ell=1}^{\infty}(c_{2\ell+1})^2\cdot\tr{\A^{\top}\A\cdot(\B\B^{\top})^{\circ 2\ell +1}} - 2c_1\cdot\tr{\B\A}.
\end{equation}
Define $\alpha := f(1)-c^2_1$. Then, for any $\beta\in[0,1]$, we can rewrite the RHS of \eqref{2.4.1} as
\begin{equation}\label{2.4.2}
    \left[\frac{c_1^2}{d}\left(\tr{\B\A}\right)^2 - 2(1-\beta)c_1\cdot\tr{\B\A}\right] + \sum_{\ell=1}^{\infty}(c_{2\ell+1})^2 \cdot \left( \tr{\A^{\top}\A\cdot(\B\B^{\top})^{\circ 2\ell +1}} - \frac{2\beta c_1}{\alpha}\cdot\tr{\B\A}\right).
\end{equation}
The first term in \eqref{2.4.2} is a quadratic polynomial in $\tr{\B\A}$. Hence, we have that
\begin{equation}\label{2.4.3}
\left[\frac{c_1^2}{d}\left(\tr{\B\A}\right)^2 - 2(1-\beta)c_1\cdot\tr{\B\A}\right] \geq -d(1-\beta)^2.
\end{equation}
Define $\B_e := \left[\B, \0_{1:n,1:n-d}\right]$ and $\A_e^\top := [\A^\top, \0_{1:n,1:n-d}]$. One can readily verify that 
the traces in the second term of \eqref{2.4.2} remain unchanged if we replace $\A$ and $\B$ with $\A_e$ and $\B_e$, respectively. Note that $\A_e, \B_e$ are square matrices, hence we can apply Lemma \ref{lemma:allin} (which readily generalizes to a different scaling in front of the second trace) to get
\begin{align}\label{2.4.4}
\sum_{\ell=1}^\infty (c_{2\ell+1})^2 \cdot \left( \tr{\A^{\top}\A\cdot(\B\B^{\top})^{\circ 2\ell +1}} - \frac{2\beta c_1}{\alpha}\cdot\tr{\B\A} \right) \geq -\sum_{\ell=1}^{\infty} (c_{2\ell+1})^2 \cdot \frac{\beta^2c_1^2}{\alpha^2}n = - \frac{\beta^2c_1^2}{\alpha}n.
\end{align}
By combining \eqref{2.4.1}, \eqref{2.4.2}, \eqref{2.4.3} and \eqref{2.4.4}, we obtain that
\begin{equation}\label{2.4.5}
\frac{1}{d}\left(\tr{\A^{\top}\A \cdot f(\B\B^{\top})} - 2\cdot\tr{\A\B}\right) +1\ge    1-(1-\beta)^2 - \frac{\beta^2c_1^2}{\alpha}r.
\end{equation}
By taking $\beta = \alpha/(c_1^2 r+\alpha)$ and re-arranging the RHS of \eqref{2.4.5}, the desired result readily follows. 
\end{proof}


\subsubsection{Asymptotic Achievability of the Lower Bound}

\begin{lemma}\label{lem:optBresult}
    Let $\A, \B $ be defined as in \eqref{eq:defoptB}. Then, for any $\epsilon >0$, we have that, with probability at least $1-c/d^2$,
    \begin{equation*}
        \left|\left(\tr{\A^{\top}\A f(\B\B^{\top})} - 2c_1\tr{\A\B}\right) - \left(\beta^2c_1^2rn + \beta^2 \alpha n - 2c_1\beta n\right)\right|\le  Cn^{\frac{1}{2}+\epsilon}.
    \end{equation*}
    Thus, choosing $\beta = \frac{c_1}{c_1^2 r+\alpha}$ the loss approaches $1-\frac{r}{r+\frac{\alpha}{c_1^2}}$, i.e., with the same probability,
    $$ \left|\left( 1 + \frac{1}{d}\left(\tr{\A^{\top}\A f(\B\B^{\top})} - 2c_1\tr{\A\B}\right)\right)- \left(1-\frac{r}{r+\frac{\alpha}{c_1^2}}\right)\right| \le C d^{-\frac{1}{2}+\epsilon}.$$
    Here, the constants $c, C$ depend only on $r$ and $\epsilon$.
\end{lemma}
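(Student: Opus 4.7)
The plan is to exploit the weight-tying $\A = \beta \B^\top$ to reduce both traces to explicit moments of $\B\B^\top$, and then evaluate these moments using the concrete Haar structure of $\B$. Substituting, $\tr{\A\B} = \beta \|\B\|_F^2 = \beta n$ \emph{exactly}, since all rows of $\B$ have unit norm by construction. Likewise,
\[
\tr{\A^\top \A f(\B\B^\top)} = \beta^2 \sum_{i,j} \langle \b_i, \b_j\rangle f(\langle \b_i, \b_j\rangle)
= \beta^2 n f(1) + \beta^2 \sum_{i\neq j} \langle \b_i, \b_j\rangle f(\langle \b_i, \b_j\rangle).
\]
Writing $f(x) = c_1^2 x + g(x)$ with $g(x)=\sum_{\ell\geq 1} c_{2\ell+1}^2 x^{2\ell+1}$, the off-diagonal part equals $c_1^2 (\|\B\B^\top\|_F^2 - n) + \sum_{i\neq j}\langle \b_i,\b_j\rangle g(\langle \b_i,\b_j\rangle)$. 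Matching this against the target $\beta^2 c_1^2 rn + \beta^2 \alpha n - 2 c_1 \beta n$ (with $\alpha = f(1) - c_1^2$) reduces the lemma to showing that, with probability at least $1 - c/d^2$, one has $\|\B\B^\top\|_F^2 = rn + O(d^{1/2+\epsilon})$ and $|\sum_{i\neq j}\langle \b_i,\b_j\rangle g(\langle \b_i,\b_j\rangle)| = o(d^{1/2+\epsilon})$.

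The main step is to establish two uniform concentration estimates for the Haar-distributed orthogonal matrix $\U$: namely $|\|\hat{\b}_i\|^2 - 1|\leq C d^{-1/2+\epsilon}$ for every $i\in [n]$, and $|\langle \hat{\b}_i,\hat{\b}_j\rangle|\leq C d^{-1/2+\epsilon}$ for every $i\neq j$, all \emph{simultaneously} with probability at least $1 - c/d^2$. Each individual bound can be obtained either via Gromov--Milman concentration for $1$-Lipschitz functions on the orthogonal group (with concentration constant of order $1/\sqrt{n}$), or by the standard representation of Haar rows as $\mathcal{N}(\0,\I_n)$ vectors normalized to the sphere together with Hanson--Wright-type bounds for the induced quadratic and bilinear forms. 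Each individual event fails with probability of order $\exp(-c d^{2\epsilon})$, so a union bound over the $O(d^2)$ index pairs still leaves failure probability at most $c/d^2$. This uniform concentration is the step I expect to require the most bookkeeping.

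Given these estimates, the two remainders are easily controlled. Since $\hat{\B}\hat{\B}^\top = r\U\,\P\,\U^\top$ where $\P$ is the orthogonal projector onto the first $d$ coordinates, this matrix has exactly $d$ nonzero eigenvalues all equal to $r$, hence the identity $\|\hat{\B}\hat{\B}^\top\|_F^2 = r^2 d = rn$ holds \emph{deterministically}. Then
\[
\|\B\B^\top\|_F^2 = \sum_{i,j} \frac{\langle \hat{\b}_i, \hat{\b}_j\rangle^2}{\|\hat{\b}_i\|^2 \|\hat{\b}_j\|^2} = \big(1 + O(d^{-1/2+\epsilon})\big) \|\hat{\B}\hat{\B}^\top\|_F^2 = rn + O(d^{1/2+\epsilon}),
\]
using the uniform norm bound above. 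For the $g$-piece, note that $g(x) = O(x^3)$ near $0$ and that $|\langle \b_i,\b_j\rangle|\leq Cd^{-1/2+\epsilon}\to 0$ uniformly over $i\neq j$, hence $|\langle \b_i,\b_j\rangle g(\langle \b_i,\b_j\rangle)|\leq C'\langle \b_i,\b_j\rangle^4$. Bounding
\[
\sum_{i\neq j}\langle \b_i,\b_j\rangle^4 \leq \Big(\max_{i\neq j}\langle \b_i,\b_j\rangle^2\Big)\sum_{i\neq j}\langle \b_i,\b_j\rangle^2 = O(d^{-1+2\epsilon}\cdot d) = O(d^{2\epsilon}),
\]
yields the desired $o(d^{1/2+\epsilon})$ control (the constant absorbs $\sum_{\ell\geq 1}c_{2\ell+1}^2 < \infty$, finite since $\sigma\in L^2(\R,\mu)$).

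Finally, the second part of the lemma follows by noting that the reduced quantity $\beta^2(c_1^2 r+\alpha) - 2c_1\beta$ is a quadratic in $\beta$ uniquely minimized at $\beta^* = c_1/(c_1^2 r+\alpha)$, where its value is $-c_1^2/(c_1^2 r+\alpha)$. Multiplying by $n$, dividing by $d$ (using $n/d = r$), and adding the constant variance term $1$ produces precisely the claimed limit $1 - r/(r + \alpha/c_1^2)$.
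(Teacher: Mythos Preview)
Your proposal is correct and follows essentially the same approach as the paper: both substitute the weight-tying to reduce to moments of $\B\B^\top$, compute $\|\hat{\B}\hat{\B}^\top\|_F^2 = rn$ exactly, pass to $\B\B^\top$ via uniform concentration of the row norms $\|\hat{\b}_i\|$, and control the higher-order piece via a uniform $O(d^{-1/2+\epsilon})$ bound on the off-diagonal inner products. The only organizational difference is that the paper expands the full Hermite series term by term (bounding each $\tr{\B\B^\top(\B\B^\top)^{\circ(2\ell+1)}}$ separately in auxiliary lemmas), whereas you split $f=c_1^2 x + g(x)$ once and handle the entire $g$-remainder via the single estimate $|xg(x)|\le \alpha x^4$ combined with $\sum_{i\neq j}\langle\b_i,\b_j\rangle^4 \le (\max_{i\neq j}\langle\b_i,\b_j\rangle^2)\cdot(\|\B\B^\top\|_F^2-n)$; this is a slightly more compact packaging of the same bound.
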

We start by proving the following.
\begin{lemma}\label{lem:optBconc}
    Let $\hat{\B}, \B$ be defined as in \eqref{eq:defoptB}. Then, for any  $\epsilon > 0$, we have that, with probability at least $1-c/d^2$,
    $$\max_{i, j}\abs{\frac{(\B\B^{\top})_{i,j}}{(\hat{\B}\hat{\B}^{\top})_{i,j }}-1} \leq C n^{-\frac{1}{2}+\epsilon}.$$
   Here, the constants $c, C$ depend only on $r$ and $\epsilon$.

\end{lemma}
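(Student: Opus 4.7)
The plan is to first reduce the claim to a uniform concentration bound on the row norms of $\hat{\B}$, and then apply a concentration-of-measure argument on the sphere. Since $\b_i = \hat{\b}_i/\|\hat{\b}_i\|_2$ by construction, we have $(\B\B^\top)_{i,j} = (\hat{\B}\hat{\B}^\top)_{i,j}/(\|\hat{\b}_i\|_2\|\hat{\b}_j\|_2)$, so the ratio appearing in the statement is simply $1/(\|\hat{\b}_i\|_2\|\hat{\b}_j\|_2)$. It therefore suffices to prove that, simultaneously for all $i \in [n]$, $\|\hat{\b}_i\|_2^2 = 1 + O(n^{-1/2+\epsilon})$ with probability at least $1 - c/d^2$; a first-order Taylor expansion of $t \mapsto 1/\sqrt{t}$ around $t = 1$ then delivers the claim uniformly in $i$ and $j$.

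Second, I would identify $\|\hat{\b}_i\|_2^2$ as a Lipschitz functional of a Haar-uniform vector on $S^{n-1}$. Unpacking $\hat{\B}^\top = \sqrt{r}\,[\I_d,\,\0_{d,n-d}]\,\U^\top$ yields $\hat{\B}_{i,j} = \sqrt{r}\,\U_{i,j}$ for $j \le d$, so $\|\hat{\b}_i\|_2^2 = r\sum_{j=1}^d \U_{i,j}^2$. The $i$-th row of the Haar orthogonal matrix $\U$ is uniform on $S^{n-1}$; by symmetry each coordinate squared has expectation $1/n$, hence $\mathbb{E}\|\hat{\b}_i\|_2^2 = r(d/n) = 1$. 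Moreover, the function $\u \mapsto r\sum_{j=1}^d u_j^2$ is $2r$-Lipschitz on the unit ball.

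Third, I would invoke Levy's concentration inequality for Lipschitz functions on $S^{n-1}$ to obtain $\mathbb{P}\big(\big|\|\hat{\b}_i\|_2^2-1\big| > t\big) \le C e^{-cnt^2}$ for constants $c, C$ depending only on $r$. Taking $t = n^{-1/2+\epsilon}$ and union-bounding over the $n$ rows secures $\max_i \big|\|\hat{\b}_i\|_2^2 - 1\big| \le n^{-1/2+\epsilon}$ with probability at least $1 - C n\, e^{-c n^{2\epsilon}}$. Since $d = n/r$, for any fixed $\epsilon > 0$ and sufficiently large $n$ this probability exceeds $1 - c'/d^2$, and chaining with the Taylor-expansion step from the first paragraph concludes the proof.

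The argument is essentially a textbook application of concentration of measure, so I do not anticipate a serious obstacle. The one place where care is warranted is bookkeeping around the Lipschitz constant entering Levy's inequality and confirming that the Gaussian-type tail $\exp(-cn^{2\epsilon})$ dominates both the union-bound factor $n$ and the target failure probability $1/d^2$, which it does comfortably for any $\epsilon > 0$ in the regime $r > 1$ where $d = \Theta(n)$.
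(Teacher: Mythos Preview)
Your proposal is correct and follows essentially the same approach as the paper: both reduce the ratio to $1/(\|\hat{\b}_i\|_2\|\hat{\b}_j\|_2)$ and then control $\|\hat{\b}_i\|_2$ uniformly via concentration on the sphere plus a union bound. The only cosmetic difference is that you invoke Levy's inequality directly for the Lipschitz functional $\u \mapsto r\sum_{j\le d} u_j^2$, whereas the paper passes through the Gaussian representation $\u = \g/\|\g\|_2$ and uses sub-Gaussian norm concentration for $\|\g\|_2$ and $\|\g_d\|_2$ separately; both routes yield the same tail bound.
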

\begin{proof}
    If $\U \in \mathbb{R}^{n\times n}$ is sampled uniformly from $\mathbb{SO}(n)$, then it follows from rotational invariance that any fixed row or column is uniformly distributed on the $n$-dimensional sphere $\mathbb{S}^{n-1}$.
    Thus, any fixed row of $\U$ is distributed as $\g/\norm{\g}_2$, where $\g  \sim \mathcal{N}\left(0, \I/n \right)$.
    Now, it follows from the concentration of $\norm{\g}_2$ (see e.g. Theorem 3.1.1 in \cite{vershynin2018high}) that $\psin{\norm{\g}_2 - 1 } \leq C n^{-\frac{1}{2}}$, where $\psin{\cdot}$ denotes the sub-Gaussian norm.
    Denote by $\g_d \in \R^d$ the first $d$ components of $\g_d$.
    Then, by the same reasoning, it holds that $\psin{\sqrt{r}\norm{\g_d}_2 - 1 } \leq cd^{-\frac{1}{2}}$.
    Looking at the definition of $\hat{\B}$, we have that, for any fixed $i$, the distribution of its rows is given by $\hat{\b}_i \sim \sqrt{r} \g_d/\norm{\g}_2$. Furthermore, for any pair of indices $i, j$, we have that
    \begin{equation*}
        \frac{(\B\B^{\top})_{i,j}}{(\hat{\B}\hat{\B}^{\top})_{i,j }} = \frac{1}{\lVert\hat{\b}_i\rVert_2\cdot \lVert\hat{\b}_j\rVert_2}.
    \end{equation*}
    Hence,
    \begin{equation*}
        \mathbb{P}\left(\abs{\frac{(\B\B^{\top})_{i,j}}{(\hat{\B}\hat{\B}^{\top})_{i,j }} - 1} \leq n^{-\frac{1}{2}+\epsilon} \right) =\mathbb{P}\left(\abs{\frac{1}{\lVert\hat{\b}_i\rVert_2\cdot \lVert\hat{\b}_j\rVert_2} - 1} \leq n^{-\frac{1}{2}+\epsilon} \right) 
        \leq C \exp\left(-\frac{d^\epsilon}{C}\right).
    \end{equation*} 
    Now a simple union bound over all rows gives us
    \begin{equation*}
        \mathbb{P}\left(\max_{i, j}\abs{\frac{1}{\lVert\hat{\b}_i\rVert_2\cdot \lVert\hat{\b}_j\rVert_2} - 1} \leq n^{-\frac{1}{2}+\epsilon} \right)
        \leq Cn \exp\left(-\frac{d^\epsilon}{C}\right) \leq  \frac{C}{d^2},
    \end{equation*}
    which implies the desired result.
\end{proof}

Next, we bound the traces of the terms $\B\B^{\top}(\B\B^{\top})^{\circ (2l +1)}$. We start with the case $l=0$.

\begin{lemma}\label{lem:optBl0}
    Let $\B$ be defined as in \eqref{eq:defoptB}. Then,
    for any $\epsilon > 0$, with probability at least $1-c/d^2$,
    $$\left|\tr{\B\B^{\top}(\B\B^{\top})} -rn \right|\le  Cd^{\frac{1}{2}+\epsilon}.$$
   Here, the constants $c, C$ depend only on $r$ and $\epsilon$.
\end{lemma}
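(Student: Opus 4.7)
The key observation is that the unnormalized matrix $\hat{\B}$ has an exactly tractable Gram structure: since $\U^\top\U=\I_n$ and $\hat{\B}^\top = \sqrt{r}[\I_d,\0]\U^\top$, we get $\hat{\B}^\top\hat{\B} = r\,\I_d$ exactly. Hence the nonzero eigenvalues of $\hat{\B}\hat{\B}^\top$ are $d$ copies of $r$, so
\begin{equation*}
    \tr{(\hat{\B}\hat{\B}^\top)^2} \;=\; \tr{(\hat{\B}^\top\hat{\B})^2} \;=\; \tr{r^2\I_d} \;=\; r^2 d \;=\; rn.
\end{equation*}
Thus the exact target value $rn$ arises without any probabilistic estimate, which is what makes this choice of $\hat{\B}$ convenient.

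\textbf{Reduction to an error term.} Write $\B\B^\top = \hat{\B}\hat{\B}^\top + \E$, where $\E := \B\B^\top - \hat{\B}\hat{\B}^\top$. Then
\begin{equation*}
    \tr{(\B\B^\top)^2} \;=\; \tr{(\hat{\B}\hat{\B}^\top)^2} \;+\; 2\tr{\hat{\B}\hat{\B}^\top \E} \;+\; \tr{\E^2} \;=\; rn \;+\; 2\tr{\hat{\B}\hat{\B}^\top \E} \;+\; \tr{\E^2}.
\end{equation*}
It therefore suffices to show that, with probability at least $1-c/d^2$, both $|\tr{\hat{\B}\hat{\B}^\top\E}|$ and $|\tr{\E^2}|$ are at most $Cd^{1/2+\epsilon}$.

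\textbf{Bounding the error.} Lemma \ref{lem:optBconc} gives the entrywise relative bound $|E_{i,j}| \le C n^{-1/2+\epsilon}\,|(\hat{\B}\hat{\B}^\top)_{i,j}|$ uniformly in $i,j$ on the event of probability $1-c/d^2$. Squaring and summing yields
\begin{equation*}
    \|\E\|_F^2 \;\le\; C^2 n^{-1+2\epsilon}\,\|\hat{\B}\hat{\B}^\top\|_F^2 \;=\; C^2 n^{-1+2\epsilon}\cdot rn \;=\; C^2 r\, n^{2\epsilon}.
\end{equation*}
Hence $|\tr{\E^2}| \le \|\E\|_F^2 \lesssim n^{2\epsilon}$, which is absorbed in $d^{1/2+\epsilon}$. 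For the cross term, Cauchy--Schwarz gives
\begin{equation*}
    |\tr{\hat{\B}\hat{\B}^\top \E}| \;\le\; \|\hat{\B}\hat{\B}^\top\|_F\,\|\E\|_F \;\le\; \sqrt{rn}\cdot C\sqrt{r}\, n^{\epsilon} \;\lesssim\; n^{1/2+\epsilon} \;\lesssim\; d^{1/2+\epsilon},
\end{equation*}
using $n=rd$. Combining the two bounds finishes the proof.

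\textbf{Where the work is.} Essentially all the probabilistic content has been pushed into the earlier Lemma \ref{lem:optBconc}; the present lemma is then an algebraic/deterministic consequence together with a Cauchy--Schwarz estimate. The only mildly delicate point is that the identity $\tr{(\hat{\B}\hat{\B}^\top)^2}=rn$ is exact (this is what sets the target constant), so one must be careful to expand $\tr{(\B\B^\top)^2}$ around $\hat{\B}\hat{\B}^\top$ rather than, e.g., around the identity, since the latter is not close to $\hat{\B}\hat{\B}^\top$ in any useful norm when $r>1$.
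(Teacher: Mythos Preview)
Your proof is correct and follows essentially the same approach as the paper. Both arguments rest on the two key ingredients: the exact identity $\tr{(\hat{\B}\hat{\B}^\top)^2}=\tr{(\hat{\B}^\top\hat{\B})^2}=r^2d=rn$, and the entrywise relative bound from Lemma~\ref{lem:optBconc}. The only cosmetic difference is that the paper writes $\tr{(\B\B^\top)^2}=\sum_{i,j}((\B\B^\top)_{i,j})^2$ and applies the multiplicative ratio bound directly to each squared entry, whereas you expand $(\hat{\B}\hat{\B}^\top+\E)^2$ and control the cross and square terms via Cauchy--Schwarz on Frobenius norms; both reductions give the same $Cd^{1/2+\epsilon}$ control.
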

\begin{proof}
   Note that 
   \begin{equation*}
       \begin{split}
      \tr{\B\B^{\top}(\B\B^{\top})} = \sum_{i,j} \left((\B\B^{\top})_{i,j}\right)^2 = \sum_{i,j} \left(\frac{\left((\B\B^{\top})_{i,j}\right)^2}{\left((\hat{\B}\hat{\B}^{\top})_{i,j}\right)^2}-1\right) \left((\hat{\B}\hat{\B}^{\top})_{i,j}\right)^2 +   \tr{\hat \B\hat \B^{\top}(\hat \B\hat \B^{\top})} . 
       \end{split}
   \end{equation*}
   Thus, an application of Lemma \ref{lem:optBconc} gives that, with probability at least $1-c/d^2$,
\begin{equation}\label{eq:bd1}
\left|\tr{\B\B^{\top}(\B\B^{\top})} - \tr{\hat{\B}\hat{\B}^{\top}(\hat{\B}\hat{\B}^{\top})}\right|\le \tr{\hat{\B}\hat{\B}^{\top}(\hat{\B}\hat{\B}^{\top})}\cdot  C d^{-\frac{1}{2}+\epsilon}.
\end{equation}
Since the trace is invariant under cyclic permutation, we readily have that 
\begin{equation}\label{eq:bd2}
    \tr{\hat{\B}\hat{\B}^{\top}(\hat{\B}\hat{\B}^{\top})}=rn.
\end{equation}
By combining \eqref{eq:bd1} and \eqref{eq:bd2}, the desired result follows. 
\end{proof}

Finally, we consider the higher order terms for $l \geq 1$.
\begin{lemma}\label{lem:optBhigh}
    Let $\B$ be defined as in \eqref{eq:defoptB}. Then, for any $\epsilon >0$, we  have that, with probability at least $1 - c/d^2$,
    $$\sup_{l\ge 1}\left|\tr{\B\B^{\top}(\B\B^{\top})^{\circ (2l +1)}} - n\right| \le C \log^2 n.$$
   Here, the constants $c, C$ depend only on $r$ and $\epsilon$.
\end{lemma}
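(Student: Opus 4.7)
The plan is to reduce the trace to an off-diagonal sum and then bound those entries uniformly via concentration of measure on the orthogonal group. Since $\B\B^\top$ is symmetric with unit diagonal, I can write
\[
\tr{\B\B^\top(\B\B^\top)^{\circ(2l+1)}} - n = \sum_{i\neq j}(\B\B^\top)_{ij}^{2l+2} \ge 0,
\]
and this off-diagonal sum is non-increasing in $l$ because $|(\B\B^\top)_{ij}| \le 1$ by Cauchy--Schwarz. Therefore the supremum over $l \ge 1$ is attained at $l = 1$, and it suffices to bound $\sum_{i\neq j}(\B\B^\top)_{ij}^4$.

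The core step is a high-probability uniform bound on the off-diagonal entries of $\hat\B\hat\B^\top = r\,\U\P\U^\top$, where $\P = \mathrm{diag}(\I_d,\0_{n-d})$. For fixed $i\neq j$, the map $\U \mapsto (\U\P\U^\top)_{ij}$ is $2$-Lipschitz in the Frobenius norm on $O(n)$ (from the product-rule expansion and $\|\U\|_{op} = \|\P\|_{op} = 1$), and it has zero expectation by the sign-flip symmetry $\U \stackrel{d}{=} D\U$ with $D$ diagonal $\pm 1$ (taking $D_{ii} = -1$, $D_{jj} = 1$ negates the entry while preserving the law). The standard concentration inequality for the Haar measure on $O(n)$ then gives
\[
\mathbb{P}\bigl(|(\hat\B\hat\B^\top)_{ij}| \ge t\bigr) \le 2\exp(-c_r n t^2),
\]
with $c_r > 0$ depending only on $r$. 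Choosing $t = C\sqrt{\log n / n}$ with $C$ large enough and union-bounding over the $n(n-1)$ off-diagonal pairs yields $\max_{i\neq j}|(\hat\B\hat\B^\top)_{ij}| \le C\sqrt{\log n / n}$ with probability at least $1 - c/d^2$.

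To transfer this estimate to $\B\B^\top$, I invoke Lemma \ref{lem:optBconc}, which implies that $\|\hat\b_i\|_2\|\hat\b_j\|_2 = 1 + O(n^{-1/2+\epsilon})$ uniformly in $i,j$ on an event of probability $\ge 1-c/d^2$. Since $(\B\B^\top)_{ij} = (\hat\B\hat\B^\top)_{ij}/(\|\hat\b_i\|_2\|\hat\b_j\|_2)$, intersecting the two events gives $\max_{i\neq j}|(\B\B^\top)_{ij}| \le C'\sqrt{\log n /n}$ for a constant $C'$ depending only on $r$. Plugging this into the $l=1$ case produces
\[
\sum_{i\neq j}(\B\B^\top)_{ij}^4 \le n(n-1)\cdot\frac{(C')^4(\log n)^2}{n^2} \le (C')^4 \log^2 n,
\]
which by the monotonicity noted above controls every $l \ge 1$. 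The main technical subtlety is choosing $C$ large enough (as a function of $r$) so that the union bound over $n^2$ pairs produces failure probability $c/d^2$ rather than $c/n^2$; aside from this, the only non-trivial input is the well-known log-Sobolev inequality on $O(n)$, and the rest is bookkeeping.
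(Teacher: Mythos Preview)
Your proof is correct and follows essentially the same approach as the paper: both reduce to the off-diagonal sum $\sum_{i\neq j}(\B\B^\top)_{ij}^{2l+2}$, bound the off-diagonal entries of $\hat\B\hat\B^\top$ by $C\sqrt{\log n/n}$ via Lipschitz concentration on the orthogonal group (the paper packages this as Lemma~\ref{lem_D_diagonal}), and transfer to $\B\B^\top$ via Lemma~\ref{lem:optBconc}. Your explicit use of monotonicity in $l$ to reduce to $l=1$ is a slightly cleaner organization; one minor caveat is that your sign-flip symmetry argument for zero mean uses a single $-1$ in $D$, which takes you out of $SO(n)$ if $\U$ is Haar on the special orthogonal group---either flip two signs or argue as the paper does via the conditional uniformity of $\u_i$ on the sphere in $\u_j^\perp$.
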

\begin{proof}
    We first observe that 
    $$\tr{\B\B^{\top}(\B\B^{\top})^{\circ (2l +1)}} =\sum_{i,j} \left((\B\B^{\top})_{i, j}\right)^{2l+2} = n + \sum_{i\neq j} \left((\B\B^{\top})_{i, j}\right)^{2l+2}.$$
    An application of Lemma \ref{lem:optBconc} gives that, with probability $1 - c/d^2$,
    \begin{equation}\label{eq:intcc}
    \sup_{l\ge 1}\sum_{i\neq j} \left((\B\B^{\top})_{i, j}\right)^{2l+2} \leq \sup_{l\ge 1} \sum_{i\neq j} \left((1 + Cd^{-1/2 + \epsilon})\cdot (\hat{\B}\hat{\B}^{\top})_{i,j}\right)^{2l+2}.
        \end{equation}
    Furthermore, by using the first part of Lemma \ref{lem_D_diagonal} with $\A=\hat{\B}\hat{\B}^{\top}$, we have that, with probability at least $1 - 1/n^2$, the RHS of \eqref{eq:intcc} is lower bounded by
    $$
    \sup_{l\ge 1} \sum_{i\neq j} \left((1 + C d^{-1/2 + \epsilon})\cdot C\sqrt{\frac{\log n}{n}}\right)^{2l+2}\le C \log^2 n,
    $$
    which implies the desired result.
\end{proof}

At this point, we are ready to give the proof of Lemma \ref{lem:optBresult}.

\begin{proof}[Proof of Lemma \ref{lem:optBresult}]

Recall that $\{c_{2l + 1}\}_{l=0}^\infty$ denote the Taylor coefficients of $f(x)$, which by construction are non-negative.
By using that $\A = \beta \B^\top$, our objective becomes
\begin{align*}
   & \tr{\A^{\top}\A f(\B\B^{\top})} - 2c_1\tr{\A\B}  = \beta^2 \tr{\B\B^{\top}f(\B\B^{\top})} - 2c_1 \beta n \\
    &\hspace{2em}=\beta^2\sum_{\ell=0}^{\infty}(c_{2\ell+1})^2 \tr{\B\B^{\top}(\B\B^{\top})^{\circ (2\ell +1)}} - 2\beta n\\
    &\hspace{2em}= \beta^2c_1^2 rn +\beta^2\sum_{\ell=1}^{\infty}c_{2\ell+1} n- 2\beta n\\
    &\hspace{8em}+\beta^2c_1^2 \left(\tr{\B\B^{\top}(\B\B^{\top})}-rn\right)+\beta^2\sum_{\ell=1}^{\infty}c_{2\ell+1} \left(\tr{\B\B^{\top}(\B\B^{\top})^{\circ (2\ell +1)}}-n\right) .
\end{align*}
Then, by bounding the last two terms with Lemma \ref{lem:optBl0} and Lemma \ref{lem:optBhigh}, the desired result follows.
\end{proof}

\begin{proof}[Proof of Proposition \ref{prop:highrate_min}] The proof is a direct application of Lemma \ref{lem:optBresult}.
\end{proof}

\section{Global Convergence of Weight-tied Gradient Flow (Theorem \ref{thm:wt_gf})}\label{appendix:wt_glow}

Let $\B^\top = [\b_1,\cdots,\b_n]$. Recall that, under the weight-tying \eqref{eq:weight_tying}, the objective in \eqref{eq:reduce_opres} can be re-written 
as 
\begin{equation}\label{appendix_c:loss}
 \beta^2 \cdot \sum_{i,j=1}^n {\langle \b_i, \b_j \rangle}  \cdot f\left({\langle \b_i, \b_j \rangle}\right) - 2 \beta n.
\end{equation}
By definition in Theorem \ref{thm:wt_gf}, the residual $\phi(t)$ is given by
\begin{equation}\label{eq:defphiq}
\phi(t) := \sum_{i\neq j}^n{\langle \b_i, \b_j \rangle}  \cdot f\left({\langle \b_i, \b_j \rangle}\right).
\end{equation}
In this view, in accordance with \eqref{eq:wt_gf_dynamics}, we study the following gradient flow:
\begin{equation}\label{appendix_c:gflow}
    \begin{cases}
    \b_k(t) = -\beta^2(t) \cdot \left[\J_k(t) \sum_{i\neq j} \b_j(t) \cdot g(\langle \b_k(t), \b_j(t) \rangle)\right], \\
    \displaystyle\beta(t) = \frac{n}{nf(1) + \phi(t)},\\
    \|\b_k(0)\|_2 = 1,
    \end{cases}
\end{equation}
where $g(x):= x \cdot f'(x) + f(x)$, and we have rescaled the time of the dynamics by a factor $2$ to omit the factor $2$ in front of $\beta^2(t)$. From here on, we will suppress the time notation when it is clear from the context, for the sake of simplicity.  Note that one of the terms is absent in the summation, due to the fact that by definition of operator $\J_k$:
$$
\J_k \b_k = \0.
$$
In addition, since $\J_k$ defines the projection of the gradient on the tangent space at the point $\b_k$ of the unit sphere, along the trajectory of the gradient flow  \eqref{appendix_c:gflow} we have that $\|\b_k\|_2 = 1$.

The gradient flow \eqref{appendix_c:gflow} is well-defined (i.e., its solution exists and it is unique) when its RHS is Lipschitz continuous (see, for instance, \cite{santambrogio2017euclidean}). It suffices to check the Lipschitz continuity of $g(\cdot)$. Note that both $xf'(x)$ and $f(x)$ are Lipschitz continuous on any interval $[-1+\delta, 1-\delta]$ for some $\delta > 0$. Hence, the RHS of \eqref{appendix_c:gflow} is Lipschitz continuous, if
\begin{equation}\label{eq:bibj}
\max_{i\neq j}|\langle \b_i, \b_j \rangle| \leq 1 - \delta,
\end{equation}
where $\delta$ is bounded away from $0$ uniformly in $t$.

Recall that, by the assumption of Theorem \ref{thm:wt_gf}, we have that $\mathrm{rank}(\B(0)\B(0)^\top)=n$, hence $ \det (\B(0)\B(0)^\top) \geq \varepsilon_1$ for some $\varepsilon_1>0$. Thus, from the result in Lemma \ref{logdet_lb}, we obtain that 
\begin{equation}\label{eq:detB}
\det (\B(t)\B(t)^\top) \geq \varepsilon_1.
\end{equation}
Let $0<\lambda_1< \lambda_2< \ldots<\lambda_n $ denote the eigenvalues of $\B(t)\B(t)^\top$ in increasing order. Then, \eqref{eq:detB} directly gives that 
$$
\lambda_1 \prod_{i=2}^n \lambda_i \geq \varepsilon_1 > 0.
$$
Since $\B(t)\B(t)^\top$ has unit diagonal, we have that $\sum_{i=1}^n \lambda_i = n$. Hence, the smallest possible value of $\lambda_1$ during the gradient flow dynamics can be inferred from
$$
\lambda_1 \geq \frac{\varepsilon_1}{\prod_{i=2}^n \lambda_i}, 
$$
by picking the largest possible $\prod_{i=2}^n \lambda_i$ given the constraint $\sum_{i=2}^n \lambda_i \leq n$. This is achieved by taking
$$
\lambda_i = \frac{n}{n-1}, \quad \forall i \in \{2,\cdots,n\},
$$
which gives 
$$
\prod_{i=2}^n \lambda_i = \left(\frac{n}{n-1}\right)^{n-1} = \left(1+\frac{1}{n-1}\right)^{n-1} \leq C,
$$
where $C$ is a universal constant, since the RHS converges from below to Euler's number as $n$ increases. This proves that $\lambda_1$ is bounded away from zero uniformly in $t$. As a result, we can readily conclude that \eqref{eq:bibj} holds. To see this last claim, consider a vector $\v$ which has $1$ on position $i$ and $-\mathrm{sign}\langle \b_i, \b_j \rangle$ on position $j$. Hence, we have that
$$
2 \lambda_1 = \lambda_1 \cdot \|\v\|_2^2 \leq \v^{\top} (\B(t)\B(t)^{\top}) \v = 2 - 2 \cdot |\langle \b_i, \b_j \rangle| \Rightarrow |\langle \b_i, \b_j \rangle| \leq 1 - \lambda_1.
$$

Notice that
$$
\phi(t) \leq (n^2-n)f(1),
$$
since $x f(x) \leq f(1)$ for $|x| \leq 1$. Hence, we have that $\beta(t) \geq \frac{1}{nf(1)} > 0$. In this view, along the trajectory of the gradient flow \eqref{appendix_c:gflow}, the quantity $\phi(t)$
is \emph{strictly} decreasing until convergence, by the property of gradient flow.

\begin{lemma}[Characterization of stationary points]\label{wt_stationary} 
Consider the gradient flow \eqref{appendix_c:gflow}. Then, the following holds:
\begin{enumerate}
\setlength\itemsep{0mm}
    \item[(A)] \label{wtst:1} Any orthogonal set of $b_i$ is a stationary point and a global minimizer.
    \item[(B)] \label{wtst:2} The gradient flow \eqref{appendix_c:gflow} never escapes any subspace spanned by a set of linearly dependent $b_i$. However, for each such subspace there exists a direction in which \eqref{appendix_c:loss} can be improved.
\end{enumerate}
\end{lemma}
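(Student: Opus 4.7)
The plan is to handle the two parts separately, the first by a direct substitution and the second by a short invariance argument for the flow together with a one-parameter perturbation to exhibit the improvement direction.

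For part (A), I would start by noting that, by \eqref{eq:hermite_expansion} and the definition of $f$ in Lemma \ref{lemma:closed_from_of_population_risk}, the function $f$ has only odd Taylor coefficients, so $f(0)=0$ and $f'(0)=c_1^2$; hence $g(0)=0\cdot f'(0)+f(0)=0$. If the rows of $\B$ are orthonormal then $\langle \b_k,\b_j\rangle=0$ for $k\neq j$, so the inner sum in \eqref{appendix_c:gflow} vanishes term by term and $\dot\b_k(t)=0$ for every $k$. For the global-minimality claim, I would just invoke Theorem \ref{thm:tightlb_lowrate}: the set $\mathcal{H}_{n,d}$ (together with the optimal $\beta^{*}=c_1/f(1)$) achieves $\mathrm{LB}_{r\le 1}(\I)$ for the restricted risk \eqref{eq:poprisklemma}, and under the weight-tying \eqref{eq:weight_tying} this is exactly the minimum of \eqref{appendix_c:loss}.

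For the non-escape part of (B), let $V:=\mathrm{span}\{\b_1(0),\ldots,\b_n(0)\}$ and suppose $\dim V<n$. The projected velocity $\J_k\sum_{j\neq k}\b_j g(\langle\b_k,\b_j\rangle)$ equals $\sum_{j\neq k}\b_j g(\langle\b_k,\b_j\rangle)\;-\;\b_k\sum_{j\neq k}\langle\b_k,\b_j\rangle g(\langle\b_k,\b_j\rangle)$, which is a linear combination of $\b_k$ and of $\{\b_j\}_{j\neq k}$. Thus $\dot\b_k(t)\in\mathrm{span}\{\b_i(t)\}_{i=1}^n\subseteq V$ for every $k$ so long as the iterates remain in $V$; integrating, $\b_k(t)\in V$ for all $t\ge 0$, proving that $V$ is invariant.

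To exhibit a descent direction, I would work with the objective in the form $\Psi(\beta^{*},\B)=-n^{2}/(nf(1)+\phi(\B))$ with $\phi$ as in \eqref{eq:defphiq}, so minimizing $\Psi$ is equivalent to minimizing $\phi$, and $xf(x)=\sum_{\ell\ge 0}c_{2\ell+1}^{2}x^{2\ell+2}$ is an even, non-negative function that is strictly increasing on $[0,\infty)$ (using $c_1\neq 0$). Since $\dim V<n\le d$, there is a unit vector $\u\in V^{\perp}$. Because $\mathrm{rank}(\B)<n$, at least one row, say $\b_k$, lies in the span of the others: writing $\b_k=\sum_{j\neq k}c_j \b_j$ and taking the inner product with $\b_k$ gives $1=\sum_{j\neq k}c_j\langle\b_k,\b_j\rangle$, so at least one $\alpha_j:=\langle\b_k,\b_j\rangle$ is nonzero. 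Consider the one-parameter perturbation
\[
\tilde\b_k(\varepsilon)=\frac{\b_k+\varepsilon\u}{\sqrt{1+\varepsilon^{2}}},\qquad \tilde\b_i=\b_i\text{ for }i\neq k,
\]
which keeps all rows of unit norm and satisfies $\langle\tilde\b_k(\varepsilon),\b_j\rangle=\alpha_j/\sqrt{1+\varepsilon^{2}}$ since $\u\perp V$. For $\varepsilon\neq 0$, every term of $\phi$ containing $\b_k$ has its argument strictly reduced in absolute value, while all other terms are unchanged; by the strict monotonicity of $xf(x)$ on $[0,\infty)$ and the existence of a nonzero $\alpha_j$, the value of $\phi$ strictly decreases, and therefore so does $\Psi(\beta^{*},\cdot)$. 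This furnishes the required direction of improvement.

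The only mildly delicate point is the strict-monotonicity step in the perturbation argument, which relies on having at least one nonzero inner product between $\b_k$ and some other $\b_j$; that is precisely where one must use the linear dependence (rather than allow some $\b_k$ to be orthogonal to all others) and select the index $k$ accordingly. Everything else reduces to the vanishing of $g$ at $0$ and the fact that $\J_k$ and the drift in \eqref{appendix_c:gflow} both act within $V$.
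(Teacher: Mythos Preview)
Your proposal is correct and follows essentially the same approach as the paper's proof. Both arguments use $g(0)=0$ for the stationarity of orthogonal configurations, show that the velocity in \eqref{appendix_c:gflow} is a linear combination of the current $\b_i$'s to obtain invariance of the span, and exhibit the improvement direction via the same perturbation $\tilde\b_k=(\b_k+\varepsilon\u)/\sqrt{1+\varepsilon^2}$ with $\u\in V^\perp$, using the monotonicity of $x\mapsto xf(x)$ on $[0,\infty)$. Your write-up is in fact slightly more careful than the paper's on one point: you explicitly select the index $k$ to be a row lying in the span of the others and argue that such a $\b_k$ necessarily has a nonzero inner product with some other $\b_j$, which is exactly what makes the strict decrease go through; the paper glosses over this choice.
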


\begin{proof}[Proof of Lemma \ref{wt_stationary}] Recall that $\beta(t) > 0$ and $\{\b_i\}_{i=1}^n$. Then, the stationary point condition can be expressed as
\begin{equation}\label{st_cond}
    \J_k \sum_{j\neq k}  \b_j \cdot g\left(\langle \b_k,\b_j\rangle\right) = 0, \quad \forall k \in [n].
\end{equation}
Thus, any orthogonal set of vectors is clearly a stationary point by definition of $g(\cdot)$. Moreover, \eqref{appendix_c:loss} is minimized \emph{iff} $\B\B^\top = \I$ as $xf(x)$ is an even function since $f(\cdot)$ is odd.

Note that the kernel of the operator $\J_k$ is spanned by the vector $\b_k$. Thus, the condition \eqref{st_cond} is equivalent to
$$
\sum_{j\neq k}  \b_j \cdot g\left(\langle \b_k,\b_j\rangle\right) = \gamma_k \cdot \b_k,
$$
for some $\gamma_k \in \mathbb{R}$. One can readily verify that $g(x)=0$ if and only if $x=0$. Thus, 
either \emph{(i)} $\b_k$ is orthogonal to $\b_j$ for all $j\neq k$ and $\gamma_k = 0$, or \emph{(ii)} $\b_k$ lies in the span of $\{\b_j\}_{j\neq k}$. If condition \emph{(i)} holds for all $k\in [n]$, then $\{\b_i\}_{i=1}^n$ form an orthogonal set of vectors and we fall in category (A). If condition \emph{(ii)} holds for some $k\in [n]$, then we fall in category (B).

Now, let us show that, if $\{\b_i\}_{i=1}^n$ spans a sub-space of dimension smaller than $n$, then there is a direction along which the value of \eqref{appendix_c:loss} can be improved. Since the $\{\b_i\}_{i=1}^n$ are linearly dependent, there exists $\u$ of unit norm such that
\begin{equation}\label{eq:uip}
\langle \u, \b_j \rangle = 0,\quad \forall j \in [n].
\end{equation}
For some $k\in[n]$, consider the perturbation 
$$
\hat{\b}_k = \frac{1}{\sqrt{1+\lambda^2}} \cdot (\b_k + \lambda \cdot \u),
$$
which has unit norm as $\langle \b_k, \u \rangle = 0$. Recall that \eqref{appendix_c:loss} can be expressed as
\begin{equation}\label{rewrite:popwt}
    \beta^2 \left(2 \cdot \sum_{j\neq k}^n \left\langle \hat{\b}_k, \b_j \right\rangle f\left(\left\langle \hat{\b}_k, \b_j \right\rangle\right) + \frac{\pi}{2} + \sum_{i,j\neq k}^n \left\langle \b_i, \b_j \right\rangle f\left(\left\langle \b_i, \b_j \right\rangle\right)\right) - 2\beta n.
\end{equation}
Here, $\beta$ is chosen to be the minimizer of the quantity \eqref{rewrite:popwt} having fixed $\{\b_j\}_{j\neq k}$ and $\hat{\b}_k$. Thus, in order to prove that the population risk gets smaller by replacing $\b_k$ with $\hat{\b}_k$ for any $\lambda>0$, it suffices to show that 
the following quantity 
\begin{equation}\label{wt_diff_pert}
    \sum_{j\neq k}^n \left\langle \hat{\b}_k, \b_j \right\rangle f\left(\left\langle \hat{\b}_k, \b_j \right\rangle\right),
\end{equation}
is decreasing with $\lambda$. This last claim follows from the chain of inequalities below:
\begin{align}
    \eqref{wt_diff_pert} &= \frac{1}{\sqrt{1+\lambda^2}} \sum_{j\neq k} \langle \b_k, \b_j \rangle \cdot f\left(\frac{1}{\sqrt{1+\lambda^2}}\left\langle \b_k, \b_j \right\rangle\right) \\
    &= \frac{1}{\sqrt{1+\lambda^2}} \sum_{j\neq k} \langle \b_k, \b_j \rangle \cdot \sum_{\ell=0}^{\infty}\left(\frac{c_{2\ell+1}}{c_1}\right)^2\cdot\left(\frac{1}{\sqrt{1+\lambda^2}}\right)^{2l+1}\cdot\left\langle \b_k, \b_j \right\rangle^{2l+1}\\
    &\leq \left(\frac{1}{\sqrt{1+\lambda^2}}\right)^2 \sum_{j\neq k} \langle \b_k, \b_j \rangle \cdot \sum_{\ell=0}^{\infty}\left(\frac{c_{2\ell+1}}{c_1}\right)^2\cdot\left\langle \b_k, \b_j \right\rangle^{2l+1}\\
    &= \frac{1}{1+\lambda^2} \sum_{j\neq k} \langle \b_k, \b_j \rangle \cdot f\left(\left\langle \b_k, \b_j \right\rangle\right) < \sum_{j\neq k} \langle \b_k, \b_j \rangle \cdot f\left(\left\langle \b_k, \b_j \right\rangle\right),
\end{align}
where in the second line we substitute the Taylor expansion of $f(\cdot$), the inequality in the third line uses that the coefficients $\{c^2_{2l+1}\}_{l=0}^\infty$ are all non-negative, and the last inequality follows from the fact that $\lambda > 0$.

Finally, we show that the gradient flow \eqref{appendix_c:gflow} does not escape the degenerate sub-space. If $$\mathrm{dim}(\mathrm{span}(\{\b_i\}_{i=1}^n)) < n,$$ then there exists $\u\in\mathbb{R}^d$ such that \eqref{eq:uip} holds. 
By projecting the gradient expression \eqref{st_cond} onto $\u$, we have
$$
\left\langle \u, \J_k \sum_{j\neq k} \b_j \cdot g\left(\langle \b_k, \b_j \rangle\right) \right\rangle = 0.
$$
Hence, for any $k\in[n]$, the directional derivative of $\b_k$ in the direction of $\u$ is equal to zero, and the gradient flow does not escape the low-rank sub-space, which concludes the proof.
\end{proof}

In next lemma we show that, if at initialization $\{\b_i\}_{i=1}^n$ spans a sub-space of dimension $n$, then it will never get stuck in a low-rank sub-space.

\begin{lemma}[Linearly independent $\{\b_i\}_{i=1}^n$ stay linearly independent]\label{logdet_lb} Consider the gradient flow \eqref{appendix_c:gflow} with full rank initialization, i.e., $\mathrm{rank}(\B(0)\B(0)^\top) = n$. Then, the following holds
$$
\frac{\partial }{\partial t} \log\det(\B(t)\B(t)^\top) \geq 2\beta(t)^2 \cdot \phi(t) \geq 0,
$$
where $\B(t)^\top = [\b_1(t),\cdots,\b_n(t)]$ and $\phi(t)$ is defined in \eqref{eq:defphiq}. In particular, this implies that $\{\b_i\}_{i=1}^n$ stay full-rank along the gradient flow trajectory. 

\end{lemma}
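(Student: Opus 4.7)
\textbf{Proof plan for Lemma \ref{logdet_lb}.}

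Set $M(t) = \B(t)\B(t)^\top$ and write $M_{ij} = \langle \b_i, \b_j\rangle$, so $M_{ii} = 1$. The plan is to apply Jacobi's formula
\[
 \frac{d}{dt}\log\det M(t) \;=\; \tr\!\bigl(M(t)^{-1}\dot M(t)\bigr),
\]
and then show that the right-hand side is exactly $2\beta^2 \sum_{i\neq j} M_{ij}\, g(M_{ij})$, which we will in turn bound below by $2\beta^2\phi(t)$ using that the Hermite coefficients of $\sigma$ are real (so that $f$ and its derivative have non-negative Taylor coefficients in even powers).

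First I would compute $\dot M$ from the gradient flow \eqref{appendix_c:gflow}. Using $\dot{\b}_k = -\beta^2 \J_k\sum_{l} g(M_{kl})\b_l$ (the $l=k$ term vanishes because $\J_k \b_k = 0$, so we can freely sum over all $l$), a direct calculation of $\dot M_{ij} = \langle \dot\b_i,\b_j\rangle + \langle \b_i,\dot\b_j\rangle$ together with $\J_i = \I - \b_i\b_i^\top$ yields, after setting $G_{ij} := g(M_{ij})$ and $\alpha_i := \sum_l g(M_{il}) M_{il}$,
\begin{equation*}
 \dot M \;=\; -\beta^2\bigl(HM + MH\bigr), \qquad H := G - \mathrm{Diag}(\alpha).
\end{equation*}
(The diagonal entries automatically vanish, consistent with $\|\b_i\|_2 \equiv 1$.) The cyclic property of the trace then gives
\[
 \tr\!\bigl(M^{-1}\dot M\bigr) \;=\; -\beta^2\bigl(\tr(M^{-1}HM) + \tr(H)\bigr) \;=\; -2\beta^2\,\tr(H).
\]

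Next I would evaluate $\tr(H)$. Splitting $\sum_{i,l} g(M_{il}) M_{il}$ into diagonal and off-diagonal parts and using $M_{ii}=1$, one obtains
\[
 \tr(H) \;=\; n\,g(1) - \sum_{i,l} g(M_{il}) M_{il} \;=\; -\sum_{i\neq l} M_{il}\,g(M_{il}),
\]
so that $\tfrac{d}{dt}\log\det M = 2\beta^2 \sum_{i\neq j} M_{ij}\,g(M_{ij})$. Recalling $g(x) = xf'(x) + f(x)$, this splits as
\begin{equation*}
 \frac{d}{dt}\log\det M \;=\; 2\beta^2\!\sum_{i\neq j} M_{ij}^2\,f'(M_{ij}) \;+\; 2\beta^2\!\sum_{i\neq j} M_{ij}\,f(M_{ij}) \;=\; 2\beta^2\!\sum_{i\neq j} M_{ij}^2\,f'(M_{ij}) + 2\beta^2\phi(t).
\end{equation*}
Finally, since $f(x) = \sum_{\ell\ge 0} c_{2\ell+1}^2\, x^{2\ell+1}$, its derivative $f'(x) = \sum_{\ell\ge 0}(2\ell+1)\,c_{2\ell+1}^2\, x^{2\ell}$ is a non-negative combination of even powers and hence $f'(x)\ge 0$ for all $x$; likewise $xf(x) = \sum_{\ell} c_{2\ell+1}^2 x^{2\ell+2} \ge 0$, so $\phi(t) \ge 0$. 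Combining these, the chain of inequalities $\tfrac{d}{dt}\log\det M(t) \geq 2\beta(t)^2\phi(t)\geq 0$ follows.

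The full-rank conclusion is then immediate: since $\log\det M(t)$ is non-decreasing, $\det M(t) \ge \det M(0) > 0$ for all $t$, so $\B(t)$ retains rank $n$ throughout the flow. The main calculational obstacle is keeping the bookkeeping of the projection terms $\b_k\b_k^\top$ straight when computing $\dot M$; organizing them into the symmetric expression $-\beta^2(HM+MH)$ is what makes the subsequent trace manipulation clean.
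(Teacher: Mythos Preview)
Your proposal is correct and follows essentially the same approach as the paper's proof: both apply Jacobi's formula to $\log\det(\B\B^\top)$, compute $\dot M$ from the flow, use the cyclic property of the trace to reduce to $-2\beta^2\tr(H)$, and conclude via the non-negativity of $x^2f'(x)$ and $xf(x)$. Your organization of $\dot M$ as $-\beta^2(HM+MH)$ with $H=G-\mathrm{Diag}(\alpha)$ is a slightly cleaner packaging of the same entrywise calculation the paper carries out, but the content is identical.
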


\begin{proof}[Proof of Lemma \ref{logdet_lb}] Applying the chain rule and using that the time derivative of $\B$ is given by the gradient flow \eqref{appendix_c:gflow} implies that
$$
\frac{\partial}{\partial t} \log \det (\B\B^{\top}) = \mathrm{Tr}\left[ (\B\B^{\top})^{-1} \cdot \left( \frac{\partial \B}{\partial t} \cdot \B^\top +  \B \cdot \frac{\partial \B^{\top}}{\partial t} \right)\right],
$$
where 
$$
\frac{\partial \b_k}{\partial t} = - \beta(t)^2 \cdot \left(\J_k \sum_{j\neq k} \b_j \cdot g\left(\langle \b_k, \b_j \rangle\right)\right).
$$
Let us compute the quantity
$$
\left\langle \frac{\partial \b_k}{\partial t}, \b_l \right\rangle = \left(\frac{\partial \B}{\partial t} \cdot \B^\top\right)_{k,l}.
$$
By definition of $\J_k$, we have that
$$
\J_k \sum_{j\neq k} \b_j \cdot g\left(\langle \b_k, \b_j \rangle\right) = \sum_{j\neq k} \b_j \cdot g\left(\langle \b_k, \b_j \rangle\right) - \sum_{j\neq k} \b_k \cdot \langle \b_k, \b_j \rangle \cdot g\left(\langle \b_k, \b_j \rangle\right).
$$
Note that
$$
\left\langle \sum_{j\neq k} \b_k \cdot \langle \b_k, \b_j \rangle \cdot g\left(\langle \b_k, \b_j \rangle\right), \b_l\right\rangle = \left[\mathrm{Diag}\left[\1((\B\B^{\top} - \I)\circ g(\B\B^{\top}))\right]\cdot \B\B^{\top}\right]_{kl},
$$
and that
\begin{align*}
    \left\langle \sum_{j\neq k} \b_j \cdot g\left(\langle \b_k, \b_j \rangle\right), \b_l  \right\rangle
    = \left[ g(\B\B^{\top}) \cdot \B\B^{\top}\right]_{k,l} - g(1) \cdot [\B\B^{\top}]_{k,l}.
\end{align*}
By combining these last four equations, we conclude that
$$
\frac{\partial \B}{\partial t} \cdot \B^\top=- \beta(t)^2\left(g(\B\B^{\top}) \cdot \B\B^{\top}-g(1) \cdot \B\B^{\top}+\mathrm{Diag}\left[\1((\B\B^{\top} - \I)\circ g(\B\B^{\top}))\right]\cdot \B\B^{\top}\right).
$$
Furthermore,
\begin{align*}
    \B \cdot \frac{\partial \B^{\top}}{\partial t}=\left(\frac{\partial \B}{\partial t} \cdot \B^\top\right)^\top=- \beta(t)^2\Big(\B\B^{\top}\cdot g(\B\B^{\top}) &-g(1) \cdot \B\B^{\top}\\ &+\B\B^{\top}\cdot \mathrm{Diag}\left[\1((\B\B^{\top} - \I)\circ g(\B\B^{\top}))\right]\Big).
\end{align*}
Hence, by using the cyclic property of the trace, we get that
\begin{align*}
    \frac{\partial}{\partial t} \log \det (\B\B^{\top}) &=  2\beta(t)^2\cdot\tr{\mathrm{Diag}\left[\1((\B\B^{\top} - \I)\circ g(\B\B^{\top}))\right] } -2\beta(t)^2 \cdot \mathrm{Tr}\left[g(\B\B^{\top}) - g(1) \cdot \I\right] \\
    &= 0 + 2\beta(t)^2\cdot\sum_{i\neq j}^n \langle \b_i, \b_j \rangle \cdot g\left(\langle \b_i, \b_j \rangle\right),
\end{align*}
Now, note that
$$
x g(x) = x^2 f'(x) + xf(x) \geq 0,
$$
since $x^2 f'(x)$ and $xf(x)$ are non-negative functions, which concludes the proof.
\end{proof}
The result of Lemma \ref{logdet_lb} gives that $\mathrm{det}(\B\B^{\top})$ is non-decreasing. Hence, if $\lambda_{\rm min}(\B\B^{\top}) > \delta > 0$ at initialization, then this quantity will be bounded away from zero during the gradient flow dynamics and the gradient flow will not get stuck in a low-rank solution. Therefore, by Lemma \ref{wt_stationary}, the gradient flow converges to a global minimum, in which the rows of $\B$ are orthogonal vectors with unit norm. The speed at which this happens is characterized by the next lemma.

\begin{lemma}[Rate of convergence]\label{wt_speed} 
Consider the gradient flow \eqref{appendix_c:gflow} with full rank initialization, i.e., $\mathrm{rank}(\B(0)\B(0)^\top) = n$. Let  $T$ be the time at which $\phi(T)$ hits the value $\delta > 0$. Then, the following holds
\begin{equation}\label{eq:Tub}
T \leq - \det(\B(0)\B(0)^\top) \cdot \left(f(1) \cdot \ind\{\phi(0)>n\cdot f(1)\} + \frac{2f^2(1)}{\delta} \cdot \ind\{\delta \leq n\cdot f(1)\} \right).
\end{equation}
\end{lemma}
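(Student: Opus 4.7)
The proof hinges on the key inequality from Lemma \ref{logdet_lb}, namely $\frac{d}{dt}\log\det(\B(t)\B(t)^\top) \geq 2\beta(t)^2\phi(t)$, combined with two complementary observations. First, by Hadamard's inequality applied to the PSD matrix $\B\B^\top$ (which has unit diagonal, since each $\b_i$ has unit norm), we have $\det(\B(t)\B(t)^\top)\leq 1$, so $\log\det(\B(t)\B(t)^\top)\leq 0$ for all $t$. Integrating the inequality of Lemma \ref{logdet_lb} from $0$ to $T$ and dropping the non-positive boundary term yields the master bound
\begin{equation*}
\int_0^T 2\beta(s)^2\phi(s)\,ds \;\leq\; -\log\det(\B(0)\B(0)^\top).
\end{equation*}
Second, $\phi(t)$ is non-increasing along the flow: at the optimal $\beta(t)=n/(nf(1)+\phi(t))$ the loss $\Psi(\beta(t),\B(t))$ equals $-n^2/(nf(1)+\phi(t))$, and by the envelope theorem (since $\beta(t)$ is chosen to minimize $\Psi$ in $\beta$) the time derivative of $\Psi(\beta(t),\B(t))$ coincides with the negative squared norm of the projected gradient, hence is non-positive. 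Monotone decrease of $\Psi$ in turn forces monotone decrease of $\phi$, so in particular $\phi(t)\geq \delta$ throughout $[0,T]$.

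The next step is to lower bound the integrand $2\beta^2\phi$ in two regimes. Let $T_1\geq 0$ be the last time $\phi(t)\geq nf(1)$, with the convention $T_1=0$ if $\phi(0)<nf(1)$ (this is well-defined by the monotonicity of $\phi$). On $[0,T_1]$, which is non-empty only when $\phi(0)>nf(1)$, rewrite $2\beta^2\phi=2n^2\phi/(nf(1)+\phi)^2$. Since each of the $n(n-1)$ off-diagonal summands defining $\phi$ is bounded in absolute value by $f(1)$, we have $\phi\leq n(n-1)f(1)$; a quick derivative computation shows that $\phi\mapsto 2n^2\phi/(nf(1)+\phi)^2$ is decreasing for $\phi>nf(1)$, so on the interval $[nf(1),n(n-1)f(1)]$ its minimum is attained at the right endpoint, evaluating to $2(n-1)/(nf(1))\geq 1/f(1)$ for $n\geq 2$. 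Integrating over $[0,T_1]$ yields $T_1\leq -f(1)\log\det(\B(0)\B(0)^\top)$. On $[T_1,T]$, which is non-empty only when $\delta\leq nf(1)$, the inequality $\phi(t)\leq nf(1)$ gives $\beta(t)\geq 1/(2f(1))$, and so $2\beta^2\phi\geq\phi/(2f(1)^2)\geq\delta/(2f(1)^2)$ using $\phi\geq\delta$. Integrating delivers $T-T_1\leq -2f^2(1)\log\det(\B(0)\B(0)^\top)/\delta$. Adding the two bounds with the appropriate indicators reproduces \eqref{eq:Tub} exactly.

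The argument is essentially elementary once Lemma \ref{logdet_lb} has been invoked. The only mildly delicate point I foresee is pinning down the constant $1/f(1)$ on regime 1, which requires combining the upper bound $\phi\leq n(n-1)f(1)$ with the monotonicity analysis of the rational function $\phi\mapsto 2n^2\phi/(nf(1)+\phi)^2$; the case $n=2$ is tight and must be checked separately. I do not anticipate any substantive further obstacle, since the monotonicity of $\phi$ (via the envelope theorem) and the bound $\log\det\leq 0$ (via Hadamard) are both standard.
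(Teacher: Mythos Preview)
Your proposal is correct and follows essentially the same approach as the paper: integrate the inequality of Lemma \ref{logdet_lb}, use $\det(\B\B^\top)\le 1$ (the paper obtains this from $\tr{\B\B^\top}=n$ via AM--GM, which is equivalent to your Hadamard argument), and split into the two regimes $\phi\ge nf(1)$ and $\phi\le nf(1)$ with the same lower bounds $1/f(1)$ and $\delta/(2f(1)^2)$ on the integrand. The only cosmetic difference is that the paper asserts the monotonicity of $\phi$ directly from the gradient-flow property rather than via the envelope theorem, and treats the two stages as separate cases rather than splitting a single integral at $T_1$.
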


\begin{proof}[Proof of Lemma \ref{wt_speed}] For all $t$, we have that $\tr{\B(t)\B(t)^{\top}} = n$, which implies that $\det(\B(t)\B(t)^{\top}) \leq 1$ and, as a consequence, that $\log \det(\B(t)\B(t)^{\top}) \leq 0$.
From Lemma \ref{logdet_lb}, we know that 
$$
\frac{\partial }{\partial t} \log\det(\B(t)\B(t)^{\top}) \geq 2\beta(t)^2 \cdot \phi(t).
$$
In this view, using the exact expression \eqref{appendix_c:gflow} for $\beta(t)$, we get
\begin{equation}\label{wt_speed_master}
    -\log \det(\B(0)\B(0)^{\top}) \geq \log \det(\B(t)\B(t)^{\top})-\log \det(\B(t)\B(t)^{\top})\geq  \int_0^t  \frac{2}{\left(f(1) + \frac{\phi(s)}{n}\right)^2} \cdot \phi(s) \mathrm{d}s.
\end{equation}
\textbf{Stage 1.} Assume that $\phi(0) > n \cdot f(1)$, and let $T_1$ be such that $\phi(T_1)=n \cdot f(1)$. Recall that the function $\phi(t)$ is decreasing and note that $x/(1+x)^2$ is decreasing for $x\in[1,+\infty)$. In this view, we can lower bound the integrand in the RHS of \eqref{wt_speed_master} for all $t\le T_1$ by 
\begin{equation}\label{eq:lbint}
\frac{2\cdot\phi(0)}{\left(f(1) + \frac{\phi(0)}{n}\right)^2} \geq \frac{2(n-1)}{nf(1)} \geq \frac{1}{f(1)},
\end{equation}
where the first inequality follows from the definition \eqref{eq:defphiq} of $\phi(\cdot)$, which readily implies that $\phi(0) \leq f(1) \cdot n(n-1)$. 
Hence, by combining \eqref{wt_speed_master} with the lower bound \eqref{eq:lbint}, we get
$$
T_1 \leq -f(1) \cdot \log \det(\B(0)\B(0)^{\top}).
$$

\noindent\textbf{Stage 2.} Assume that $\phi(0) \leq n \cdot f(1)$. Let $\delta\in (0, n \cdot f(1)]$ be the desired precision which should be reached during the gradient flow, and let $T_2$ be such that $\phi(T_2) = \delta$. As $\phi(t)$ is decreasing, we have that
\begin{equation}\label{eq:lbint2}
\frac{1}{\left(f(1) + \frac{\phi(t)}{n}\right)^2} \geq \frac{1}{\left(f(1) + \frac{\phi(0)}{n}\right)^2} \geq \frac{1}{4f^2(1)},
\end{equation}
where in the last step we use that $\phi(0) \leq n \cdot f(1)$.
Hence, by combining \eqref{wt_speed_master} with the lower bound \eqref{eq:lbint2}, we get
$$
-\log \det(\B(0)\B(0)^{\top}) \geq \frac{1}{2f^2(1)} \cdot T_2 \delta,
$$
which implies that
$$
T_2 \leq  -\frac{2f^2(1)\cdot \log \det(\B(0)\B(0)^{\top})}{\delta}.
$$
By combining the results of both stages, the desired result \eqref{eq:Tub} readily follows.
\end{proof}

\begin{proof}[Proof of Theorem \ref{thm:wt_gf}] Theorem \ref{thm:wt_gf} is a compilation of the results presented in current section.
\end{proof}

\section{Global Convergence of Projected Gradient Descent (Theorem \ref{thm:gd_min})}\label{appendix:no_wtgauss}

Recall from statement of Theorem \ref{thm:gd_min} that
$$
f(x)= x + \sum_{\ell=3}^\infty c_\ell^2 x^{\ell} ,
$$
with  $\sum_{\ell=3}^\infty c_\ell^2 < \infty$. 
We also define $\alpha = \sum_{\ell=3}^\infty c_{\ell}^2$, and we assume that $\alpha >0$. In fact, if $\alpha = 0$, then the algorithm trivially converges after one step.
We denote by $C, c$ uniform positive constants (depending only on $r$ and $\alpha$) the value of which might change from term to term.
To make the notation lighter we will also but the time $t$ as a subscript (for example $\B(t)$ becomes $\B_t$).

We analyze the following projected gradient descent procedure for minimizing the population risk
\begin{equation}\label{eq:popriskobj}
\sum_{i,j=1}^n \langle \a_i , \a_j \rangle \cdot f\left(\left\langle \frac{\b_i}{\|\b_i\|_2} , \frac{\b_j}{\|\b_j\|_2}  \right\rangle\right) - 2\sum_{i=1}^n \left\langle \a_i , \frac{\b_i}{\|\b_i\|_2}  \right\rangle.
\end{equation}
Given unit-norm initial $\{\b_i\}_{i\in [n]}$, at each step we pick the optimal value of $\A$ given $\B$
\begin{equation}\label{eq:optA}
\A_t = \B_t^\top \left(f(\B_t\B_t^{\top})\right)^{-1}.
\end{equation}
Then, we update $\B_t$ with a gradient step and a projection on the sphere to keep the unit norm:
$$
\B'_t := \B_t - \eta \nabla_{\B_t}, \quad \B_{t+1} := \mathrm{proj}(\B'_t).
$$
Here, the operator $\mathrm{proj}(\M)$ normalizes the rows of $\M$ to be of unit norm and each row of $\nabla_{\B_t}$ is defined as the corresponding row of the gradient of $\B_t$, i.e.,
\begin{equation}\label{eq:defnablaBt}
(\nabla_{\B_t})_{k, :} = \underbrace{- 2 \J_k \a_k + 2\sum_{j \neq k} \langle \a_k, \a_j \rangle \J_k \b_j}_{:=\nabla_{\B_t}^1 \quad \text{(part 1)}} + \sum_{l=3}^\infty \underbrace{\ell c_\ell^2 \sum_{j \neq k} \langle \a_k, \a_j \rangle \langle \b_k, \b_j \rangle^{l-1} \J_k \b_j}_{:=\nabla_{\B_t}^2 \quad\text{(part 2)}},
\end{equation}
where $\J_k:= \I - \b_k\b_k^\top$ and we have omitted the iteration number $t$ on $\{\a_j, \b_j\}_{j\in [n]}$ to keep notation light.
Note that in \eqref{eq:defnablaBt} the norms $\norm{\b_i}_2$, $\norm{\b_j}_2$ no longer appear as the projection step enforces $\norm{\b_i}_2=1$.
At each step of the projected gradient descent dynamics, we decompose $\B_t\B_t^\top$ as follows:
\begin{equation}\label{eq:decBB}
\B_t\B_t^\top = \I + \Z_t + \X_t,     
\end{equation}
where $\B_0\B_0^\top = \U\Lam_0 \U^\top$, $\Z_t = \U(\Lam_t-\I) \U^\top$ and $\Lam_{t+1} = g(\Lam_t)$ for some function $g:\mathbb{R}^n\rightarrow\mathbb{R}^n$ which defines the spectrum evolution. Here, $\U$ is an orthogonal matrix that importantly does not depend on $t$ and $\Lam_t$ is the diagonal matrix containing the eigenvalues (i.e., $\U \Lam_t \U^\top$ is the SVD).
We also define $\X_t^D:=\mathrm{Diag}(\X_t)$ and $\X^O_t:=\X_t-\X_t^D$.

For now we will make the following assumptions, which will be proved later in the argument. There exist universal constants $C,C_X>0$ and $\delta\in (0, 1)$ (depending only on $r$) such that, with probability at least $1-Ce^{-cd}$,
\begin{align}\label{eq:assumptionZX}
    \begin{split}
    &\inf_{t\geq 0}\lambda_{\textrm{min}}(\Z_t) \geq -1 + \delta_r, \\
    &\sup_{t\geq 0}\opn{\Z_t} \leq C, \\
    & \sup_{t\geq 0}\opn{\X_t} \leq C_X\frac{\mathrm{poly}(\log d)}{\sqrt{d}},\\
    &\|\Lam_t-\I\|_{op} \leq C\,e^{-c \eta t}.
    \end{split}
\end{align}
Here, $\mathrm{poly}(\log d)$ is used to denote polynomial powers of $\log d$, i.e., $(\log d )^C$ for some universal constant $C$. 
In the assumptions \eqref{eq:assumptionZX}, we specifically distinguish the constant $C_X$ in the bound on $\|\X_t\|_{op}$ from the others. This important distinction between $C$ and $C_X$ will be apparent later to show that assumptions \eqref{eq:assumptionZX} indeed hold. 
Note also that, for sufficiently large $d$, \eqref{eq:assumptionZX} implies that
\begin{equation}\label{eq:assumptionZXbis}
\sup_{t\geq 0} \opn{\X_t} \leq 1.
\end{equation}


We are now ready to give the proof Theorem \ref{thm:gd_min}. For the convenience of the reader we restate it here.
\begin{theorem}\label{thm:GD-min_appendix}
Consider the projected gradient descent algorithm as described above applied to the objective \eqref{eq:reduce_opres} for any $f$ of the form $f(x) = x + \sum_{\ell=3} c_{\ell}^2 x^{\ell}$, where $\sum_{\ell=3} c_{\ell}^2 < \infty$. Initialize the algorithm with $\B_0$ equal to a row-normalized Gaussian, i.e., $(\B'_0)_{i,j} \sim \mathcal{N}(0,1/d)$, $(\B_0){i,:} = \mathbf{Proj}_{\mathbb{S}^{d-1}}\left(
(\B_0')_{i,:}\right)$. Let the step size $\eta$ be $\Theta(1/\sqrt{d})$.
    Then, for any $r<1$, we have that at any time $t = T/\eta$, with probability at least $1-Ce^{-cd}$,  
    $$\opn{\B_t\B_t^\top-\I} \leq C(1-c)^T,$$
    where $C>0$ and $c\in (0, 1]$ are universal constants depending only on $r$ and $f$.

\end{theorem}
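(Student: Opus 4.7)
The plan is to prove the four bounds in \eqref{eq:assumptionZX} by a joint induction on $t$, from which the exponential convergence $\opn{\B_t\B_t^\top - \I} \leq C(1-c)^T$ follows via the triangle inequality once $\opn{\X_t}$ is small and $\opn{\Lam_t - \I}$ decays exponentially. For the base case $t=0$, I would use that the rows of $\B_0$ are i.i.d.\ normalized Gaussian vectors, so standard sub-Gaussian concentration yields $|\langle\b_i,\b_j\rangle| \leq C\sqrt{\log d / d}$ uniformly over $i \neq j$ with probability at least $1-Ce^{-cd}$, which by Gershgorin gives $\opn{\B_0\B_0^\top - \I}$ small enough that $\lambda_{\min}(\Z_0) > -1+\delta$ for some $\delta$ depending only on $r$; since $\X_0 = \0$ by the choice of the eigenbasis $\U$, the bound on $\opn{\X_0}$ is trivial at initialization.

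For the inductive step, the engine is a leading-order expression for the projected gradient $\nabla_{\B_t}$. Using $\A_t = \B_t^\top(f(\B_t\B_t^\top))^{-1}$ together with $f(x)=x+\sum_{\ell \geq 3}c_\ell^2 x^\ell$, I would expand $f(\B_t\B_t^\top)$ as a convergent series in $\Z_t+\X_t$ around $\I$ and invert via a Neumann series; this is valid because the inductive bounds keep the spectrum of $\B_t\B_t^\top$ bounded away from $0$ and $\infty$. Substituting into \eqref{eq:defnablaBt} and using that Hadamard powers $(\B_t\B_t^\top)^{\circ\ell}$ are close to $\I$ whenever the off-diagonal entries are of order $\mathrm{poly}(\log d)/\sqrt{d}$, I would split $\nabla_{\B_t}$ into a piece diagonal in the fixed eigenbasis $\U$ (driving the update $\Lam_{t+1} = g(\Lam_t)$) and an off-diagonal piece (driving $\X_t$).

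The diagonal recursion $\Lam_{t+1}=g(\Lam_t)$ turns out to be a componentwise contraction around $\I$ with rate $1-c\eta$ for some $c>0$ depending on $r$ and $f$, essentially because the gradient vanishes exactly at $\B\B^\top = \I$ with strictly positive Hessian along the spectrum; iterating gives $\opn{\Lam_t - \I}\leq Ce^{-c\eta t}$. The projection step $\mathrm{proj}(\cdot)$ contributes only a quadratic perturbation $O(\opn{\nabla_{\B_t}}^2) = O(\eta^2/d)$, which is negligible compared to the linear part when $\eta = \Theta(1/\sqrt{d})$ (this is exactly why the scaling $\eta \geq cd^{-1/2}$ is needed). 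After collecting terms, the error $\X_t$ obeys a recursion of the schematic form
$$\X_{t+1} = (1-c\eta)\X_t + \eta R_t,$$
where the residual $R_t$ is controlled by higher Hadamard powers of off-diagonal entries and is bounded by $C(\opn{\Z_t}^2 + \opn{\X_t}^2 + \mathrm{poly}(\log d)/d)$.

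The main obstacle is closing the induction on $\opn{\X_t}$: a naive Gronwall bound on the $\X_t$-recursion would allow $\opn{\X_t}$ to grow like $e^{C\eta t}$ and eventually violate the hypothesis. The fix is to exploit that the dominant contribution to $R_t$ is $\opn{\Z_t}^2$, which decays exponentially by the previous step; the geometric summation $\sum_{s\leq t} (1-c\eta)^{t-s}\eta\opn{\Z_s}^2$ converges to a bound of order $\mathrm{poly}(\log d)/d$, and a two-step argument (first a crude Gronwall estimate to guarantee that $\opn{\X_t}$ stays below some universal constant, then the refined geometric bound using the exponential decay of $\opn{\Z_t}$) yields $\opn{\X_t} \leq C_X \mathrm{poly}(\log d)/\sqrt{d}$ for all $t$, closing the induction with the same constant $C_X$ for $d$ sufficiently large. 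Combined with the exponential bound on $\opn{\Lam_t - \I}$ and evaluated at $t = T/\eta$, this yields the stated rate $\opn{\B_t\B_t^\top - \I} \leq C(1-c)^T$.
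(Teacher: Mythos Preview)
Your proposal has the right overall architecture, but there is a genuine gap at the heart of the argument: you are implicitly assuming that $\opn{\Z_0}=\opn{\B_0\B_0^\top-\I}$ is small, and this is false.

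The Gershgorin step you invoke does not deliver smallness. With $n$ rows and off-diagonal entries $|\langle\b_i,\b_j\rangle|\lesssim\sqrt{\log d/d}$, Gershgorin only gives a radius of order $n\sqrt{\log d/d}\asymp\sqrt{d\log d}$, which diverges. In fact, the spectrum of $\B_0\B_0^\top$ follows (approximately) a Marchenko--Pastur law on $[(1-\sqrt r)^2,(1+\sqrt r)^2]$, so $\opn{\Z_0}$ is of order~$1$, not $o(1)$. What the paper actually needs at $t=0$ (and this is where $r<1$ enters) is that $\lambda_{\min}(\B_0\B_0^\top)\ge\delta_r>0$, which is a random-matrix edge bound, not Gershgorin.

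This propagates into a second, more serious problem: your residual bound $\opn{R_t}\le C(\opn{\Z_t}^2+\opn{\X_t}^2+\mathrm{poly}(\log d)/d)$ contains a bare $\opn{\Z_t}^2$ term. Since $\opn{\Z_t}$ is order~$1$ for a long initial phase, the geometric sum $\sum_{s\le t}(1-c\eta)^{t-s}\eta\,\opn{\Z_s}^2$ is also order~$1$, and your induction on $\opn{\X_t}\le C_X\,\mathrm{poly}(\log d)/\sqrt d$ cannot close. The paper's error term has the sharper structure
\[
\opn{\bE^t}\le C\Big(\tfrac{\mathrm{poly}(\log d)}{\sqrt d}\,\opn{\Z_t}^{1/2}+\opn{\X_t}^2+\opn{\X_t}\,\opn{\Z_t}^{1/2}\Big),
\]
with no standalone $\opn{\Z_t}^k$ term. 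The mechanism that makes this possible is the one piece you do not mention: because the Gaussian initialization makes the eigenbasis $\U$ Haar-distributed, the \emph{entries} of $\Z_t=\U(\Lam_t-\I)\U^\top$ in the standard basis are uniformly $O(\sqrt{\log d/d})$ (Lemma~\ref{lem_D_diagonal}), even though $\opn{\Z_t}$ is order~$1$. It is this entrywise smallness that makes the Hadamard powers appearing in the gradient small in operator norm and lets every $\Z_t$-only contribution to the residual pick up an extra $\mathrm{poly}(\log d)/\sqrt d$ factor. Without this ingredient the scheme you describe does not converge.
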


Let $\bE^t  :=\bE(\X_t,\Z_t)\in\mathbb{R}^{n\times n}$ be a generic matrix whose operator norm is upper bounded by 
\begin{equation}\label{eq:defopnE}
    \opn{\bE^t  }\le C \left(\frac{\mathrm{poly}(\log d)}{\sqrt{d}} \cdot \|\Z_t\|_{op}^{1/2}+\|\X_t\|_{op}^2+\|\X_t\|_{op} \|\Z_t\|_{op}^{1/2}\right).
\end{equation}
We highlight that the constant in front of the upper-bound on the error term $\bE^t  $ is \emph{independent} of $C_X$ and $t$.


\begin{lemma}[Bound for the matrix inverse]\label{lemma:finv_bound} Assume that \eqref{eq:assumptionZX} holds. Then, for all sufficiently large $n$, with probability at least $1-1/d^2$, jointly for all $t\ge 0$ and $\ell\ge 3$, the following bounds hold 
\begin{equation}
    \label{eq:claimBB}
\|(\B_t\B_t^\top - \I)^{\circ \ell}\|_{op} \le \|\bE^t  \|_{op},
\end{equation}
\begin{equation}\label{eq:claimBB2}
    \|\big(f(\B_t\B_t^{\top})\big)^{-1}-(\alpha \I + \B_t\B_t^{\top})^{-1}\|_{op}  \le   \|\bE^t  \|_{op},
\end{equation}
where $\alpha$ was defined as $\alpha = \sum_{\ell=3}^\infty c_{\ell}^2$ .
\end{lemma}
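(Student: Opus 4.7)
My plan is to reduce both claims to a single ingredient: a sharp entrywise bound on $\Z_t$ coming from the near-Haar distribution of its eigenbasis $\U$. Given this bound, claim \eqref{eq:claimBB} follows from careful Frobenius bookkeeping on the Hadamard-multinomial expansion of $(\Z_t + \X_t)^{\circ \ell}$, and claim \eqref{eq:claimBB2} follows by a short resolvent argument using \eqref{eq:claimBB}.

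For \eqref{eq:claimBB}, I would first use that $M := \B_t\B_t^\top - \I$ has zero diagonal (since the projection step enforces unit-norm rows), so the diagonal entries of $\Z_t$ equal $-\X_t^D$ and are controlled by $\opn{\X_t}$. For the off-diagonal entries, I would exploit that the initialization $\B_0$ is row-normalized Gaussian, so its left singular basis $\U$ is close to Haar on the orthogonal group. Applying a Hanson--Wright-type estimate to the bilinear form $(\Z_t)_{ij} = u_i^\top(\Lam_t - \I)u_j$ (with $u_i$ the $i$-th row of $\U$) yields, with probability at least $1 - 1/d^2$,
$$\max_{i \neq j}|(\Z_t)_{ij}| \;\le\; C\sqrt{\frac{\log d}{d}}\,\opn{\Z_t}.$$
I would then expand
$$(\Z_t + \X_t)^{\circ \ell} \;=\; \sum_{k=0}^{\ell} \binom{\ell}{k}\,\Z_t^{\circ k} \circ \X_t^{\circ(\ell-k)}$$
and bound each summand. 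The pure term $\Z_t^{\circ \ell}$ is controlled through $\opn{\cdot} \le \|\cdot\|_F$ together with the elementary inequality $\|\Z_t^{\circ \ell}\|_F^2 \le (\max_{ij}|(\Z_t)_{ij}|)^{2(\ell-1)}\,\|\Z_t\|_F^2$ and $\|\Z_t\|_F \le \sqrt{n}\,\opn{\Z_t}$; for $\ell \ge 3$, combining these with the concentration bound above and converting excess factors of $\opn{\Z_t}$ into $\opn{\Z_t}^{1/2}$ via $\opn{\Z_t} \le C$ produces exactly the first summand of $\|\bE^t\|_{op}$, uniformly in $\ell \ge 3$ for $d$ large enough (since the exponentially-in-$\ell$ constants are killed by the factor $d^{-(\ell-2)/2}$). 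The mixed terms are handled analogously, using $\opn{\X_t} \le C_X\mathrm{poly}(\log d)/\sqrt{d}$ on the $\X_t$-factors, producing the $\opn{\X_t}^2$ and $\opn{\X_t}\opn{\Z_t}^{1/2}$ contributions.

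For \eqref{eq:claimBB2}, the key algebraic observation is that, since $M$ has zero diagonal, $(\I + M)^{\circ \ell} = \I + M^{\circ \ell}$ for every $\ell \ge 1$. Summing this identity against $c_\ell^2$ gives
$$f(\B_t\B_t^\top) \;=\; \alpha\I + \B_t\B_t^\top + R_t, \qquad R_t \;:=\; \sum_{\ell \ge 3} c_\ell^2\, M^{\circ \ell},$$
and by \eqref{eq:claimBB} together with $\sum_{\ell \ge 3} c_\ell^2 = \alpha$, we have $\opn{R_t} \le \alpha\|\bE^t\|_{op}$. Since $A := \alpha\I + \B_t\B_t^\top \succeq \alpha\I$ satisfies $\opn{A^{-1}} \le 1/\alpha$, and $\opn{R_t} \ll \alpha$ for $d$ large, the resolvent identity $(A + R_t)^{-1} - A^{-1} = -A^{-1} R_t (A + R_t)^{-1}$, together with $\opn{(A+R_t)^{-1}} \le 2/\alpha$, yields the desired bound, with the multiplicative constants absorbed into the generic definition of $\|\bE^t\|_{op}$.

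The main obstacle is establishing the entrywise concentration of $\Z_t$ \emph{uniformly} in $t$. A naive union bound over time-steps would fail, since there are infinitely many iterations; the crucial observation is that the eigenbasis $\U$ is \emph{fixed} once and for all by the initialization, while the diagonal $\Lam_t - \I$ is a deterministic function of $\Lam_0$ through the spectrum recursion $\Lam_{t+1} = g(\Lam_t)$. Hence a \emph{single} concentration event on the rows $u_i$ controls every bilinear form $u_i^\top(\Lam_t - \I)u_j$ simultaneously, reducing the task to a concentration statement for bilinear forms in near-Haar vectors that is uniform over all diagonal perturbations of bounded operator norm. A secondary subtlety is that row-normalization slightly perturbs $\U$ away from exactly Haar, but the perturbation is of order $1/\sqrt d$ uniformly and can be absorbed into the constants.
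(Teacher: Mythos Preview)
Your resolvent argument for \eqref{eq:claimBB2} is correct and essentially matches the paper's. The problems are in \eqref{eq:claimBB}.

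The serious gap is the uniformity-in-$t$ step. You assert that since $\U$ is fixed by the initialization, a single concentration event on its rows controls all bilinear forms $\u_i^\top(\Lam_t-\I)\u_j$ simultaneously with the scaling $C\sqrt{(\log d)/d}\,\opn{\Z_t}$. This is false: the map $\D\mapsto \u_i^\top \D\,\u_j$ is linear in the diagonal $\D$, so
\[
\sup_{\opn{\D}\le 1}\bigl|\u_i^\top \D\,\u_j\bigr|\;=\;\sum_k\bigl|(\u_i)_k(\u_j)_k\bigr|,
\]
and for near-Haar rows this sum is $\Theta(1)$, not $O(\sqrt{(\log d)/d})$. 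No deterministic event on $\U$ alone can deliver the bound you want uniformly over all bounded diagonals. The paper's route is different: it performs a union bound over the discrete times $t=0,1,2,\ldots$, and the assumption $\|\Lam_t-\I\|_{op}\le Ce^{-c\eta t}$ (part of \eqref{eq:assumptionZX}) makes the individual failure probabilities $\exp\bigl(-c\,du^2/\|\Lam_t-\I\|_{op}\bigr)$ shrink doubly-exponentially in $\eta t$, so the sum over $t$ converges. This is precisely where the hypothesis $\eta=\Theta(1/\sqrt d)$ enters (cf.\ the paper's auxiliary concentration lemma, part 3).

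A second, less fatal issue is the Frobenius bookkeeping on the $\X_t$ terms. Passing through $\opn{\cdot}\le\|\cdot\|_F$ on, say, $(\X_t^O)^{\circ 3}$ gives at best $\opn{\X_t}^2\cdot\sqrt n\,\opn{\X_t}\le \mathrm{poly}(\log d)\,\opn{\X_t}^2$, which does not match the clean $\opn{\X_t}^2$ summand in the definition of $\|\bE^t\|_{op}$ (whose constant must stay independent of $C_X$ to close the later bootstrap). The paper avoids this by reserving the $\sqrt n$-lossy inequality $\|\bR\circ\S\|_{op}\le\sqrt n\,\|\S\|_{op}\max_{ij}|\bR_{ij}|$ for the pure-$\Z_t$ piece---where entrywise smallness of $\Z_t$ recovers the lost $\sqrt n$---and using the sharper Hadamard bound $\|\bR\circ\S\|_{op}\le\|\bR\|_{op}\|\S\|_{op}$ on every factor involving $\X_t$.
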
  

\begin{proof}[Proof of Lemma \ref{lemma:finv_bound}]
Note that, for any square matrices $\bR, \S\in \mathbb R^{n\times n}$, 
\begin{equation}\label{hadbound}
    \|\bR \circ \S \|_{op} \leq \sqrt{n} \|\S\|_{op}\max_{i,j}|\bR_{i,j}|.
\end{equation}
Thus, for $\ell\ge 3$, 
\begin{equation}\label{eq:intm1}
    \begin{split}
\opn{(\B_t\B_t^\top-\I)^{\circ \ell}}&\le \sqrt{n}\opn{(\B_t\B_t^\top-\I)^{\circ (\ell-3)}}\max_{i,j}|((\B_t\B_t^\top-\I)^{\circ 3})_{i,j}|\\
&= \sqrt{n}\opn{(\B_t\B_t^\top-\I)^{\circ (\ell-3)}}\max_{i\neq j}|((\B_t\B_t^\top-\I)^{\circ 3})_{i,j}|\\
&= \sqrt{n}\opn{(\B_t\B_t^\top-\I)^{\circ (\ell-3)}}\max_{i\neq j}|((\Z_t+\X_t)^{\circ 3})_{i,j}|,        
    \end{split}
\end{equation}
where in the first line we use \eqref{hadbound}, in the second line we use that $((\B_t\B_t^\top-\I)^{\circ 3})_{i,i}=0$ for $i\in [n]$ and in the third line we use the decomposition \eqref{eq:decBB}. 

Let us bound the off-diagonal entries of $\X_t$ via \eqref{eq:assumptionZX} and the off-diagonal entries of $\Z_t$ via Lemma \ref{lem_D_diagonal}. This gives that, with probability at least $1-1/d^2$, jointly for all $t\ge 0 $,  
\begin{equation}\label{eq:intm2}
    \max_{i\neq j}|((\Z_t+\X_t)^{\circ 3})_{i,j}| \le (C + C_X)^3\bigg(\frac{\mathrm{poly}(\log d)}{d}\bigg)^{3/2}.
\end{equation}
We will condition on this event (without explicitly mentioning it every time) for the reminder of the argument.
By combining \eqref{eq:intm1} and \eqref{eq:intm2}, we have that 
\begin{align}
\begin{split}\label{eq:intm3}
    \opn{(\B_t\B_t^\top-\I)^{\circ \ell}}&\le \sqrt{n}\left[(C + C_X)^3\bigg(\frac{\mathrm{poly}(\log d)}{d}\bigg)^{3/2}\right]\opn{(\B_t\B_t^\top-\I)^{\circ (\ell-3)}}\\ &\le \opn{(\B_t\B_t^\top-\I)^{\circ (\ell-3)}}
    \end{split}
\end{align}
where the last inequality holds for all sufficiently large $n$. 
Note that, for any square matrices $R, S$, an application of Theorem 1 in \cite{visick2000quantitative} gives that 
\begin{equation}\label{eq:bdRS}
 \|\bR\circ \S\|_{op} \leq \|\bR\|_{op}\|\S\|_{op}.   
\end{equation}
Hence,
\begin{equation}\label{eq:intm5}
\|(\B_t\B_t^\top - \I)^{\circ \ell}\|_{op} \leq \opn{(\B_t\B_t^\top-\I)^{\circ (\ell-3)}} \|(\B_t\B_t^\top - \I)^{\circ 3}\|_{op}.    
\end{equation}
Now, by using again \eqref{eq:bdRS} and the assumptions \eqref{eq:assumptionZX}, we have that, for $\ell\in [3]$, 
\begin{equation}\label{eq:intm4}
    \|(\B_t\B_t^\top - \I)^{\circ \ell}\|_{op} \leq C.
\end{equation}
Thus, by combining \eqref{eq:intm3} and \eqref{eq:intm4}, we obtain that $\opn{(\B_t\B_t^\top-\I)^{\circ (\ell-3)}}$ is uniformly bounded in $\ell$, which together with \eqref{eq:intm5} gives that
\begin{equation}
    \|(\B_t\B_t^\top - \I)^{\circ \ell}\|_{op} \leq C\|(\B_t\B_t^\top - \I)^{\circ 3}\|_{op}.
\end{equation}
We remark here that $C$ is independent of $l$ and $C_X$. This means that it suffices to prove the claim \eqref{eq:claimBB} for $l=3$.

To do so, define $\H:=\1\1^\top - \I$, hence, since $\B_t\B_t^\top$ has unit diagonal, we have that
\begin{align*}
    (\B_t\B_t^\top - \I)^{\circ 3} &= (\B_t\B_t^\top - \I)^{\circ 3} \circ \H  = (\U(\Lam_t - \I) \U^\top + \X_t^O + \X_t^D)^{\circ 3} \circ \H  \\ &= (\Z_t \circ \H  +\X_t^O \circ \H  +\X_t^D\circ \H )^{\circ 3} = (\Z_t \circ \H  + \X_t^{O})^{\circ 3} \\ &= (\Z_t \circ \H )^{\circ 3} + 3 (\Z_t \circ \H )^{\circ 2} \circ \X_t^{O} + 3 (\Z_t \circ \H ) \circ (\X_t^{O})^{\circ 2} + (\X_t^{O})^{\circ 3}.
\end{align*}
Using again \eqref{eq:bdRS} and that, for any $\bR \in \mathbb R^{n\times n}$,
$$
\|\bR  \circ \H \|_{op} =\|\bR -\mathrm{diag}(\bR )\|_{op} \leq C \|\bR \|_{op},
$$
we get
\begin{equation}\label{eq:f1}
    \begin{split}
\|(\B_t\B_t^\top - \I)^{\circ 3}\|_{op} &\leq C \left( \|(\Z_t\circ \H )^{\circ 3}\|_{op} +  \|\Z_t\|_{op}^2 \|\X_t^{O}\|_{op} + \|\Z_t\|_{op} \|\X_t^{O}\|^2_{op} + \|\X_t^{O}\|^3_{op}\right )\\
& \leq C \left( \|(\Z_t\circ \H )^{\circ 3}\|_{op} +  \|\Z_t\|_{op}^{1/2} \|\X_t^{O}\|_{op} + \|\X_t^{O}\|^2_{op}\right ),
\end{split}
\end{equation}
where the second step holds since $\opn{\X_t^O} \le 1$ and $\|\Z_t\|_{op} \leq C$ by \eqref{eq:assumptionZX}-\eqref{eq:assumptionZXbis}. Another application of \eqref{hadbound}
gives that
\begin{equation}\label{eq:f2}
\begin{split}
\|(\Z_t\circ \H )^{\circ 3}\|_{op} = \|(\Z_t\circ \H )^{\circ 2} \circ \Z_t\|_{op} &\leq \sqrt{n} \cdot \max_{i\neq j}\abs{(\Z_t)_{i,j}}^2 \cdot \|\Z_t\|_{op}\\
&\leq C\frac{\log d}{\sqrt{d}} \cdot \|\Z_t\|_{op} \leq C\frac{\log d}{\sqrt{d}} \cdot \|\Z_t\|^{1/2}_{op},
\end{split}
\end{equation}
where the second passage follows from Lemma \ref{lem_D_diagonal} and the last from $\|\Z_t\|_{op} \leq C$. 
By combining \eqref{eq:f1} and \eqref{eq:f2}, the proof of \eqref{eq:claimBB} for $\ell=3$ is complete.

To prove \eqref{eq:claimBB2}, define the following quantity
$$\Y := \sum_{l=3}^\infty c_\ell^2 (\B_t\B_t^\top - \I)^{\circ \ell}.$$ By definition of $f(\cdot)$ we have that 
$$
f(\B_t\B_t^\top) = \alpha \I + \B_t\B_t^{\top} +  \Y,
$$
which implies that
\begin{equation}\label{eq:clemma0}
\begin{split}
\big(f(\B_t\B_t^\top)\big)^{-1} &= (\alpha \I + \B_t\B_t^\top + \Y)^{-1} \\
&= (\I + \Y(\alpha \I + \B_t\B_t^\top)^{-1})^{-1}(\alpha \I + \B_t\B_t^\top)^{-1}\\
& = \bigg(\I +\sum_{k=1}^\infty(-1)^k (\Y(\alpha \I + \B_t\B_t^\top)^{-1})^k \bigg) (\alpha \I + \B_t\B_t^\top)^{-1}.
\end{split}    
\end{equation}
By definition \eqref{eq:defopnE}, we have that $\opn{\bE^t  }\le 1/2$ under assumptions \eqref{eq:assumptionZX} for sufficiently large $d$. Hence, by the result \eqref{eq:claimBB} we have just proved, $\|(\B_t\B_t^\top - \I)^{\circ \ell}\|_{op} \leq 1/2$,
which implies that $\sum_{\ell=3}^\infty c_\ell^2 \|(\B_t\B_t^\top - \I)^{\circ \ell}\|_{op}\leq \alpha/2$. Thus, we have
\begin{equation}\label{eq:0dot5bd}
\| \Y (\B_t\B_t^{\top} + \alpha \I)^{-1}\|_{op} \leq \| \Y\|_{op} \|(\B_t\B_t^{\top} + \alpha \I)^{-1}\|_{op} \leq \frac{\alpha}{2} \cdot \frac{1}{\alpha} \leq \frac{1}{2}.
\end{equation}
Therefore, we can conclude that
\begin{equation}\label{eq:clemma1}
    \begin{split}
    \|\big(f(\B_t\B_t^\top)\big)^{-1} - (\alpha \I + \B_t\B_t^\top)^{-1}\|_{op} &\leq \| (\alpha \I + \B_t\B_t^\top)^{-1}\|_{op} \cdot \sum\limits_{k=1}^{\infty}\| \Y(\alpha \I + \B_t\B_t^\top)^{-1}\|^k_{op}
    \\&\leq \frac{1}{\alpha} \cdot \frac{\| \Y(\alpha \I + \B_t\B_t^\top)^{-1}\|_{op}}{1 - \| \Y(\alpha \I + \B_t\B_t^\top)^{-1}\|_{op}} \\
    &\leq \frac{2}{\alpha} \cdot \| \Y\|_{op} \| (\alpha \I + \B_t\B_t^{\top})^{-1}\|_{op} \\ &\leq \frac{2}{\alpha^2} \cdot \|\Y\|_{op},
    \end{split}
\end{equation}
where the third inequality uses \eqref{eq:0dot5bd}. By bounding $\|\Y\|_{op}$ via \eqref{eq:claimBB}, the proof of \eqref{eq:claimBB2} is complete. 
\end{proof}

\begin{lemma}[Bound for the Schur product with $\A^\top \A$]\label{lemma:AAT_schur}
Assume that \eqref{eq:assumptionZX} holds, and let $\A_t$ be given by \eqref{eq:optA}. Then, we have that, with probability at least $1-1/d^2$, jointly for all $t\ge 0$ and $\ell\ge 2$, 
\begin{align}\label{eq:error}
    &\opn{\A_t^\top \A_t \circ (\B_t\B_t^\top - \I)^{\circ \ell}} \le  \|\bE^t  \|_{op}.
\end{align}
\end{lemma}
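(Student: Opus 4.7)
The plan is to express $\A_t^\top \A_t$ as a spectral function of $\B_t\B_t^\top$, leverage Lemma~\ref{lemma:finv_bound}, and then exploit the zero-diagonal structure of $(\B_t\B_t^\top - \I)^{\circ \ell}$ forced by the unit-norm rows of $\B_t$. Since $\A_t = \B_t^\top\bigl(f(\B_t\B_t^\top)\bigr)^{-1}$ by \eqref{eq:optA} and $\B_t\B_t^\top$ commutes with any spectral function of itself,
$$
\A_t^\top \A_t \;=\; \B_t\B_t^\top \,\bigl(f(\B_t\B_t^\top)\bigr)^{-2}.
$$
Applying \eqref{eq:claimBB2} of Lemma~\ref{lemma:finv_bound} twice, and using that both $\opn{(\alpha\I + \B_t\B_t^\top)^{-1}} \le 1/\alpha$ and $\opn{\B_t\B_t^\top}$ are uniformly bounded by the assumptions \eqref{eq:assumptionZX}--\eqref{eq:assumptionZXbis}, one can replace $(f(\B_t\B_t^\top))^{-1}$ by $(\alpha\I + \B_t\B_t^\top)^{-1}$ with an additive error of operator norm at most $C\opn{\bE^t}$. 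Setting $h(x) := x/(\alpha + x)^2$, so that $h(1) = (1+\alpha)^{-2}$, the spectral theorem for symmetric matrices then yields
$$
\A_t^\top \A_t \;=\; \frac{1}{(1+\alpha)^2}\,\I \;+\; \bR \;+\; \bE^t_\ast,\qquad \bR \;:=\; h(\B_t\B_t^\top) - h(1)\I,
$$
for some auxiliary matrix $\bE^t_\ast$ satisfying $\opn{\bE^t_\ast}\le C\opn{\bE^t}$.

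The crucial structural fact is that, since the rows of $\B_t$ have unit norm, the diagonal of $\B_t\B_t^\top - \I$ vanishes, and hence so does the diagonal of $(\B_t\B_t^\top - \I)^{\circ \ell}$ for every $\ell\ge 1$. In particular $\I \circ (\B_t\B_t^\top - \I)^{\circ \ell} = \0$, so the scalar-multiple-of-identity component of $\A_t^\top \A_t$ contributes nothing to the Hadamard product. Combined with the inequality $\opn{\bA\circ \bB}\le \opn{\bA}\opn{\bB}$ from Theorem~1 of \cite{visick2000quantitative}, and the fact that $\opn{(\B_t\B_t^\top - \I)^{\circ \ell}} \le \opn{\B_t\B_t^\top - \I}^\ell \le C$, the $\bE^t_\ast$ contribution is directly absorbed into $\opn{\bE^t}$. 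The problem thus reduces to bounding $\opn{\bR \circ (\B_t\B_t^\top - \I)^{\circ \ell}}$ for $\ell \ge 2$.

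For this, I would Taylor-expand $h$ at $x=1$, writing $\bR = h'(1)\,(\B_t\B_t^\top - \I) + \bR'$, where functional calculus yields $\opn{\bR'}\le C\opn{\B_t\B_t^\top - \I}^2$. The leading-order contribution is
$$
h'(1)\,(\B_t\B_t^\top - \I)\circ(\B_t\B_t^\top - \I)^{\circ \ell} \;=\; h'(1)\,(\B_t\B_t^\top - \I)^{\circ(\ell+1)},
$$
whose operator norm is at most $C\opn{\bE^t}$ by \eqref{eq:claimBB} of Lemma~\ref{lemma:finv_bound}, since $\ell+1\ge 3$. For $\ell\ge 3$, the remainder piece $\bR'\circ (\B_t\B_t^\top - \I)^{\circ \ell}$ is in turn bounded by $\opn{\bR'}\cdot\opn{(\B_t\B_t^\top - \I)^{\circ \ell}} \le C\opn{\bE^t}$, again via \eqref{eq:claimBB}.

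The main obstacle is the case $\ell = 2$, for which the crude estimate $\opn{\bR'}\opn{(\B_t\B_t^\top - \I)^{\circ 2}}\le C$ is far too weak. I would handle it by splitting $\Z_t = \Z_t^D + \Z_t^O$ into its diagonal and off-diagonal parts, expanding $(\Z_t + \X_t)^{\circ 2}$ summand by summand against $\bR'$, and exploiting two structural facts: (i) the identity $(\Z_t)_{ii} = -(\X_t)_{ii}$ forced by unit-norm rows, which converts each diagonal factor of $\Z_t$ into a factor of $\opn{\X_t}$; and (ii) the concentration bound $\max_{i\neq j}|(\Z_t)_{ij}|\le C\sqrt{\log d/d}$ coming from Lemma~\ref{lem_D_diagonal}, which via \eqref{hadbound} trades each off-diagonal Hadamard factor of $\Z_t$ for a gain of $\mathrm{poly}(\log d)/\sqrt{d}$ together with a residual $\opn{\Z_t}$. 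Cross-combining these two mechanisms across the three summands of $(\Z_t + \X_t)^{\circ 2}$ and the expansion of $\bR'$ should reproduce exactly the three-term structure $\mathrm{poly}(\log d)\,d^{-1/2}\,\opn{\Z_t}^{1/2} + \opn{\X_t}^2 + \opn{\X_t}\opn{\Z_t}^{1/2}$ in the definition \eqref{eq:defopnE} of $\opn{\bE^t}$. This bookkeeping is where the fractional exponent $1/2$ on $\opn{\Z_t}$ emerges, and is the only genuinely new estimate relative to the $\ell\ge 3$ case.
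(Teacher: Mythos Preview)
Your approach is sound and organizes the argument differently from the paper. The paper first reduces every $\ell\ge 2$ to $\ell=2$ via $\opn{\A_t^\top\A_t\circ(\B_t\B_t^\top-\I)^{\circ\ell}}\le C\,\opn{\A_t^\top\A_t\circ(\B_t\B_t^\top-\I)^{\circ 2}}$, then approximates $\A_t^\top\A_t$ by $\U\phi(\Lam_t)\U^\top$ (already stripping out the $\X_t$-dependence via a resolvent expansion) and splits the resulting Hadamard product into three pieces $\S_1,\S_2,\S_3$ that are bounded separately. You instead keep the full $h(\B_t\B_t^\top)$ and Taylor-expand $h$ around $1$: the first-order piece merges into $(\B_t\B_t^\top-\I)^{\circ(\ell+1)}$ and is dispatched directly by \eqref{eq:claimBB}, so only the quadratic remainder $\bR'$ at $\ell=2$ needs the detailed $\Z_t/\X_t$ bookkeeping --- a clean organizing principle that avoids the three-way split. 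Your $\ell=2$ sketch is correct in spirit; note that fact (i) is just the statement that $\Z_t+\X_t$ has zero diagonal, so $(\Z_t+\X_t)^{\circ 2}=(\Z_t^O+\X_t^O)^{\circ 2}$, after which \eqref{hadbound} against $(\Z_t^O)^{\circ 2}$ extracts the $\mathrm{poly}(\log d)/\sqrt{d}$ gain and the cross and $\X_t^O$ terms fall under $\opn{\X_t}\opn{\Z_t}^{1/2}$ and $\opn{\X_t}^2$ respectively.

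Two small fixes are needed. First, the displayed identity $\A_t^\top\A_t = \B_t\B_t^\top\bigl(f(\B_t\B_t^\top)\bigr)^{-2}$ is false as written: $f$ acts \emph{entrywise}, not spectrally, so $f(\B_t\B_t^\top)$ need not commute with $\B_t\B_t^\top$. This is harmless for your argument, since once you replace $(f(\B_t\B_t^\top))^{-1}$ by the genuinely spectral $(\alpha\I+\B_t\B_t^\top)^{-1}$ via \eqref{eq:claimBB2}, the commutation does hold and you land on $h(\B_t\B_t^\top)+\bE^t_\ast$ regardless --- just start from $\A_t^\top\A_t = (f(\B_t\B_t^\top))^{-1}\B_t\B_t^\top(f(\B_t\B_t^\top))^{-1}$ and do the replacement on each factor. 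Second, the bound $\opn{(\B_t\B_t^\top-\I)^{\circ\ell}}\le\opn{\B_t\B_t^\top-\I}^\ell\le C$ is not uniform in $\ell$, since nothing in \eqref{eq:assumptionZX} forces $\opn{\B_t\B_t^\top-\I}\le 1$ (at Gaussian initialization it is roughly $2\sqrt{r}+r$). For the $\bE^t_\ast$ contribution, invoke instead the uniform-in-$\ell$ bound established in \eqref{eq:intm3}--\eqref{eq:intm4}, or simply use \eqref{eq:claimBB} for $\ell\ge 3$ and the crude estimate only at $\ell=2$.
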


\begin{proof}[Proof of Lemma \ref{lemma:AAT_schur}] 
We have that
\begin{equation}\label{eq:Schurl2}
    \begin{split}
\|\A_t^\top \A_t \circ (\B_t\B_t^\top - \I)^{\circ \ell}\|_{op} &\leq \|\A_t^\top \A_t \circ (\B_t\B_t^\top - \I)^{\circ 2}\|_{op} \opn{(\B_t\B_t^\top - \I)^{\circ (\ell-2)}}\\
&\leq C ||\A_t^\top \A_t \circ (\B_t\B_t^\top - \I)^{\circ 2}\|_{op},        
    \end{split}
\end{equation}
where the first inequality uses \eqref{eq:bdRS} and the second inequality uses that $\opn{(\B_t\B_t^\top - \I)^{\circ (\ell-2)}}$ is uniformly bounded in $l$, which follows from \eqref{eq:intm3} and \eqref{eq:intm4}.

Let us now focus on bounding the RHS of \eqref{eq:Schurl2}.
An application of Lemma \ref{lemma:finv_bound} gives that
$$
\big(f(\B_t\B_t^{\top})\big)^{-1} = (\alpha \I + \B_t\B_t^{\top})^{-1} + \bE_1,
$$
where
$$
\|\bE\|_{op} \le  \opn{\bE^t  }.
$$
Hence, by using \eqref{eq:optA}, we get that
\begin{align}
    \A_t^\top \A_t &= ((\alpha \I + \B_t\B_t^\top)^{-1}\B_t + \bE_1^\top \B_t)(\B_t^\top(\alpha \I + \B_t\B_t^\top)^{-1} + \B_t^\top \bE_1) \notag\\ &= \B_t\B_t^\top (\alpha \I + \B_t\B_t^\top)^{-2} + \bE_1^\top \B_t\B_t^\top (\alpha \I + \B_t\B_t^\top)^{-1} + (\alpha \I + \B_t\B_t^\top)^{-1} \B_t\B_t^{\top} \bE_1 + \bE_1^\top \B_t\B_t^{\top} \bE_1, \label{2.18.1}
\end{align}
where we rearranged the first term in \eqref{2.18.1} using that $\B_t\B_t^\top$ and $(\alpha \I + \B_t\B_t^\top)^{-1}$ commute. By using the assumptions \eqref{eq:assumptionZX}, we have that
$$
\|\B_t\B_t^\top\|_{op} \leq C, \qquad \|\bE_1\|_{op}  \leq 1/2, \qquad \|(\alpha \I + \B_t\B_t^\top)^{-1}\|_{op} \leq \frac{1}{\alpha}.
$$
Hence, we can upper bound the operator norm of the last three terms in \eqref{2.18.1} as
\begin{equation}\label{2.18.2}
  \opn{\bE_1^\top \B_t\B_t^\top (\alpha \I + \B_t\B_t^\top)^{-1} + (\alpha \I + \B_t\B_t^\top)^{-1} \B_t\B_t^{\top} \bE_1 + \bE_1^\top \B_t\B_t^{\top} \bE_1} \leq C\|\bE_1\|_{op} .
\end{equation}
Let us now take a closer look at the first term in \eqref{2.18.1}. Recall that
\begin{align*}
    \B_t\B_t^\top = \U \Lam_t \U^\top +  \X_t.
\end{align*}
As the operator norm is sub-multiplicative, we have that
\begin{equation}\label{eq:A1}
    \| \X_t \cdot (\alpha \I + \B_t\B_t^\top)^{-2}\|_{op} \leq C \|\X_t\|_{op}.
\end{equation}
Furthermore, 
\begin{equation}
    \begin{split}
    \U \Lam_t \U ^\top (\alpha \I + \U \Lam_t \U ^\top + \X_t)^{-2} &=  \U \Lam_t \U ^\top \big((\I + \X_t(\alpha \I + \U \Lam_t \U ^\top)^{-1})(\alpha \I + \U \Lam_t \U ^\top)\big)^{-2} \\
&=  \U \Lam_t \U ^\top \T_1^{-1}\T_2^{-1}\T_1^{-1}\T_2^{-1}    , \label{2.18.3}
    \end{split}
\end{equation}
where we have defined
\begin{equation*}
\T_1 =    \alpha \I + \U \Lam_t \U ^\top, \qquad \T_2=  \I + \X_t(\alpha \I + \U \Lam_t \U ^\top)^{-1}.
\end{equation*}
By expanding $\T_2^{-1}$ as in \eqref{eq:clemma0}-\eqref{eq:clemma1}, we get
\begin{equation*}
    \|\T_2^{-1} - \I\|_{op} \leq C \|\X_t\|_{op},
\end{equation*}
or equivalently
\begin{equation*}
\T_2^{-1} = \I + \bE_2,  
\end{equation*}
with $\|\bE_2\|_{op} \leq C\|\X_t\|_{op}$. 
In this view, looking at \eqref{2.18.3} we have
\begin{align*}
    \U\Lam_t \U^\top \T_1^{-1}\T_2^{-1}\T_1^{-1}\T_2^{-1} &=  \U\Lam_t \bB{}\U^\top \T_1^{-1} (\I + \bE_2) \T_1^{-1} (\I + \bE_2).
\end{align*}
All the terms which involve $\bE_2$ can be controlled. We provide the analysis for two terms of different nature, the rest follows from similar arguments. As $\|\T_1^{-1}\|_{op} \leq 1/\alpha$ and $\|\Lam_t\|_{op} \leq C$, we have that
\begin{align*}
    &\|\U\Lam_t \U^\top \T_1^{-1}\bE_2\T_1^{-1}\bE_2\|_{op} \leq \|\T_1^{-1}\|^2_{op}\|\bE_2\|^2_{op} \leq \frac{C}{\alpha^2}  \|\X_t\|_{op}^2\leq \frac{C}{\alpha^2}  \|\X_t\|_{op}, \\
    &\|\U\Lam_t \U^\top \T_1^{-1}\I \T_1^{-1}\bE_2\|_{op} \leq \|\T_1^{-1}\|^2_{op}\|\bE_2\|_{op} \leq \frac{C}{\alpha^2}  \|\X_t\|_{op},
\end{align*}
where we have also used that $\opn{\X_t}$ is bounded via assumptions \eqref{eq:assumptionZX}.
Furthermore, a simple manipulation gives
$$
\U\Lam_t \U^\top \T_1^{-2} = \U\Lam_t \U^\top (\alpha \I + \U \Lam_t \U^\top)^{-2} = \U \Lam_t (\alpha \I + \Lam_t)^{-2} \U^\top = \U \phi(\Lam_t) \U^{\top},
$$
where $\phi(x) = \frac{x}{(\alpha + x)^2}$. As a result,
\begin{equation*}
    \opn{\U\Lam_t \U^\top \T_1^{-1}\T_2^{-1}\T_1^{-1}\T_2^{-1}-\U \phi(\Lam_t) \U^{\top}}\le C\opn{\X_t},
\end{equation*}
which implies that
\begin{align}\label{eq:A2}
    \|\B_t\B_t^\top(\alpha \I + \B_t\B_t^{\top})^{-2} - \U\phi(\Lam_t)\U^\top\|_{op} \leq C\|\X_t\|_{op}.
\end{align}
By combining \eqref{2.18.1}, \eqref{2.18.2} and \eqref{eq:A2}, we have that
\begin{equation}
    \|\A_t^\top \A_t - \U\phi(\Lam_t)\U^\top\|_{op} \leq C\big(\|\X_t\|_{op}+\opn{\bE _1}\big).
\end{equation}
At this point, we are ready to analyze the operator norm of
$\|\A_t^\top \A_t \circ (\B_t\B_t^\top-\I)^{\circ 2}\|_{op}$:
\begin{align}
    \A_t^\top \A_t \circ (\B_t\B_t^\top-\I)^{\circ 2} &= (\U\phi(\Lam_t)\U^\top + \bE _3) \circ (\U(\Lam_t - \I)\U^\top + \X_t)^{\circ 2} \circ \H\notag \\
    & = (\U\phi(\Lam_t )\U^\top + \bE _3) \circ ((\U(\Lam_t - \I)\U^\top)^{\circ 2} + \X_t^{\circ 2} + 2(\U(\Lam_t - \I)\U^\top) \circ \X_t ) \circ \H, \label{2.18.4}
\end{align}
where we have defined $\H:=\1\1^\top - \I$ and $\|\bE _3\|_{op}\leq C\big(\|\X_t\|_{op}+\opn{\bE_1}\big)$. We now decompose the quantity into three terms:
\begin{equation*}
    \A_t^\top \A_t \circ (\B_t\B_t^\top-\I)^{\circ 2} = \S_1 + \S_2 + \S_3,
\end{equation*}
where
\begin{equation*}
    \begin{split}
        \S_1 & = (\U\phi(\Lam_t)\U^\top \circ \U(\Lam_t - \I)\U^\top \circ H) \circ \U(\Lam_t - \I)\U^\top,\\
        \S_2 & = H \circ \bE _3 \circ ((\U(\Lam_t - \I)\U^\top)^{\circ 2} + \X_t^{\circ 2} + 2(\U(\Lam_t - \I)\U^\top) \circ \X_t),\\
        \S_3 & = H \circ \U\phi(\Lam_t)\U^\top \circ (\X_t^{\circ 2} + 2(\U(\Lam_t - \I)\U^\top) \circ \X_t).
    \end{split}
\end{equation*}
We proceed to bound each of these terms separately.

We start with $\S_1$. As $\phi(x)$ is differentiable for $x\geq 0$, the derivative of $\phi(x)$ is bounded for any compact interval $I\subseteq \mathbb{R}_{+}$. Hence, $\phi(x)$ is locally Lipschitz on $I$  with Lipschitz constant $C_I > 0$, which implies that
$$
|\phi(x)-\phi(1)|=\left|\phi(x) - \frac{1}{(1+\alpha)^2}\right| \leq C_I|x-1|.
$$
By assumption \eqref{eq:assumptionZX}, we have that $\Lam_t \succ 0$ and $\|\Lam_t\|_{op} \leq C$, hence
\begin{equation}\label{controlphi}
    \left\|\U\phi(\Lam_t)\U^\top - \frac{1}{(1+\alpha)^2}\I\right\|_{op} \leq C_I \cdot \|\Z_t\|_{op}.
\end{equation}
Hence, an application of Lemma \ref{lem_D_diagonal} gives that, with probability at least $1-1/d^2$,
\begin{equation}\label{controlphi2}
\sup_{t\geq 0}m\left(\U\phi(\Lam_t)\U^\top - \frac{1}{(1+\alpha)^2}\I\right) \leq c\sqrt{\frac{\log d}{d}},
\end{equation}
where $c>0$ is a universal constant. 
Another application of Lemma \ref{lem_D_diagonal} also gives that, with the same probability,
\begin{equation}\label{controlphi3}
\sup_{t\geq 0}m\left(\U(\Lam_t-\I)\U^\top\right) \leq c\sqrt{\frac{\log d}{d}}.
\end{equation}
As a result, we obtain the bound 
\begin{align}\label{eq:bdS1}
    \|\S_1\|_{op}&= \|([\U\phi(\Lam_t)\U^\top - 1/(1+\alpha)^2\I] \circ \U(\Lam_t - \I)\U^\top \circ \H) \circ \U(\Lam_t - \I)\U^\top\|_{op} \leq C \frac{\log d}{\sqrt{d}} \|\Z_t\|_{op}.
\end{align}
Here, the first equality is due to the fact that we are taking the Hadamard product with the matrix $\H$ which has $0$ on the diagonal, hence we can add multiples of the identity to $\U\phi(\Lam_t)\U^\top$; and the second inequality uses \eqref{hadbound} with $\bR=[\U\phi(\Lam_t)\U^\top-1/(1+\alpha)^2\I] \circ \U(\Lam_t - \I)\U^\top \circ \H$ and $\S=\U(\Lam_t - \I)\U^\top$ in combination with \eqref{controlphi2}-\eqref{controlphi3}. 

Next, we bound $\opn{\S_2}$. We inspect the terms appearing in the expression for $\S_2$ one by one. First note that we can omit $\H$ in the expression since, for any matrix $\bR$
 \begin{equation}\label{substractdiagbound}
     \|\bR\circ \H\|_{op} \leq C\|\bR\|_{op}.
 \end{equation}
 Hence, by using \eqref{eq:bdRS}, we get
\begin{align*}
    &\|\H \circ  \bE _3 \circ ((\U(\Lam_t - I)\U^\top)^{\circ 2}\|_{op} \leq C\|\bE _3\|_{op}\|\Z_t\|_{op}^2\\
    & \|\H\circ  \bE _3 \circ \X_t^{\circ 2}\|_{op}  \leq C\|\bE _3\|_{op}\|\X_t\|_{op}^2 \\
    & \|\H\circ \bE _3 \circ 2(\U(\Lam_t - I)\U^\top) \circ X)\|_{op} \leq C\|\bE _3\|_{op}\|\X_t\|_{op}\|\Z_t\|_{op},
\end{align*}
which leads to the bound 
\begin{equation}\label{eq:bdS2}
    \opn{\S_2}\le C\|\bE _3\|_{op}\left(\|\X_t\|_{op}^2 + \|\Z_t\|_{op}^2 + \|\X_t\|_{op}\|\Z_t\|_{op}\right).
\end{equation}

Finally, we bound $\opn{\S_3}$.
Consider the term
$$
\|[\H \circ \U\phi(\Lam_t)\U^\top \circ 2(\U(\Lam_t - I)\U^\top] \circ \X_t\|_{op}.
$$
Then, by using \eqref{substractdiagbound} and \eqref{controlphi}, we have
\begin{equation}\label{eq:HUUT}
\|\H \circ \U\phi(\Lam_t)\U^\top\|_{op} = \left\|\H \circ [\U\phi(\Lam_t)\U^\top-\frac{1}{(1+\alpha)^2}I]\right\|_{op} \leq C\left\| \U\phi(\Lam_t)\U^\top-\frac{1}{(1+\alpha)^2}I\right\|_{op} \leq C \|\Z_t\|_{op}.
\end{equation}
Hence, in conjunction with \eqref{eq:bdRS}, we get
$$
\|\H \circ \U\phi(\Lam_t)\U^\top \circ 2\U(\Lam_t - I)\U^\top\|_{op} \leq C\cdot \|\Z_t\|^2_{op},
$$
which invoking \eqref{eq:bdRS} one more time gives
$$
\|[\H \circ \U\phi(\Lam_t)\U^\top \circ 2(\U(\Lam_t - I)\U^\top] \circ \X_t\|_{op} \leq C \|\Z_t\|^2_{op} \|\X_t\|_{op}.
$$
Furthermore, by combining \eqref{eq:bdRS} and \eqref{eq:HUUT},  we get
$$
\|[\H \circ \Lam\phi(\Lam_t)\Lam^\top] \circ \X_t^{\circ 2}\|_{op} \leq C\|\Z_t\|_{op} \|\X_t\|_{op}^2.
$$
Thus,
\begin{equation}\label{eq:bdS3}
    \opn{\S_3}\le C(\|\Z_t\|_{op}^2\|\X_t\|_{op}+\|\Z_t\|_{op}\|\X_t\|_{op}^2).
\end{equation}

Recall that, from assumptions \eqref{eq:assumptionZX}-\eqref{eq:assumptionZXbis}, $\opn{\X_t}, \opn{\Z_t}\le C$. Then, by combining the bounds in \eqref{eq:bdS1}, \eqref{eq:bdS2} and \eqref{eq:bdS3}, the desired result readily follows. 
\end{proof}


By exploiting the above lemmas, we are able to make the following approximation for the gradient.

\begin{lemma}[Gradient approximation]\label{lemma:gradient_approx} 
Assume that \eqref{eq:assumptionZX} holds, and let $\nabla_{\B_t}$ be given by \eqref{eq:defnablaBt}. Further define $\gamma = 1 + \alpha$ and $F(x)= \frac{1+x}{(\gamma+x)^{2}}$. Then, for all sufficiently large $n$, with probability $1-1/d^2$, jointly for all $t\ge 0$, 

\begin{equation} \label{eq:grad_app_final_0_statement}
\begin{split}
      &\opn{  \frac{1}{2}\nabla_{\B_t}\B_t^\top   +\alpha F(\Z_t)-\alpha\mathrm{Diag}\left(F(\Z_t)\right)(\I+\Z_t) -\frac{2\alpha}{\gamma^3}\X_t^O - \frac{\alpha}{\gamma^2}\X_t^D} \le \|\bE^t  \|_{op}.
\end{split}
\end{equation}
\end{lemma}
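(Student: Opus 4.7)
The plan is to first cast $\nabla_{\B_t}\B_t^\top$ in closed matrix form, then split the analysis into three steps: (i) show that the Part~2 contribution (from Hermite coefficients $c_\ell$ with $\ell\ge 3$) is negligible; (ii) simplify Part~1 using the approximation $(f(\B_t\B_t^\top))^{-1}\approx(\alpha\I+\B_t\B_t^\top)^{-1}$ supplied by Lemma~\ref{lemma:finv_bound}; and (iii) Taylor expand the resulting expression in $\X_t$ around $P_0:=\U\Lam_t\U^\top$. Starting from \eqref{eq:defnablaBt} with $\J_k=\I-\b_k\b_k^\top$ and using $(\B_t\B_t^\top)_{kk}=1$, an entry-wise computation yields
\begin{equation*}
\tfrac{1}{2}\nabla_{\B_t}^{1}\B_t^\top = -\A_t^\top\B_t + \mathrm{Diag}(\A_t^\top\B_t)\,\B_t\B_t^\top + \A_t^\top\A_t\,\B_t\B_t^\top - \mathrm{Diag}(\A_t^\top\A_t\,\B_t\B_t^\top)\,\B_t\B_t^\top,
\end{equation*}
with an analogous expression for Part~2 involving Schur products $\A_t^\top\A_t\circ(\B_t\B_t^\top)^{\circ(\ell-1)}$ and $\A_t^\top\A_t\circ(\B_t\B_t^\top)^{\circ\ell}$ with $\ell\ge 3$. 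Since $\B_t\B_t^\top$ has unit diagonal, $(\B_t\B_t^\top)^{\circ m}=\I+(\B_t\B_t^\top-\I)^{\circ m}$ for all $m\ge 1$, so each such Schur product equals $\mathrm{Diag}(\A_t^\top\A_t)$ plus a term whose operator norm is bounded by $\opn{\bE^t}$ via Lemma~\ref{lemma:AAT_schur}. The leading $\mathrm{Diag}(\A_t^\top\A_t)$ contributions from the two Part~2 terms cancel exactly, and the remainder can be controlled by $\opn{\bE^t}$ after invoking the entry-wise bound $\max_{i\neq j}|(\B_t\B_t^\top)_{ij}|\le C\sqrt{\log d/d}$ from Lemma~\ref{lem_D_diagonal}.

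I next analyse Part~1. Lemma~\ref{lemma:finv_bound} gives $(f(\B_t\B_t^\top))^{-1}=G+\bE_f$ with $G:=(\alpha\I+\B_t\B_t^\top)^{-1}$ and $\opn{\bE_f}\le\opn{\bE^t}$, and together with $\A_t=\B_t^\top(f(\B_t\B_t^\top))^{-1}$ and the bounds $\opn{P},\opn{G}\le C$ from \eqref{eq:assumptionZX}, this yields $\A_t^\top\B_t=GP+\ope{\opn{\bE^t}}$ and $\A_t^\top\A_t\,P=(GP)^2+\ope{\opn{\bE^t}}$, where $P:=\B_t\B_t^\top$. The key algebraic identity $GP=\I-\alpha G$ then simplifies
\begin{equation*}
-GP+(GP)^2=-\alpha G^2P,\qquad \mathrm{Diag}\big(GP-(GP)^2\big)=\alpha\,\mathrm{Diag}(G^2P),
\end{equation*}
leaving $\tfrac{1}{2}\nabla_{\B_t}^{1}\B_t^\top=-\alpha G^2P+\alpha\,\mathrm{Diag}(G^2P)\,P+\ope{\opn{\bE^t}}$.

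Finally I expand $G^2P$ in $\X_t$. Writing $P=P_0+\X_t$ and $G_0:=(\alpha\I+P_0)^{-1}$ (so that $G_0,P_0$ commute in the $\U$-basis), the expansion $G=G_0-G_0\X_tG_0+\ope{\opn{\X_t}^2}$ together with $G_0P_0=\I-\alpha G_0$ gives
\begin{equation*}
G^2P=G_0^2P_0+\alpha\big(G_0^2\X_tG_0+G_0\X_tG_0^2\big)-G_0\X_tG_0+\ope{\opn{\X_t}^2}.
\end{equation*}
The $\X_t$-independent piece $G_0^2P_0=\U\Lam_t(\alpha\I+\Lam_t)^{-2}\U^\top=F(\Z_t)$ by spectral calculus, yielding exactly the $-\alpha F(\Z_t)+\alpha\,\mathrm{Diag}(F(\Z_t))(\I+\Z_t)$ portion of the target after multiplication by $P=\I+\Z_t+\X_t$. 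For the remaining $\X_t$-linear contributions, I replace every $G_0$ by $\gamma^{-1}\I$; since $\opn{G_0-\gamma^{-1}\I}\le C\opn{\Z_t}\le C\opn{\Z_t}^{1/2}$ (as $\opn{\Z_t}$ is uniformly bounded by \eqref{eq:assumptionZX}), each such substitution costs at most $C\opn{\X_t}\opn{\Z_t}^{1/2}$, which fits within the $\bE^t$ budget. Combining the $\X_t$-linear coefficients arising from $-\alpha G^2P$, $\alpha\,\mathrm{Diag}(G^2P)\,P$, and the leftover $\alpha\,\mathrm{Diag}(F(\Z_t))\X_t$ term, a short calculation using $\gamma=1+\alpha$ telescopes them to precisely $\frac{2\alpha}{\gamma^3}\X_t^O+\frac{\alpha}{\gamma^2}\X_t^D$. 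The main delicate point will be the systematic bookkeeping of the cross-errors of types $\opn{\X_t}\opn{\Z_t}^{1/2}$, $\opn{\X_t}^2$, and $\opn{\Z_t}^{1/2}\mathrm{poly}(\log d)/\sqrt d$, ensuring that every discarded term fits within \eqref{eq:defopnE}.
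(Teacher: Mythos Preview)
Your proposal is correct and follows essentially the same route as the paper. The only organizational differences are that you multiply by $\B_t^\top$ at the outset (whereas the paper first establishes \eqref{eq:grad_app_final_0} for $\nabla_{\B_t}$ itself and then right-multiplies), and that you Taylor-expand via the resolvent identity $G=G_0-G_0\X_tG_0+\cdots$ whereas the paper expands the scalar function $F(\Z_t+\X_t)$ around $\Z_t$; these are equivalent manipulations leading to the same $\X_t$-linear coefficients and the same error bookkeeping through Lemmas~\ref{lemma:finv_bound}, \ref{lemma:AAT_schur}, and \ref{lem_D_diagonal}.
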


\begin{proof}[Proof of Lemma \ref{lemma:gradient_approx}]
We start by showing that, with probability $1-1/d^2$, jointly for all $t\ge 0$, 
\begin{equation}\label{eq:grad_app_final_0}
\begin{split}
      &\opn{  \frac{1}{2}\nabla_{\B_t} + \alpha (\alpha \I + \B_t\B_t^\top)^{-2}\B_t - 
        \alpha \mathrm{Diag} \left(( \alpha \I + \B_t \B_t^\top)^{-2}  (\B_t\B_t^\top)  \right)\B_t }\le \|\bE^t  \|_{op}.
\end{split}
\end{equation}
Let us first consider the term $\nabla^1_{\B_t}$, which can be equivalently expressed as 
    $$ \nabla^1_{\B_t} = 2\big( -\A_t^\top + \mathrm{Diag}(\B_t \A_t) \B_t  + \T \B_t  - \mathrm{Diag}(\T(\B_t\B_t^\top)) \B_t\big), $$
where $\T = \A_t^\top \A_t - \mathrm{Diag}(\A_t^\top \A_t)$.
It is then easy to verify that
\begin{equation}\label{gradre1}
    \frac{1}{2}\nabla^1_{\B_t}
    =  -\A_t^\top + \A_t^\top \A_t\B_t +  \textrm{Diag}(\B_t \A_t) \B_t   - \mathrm{Diag}(\A_t^\top \A_t\B_t\B_t^\top) \B_t.
\end{equation}
Using Lemma \ref{lemma:finv_bound}, we get
\begin{equation}\label{ATAre1}
    \begin{split}
        \A_t^\top \A_t = ((\alpha \I + \B_t\B_t^\top)^{-1} + \bE_1) \B_t\B_t^\top ((\alpha \I + \B_t\B_t^\top)^{-1} + \bE_1) ,
    \end{split}
\end{equation}
where $\|\bE_1\|_{op} \le \|\bE^t  \|_{op}$. It follows from \eqref{eq:assumptionZX} that $\|\B_t\B_t^\top\|_{op}\le C$. Hence, using that $\B_t\B_t^\top$ and $(\alpha \I + \B_t\B_t^\top)$ commute in conjunction with $\|(\alpha \I + \B_t\B_t^\top)^{-1}\|_{op} \le 1/\alpha$ we get
\begin{equation}\label{ATAre2}
    \begin{split}
        \A_t^\top \A_t = \B_t\B_t^\top (\alpha \I + \B_t\B_t^\top)^{-2} + \bE _2,
    \end{split}
\end{equation}
where $\|\bE _2\|_{op} \le  \|\bE^t  \|_{op}$. Noting that
$
\frac{1}{\alpha + x} - \frac{\alpha}{(\alpha + x)^2} = \frac{x}{(\alpha + x)^2}
$ and using the spectral theorem for the symmetric matrix $ \B_t\B_t^\top$, we can further rewrite \eqref{ATAre2} as
\begin{equation}\label{ATAre3}
    \begin{split}
        \A_t^\top \A_t = (\alpha \I +  \B_t\B_t^\top)^{-1} - \alpha (\alpha \I +  \B_t\B_t^\top)^{-2} + \bE _2.
    \end{split}
\end{equation}
With similar arguments, by Lemma \ref{lemma:finv_bound}, we can write
\begin{equation}\label{BAre1}
    \begin{split}
        \B_t\A_t = \B_t\B_t^\top (\alpha \I + \B_t\B_t^\top)^{-1} + \bE _3,
    \end{split}
\end{equation}
where $\|\bE _3\|_{op}\leq \|\bE^t  \|_{op}$. Noting that $1-\frac{\alpha}{\alpha + x} = \frac{x}{\alpha + x}$, again by the spectral theorem for $ \B_t\B_t^\top$, we get
\begin{equation}\label{BAre2}
    \begin{split}
        \B_t\A_t = \I - \alpha(\alpha \I +  \B_t\B_t^\top)^{-1} + \bE_3,
    \end{split}
\end{equation}
and, consequently, we obtain
\begin{equation}\label{diagBABre1}
    \begin{split}
        \mathrm{Diag}(\B_t \A_t)\B_t = \B_t - \alpha \mathrm{Diag}((\alpha \I + \B_t\B_t^{\top})^{-1})\B_t + \bE_4,
    \end{split}
\end{equation}
where $\|\bE_4\|_{op}\leq \|\bE^t  \|_{op}$. Using \eqref{ATAre3} and  $1-\frac{\alpha}{\alpha+x} = \frac{1}{x+\alpha}$, we get
\begin{equation}\label{diagATABBTre2}
    \begin{split}
        \mathrm{Diag}(\A_t^\top \A_t \B_t \B_t^\top)\B_t &= \mathrm{Diag}((\alpha \I + \B_t\B_t^\top)^{-1}\B_t\B_t^\top)\B_t - \alpha \mathrm{Diag}((\alpha \I + \B_t\B_t^\top)^{-2}\B_t\B_t^\top)\B_t  + \bE_5\\
        &= \B_t - \alpha \mathrm{Diag}((\alpha \I + \B_t\B_t^\top)^{-1})\B_t - \alpha \mathrm{Diag}((\alpha \I + \B_t\B_t^\top)^{-2}\B_t\B_t^\top)\B_t  + \bE_5,
    \end{split}
\end{equation}
where $\|\bE_5\|_{op} \le \|\bE^t  \|_{op}$.

With this in mind, we get back to \eqref{gradre1}. Combining the results of \eqref{ATAre3}, \eqref{diagBABre1} and \eqref{diagATABBTre2} we get
\begin{equation}\label{gradapproxre1}
\begin{split}
    \nabla_{\B_t}^1 &= \underbrace{-(\alpha \I + \B_t\B_t^\top)^{-1}\B_t}_{-\A_t^\top} + \underbrace{(\alpha \I + \B_t\B_t^\top)^{-1}\B_t - \alpha (\alpha + \B_t \B_t^\top)^{-2}\B_t}_{\A_t^\top \A_t \B_t} 
+ \underbrace{\B_t - \alpha \mathrm{Diag}((\alpha \I + \B_t\B_t^{\top})^{-1})\B_t}_{\mathrm{Diag}(\B_t\A_t)\B_t} \\ & \underbrace{-\B_t + \alpha \mathrm{Diag}((\alpha \I + \B_t\B_t^\top)^{-1})\B_t + \alpha \mathrm{Diag}((\alpha \I + \B_t\B_t^\top)^{-2}\B_t\B_t^\top)\B_t}_{-\mathrm{Diag}(\A_t^\top \A_t \B_t \B_t^\top)\B_t} + \bE_6 \\
&=- \alpha(\alpha \I + \B_t\B_t^\top)^{-2}\B_t  + \alpha \mathrm{Diag}((\alpha \I + \B_t\B_t^\top)^{-2}\B_t\B_t^\top)\B_t + \bE_6,
\end{split}
\end{equation}
where $\|\bE_6\|_{op} \leq \|\bE^t  \|_{op}$.

Let us now analyze the second part of the gradient which involves terms of the form below for $\ell \ge 3$:
\begin{equation*} \label{gradient_quadratic_part}
      \nabla_{\B_t}^{2,k,\ell} := c_\ell^2 \cdot \ell \cdot \sum_{j \neq k} \langle \a_k, \a_j \rangle \langle \b_k, \b_j \rangle^{(\ell-1)} \J_k \b_j.
\end{equation*}
Now, from the fact that
    $$ \J_k = \I - \b_k \b_k^\top,$$
we can write
\begin{equation} \label{gradient_formula_approxs}
     c_\ell^2 \cdot \ell \cdot \sum_{j \neq k} \langle \a_k, \a_j \rangle \langle \b_k, \b_j \rangle^{(\ell-1)} \J_k \b_j =   c_\ell^2 \cdot \ell \cdot \sum_{j \neq k} \langle \a_k, \a_j \rangle \langle \b_k, \b_j \rangle^{(\ell-1)} (\b_j - \langle \b_k, \b_j \rangle \b_k).
\end{equation}
The second term of the RHS gives the following contribution to the $\B_t$ update
    $$\text{Diag}(\A_t^\top \A_t(\B_t\B_t^\top-\I)^{\circ \ell})\B_t.$$
By recalling that $\|\A_t^\top \A_t\|_{op} \le C$ and $
\|\B_t\|_{op} \le C$, we have
\begin{equation}\label{eq:diagdc}
    \begin{split}
        \|\text{Diag}(\A_t^\top \A_t(\B_t\B_t^\top -\I)^{\circ \ell})\B_t\|_{op} &\leq C \|\A_t^\top \A_t(\B_t\B_t^\top -\I)^{\circ \ell}\|_{op} \|\B_t\|_{op} \leq C \|(\B_t\B_t^\top -\I)^{\circ \ell}\|_{op}.
    \end{split}
\end{equation}
Now, for $\ell < 5$, we upper bound the RHS of \eqref{eq:diagdc} via Lemma \ref{lemma:finv_bound}, which gives that 
\begin{equation}
    \begin{split}
        \|\text{Diag}(\A_t^\top \A_t(\B_t\B_t^\top -\I)^{\circ \ell})\B_t\|_{op} &\leq C \|\bE^t  \|_{op}.
    \end{split}
\end{equation}
Furthermore, if we follow passages analogous to \eqref{eq:intm1}-\eqref{eq:intm2} (the only difference being that we exchange the roles of the Hadamard powers $3$ and $\ell-3$), we have that, with probability at least $1-1/d^2$, jointly for all $t\ge 0$ and $\ell\ge 5$,
\begin{equation}\label{eq:dcm} 
    \begin{split}
        \|\text{Diag}(\A_t^\top \A_t(\B_t\B_t^\top -\I)^{\circ \ell})\B_t\|_{op} &\leq C \sqrt{n}\|\bE^t  \|_{op} \left(\frac{\mathrm{poly}(\log d)}{d}\right)^{(\ell-3)/2}\leq C \|\bE^t  \|_{op} \left(\frac{\mathrm{poly}(\log d)}{d}\right)^{(\ell-4)/2},
    \end{split}
\end{equation}
for sufficiently large $d$.

Define the following quantity:
\begin{align} \label{U_def_app}
    \Y = (\A_t^\top \A_t)\circ (\B_t\B_t^\top - \I)^{\circ (\ell-1)}.
\end{align}
In this view, the first term in \eqref{gradient_formula_approxs} can be written as $\Y\B_t$.
For $l < 5$, by Lemma \ref{lemma:AAT_schur} we have that $\|\Y\|_{op} \leq  \|\bE^t  \|_{op}$, hence $\|\Y\B_t\|_{op}\leq C\|\bE^t  \|_{op}$ as $\|\B_t\|_{op} \le C$. Furthermore, with probability at least $1-1/d^2$, jointly for all $t\ge 0$ and $\ell \geq 5$, we have
\begin{equation}\label{eq:pt2}
    \begin{split}
        \|\Y\B_t\|_{op} \leq C\|\Y\|_{op} 
        &= C\sqrt{n}\|(\A_t^\top \A_t)\circ (\B_t\B_t^\top - \I)^{\circ     2}\|_{op} \max_{i,j}|(\B_t\B_t^\top - \I)_{i, j}|^{\ell-3} \\ 
        &\leq  \sqrt{n}\|\bE^t  \|_{op} \max_{i,j}|(\B_t\B_t^\top -         \I)_{i, j}|^{\ell-3}\\ 
        &\leq  \sqrt{n}\|\bE^t  \|_{op}             \left[(C + C_X)^{\ell-3}\left(\frac{\mathrm{poly}(\log d)}{d}\right)^{(\ell-3)/2} \right]\\
        &\leq (C+C_X)^{\ell-3} \|\bE^t  \|_{op}  \left(\frac{\mathrm{poly}(\log d)}{d}\right)^{(\ell-4)/2}. 
    \end{split}
\end{equation}
Here, in the second line we use Lemma \ref{lemma:AAT_schur}; and in the third line we bound the off-diagonal entries of $\X_t$ via \eqref{eq:assumptionZX} and the off-diagonal entries of $\Z_t$ via Lemma \ref{lem_D_diagonal}. 
Hence, by combining \eqref{eq:dcm} and \eqref{eq:pt2}, we conclude that
\begin{equation}
   \opn{ \nabla_{\B_t}^2} \leq C \|\bE^t  \|_{op} +  \|\bE^t  \|_{op} \sum_{\ell=5}^{\infty} (C + C_X)^{\ell-3}c_\ell^2 \,\ell\, \left(\frac{\mathrm{poly}(\log d)}{\sqrt{d}}\right)^{\ell-4} \leq C \|\bE^t  \|_{op},
\end{equation}
where we used that the series $\sum_{\ell=5}^{\infty} (C + C_X)^{\ell-3}c_\ell^2\, \ell \, \left(\frac{(\mathrm{poly}(\log d)}{\sqrt{d}}\right)^{\ell-4}$ converges to a finite value for all sufficiently large $d$, since $(C+C_X) \frac{\mathrm{poly}(\log d)}{\sqrt{d}}<1$. This finishes the proof of \eqref{eq:grad_app_final_0}.

We now further analyse the gradient in  \eqref{eq:grad_app_final_0}.
Defining  $F(x)= \frac{1+x}{(\gamma+x)^{2}}$, with $\gamma=1+\alpha$, we can write 
\begin{equation}\label{eq:F1}
    \frac{1}{2} \nabla_{\B_t} \B_t^\top =  - \alpha F(\Z_t+\X_t)+\alpha\mathrm{Diag}\left(F(\Z_t+\X_t))\right)+\alpha\mathrm{Diag}\left(F(\Z_t+\X_t))\right)(\Z_t+\X_t) + \bE^t  .
\end{equation}
By a slight abuse of notation, we will denote by $F^{(l)}(0)$ the $l$-th derivative of the unidimensional function $F(x)=\frac{1+x}{(\gamma+x)^2}$ computed at $x=0$. 
Here, $F(\Z_t+\X_t)$ is defined by the spectral theorem (note that indeed $\Z_t+\X_t=\B_t\B_t^\top-\I$ is symmetric). 

We will now compute the error we incur if in \eqref{eq:F1} we replace $F(\X_t+\Z_t)$ by $F(\Z_t)$.
We first consider the case when $\opn{\Z_t} >\frac{\gamma}{3}$.
In this case, we have that
\begin{equation}\label{eq:Fbound1}
\left\lVert F(\Z_t+\X_t)-F(\Z_t) - F^{(1)}(0) \X_t \right\rVert_{op}\leq C \left\lVert \X_t \right\rVert_{op} \leq C \left\lVert \Z_t \right\rVert_{op}\left\lVert \X_t \right\rVert_{op}.
\end{equation}
Here, the second inequality trivially holds since $\opn{\Z_t} >\frac{\gamma}{3}$.
To prove the first inequality, let $DF$ be the derivative of the matrix-valued function $F(\M) = (\I+\M)(\gamma \I+\M)^{-2}$. Then, by evaluating this derivative for $\M=\Z_t$ in the direction of $\X_t$, we obtain
\begin{equation}\label{eq:DFZX}
    DF(\Z_t)\,\X_t = -(\I+\Z_t)(\gamma \I+\Z_t)^{-1}\X_t(\gamma \I+\Z_t)^{-2}  -(\I+\Z_t)(\gamma \I+\Z_t)^{-2}\X_t(\gamma \I+\Z_t)^{-1}  + \X_t(\gamma \I+\Z_t)^{-2}.
\end{equation}
To verify this expression we first note that the derivative of the function $G(\M) = \M^{-1}$ in the direction of $\X$ is given by $DG(\M)\X = -\M^{-1}\X\M^{-1}$. Now, \eqref{eq:DFZX} easily follows from the product rule applied to $F(\Z) = (\I+\Z)(\gamma \I+\Z)^{-1}(\gamma \I+\Z)^{-1}$.
By the assumptions in \eqref{eq:assumptionZX}, we have that $\Z_t, (\gamma \I+\Z_t)^{-1}$ are uniformly bounded, hence the map $DF$ is uniformly bounded as well. This implies that
\begin{equation*}
    \left\lVert F(\Z_t+\X_t)-F(\Z_t) \right\rVert_{op}\leq C \left\lVert \X_t \right\rVert_{op}.
\end{equation*}
As $\opn{F^{(1)}(0)\X_t} \leq C \opn {\X_t}$, we readily obtain \eqref{eq:Fbound1}.

Now we consider the case where  $\left\lVert \Z_t \right\rVert_{op}\leq \frac{\gamma}{3}$. First note that, by \eqref{eq:assumptionZX}, $\left\lVert \X_t \right\rVert_{op}\leq \frac{\gamma}{3}$. Hence, 
$$F(\Z_t+\X_t)=\sum\limits_{\ell=0}^{\infty}F^{(\ell)}(0) \frac{(\Z_t+\X_t)^{\ell}}{{\ell!}}.$$
The series above converges absolutely since $F^{(l)}(0)$ scales as $\frac{\ell!}{\gamma^{\ell}}\textrm{poly}(\ell)$. To see this, first we note that, if $h(x) = \frac{1}{(\gamma + x)^2}$, then $h^{(\ell)}(0) =(-1)^\ell (l+1)! \frac{1}{\gamma^{\ell+2}}$.
Thus, by the product rule, $F^{(\ell)}(0) = (-1)^\ell (\ell+1)! \frac{1}{\gamma^{\ell+2}} + (-1)^{\ell-1} \ell! \frac{1}{\gamma^{\ell+1}}$ which has the desired asymptotic behaviour.
Expanding the brackets and applying the triangle inequality  yields
\begin{align*}
    \opn{F(\Z_t+\X_t)- \sum\limits_{\ell=0}^{\infty}F^{(\ell)}(0) \frac{\Z_t^{\ell}}{\ell!} - F^{(1)}(0) \X_t } \leq \sum\limits_{\ell=2}^{\infty}F^{(\ell)}(0) \frac{\opn {\X_t}^{\ell}}{\ell!}
     + \sum\limits_{\ell=2}^{\infty}F^{(\ell)}(0) \frac{1}{{\ell!}} \sum_{i=1}^{\ell-1} {\binom{\ell}{i}} \opn{\Z_t}^i \opn{\X_t}^{\ell-i}.
\end{align*}
As $\opn{\Z_t},\opn{\X_t} \leq \frac{\gamma}{3}$, we have
$$\sum\limits_{\ell=2}^{\infty}F^{(\ell)}(0) \frac{\opn {\X_t}^{\ell}}{\ell!}\leq \opn{\X_t}^2\sum\limits_{\ell=2}^{\infty}F^{(\ell)}(0)  \Big(\frac{\gamma}{3}\Big)^{\ell-2}\frac{1}{{\ell!}} \leq C \opn{\X_t}^2,$$
and
\begin{align*}
   \sum\limits_{\ell=2}^{\infty}F^{(\ell)}(0) \frac{1}{{\ell!}} \sum_{i=1}^{\ell-1} {\binom{\ell}{i}} \opn{\Z_t}^i \opn{\X_t}^{\ell-i} &\leq \sum\limits_{\ell=2}^{\infty}F^{(\ell)}(0) \frac{1}{{\ell!}}2^l\Big(\frac{\gamma}{3}\Big)^{\ell-2} \opn{\Z_t}\opn{\X_t} 
    \leq C \opn{\Z_t}\opn{\X_t}.
    \end{align*}
By combining the last three expressions and using that 
$$
F(\Z_t) = \sum\limits_{\ell=0}^{\infty}F^{(\ell)}(0) \frac{\Z_t^{\ell}}{\ell!},
$$
we obtain
\begin{equation}\label{eq:Ftaylor}
    \norm{F(\X_t+\Z_t)-F(\Z_t)-F^{(1)}(0)\X_t}_{op}  \leq C\left( \opn{\X_t}\opn{\Z_t} +\opn{\X_t}^2 \right).
\end{equation}
As the map $DF$ is uniformly bounded, we have
\begin{equation}\label{eq:Ftaylor2}
    \norm{F(\Z_t)-F(0)\I}_{op}\leq C\opn{\Z_t}.
\end{equation}
By combining \eqref{eq:Ftaylor}, \eqref{eq:Ftaylor2} and \eqref{eq:F1}, we obtain 
\begin{equation}\label{eq:F2}
  \frac{1}{2}  \nabla_{\B_t}\B_t^\top = -\alpha F(\Z_t)+\alpha\mathrm{Diag}\left(F(\Z_t)\right)(\I+\Z_t) -\alpha F^{(1)}(0)\X_t+ \alpha \mathrm{Diag}\left(\X_tF^{(1)}(0)\right)+ \alpha \X_tF(0) + \bE^t  .
\end{equation}
Using that $F(0) = \frac{1}{\gamma^2}$ and $F^{(1)}(0) = \frac{1}{\gamma^2}(1-\frac{2}{\gamma})$,
we finally obtain
\begin{equation}\label{eq:F3}
    \frac{1}{2} \nabla_{\B_t}\B_t^\top = -\alpha F(\Z_t)+\alpha\mathrm{Diag}\left(F(\Z_t)\right)(\I+\Z_t) +\frac{2\alpha}{\gamma^3}\X_t^O + \frac{\alpha}{\gamma^2}\X_t^D +\bE^t  ,
\end{equation}
which concludes the proof.
\end{proof}

\noindent Now let us return to the update equation of $\B_t\B_t^\top$ during the gradient step
\begin{equation}\label{eq:BBT0}
    \B_{t}'\B_{t}'^\top = (\B_t - \eta \nabla_{\B_t})(\B_t - \eta \nabla_{\B_t})^\top = \B_t\B_t^\top - \eta \cdot \nabla_{\B_t} \B_t^\top -\eta \cdot \B_t (\nabla_{\B_t})^\top + \eta^2 \cdot \nabla_{\B_t} (\nabla_{\B_t})^\top.
\end{equation}
Note that we can control the terms $\B_t (\nabla_{\B_t})^\top$ and $\nabla_{\B_t} \B_t^\top$ via Lemma \ref{lemma:gradient_approx}. In this view, it remains to argue that the contribution of the term $\eta^2 \cdot \nabla_{\B_t} (\nabla_{\B_t})^\top$ and of the projection step are of order $\eta \opn{\bE^t  }$. For convenience of the upcoming lemmas we define the following quantity:
 \begin{equation}\label{eq:deftilde}
 \widetilde{\nabla}_{\B_t}:= 2\left(-\alpha (\alpha \I + \B_t\B_t^\top)^{-2}\B_t +
        \alpha \mathrm{Diag} \left(( \alpha \I + \B_t \B_t^\top)^{-2}  (\B_t\B_t^\top)\right)\B_t\right).
        \end{equation}
 \begin{lemma}\label{lemma:nablanablaT} Assume that \eqref{eq:assumptionZX} holds, and let $\nabla_{\B_t}$ be given by \eqref{eq:defnablaBt} with $\eta\leq C/\sqrt{d}$. Then, for all sufficiently large $n$, with probability $1-1/d^2$, jointly for all $t\ge 0$:
 $$
 \eta^2 \opn{ \nabla_{\B_t} (\nabla_{\B_t})^\top} \le \eta \opn{\bE^t  }.
 $$
 \end{lemma}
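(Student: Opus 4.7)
The plan is built on the key structural observation that every row of $\nabla_{\B_t}$ lies in the row span of $\B_t$. Inspecting \eqref{eq:defnablaBt}, each row is a linear combination of terms of the form $\J_k\b_j$ and $\J_k\a_k$; the vectors $\b_j$ are trivially in $\mathrm{row}(\B_t)$, while $\A_t = \B_t^\top(f(\B_t\B_t^\top))^{-1}$ expresses each column $\a_k$ as $\B_t^\top \v_k$, so $\J_k\a_k$ also lies in $\mathrm{row}(\B_t)$. Hence $\nabla_{\B_t}$ factors as $\nabla_{\B_t} = \bW\B_t$ with $\bW = \nabla_{\B_t}\B_t^\top(\B_t\B_t^\top)^{-1}$. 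Assumptions \eqref{eq:assumptionZX} and \eqref{eq:assumptionZXbis} force $\lambda_{\min}(\B_t\B_t^\top) = \lambda_{\min}(\I+\Z_t+\X_t) \geq \delta_r/2$ for all large $d$, while $\opn{\B_t} \leq C$, yielding the decisive comparison $\opn{\nabla_{\B_t}} \leq C\,\opn{\nabla_{\B_t}\B_t^\top}$.

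The second step is to bound $\opn{\nabla_{\B_t}\B_t^\top}$ via Lemma \ref{lemma:gradient_approx}. Writing
$$\M_t := -\alpha F(\Z_t) + \alpha\mathrm{Diag}(F(\Z_t))(\I+\Z_t) + \tfrac{2\alpha}{\gamma^3}\X_t^O + \tfrac{\alpha}{\gamma^2}\X_t^D,$$
so that $\tfrac12\nabla_{\B_t}\B_t^\top = \M_t + \bE^t$, I plan to show $\opn{\M_t} \leq C(\opn{\Z_t}+\opn{\X_t})$. The last two terms are linear in $\X_t$, so the work reduces to the first pair; using $F(0) = 1/\gamma^2$ and the identity
$$-F(\Z_t) + \mathrm{Diag}(F(\Z_t))(\I+\Z_t) = -(F(\Z_t)-F(0)\I) + F(0)\Z_t + (\mathrm{Diag}(F(\Z_t)) - F(0)\I)(\I+\Z_t),$$
the bound follows from the local Lipschitzness of $F$ on the spectrum of $\Z_t$ (which is kept away from $-\gamma$ by \eqref{eq:assumptionZX}) together with the trivial inequality $\opn{\mathrm{Diag}(\bR)}\leq\opn{\bR}$. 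Combining, $\opn{\nabla_{\B_t}\B_t^\top} \leq C(\opn{\Z_t}+\opn{\X_t}) + C\opn{\bE^t}$, and hence $\opn{\nabla_{\B_t}}^2 \leq C(\opn{\Z_t}^2 + \opn{\X_t}^2 + \opn{\bE^t}^2)$.

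The final step is to absorb each of the three terms $\eta^2 \opn{\Z_t}^2$, $\eta^2\opn{\X_t}^2$, $\eta^2\opn{\bE^t}^2$ into $\eta\opn{\bE^t}$, matching the three summands in the defining bound \eqref{eq:defopnE}. For the first, I use $\eta \leq C/\sqrt{d}$ and $\opn{\Z_t} \leq C$ to get $\eta \opn{\Z_t}^2 \leq (C/\sqrt{d})\opn{\Z_t}^{3/2} \leq (C/\sqrt{d})\opn{\Z_t}^{1/2}$, which fits the $\mathrm{poly}(\log d)\opn{\Z_t}^{1/2}/\sqrt{d}$ summand. For the second, $\eta\opn{\X_t}^2 \leq \opn{\X_t}^2$ directly matches the $\opn{\X_t}^2$ summand. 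For the third, smallness of $\bE^t$ (polynomial in $1/\sqrt{d}$) ensures $\eta\opn{\bE^t} \leq 1$, so $\eta\opn{\bE^t}^2 \leq \opn{\bE^t}$. The main delicate point in the argument is the cancellation producing $\opn{\M_t} \leq C(\opn{\Z_t}+\opn{\X_t})$, since the leading constants $\alpha F(0)$ from $-\alpha F(\Z_t)$ and $\alpha\mathrm{Diag}(F(\Z_t))(\I+\Z_t)$ must exactly cancel at the optimum $\Z_t=\X_t=0$; once this cancellation is established, the rest is bookkeeping with the bounds from \eqref{eq:assumptionZX}.
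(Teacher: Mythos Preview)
Your proof is correct and takes a genuinely different route from the paper. The paper works with the explicit approximant $\widetilde{\nabla}_{\B_t}$ defined in \eqref{eq:deftilde}, proves directly that $\opn{\widetilde{\nabla}_{\B_t}} \leq C(\opn{\X_t}+\opn{\Z_t})$, and then invokes the intermediate estimate \eqref{eq:grad_app_final_0} from inside the proof of Lemma~\ref{lemma:gradient_approx} (namely $\opn{\nabla_{\B_t}-\widetilde{\nabla}_{\B_t}}\le\opn{\bE^t}$) to transfer the bound to $\nabla_{\B_t}$. You instead exploit the structural fact that every row of $\nabla_{\B_t}$ lies in $\mathrm{row}(\B_t)$, which gives the comparison $\opn{\nabla_{\B_t}}\le C\opn{\nabla_{\B_t}\B_t^\top}$ once $\B_t\B_t^\top$ is uniformly well conditioned; this lets you use only the \emph{final} statement of Lemma~\ref{lemma:gradient_approx} as a black box to bound $\opn{\nabla_{\B_t}\B_t^\top}$, and the cancellation you identify in $\M_t$ (via $F(0)=1/\gamma^2$) yields the same $C(\opn{\Z_t}+\opn{\X_t})$ control. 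Your route is more modular---it does not reach into the proof of Lemma~\ref{lemma:gradient_approx} for the auxiliary object $\widetilde{\nabla}_{\B_t}$---while the paper's route has the advantage of making the approximant explicit, which it reuses elsewhere (e.g., in Lemma~\ref{proj_Z}). The final absorption step is the same in both arguments.
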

 
\begin{proof}[Proof of Lemma \ref{lemma:nablanablaT}]  We start by showing that 
\begin{equation}\label{lemma:widetildenabla_bound}
    \|\widetilde{\nabla}_{\B_t}\|_{op} \leq C (\|\X_t\|_{op} + \|\Z_t\|_{op}).
\end{equation}
Recall that $\|\B_t\|_{op}, \|(\alpha \I + \B_t \B_t^\top)^{-2}\|_{op} \leq C$. Hence, the following chain of inequalities holds
\begin{equation}
    \begin{split}
        \|\widetilde{\nabla}_{\B_t}\|_{op} &\leq \|\B_t\|_{op} \cdot \left\|-\alpha (\alpha \I + \B_t\B_t^\top)^{-2} +
        \alpha \mathrm{Diag} \left(( \alpha \I + \B_t \B_t^\top)^{-2}  (\B_t\B_t^\top)\right)\right\|_{op} \\
        &\leq C \left\|-\alpha (\alpha \I + \B_t\B_t^\top)^{-2}(\I - \B_t\B_t^\top + \B_t\B_t^\top) +
        \alpha \mathrm{Diag} \left(( \alpha \I + \B_t \B_t^\top)^{-2} (\B_t\B_t^\top)\right)\right\|_{op} \\
        &\leq C \Big(\left\|(\alpha \I + \B_t\B_t^\top)^{-2}(\Z_t + \X_t)\right\|_{op} \\ &\hspace{3em} + \left\|(\alpha \I + \B_t\B_t^\top)^{-2}\B_t\B_t^\top -
        \mathrm{Diag} \left(( \alpha \I + \B_t \B_t^\top)^{-2} (\B_t\B_t^\top)\right)\right\|_{op}\Big) \\
        &\leq C \left(\|\X_t\|_{op} +\|\Z_t\|_{op} + \|F(\X_t+\Z_t) - \mathrm{Diag}(F(\X_t + \Z_t))\|_{op}\right),
    \end{split}
\end{equation}
where we recall the definition $F(x)= \frac{1+x}{(\gamma+x)^{2}}$, with $\gamma=1+\alpha$. By combining \eqref{eq:Ftaylor} and \eqref{eq:Ftaylor2} (in the proof of Lemma \ref{lemma:gradient_approx}), we have
$$
\|F(\X_t+\Z_t) - F(0)\I\|_{op} \leq C (\|\X_t\|_{op} +\|\Z_t\|_{op}),
$$
As $\|\mathrm{Diag}(\M)\|_{op} \leq C\|\M\|_{op}$ for any matrix $\M$, we also have that
$$
\|\mathrm{Diag}(F(\X_t+\Z_t))-F(0)\I\|_{op} \leq C (\|\X_t\|_{op} +\|\Z_t\|_{op}).
$$
Hence,
\begin{align*}
    \|F(\X_t+\Z_t) - \mathrm{Diag}(F(\X_t + \Z_t))\|_{op}  
    &\leq C (\|\X_t\|_{op} +\|\Z_t\|_{op} ),
\end{align*}
which finishes the proof of \eqref{lemma:widetildenabla_bound}.

At this point, recall from \eqref{eq:grad_app_final_0} and \eqref{eq:deftilde} that
\begin{equation}\label{eq:gapp}
\opn{\nabla_{\B_t}-\widetilde{\nabla}_{\B_t}}\le \opn{\bE^t  }.
\end{equation}
Thus,
$$
\opn{\nabla_{\B_t} \nabla_{\B_t}^\top} \le 2\opn{\widetilde{\nabla}_{\B_t}  \bE^t  } + \opn{\widetilde{\nabla}_{\B_t} (\widetilde{\nabla}_{\B_t})^\top} + \opn{(\bE^t  )^2}.
$$
Recalling the previous bound on $\|\widetilde{\nabla}_{\B_t}\|_{op}$ in \eqref{lemma:widetildenabla_bound} and using the assumptions in \eqref{eq:assumptionZX}, we get that
$$
\left\|\widetilde{\nabla}_{\B_t} \bE^t  \right\|_{op},\ \opn{\bE^t  }^2 \leq C \|\bE^t  \|_{op},
$$
and
\begin{equation}\label{eq:etalast}
\begin{split}
\eta^2 \|\widetilde{\nabla}_{\B_t}\|_{op}^2 &\leq C\eta  (\opn{\X_t}^2 + \opn{\X_t}\opn{\Z_t}) + C\eta^2\opn{\Z_t}^2 \\
&\le C\eta  \left(\frac{1}{\sqrt{d}} \|\Z_t\|_{op}+\opn{\X_t}^2 + \opn{\X_t}\opn{\Z_t}\right)\le C\eta\opn{\bE^t  },
\end{split}
\end{equation}
where we have also used that $\eta\leq C/\sqrt{d}$. This concludes the proof.
\end{proof}

The next lemma controls the contribution of the projection step.

\begin{lemma}[Projection step]\label{proj_Z} Assume that \eqref{eq:assumptionZX} holds and $\eta\leq C/\sqrt{d}$. Then, for all sufficiently large $n$, with probability $1-1/d^2$, jointly for all $t\ge 0$:
$$
\|\mathrm{proj}(\B'_t) - \B'_t\|_{op} \leq \eta\opn{\bE^t  },
$$
which implies that, by differentiability of the bilinear form,
$$\|\mathrm{proj}(\B'_t)\mathrm{proj}(\B'_t)^{\top} - \B'_t(\B'_t)^{\top}\|_{op} \leq \eta\opn{\bE^t  }.
$$
\end{lemma}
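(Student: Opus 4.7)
\textit{Proof plan of Lemma \ref{proj_Z}.} The critical observation, which makes the projection step almost free in operator norm, is that every row of $\nabla_{\B_t}$ is orthogonal to the corresponding row of $\B_t$. Indeed, from \eqref{eq:defnablaBt} the row $(\nabla_{\B_t})_{k,:}$ is obtained by left-multiplying various vectors by $\J_k = \I - \b_k\b_k^\top$, and $\J_k \b_k = \0$. Combined with $\|\b_k\|_2 = 1$, Pythagoras gives
\begin{equation*}
\|(\B'_t)_{k,:}\|_2^2 = \|(\B_t)_{k,:}\|_2^2 + \eta^2 \|(\nabla_{\B_t})_{k,:}\|_2^2 = 1 + \eta^2 \|(\nabla_{\B_t})_{k,:}\|_2^2.
\end{equation*}

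Let $\alpha_k := \eta^2 \|(\nabla_{\B_t})_{k,:}\|_2^2$ and define the diagonal matrix $\D$ with $\D_{kk} := (1+\alpha_k)^{-1/2} - 1$. Then $\mathrm{proj}(\B'_t) = (\I + \D)\B'_t$, so
\begin{equation*}
\|\mathrm{proj}(\B'_t) - \B'_t\|_{op} = \|\D \B'_t\|_{op} \leq \|\D\|_{op}\|\B'_t\|_{op}.
\end{equation*}
Provided $\max_k \alpha_k \leq 1$, the first-order Taylor bound $|(1+x)^{-1/2}-1| \leq |x|$ on $[-1/2,1]$ yields $\|\D\|_{op} \leq \eta^2 \max_k \|(\nabla_{\B_t})_{k,:}\|_2^2$. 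Since the maximum row norm of a matrix is bounded by its operator norm ($\|M\|_{2\to\infty} \leq \|M\|_{op}$), it suffices to bound $\|\nabla_{\B_t}\|_{op}$. Combining \eqref{eq:gapp} with the estimate \eqref{lemma:widetildenabla_bound} on $\widetilde{\nabla}_{\B_t}$, and the assumptions \eqref{eq:assumptionZX}, we obtain
\begin{equation*}
\|\nabla_{\B_t}\|_{op} \leq \|\widetilde{\nabla}_{\B_t}\|_{op} + \|\bE^t\|_{op} \leq C\bigl(\|\X_t\|_{op} + \|\Z_t\|_{op}\bigr) + \|\bE^t\|_{op} \leq C,
\end{equation*}
which in particular verifies the condition $\max_k \alpha_k \leq C\eta^2 \leq 1$ for large $d$. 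The same estimate gives $\|\B'_t\|_{op} \leq \|\B_t\|_{op} + \eta\|\nabla_{\B_t}\|_{op} \leq C$.

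Putting the pieces together,
\begin{equation*}
\|\mathrm{proj}(\B'_t) - \B'_t\|_{op} \leq C\eta^2\bigl(\|\X_t\|_{op}^2 + \|\Z_t\|_{op}^2 + \|\bE^t\|_{op}^2\bigr).
\end{equation*}
To conclude, it remains to check that each term on the right is at most $\eta$ times the bound in \eqref{eq:defopnE}. The $\|\X_t\|_{op}^2$ term is absorbed directly since $\eta\leq C$. For the $\|\Z_t\|_{op}^2$ contribution, using $\eta \leq C/\sqrt{d}$ and $\|\Z_t\|_{op}\leq C$, we have $\eta\|\Z_t\|_{op}^2 \leq (C/\sqrt d)\|\Z_t\|_{op}^{1/2}$, which fits the $\mathrm{poly}(\log d)/\sqrt{d} \cdot \|\Z_t\|_{op}^{1/2}$ term in \eqref{eq:defopnE}. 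The $\|\bE^t\|_{op}^2$ contribution is dominated by $\|\bE^t\|_{op}$ since $\|\bE^t\|_{op} \leq 1$. Thus $\|\mathrm{proj}(\B'_t) - \B'_t\|_{op} \leq \eta \|\bE^t\|_{op}$, as claimed. The second inequality follows from the identity
\begin{equation*}
\mathrm{proj}(\B'_t)\mathrm{proj}(\B'_t)^\top - \B'_t(\B'_t)^\top = [\mathrm{proj}(\B'_t) - \B'_t]\mathrm{proj}(\B'_t)^\top + \B'_t[\mathrm{proj}(\B'_t) - \B'_t]^\top,
\end{equation*}
combined with the uniform bounds $\|\B'_t\|_{op},\|\mathrm{proj}(\B'_t)\|_{op} \leq C$ and the first inequality. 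The only mild subtlety is book-keeping: the step-size scaling $\eta = \Theta(1/\sqrt{d})$ is exactly what makes $\eta^2 \|\Z_t\|_{op}^2$ absorb into the $\mathrm{poly}(\log d)/\sqrt{d}\cdot\|\Z_t\|_{op}^{1/2}$ piece of $\bE^t$, which is what prompts the learning-rate constraint in Theorem \ref{thm:gd_min}. \qed
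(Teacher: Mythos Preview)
Your proof is correct and follows essentially the same approach as the paper's: the orthogonality of $(\nabla_{\B_t})_{k,:}$ to $\b_k$ (you read it off the explicit $\J_k$ structure in \eqref{eq:defnablaBt}, the paper invokes the norm-invariance of \eqref{eq:popriskobj}), the diagonal rescaling $\mathrm{proj}(\B'_t)=(\I+\D)\B'_t$, the first-order Taylor bound on $(1+x)^{-1/2}-1$, and the reduction to $\eta^2\|\nabla_{\B_t}\|_{op}^2\le \eta\opn{\bE^t}$ via \eqref{lemma:widetildenabla_bound}, \eqref{eq:gapp}, and the step-size scaling all match the paper's argument line for line.
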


\begin{proof}[Proof of Lemma \ref{proj_Z}] Recall that the objective \eqref{eq:popriskobj} does not depend on the norm of $\{\b_i\}_{i=1}^n$, hence $(\nabla_{\B_t})_{i,:}$ is orthogonal to $(\B_t)_{i,:}$, which implies that 
$$
\mathrm{proj}_i(\B'_t) = \frac{(\B_t)_{i,:} - \eta  (\nabla_{\B_t})_{i,:}}{\sqrt{1+\eta^2\|(\nabla_{\B_t})_{i,:}\|^2}}.
$$
Let us define 
$$
\D_t:=\mathrm{Diag}\left(\frac{1}{\sqrt{1+\eta^2\|(\nabla_{\B_t})_{1,:}\|^2}}, \dots, \frac{1}{\sqrt{1+\eta^2\|(\nabla_{\B_t})_{n,:}\|^2}}\right).
$$
Then, we obtain the following compact form:
$$
\mathrm{proj}(\B'_t) = \D_t (\B_t-\eta\nabla_{\B_t})=\D_t \B_t'.
$$
In this view, it remains to bound $\|\D_t-\I\|_{op}$. 
In more details, by \eqref{lemma:widetildenabla_bound} and \eqref{eq:gapp}, we have
$$
\|\nabla_{\B_t}\|_{op} \le \|\widetilde{\nabla}_{\B_t}\|_{op} + \|\bE^t  \|_{op} \leq C(\|\X_t\|_{op} + \|\Z_t\|_{op} + \|\bE^t  \|_{op}) \le C',
$$
where $C'>0$ is a universal constant (independent of $C_X, n, d$). Hence, by recalling that $\|\B_t\|_{op} \le C$ by assumption \eqref{eq:assumptionZX}, we have
$$
\|\mathrm{proj}(\B'_t) - \B'_t\|_{op}=\opn{(\D_t-\I)(\B_t-\eta\nabla_{\B_t})} \leq C \opn{\D_t - \I}.
$$
Note that function $1/\sqrt{1+x}$ is differentiable at $0$, hence, we have that
for small enough $\eta$ (which follows from $\eta \le C/\sqrt{d}$):
$$
\left|\frac{1}{\sqrt{1+\eta^2\|(\nabla_{\B_t})_{i,:}\|^2}} - 1\right| \le C\eta^2 \|(\nabla_{\B_t})_{i,:}\|^2.
$$
In this view, we have 
$$
\|\D_t-\I\|_{op} \leq C\eta^2 \|\nabla_{\B_t}\|_{op}^2\leq C\eta^2 \|\widetilde{\nabla}_{\B_t}\|_{op}^2 + C\eta^2 \|\widetilde{\nabla}_{\B_t}\| \|\bE^t  \|_{op} + C\eta^2 \|\bE^t  \|^2.
$$
Inspecting each term one by one and applying \eqref{lemma:widetildenabla_bound} in conjunction with $\eta \le C/\sqrt{d}$ gives that
\begin{equation*}
    \begin{split}
        &\eta^2 \opn{\bE^t  }^2 \le C\eta \opn{\bE^t  }, \\ 
        &\eta^2 \|\widetilde{\nabla}_{\B_t}\| \|\bE^t  \|_{op} \leq C\eta \|\bE^t  \|_{op}, \\
        &\eta^2 \|\widetilde{\nabla}_{\B_t}\|_{op}^2 \leq C\eta\opn{\bE^t  },
    \end{split}
\end{equation*}
where in the last step we have used \eqref{eq:etalast}. This concludes the proof.
\end{proof}

\noindent In this view, using \eqref{eq:BBT0} and Lemmas \ref{lemma:gradient_approx}, \ref{lemma:nablanablaT} and \ref{proj_Z}, we obtain
\begin{equation}\label{eq:BBT1}
\begin{split}
    \I + \Z_{t+1} + \X_{t+1} = \B_{t+1}\B_{t+1}^\top &= \I + \Z_t + \X_t + 4\eta \alpha F(\Z_t) - 2\eta\alpha  \mathrm{Diag}(F(\Z_t))(\I+\Z_t) \\
    &- 2\eta\alpha (\I+\Z_t)\mathrm{Diag}(F(\Z_t)) - \frac{8\alpha\eta}{\gamma^3} \X^O_{t} - \frac{4\alpha\eta}{\gamma^2} \X^{D}_{t} + \eta \bE^t  .
\end{split}
\end{equation}
Furthermore, we have that 
\begin{align}\label{eq:diagZ0}
    \begin{split}
    \mathrm{Diag}(F(\Z_t))(\I+\Z_t) &= \left(\mathrm{Diag}(F(\Z_t) - F(0)\I) + F(0)\I\right)(\I+\Z_t) \\
    &= \frac{1}{\gamma^2}(\I+\Z_t) + \left(\mathrm{Diag}(F(\Z_t) - F(0)\I)\right)(\I+\Z_t)\\
    &= \frac{1}{\gamma^2}(\I+\Z_t) + \left(\frac{1}{n}\tr{F(\Z_t) - F(0)\I} + \D'_t\right) (\I+\Z_t),
    \end{split}
\end{align}
where $\D'_t$ is a diagonal matrix such that, with probability at least $1-1/d^2$, its entries are upper bounded in modulus by $\frac{C\log d}{\sqrt{d}}\|\Z_t\|_{op}^{1/2}$. The last passage follows from Lemma \ref{lem_D_diagonal}.
Note that $\frac{1}{\gamma^2}(\I+\Z_t) =\frac{1}{n}\tr{F(0)\I}$ and recall that $\opn{\Z_t} \leq C$. Hence, \eqref{eq:diagZ0} implies that
\begin{align}
\begin{split}\label{eq:diagZ1}
    \mathrm{Diag}(F(\Z_t))(\I+\Z_t) &=  \frac{1}{n}\tr{F(\Z_t)}(\I+\Z_t) + \bE^t  .
\end{split}
\end{align}
Similarly, we have that
\begin{align}
\begin{split}\label{eq:diagZ12}
    (\I+\Z_t)\mathrm{Diag}(F(\Z_t)) &=  \frac{1}{n}\tr{F(\Z_t)}(\I+\Z_t) + \bE^t  .
\end{split}
\end{align}
By combining \eqref{eq:diagZ1}-\eqref{eq:diagZ12} with \eqref{eq:BBT1} and using that $\X_t=\X_t^O+\X_t^D$, we get
\begin{equation}\label{eq:BBT3}
\begin{split}
    \Z_{t+1} + \X_{t+1} &= \left(1-\frac{8\alpha}{\gamma^3}\eta\right) \X_t^O + \left(1-\frac{4\alpha}{\gamma^2}\eta\right) \X_t^D + \Z_t + 4\eta \alpha F(\Z_t) \\ &- 4\eta\alpha  \frac{1}{n}\tr{F(\Z_t)}(\I+\Z_t) + \eta \bE^t  .
\end{split}
\end{equation}
Hence, we can write the following system capturing the dynamics of the spectrum $\Z_t$ and of the errors $(\X_t^O, \X_t^D)$
\begin{align}
    \Z_{t+1} &= \Z_t + 4\eta \alpha F(\Z_t) - 4\eta\alpha \frac{1}{n}\tr{F(\Z_t)}(\I+\Z_t), \label{eq:BBT4:1} \\
    \X_{t+1}^D &= \left(1-\frac{4\alpha}{\gamma^2}\eta\right) \X_t^D + \eta \bE^t  ,\label{eq:BBT4:2} \\
    \X_{t+1}^O &= \left(1-\frac{8\alpha}{\gamma^3}\eta\right) \X_t^O + \eta \bE^t  .\label{eq:BBT4:3}
\end{align}
Here, the operator norm of $\bE^t  $ is upper bounded as in \eqref{eq:defopnE}, where we recall that the constant $C$ is uniformly bounded in $t$. 

In the view of \eqref{eq:BBT4:1}, one can readily see that the updates on the spectrum of $\Z_t$ follow the one described in Lemma \ref{lemma:spectrum_updates} and, thus, converges exponentially. This means that the set of assumptions on $\Z_t$ in \eqref{eq:assumptionZX} is satisfied by suitably picking $C$.

Now it only remains to take care of $\X_t$.
If we write $x_t^D = \opn{\X_t^D}, x_t^O = \opn{\X_t^O}, z_t = \opn{\Z_t}^{1/2}$, then recalling the definition of $\bE_t$ in \eqref{eq:defopnE}, \eqref{eq:BBT4:2}, \eqref{eq:BBT4:3} we have that 
\begin{align}
    x_{t+1}^D &\leq
    \left(1-\frac{4\alpha}{\gamma^2}\eta\right) x_t^D + \eta C_D \left(\frac{\mathrm{poly}(\log d)}{\sqrt{d}} \cdot z_t + (x_t^D + x_t ^O)^2+ (x_t^D + x_t^O) z_t \right)\label{eq:XD} \\
    x_{t+1}^O &\leq \left(1-\frac{8\alpha}{\gamma^3}\eta\right) x_t^O + \eta C_O \left(\frac{\mathrm{poly}(\log d)}{\sqrt{d}} \cdot z_t + (x_t^D + x_t ^O)^2+ (x_t^D + x_t^O) z_t \right)\label{eq:XD_} .
\end{align}
Since both of these recursive bounds are monotone in $x_t^D, x_t^O$, we can dominate them as follows.
If we recursively define $x_t$ by 
\begin{align}
    x_{t+1} = \left(1-\eta\min \left\{\frac{4\alpha}{\gamma^2},\frac{8\alpha}{\gamma^3}\right\}\right)x_t + \eta \max\{C_D,C_O\} \left(\frac{\mathrm{poly}(\log d)}{\sqrt{d}} \cdot z_t + (x_t + x_t )^2+ (x_t + x_t) z_t \right)\label{eq:xt} ,
\end{align}
then by monotonicity $\max\{x_t^D, x_t^O\}\leq x_t$.
Thus, we only need to analyse the recursion \eqref{eq:xt}, which we do in the following lemma. Note that the condition $z_t \leq C e^{-ct\eta}$ required by Lemma \ref{lemma:xtexpconv} holds by \eqref{eq:assumptionZX}.

\begin{lemma}[Error decay]\label{lemma:xtexpconv} Let $\{z_t\}_{t=0}^{\infty}$  be a non-negative exponentially decaying sequence, i.e., $z_t \leq C_z e^{-\eta c_zt}$, and consider a non-negative sequence $\{x_t\}_{t=0}^{\infty}$ such that at each time-step $t$ the following condition holds for $\eta=\Theta(1/\sqrt{d})$ and sufficiently large $d$:
\begin{equation}\label{eq:orrec}
x_{t+1} = (1-\eta c_1)x_{t} + \eta C_2 \cdot z_t \cdot x_t + \eta C_3 x_t^2 + \eta C_4 \cdot \frac{\mathrm{poly}(\log d)}{\sqrt{d}} \cdot z_t,
\end{equation}
with $x_0 = 0$.
Then, the following holds
\begin{equation}\label{eq:xtbound}
x_{t} \leq C \frac{\mathrm{poly}(\log d)}{\sqrt{d}} \cdot T e^{-c T},
\end{equation}
where $T= t\eta$.
\end{lemma}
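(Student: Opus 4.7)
The plan is to treat \eqref{eq:orrec} as a perturbation of the purely linear recursion
\[
y_{t+1} = (1-\eta c_1) y_t + \eta C_4 \frac{\mathrm{poly}(\log d)}{\sqrt d}\, z_t,
\]
and to absorb the two perturbation terms ($\eta C_2 z_t x_t$ and $\eta C_3 x_t^2$) via a bootstrap/Gronwall argument. The key ingredient is that $z_t$ is summable after scaling by $\eta$: since $z_t\le C_z e^{-\eta c_z t}$, we have
\[
\eta\sum_{s\ge 0} z_s \le \eta C_z \sum_{s\ge 0} e^{-\eta c_z s} = \frac{\eta C_z}{1-e^{-\eta c_z}} \le \frac{C'}{c_z},
\]
which is a constant independent of $t$, $\eta$ and $d$.

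In the first step I would establish a crude (worst-case) bound of the form $x_t\le C\,\mathrm{poly}(\log d)/\sqrt d$ by induction on $t$. Assuming this bound holds up to time $t$, the nonlinear term satisfies $\eta C_3 x_t^2 \le \eta C\cdot \mathrm{poly}(\log d)^2/d$, which for large $d$ is dominated by $\eta C_4\,\mathrm{poly}(\log d)/\sqrt d\cdot z_t$ on the times where $z_t\gtrsim 1/\sqrt d$, and is otherwise negligible. Similarly $\eta C_2 z_t x_t \le \eta C z_t\cdot\mathrm{poly}(\log d)/\sqrt d$. Unrolling the resulting linear recursion and using the summability of $\eta\sum z_s$ gives the a priori bound $x_t=O(\mathrm{poly}(\log d)/\sqrt d)$, closing the induction.

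In the second step, armed with this bound, I would sharpen the estimate to get the claimed exponential decay. Iterating \eqref{eq:orrec}, dropping $\eta C_3 x_t^2$ at the cost of an additive error $O(\mathrm{poly}(\log d)^2/d)$, and using $1-\eta c_1+\eta C_2 z_s \le e^{-\eta c_1+\eta C_2 z_s}$, one obtains
\[
x_t \le C\frac{\mathrm{poly}(\log d)}{\sqrt d}\, \sum_{s=0}^{t-1} e^{-\eta c_1(t-1-s)}\,z_s \cdot \exp\!\Big(\eta C_2\sum_{u=s}^{t-1} z_u\Big).
\]
The exponential factor is bounded by a universal constant via the summability of $\eta\sum z_s$. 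Substituting $z_s\le C_z e^{-\eta c_z s}$ and evaluating the convolution
\[
\sum_{s=0}^{t-1} e^{-\eta c_1(t-1-s)-\eta c_z s}
\]
yields either (i) $t\cdot e^{-\eta \min(c_1,c_z) t}$ when $c_1=c_z$, or (ii) $\frac{e^{-\eta\min(c_1,c_z)t}}{|c_1-c_z|\eta}$ otherwise; both are bounded by $C\eta^{-1}\cdot T e^{-cT}$ for an appropriate $c>0$, which absorbed with the prefactor $\eta$ gives $x_t\le C\,\frac{\mathrm{poly}(\log d)}{\sqrt d}\, T e^{-cT}$, as claimed.

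The main obstacle is handling the $x_t^2$ term cleanly. A naive Gronwall-style bound on $\eta C_3 x_t^2$ would blow up, since without control on $x_t$ the nonlinearity could dominate. This is precisely why the two-step structure is needed: the crude a priori bound in the first step guarantees $x_t$ is $O(\mathrm{poly}(\log d)/\sqrt d)$ \emph{uniformly in $t$}, which makes $\eta C_3 x_t^2 = O(\eta\,\mathrm{poly}(\log d)^2/d)$ a lower-order additive forcing term that can be folded into $C_4\,\mathrm{poly}(\log d)/\sqrt d\cdot z_t$ without affecting the final exponential rate. The delicate point in step one is choosing the inductive hypothesis large enough to close under one step of \eqref{eq:orrec} but small enough that the implication $x_t \ll 1$ lets us linearize.
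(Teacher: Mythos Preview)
Your two-step plan (crude a priori bound via induction, then sharpening via a convolution estimate) is exactly the structure the paper uses, and your treatment of the $z_t x_t$ term through the summability of $\eta\sum_s z_s$ is correct. Step~1 closes as you describe.

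There is, however, a genuine gap in step~2. You propose to ``drop $\eta C_3 x_t^2$ at the cost of an additive error $O(\mathrm{poly}(\log d)^2/d)$'' and later say this term ``can be folded into $C_4\,\mathrm{poly}(\log d)/\sqrt d\cdot z_t$''. This does not work: once you only know $x_t\le A:=C\,\mathrm{poly}(\log d)/\sqrt d$, the bound $\eta C_3 x_t^2\le \eta C_3 A^2$ is a \emph{constant-in-$t$} forcing, whereas $z_t$ decays exponentially. Propagating this constant forcing through the contraction gives a steady-state contribution
\[
\sum_{s=0}^{t-1} e^{-\eta c_1(t-1-s)}\,\eta C_3 A^2 \;\approx\; \frac{C_3 A^2}{c_1} \;=\; O\!\left(\frac{\mathrm{poly}(\log d)^2}{d}\right),
\]
which does \emph{not} vanish as $T\to\infty$ and therefore cannot be absorbed into the claimed bound $C\,\frac{\mathrm{poly}(\log d)}{\sqrt d}\,T e^{-cT}$ for large $T$. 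You are conflating ``lower order in $d$'' with ``lower order in $t$''.

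The fix is immediate with the ingredients you already have: instead of treating $\eta C_3 x_t^2$ as additive, use the step~1 bound multiplicatively, i.e.\ $x_t^2\le A\,x_t$, so that
\[
x_{t+1}\le \bigl(1-\eta c_1+\eta C_2 z_t+\eta C_3 A\bigr)x_t + \eta C_4\,\tfrac{\mathrm{poly}(\log d)}{\sqrt d}\,z_t.
\]
For large $d$ one has $C_3 A\le c_1/2$, so the effective contraction rate is still $\Theta(c_1)$, and your convolution argument then gives the claimed $T e^{-cT}$ bound directly with no residual additive term. This multiplicative absorption is precisely how the paper handles the quadratic term.
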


\begin{proof}[Proof of Lemma \ref{lemma:xtexpconv}] We proceed in two parts. In the first part, we show that our recursion does not blow up in $t  = K/\eta$ steps. In the second part, $z_t \leq C_z \exp(-c_zK)$ will be small, which allows us to deduce \eqref{eq:xtbound}. 

\paragraph{Error does not blow up in finite time.}
Let $t=K/\eta$ where $K$ is such that $K/\eta \in \mathbb{N}$.
We start by analysing the simpler recursion
$$
x_{t+1} = (1-\eta c_1)x_{t} +\eta C_2 \cdot z_t \cdot x_t + \eta C_4 \cdot \frac{\mathrm{poly}(\log d)}{\sqrt{d}} \cdot z_t.
$$
By hypothesis, $z_t \leq C_z$. Hence, we arrive to
$$
x_{t+1} = (1-\eta c_1)x_{t} + \eta C_2C_z  \cdot  x_t +  \eta C_4 C_z \frac{\mathrm{poly}(\log d)}{\sqrt{d}}.
$$
Writing $C_5 =C_2C_z-c_1$, unrolling the recursion on the RHS and using $x_0 = 0$ gives 
\begin{align*}
    x_{t+1} &= \eta C_4 C_z\frac{\mathrm{poly}(\log d)}{\sqrt{d}} \sum_{j=0}^{t} (1+\eta C_5)^j \\
    &\leq 
    \eta C_4 C_z\frac{\mathrm{poly}(\log d)}{\sqrt{d}} \sum_{j=0}^{K/\eta} e^{\eta C_5 j}\\
    &=  \eta C_4 C_z\frac{\mathrm{poly}(\log d)}{\sqrt{d}}  \cdot e^{C_5K} \sum_{j=0}^{K/\eta} e^{-C_5\eta (t-j)} \\
    &\leq \eta  C_4 C_z\frac{\mathrm{poly}(\log d)}{\sqrt{d}}\cdot \frac{e^{C_5K}}{1-e^{-\eta C_5}},
\end{align*}
where the inequality holds for $t \leq K/\eta$ and we have used $1 + x \leq e^x$.
For small enough $\eta$, we have that
$$
\frac{\eta}{1-e^{-C_5\eta}} \leq \frac{2}{C_5},
$$
hence, for all $t \leq K/\eta$,
\begin{equation}\label{eq:xt1}
    x_{t+1} \leq 2 \frac{\mathrm{poly}(\log d)}{\sqrt{d}} \frac{C_4 C_z}{C_5} \exp(C_5 K).
\end{equation}

Let us now go back to our original recursion \eqref{eq:orrec}, which contains the term $x_t^2$. We claim that this recursion satisfies a bound like \eqref{eq:xt1}. Assume by contradiction that it exceeds the bound
\begin{equation}\label{eq:xtcontr1}
    x_{t} \leq  4 \frac{\mathrm{poly}(\log d)}{\sqrt{d}} \frac{C_4 C_z}{C_5} \exp(C_5 K)
\end{equation}
for the first time at step $t'$.
Then, for all $t<t'$, \eqref{eq:xtcontr1} holds.
Noting that $x_{t}^2 \leq  4 \frac{\mathrm{poly}(\log d)}{\sqrt{d}} \frac{C_4 C_z}{C_5} \exp(C_5 K) x_t$ we define $C'_5 = C_2 C_z + 4 C_3\frac{\mathrm{poly}(\log d)}{\sqrt{d}} \frac{C_4 C_z}{C_5}\exp(C_5K) - c_1$.
By unrolling the recursion exactly as before, we obtain
\begin{equation}\label{eq:xtcontr2}
    x_{t+1} \leq 2 \frac{\mathrm{poly}(\log d)}{\sqrt{d}} \frac{C_4 C_z}{C'_5} \exp(C'_5 K) \leq 3\frac{\mathrm{poly}(\log d)}{\sqrt{d}} \frac{C_4 C_z}{C_5} \exp(C_5 K),
\end{equation}
for $d$ large enough.
Here, the second inequality follows for large $d$, since it is clear from the definitions that $\abs{C_5-C'_5}$ vanishes for large $d$.
This shows that we cannot violate \eqref{eq:xtcontr1}, thus \eqref{eq:xtcontr2} holds for all $t \leq K/\eta$.

\paragraph{Convergence of errors $x_t$ to zero.}
We now choose $K$ large enough so that
$$
z_t = C_z e^{- \eta  c_z t} < \frac{c_1}{2C_2}, \quad \forall t \geq K/\eta.
$$
Hence, the term corresponding to $\eta C_2z_tx_t$ can be pushed inside the $(1-\eta c_1)x_t$ term. Consequently, we can equivalently study the following dynamics 
\begin{equation}\label{eq:reduced_dyn1}
    x_{t+1} = (1-\eta c_1')x_t + \eta C_3 x_t^2 + \eta C_4 C_z \frac{\mathrm{poly}(\log d)}{\sqrt{d}} e^{-\eta c_z t},
\end{equation}
where $c_1' = c_1/2$.
Here, we initialize again at $t=0$, but now starting at
$$   
x_{0} =  C_6 \frac{\mathrm{poly}(\log d)}{\sqrt{d}},
$$
where $C_6 = 4 \frac{C_4 C_z}{C_5} \exp(C_5 K)$, corresponding to the bound in \eqref{eq:xtcontr1}. Rearranging we have
\begin{equation}\label{eq:xtdyn}
    x_{t+1} = x_t + \eta\left(-c_1'x_t+C_3 x_t^2 +  C_4 C_z \frac{\mathrm{poly}(\log d)}{\sqrt{d}} e^{-\eta c_z t}\right).
\end{equation}

As the last term inside the brackets vanishes when $d\to \infty$, we have two roots of the polynomial inside the brackets, corresponding to the fixed points of the iteration. The left root $r_l$ scales as
$$
r_l \leq C_l \frac{\mathrm{poly}(\log d)}{\sqrt{d}} e^{-\eta c_z t},
$$
and the right root $r_r$ as
$$
r_r \geq \frac{c_1'}{C_3} - C\frac{\mathrm{poly}(\log d)}{\sqrt{d}} e^{-\eta c_z t}.
$$
In addition, it is easy to see that both roots are non-negative.

Next, we prove that $x_t \leq C \frac{\mathrm{poly}(\log d)}{\sqrt{d}} $ for all $t$. We will show this by contradiction. At initialization we have
$$
x_{0} = C_6 \frac{\mathrm{poly}(\log d)}{\sqrt{d}} .
$$
Choose $A,B$ as follows:
$$
A:=\max\{C_l,C_6\}, \quad B = C_7A.
$$
We first note that, for small enough $\eta$ and large enough $d$, we can choose $C_7$ such that $x_{\tilde{t}} \leq A \frac{\mathrm{poly}(\log d)}{\sqrt{d}}$ implies $x_{\tilde{t}+1} \leq B \frac{\mathrm{poly}(\log d)}{\sqrt{d}}$. We now show that $x_t\leq B \frac{\mathrm{poly}(\log d)}{\sqrt{d}}$ for all $t$. To do so, assume by contradiction that $x_{t+1}> B\frac{\mathrm{poly}(\log d)}{\sqrt{d}}$. Then $x_t\in[A\frac{\mathrm{poly}(\log d)}{\sqrt{d}},B\frac{\mathrm{poly}(\log d)}{\sqrt{d}}]\subseteq [r_l,r_r]$, thus
$$
-c_1'x_t+C_3 x_t^2 +  C_4 C_z \frac{\mathrm{poly}(\log d)}{\sqrt{d}} e^{-\eta c_z t} < 0.
$$
Hence, from \eqref{eq:xtdyn} it follows that
$$x_{t +1} \leq x_t \leq B \frac{\mathrm{poly}(\log d)}{\sqrt{d}},$$
which gives us the desired contradiction.


Thus, for all $t$,
$$
x_t^2 \le B\frac{\mathrm{poly}(\log d)}{\sqrt{d}} x_t.
$$
This allows us to push the second term in \eqref{eq:reduced_dyn1} into the first one (for $d$ large enough), which reduces the recursion to
$$
x_{t+1} = \left(1-\eta c_1''\right)x_t + \eta C_4C_z\frac{\mathrm{poly}(\log d)}{\sqrt{d}} e^{-\eta c_z t},
$$
where $c_1'' \ge c_1'/2$.
By unrolling this last recursion and using  $x_{0}=C_6 \frac{\mathrm{poly}(\log d)}{\sqrt{d}}$, we have that, for $t \geq 1$,
\begin{align}
    x_{t} &= C_6 \frac{\mathrm{poly}(\log d)}{\sqrt{d}} (1-\eta c_1'')^t  +\eta C_4C_z \frac{\mathrm{poly}(\log d)}{\sqrt{d}}\sum_{\ell=1}^{t} (1-\eta c_1'')^{t-\ell} e^{-\eta c_z \ell} \\
    &\leq  C_6 \frac{\mathrm{poly}(\log d)}{\sqrt{d}} \exp(-\eta c_1'' t ) + \eta C_4C_z \frac{\mathrm{poly}(\log d)}{\sqrt{d}}\sum_{\ell=1}^{t} e^{-\eta( c_z \ell + c_1''(t-\ell))},
\end{align}
where the inequality follows from $1-x \leq e^{-x}$.
Since the term in the exponents of the sum is a linear function in $\ell$, its maximum value is attained in the endpoints. Thus, 
$$
x_t \leq C_6 \frac{\mathrm{poly}(\log d)}{\sqrt{d}} \exp(-\eta c_1'' t ) + \eta  C_4C_z \frac{\mathrm{poly}(\log d)}{\sqrt{d}} t \max\{e^{-\eta c_z t}, e^{ -\eta c_1''t}\}  ,
$$
which implies \eqref{eq:xtbound}.
\end{proof}

By Lemma \ref{lemma:xtexpconv} we know that
$$
\|\X_{t}\|_{op} \leq \frac{C}{\sqrt{d}} \cdot T e^{-cT},
$$
where $C$ is independent of $C_X$ by definition. Hence, we can pick $C_X$ such that, for sufficiently large $d$, the assumptions on $\X_t$ in \eqref{eq:assumptionZX} are satisfied.
With this in mind, we can use Lemma \ref{lemma:spectrum_updates} to bound the dynamics involving $\Z_t$ and Lemma \ref{lemma:xtexpconv} to claim that the error $\X_t$ vanishes at least geometrically fast. This concludes the proof of Theorem \ref{thm:GD-min_appendix}.

\section{Auxiliary Results}\label{appendix:aux_results}

\begin{lemma} \label{lem_D_diagonal}
Consider the matrix $\A_t = \U \Lam_t \U^\top$, where the matrix $\U$ is distributed according to the Haar measure and it is independent from the diagonal matrix $\Lam_t$. Further, assume that all the diagonal entries of $\Lam_t$ are bounded in absolute value by a constant. Then, the following results hold.
\begin{enumerate}
\item We have that, with probability at least $1-1/d^2$, 
\begin{equation}\label{eq:firstres}
\max_{i\neq j} |(\A_t)_{i,j}| \leq c \sqrt{\frac{\log d}{d}},    
\end{equation}
for some absolute constant $c>0$.
    \item Let $\D_t = \text{diag}(\A_t)$. Then, 
$$ \D_t = \alpha \I + \D_t', $$
where
$$\alpha = \frac{1}{n} {\rm Tr}(\Lam_t), $$
and $\D'_t$ is a diagonal matrix such that, with probability at least $1-1/d^2$, 
\begin{equation}\label{eq:secres}
    \max_{i \in [n]} |(\D'_t)_{i,i}|  \leq c \frac{\log d}{\sqrt{d}}.
\end{equation}

\item Assume that, for all $t \in \mathbb{N}$, 
\begin{equation}\label{eq:lambdaass}
\|\Lam_t\|_{op} \leq C e^{-c\eta t},    
\end{equation}
where $c, C> 0$ are absolute constants and $\eta=\Theta(1/\sqrt{d})$. Then, with probability at least $1 - 1/d^2$,
\begin{align}
\sup_{t\ge 0}\max_{i\neq j}\abs{(\A_t)_{i, j}} &\leq c \sqrt{\frac{\log d}{d}},\label{eq:thirdres}\\
\sup_{t\ge 0}\max_{i \in [n]} |(\D'_t)_{i,i}|  &\leq c \frac{\log d}{\sqrt{d}}.\label{eq:fourthres}
\end{align}
\end{enumerate}
\end{lemma}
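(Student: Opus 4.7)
\textbf{Proof plan for Lemma \ref{lem_D_diagonal}.} The three claims rest on the fact that the rows of a Haar-distributed orthogonal matrix $\U$ are uniform on the sphere $\mathbb{S}^{n-1}$ (with the joint constraint of mutual orthonormality), so that concentration of Lipschitz functions on the sphere -- Levy's inequality -- applies. Concretely, writing $\u_i$ for the $i$-th row of $\U$, one has the pointwise identity $(\A_t)_{ij} = \u_i^\top \Lam_t \u_j$. The boundedness assumption on the diagonal entries of $\Lam_t$ implies $\|\Lam_t\|_{\mathrm{op}} \leq C$, which will let us control all the relevant Lipschitz constants by universal constants.

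For the off-diagonal bound \eqref{eq:firstres}, fix $i\neq j$ and condition on $\u_i$. Then $\u_j$ is uniformly distributed on the sphere inside $\u_i^\perp$, and the map $\u_j \mapsto \u_j^\top (\Lam_t \u_i)$ is linear with Lipschitz constant bounded by $\|\Lam_t \u_i\|_2 \leq C$; its conditional mean vanishes since $\E[\u_j \mid \u_i] = \0$. Levy's inequality on $\mathbb{S}^{n-2}$ then yields
\[
\mathbb{P}\bigl(|(\A_t)_{ij}| \geq c\sqrt{\log d/d} \,\bigm|\, \u_i\bigr) \leq \exp(-c' \log d) \leq d^{-5}
\]
for a suitable universal $c$; taking expectations over $\u_i$ and union-bounding over the $O(d^2)$ pairs $(i,j)$ proves \eqref{eq:firstres}. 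For \eqref{eq:secres}, note that $(\A_t)_{ii}= \u_i^\top \Lam_t \u_i$ has mean $\mathrm{Tr}(\Lam_t)/n = \alpha$ under the uniform measure on $\mathbb{S}^{n-1}$, and the quadratic form is $2\|\Lam_t\|_{\mathrm{op}}$-Lipschitz on the sphere. A second application of Levy's inequality gives $|(\D_t')_{i,i}| = |(\A_t)_{ii}-\alpha| \leq c\sqrt{\log d/d}$ with probability $1-d^{-5}$, and a union bound over $i \in [n]$ produces the (slightly weaker) bound in \eqref{eq:secres}.

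For the uniform-in-$t$ statements \eqref{eq:thirdres}--\eqref{eq:fourthres}, the plan is to discretize time and exploit the exponential decay \eqref{eq:lambdaass}. Choose a threshold $T^* := (C'/c\eta)\log d$ with $C'$ large enough that $\|\Lam_t\|_{\mathrm{op}} \leq d^{-4}$ for all $t \geq T^*$; since $\eta = \Theta(1/\sqrt{d})$, one has $T^* = O(\sqrt{d}\log d)$, which is polynomial in $d$. For $t \geq T^*$ the deterministic bound $|(\A_t)_{ij}| \leq \|\Lam_t\|_{\mathrm{op}} \leq d^{-4}$ is already stronger than what is needed, and similarly for the diagonal perturbation. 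For $t \in \{0,1,\dots,\lceil T^*\rceil\}$ we invoke the fixed-$t$ bounds (each holding with probability $1 - d^{-5}$) and take a union bound over the polynomially many values of $t$, yielding \eqref{eq:thirdres}--\eqref{eq:fourthres} with the required probability $1-d^{-2}$ after re-labeling constants.

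The main obstacle I anticipate is being careful with the conditional distribution of $\u_j$ given $\u_i$ in the off-diagonal argument: strictly, one needs Levy's inequality for the uniform measure on the equatorial sphere $\mathbb{S}^{n-1}\cap \u_i^\perp \cong \mathbb{S}^{n-2}$, whose concentration constant differs from the one on $\mathbb{S}^{n-1}$ only by $O(1)$ factors that must be tracked to obtain explicit probability $d^{-5}$. Once that calibration is done, the remaining steps -- choice of $T^*$ and the union bounds -- are entirely routine.
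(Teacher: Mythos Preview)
Your proposal is correct. The two differences from the paper are worth noting. For parts 1 and 2, the paper treats the map $f(\M)=(\M\Lam_t\M^\top)_{i,j}$ as a Lipschitz function on the full group $(\mathbb{SO}(d),\|\cdot\|_F)$ and invokes concentration of the Haar measure there (Theorem 5.2.7 in Vershynin), rather than conditioning on $\u_i$ and applying Levy on the equatorial sphere; this sidesteps the conditional-distribution calibration you flag as the main obstacle, but your route is equally valid and the constants match up to universal factors. The more substantive difference is in part 3: the paper does \emph{not} truncate time. Instead it keeps the Lipschitz constant $\|\Lam_t\|_{op}$ in the exponent, so that the tail bound at step $t$ is $\exp(-c\,d u^2/\|\Lam_t\|_{op})\le \exp(-c' d u^2 e^{c\eta t})$, and then sums this over all $t\ge 0$; the double-exponential decay in $t$ makes the infinite sum of order $\eta^{-1}\exp(-c' d u^2)$, which is enough. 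Your discretization---handle $t<T^*=O(\sqrt{d}\log d)$ by a polynomial-size union bound and $t\ge T^*$ deterministically via $\|\Lam_t\|_{op}\le d^{-4}$---is cleaner and avoids having to track the $t$-dependent Lipschitz constant; the paper's argument, on the other hand, would extend more directly to a continuous-time version.
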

\begin{proof}
We start by proving \eqref{eq:firstres}. Consider the metric measure space $(\mathbb{SO}(d), \|\cdot\|_F, \mathbb P)$. Here, $\mathbb{SO}(d)$ denotes the special orthogonal group containing all $d\times d$ orthogonal matrices with determinant $1$ (i.e., all rotation matrices), and $\mathbb P$ is the uniform probability measure on $\mathbb{SO}(d)$, i.e., the Haar measure. Given a diagonal matrix $\Lam_t$ and two indices $i, j\in [d]$, define $f: \mathbb{SO}(d)\to \mathbb R$ as
\begin{equation}
    f(\M) = (\M\Lam_t \M^\top)_{i, j}.
\end{equation}
Note that 
\begin{equation}
\begin{split}
    |f(\M)-f(\M')| &= |(\M\Lam_t \M^\top)_{i, j}-(\M'\Lam_t (\M')^\top)_{i, j}|\\
    &\le |(\M\Lam_t \M^\top)_{i, j}-(\M'\Lam_t \M^\top)_{i, j}| + |(\M'\Lam_t \M^\top)_{i, j}-(\M'\Lam_t (\M')^\top)_{i, j}|\\
    &\le |((\M-\M')\Lam_t \M^\top)_{i, j}| + |(\M'\Lam_t (\M-\M')^\top)_{i, j}|\\
    &\le \|(\M-\M')\Lam_t \M^\top\|_F + \|\M'\Lam_t (\M-\M')^\top\|_F\\
    &\le 2\|\M-\M'\|_F \|\Lam_t\|_{op} \|\M\|_{op} \le 2\|\M-\M'\|_F \|\Lam_t\|_{op},
\end{split}
\end{equation}
where in the fourth inequality we use that, for any two matrices $\A$ and $\B$, $\|\A\B\|_F\le \|\A\|_{op} \|\B\|_F$, and in the fifth inequality we use that  $\|\M\|_{op}=1$ as $\M\in \mathbb{SO}(d)$. Hence, $f$ has Lipschitz constant upper bounded by $2 \|\Lam_t\|_{op}$ and an application of Theorem 5.2.7 of \cite{vershynin2018high} gives that 
\begin{equation}\label{eq:p1}
    \mathbb P(|f(\U)-\mathbb E[f(\U)]|\ge u)\le 2 \exp\left(-c_1 \frac{d\u^2}{2 \|\Lam_t\|_{op}}\right), 
\end{equation}
where $c_1$ is a universal constant.

Let $\u_i$ denote the $i$-th row of $\U$. Then, 
\begin{equation}
    f(\U) = \langle \u_i, \Lam_t \u_j\rangle. 
\end{equation}
Suppose that $i\neq j$. Since $\U$ is distributed according to the Haar measure, $\u_i$ is uniform on the unit sphere and $\u_j$ is uniformly distributed on the unit sphere in the orthogonal complement of $\u_i$ (see Section 1.2 of \cite{meckes2019random}). Thus, $(\u_i, \u_j)$ has the same distribution as $(-\u_i, \u_j)$, which implies that, whenever $i\neq j$
\begin{equation}\label{eq:p2}
    \mathbb E[f(\U)] = 0.
\end{equation}
By combining \eqref{eq:p1}-\eqref{eq:p2} with a union bound over $i, j$, we have that
\begin{equation}\label{eq:p3}
    \mathbb P(\max_{i\neq j}|(\U\Lam_t \U^\top)_{i, j}|\ge u)\le 2d^2 \exp\left(-c_1 \frac{du^2}{2 \|\Lam_t\|_{op}}\right). 
\end{equation}
As $\|\Lam_t\|_{op}$ is upper bounded by a universal constant, the result \eqref{eq:firstres} readily follows. 

For the second part, note that 
\begin{equation}
    (\D_t)_{i, i} = \langle \u_i, \Lam_t \u_i\rangle.
\end{equation}
Furthermore, the following chain of equalities hold
\begin{equation}
\begin{split}
    \mathbb E[(\D_t)_{i, i}] &= \frac{1}{n}\sum_{i=1}^n\mathbb E[(\D_t)_{i, i}] 
    = \mathbb E\left[\frac{1}{n}\sum_{i=1}^n (\D_t)_{i, i}\right] = \frac{1}{n}{\rm Tr}(\D_t),
\end{split}
\end{equation}
where the first equality uses that the $\u_i$'s have the same (marginal) distribution, and the last term does not contain an expectation since ${\rm Tr}(\D_t) = {\rm Tr}(\A_t) =\sum_{i=1}^d (\Lam_t)_{i, i}$, which does not depend on $\U$. Therefore, by using \eqref{eq:p1} and by performing a union bound over $i\in [n]$, the result \eqref{eq:secres} follows. 

For the third part, by performing a union bound over $t\ge 0$ in \eqref{eq:p3}, we have that \eqref{eq:thirdres} holds with probability at least
\begin{equation}\label{eq:neweq}
    \begin{split}        
    2\sum_{t=0}^{\infty} \exp\left(-c_1\frac{du^2}{2\|\Lam_t\|_{op}}\right) &\leq 
    2\sum_{t=0}^{\infty} \exp\left(-c_2 \, d\,u^2 \, e^{C\eta t} \right) \\
    &\leq 
    2\sum_{t=0}^{\infty} \exp\left(-c_2 \, d\,u^2 \, e^{C\lfloor\eta t\rfloor} \right)\\
    &\leq 
    2\left\lceil\frac{1}{\eta}\right\rceil\sum_{t=0}^{\infty} \exp\left(-c_2 \, d\,u^2 \, e^{C t} \right)\\
    &\leq 
    C\sqrt{d}\sum_{t=0}^{\infty} \exp\left(-c_2 \, d\,u^2 \, e^{C t} \right),
    \end{split}
\end{equation}
where the first inequality follows from \eqref{eq:lambdaass} and the last one from $\eta=\Theta(1/\sqrt{d})$. 
Choosing $u=c\frac{\log d}{\sqrt{d}}$ we can get that $b:=\exp\left(-c_2\, d\,u^2 \right) < 1$ and, hence, the following holds
\begin{align*}
    \sum_{t=0}^{\infty} \exp\left(-c_2 \, d\,u^2 \right)^{e^{Ct}} 
    &\leq \sum_{t=0}^{\infty} \exp\left(-c_2 \, d\,u^2  \right)^{Ct+1} = \frac{b}{1-b^C} \leq \frac{1}{d^3},
\end{align*}
where the first inequality uses that $e^t\ge 1+t$ and the second inequality follows from the definition of $b$. This concludes the proof of \eqref{eq:thirdres}. The proof of \eqref{eq:fourthres} uses an analogous union bound on $t\ge 0$.
\end{proof}

\begin{lemma}\label{lemma:spectrum_updates} Let $\lambda^0=\{\lambda^0_1,\cdots,\lambda^0_n\}$ be a set of numbers in $\mathbb{R}$ such that 
$$
\lambda^0_{min} := \min_{i\in[n]}\lambda^0_i \ge \delta > 0, \quad \lambda^0_{max} := \max_{i\in[n]}\lambda^0_i \le M < +\infty, \quad \sum\limits_{j=1}^n \lambda^0_j = n.
$$
Let the values $\{\lambda^t_i\}_{i=1}^n$ be updated according to the equation below
\begin{equation}\label{spup}
    \lam_i^{t+1} = \lam_i^t + \eta \left( F(\lam^t_i) -  \lam^t_i \cdot  \frac{1}{n}\sum\limits_{j=1}^n F(\lam^t_j) \right) = G(\lam^t_i,\lambda^t),
\end{equation}
where $F(\cdot)$ is defined as per Lemma \ref{lemma:gradient_approx}, $\eta = \Theta\left(1/\sqrt{d}\right)$ and $\lambda^t:=\{\lambda^t_1,\cdots,\lambda^t_n\}$. Then, for large enough $d$, we have 
    \begin{equation*}
        \abs{\lam^{t+1}_i - 1} \leq (1 - c\delta \cdot \eta ) \abs{\lam_i^t-1}
    \end{equation*}
    and thus after $t$ iterations 
    \begin{equation*}
        \abs{\lam^t_i - 1} \leq\max\{(M-1), (1-\delta)\}\exp(-c\delta \cdot \eta t),
    \end{equation*}
    where $c, C>0$ are constants. 
\end{lemma}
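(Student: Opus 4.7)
The plan is to analyse the scalar update $\lambda_i^{t+1} = \lambda_i^t + \eta(F(\lambda_i^t) - \lambda_i^t \bar F^t)$, with $\bar F^t := \frac1n\sum_j F(\lambda_j^t)$, by working with the deviations $\mu_i^t := \lambda_i^t - 1$ from the unique fixed point $\lambda_i \equiv 1$. First I would record two invariants. Summing the update over $i$ gives $\sum_i \lambda_i^{t+1} = \sum_i \lambda_i^t + \eta n\bar F^t - \eta \bar F^t \sum_i \lambda_i^t$, so the constraint $\sum_i \mu_i^t = 0$ is preserved. For boundedness, an induction shows that $\lambda_i^t$ stays in a slightly enlarged interval $[\delta', M']$ with $\delta' \in (0,\delta]$, $M' \in [M,\infty)$: since $F$ is smooth on any bounded interval, a single step moves $\lambda_i$ by at most $O(\eta)$, and for $\eta$ small enough (which holds for $d$ large) the vector field points inward near the boundary.

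The core step is the algebraic identity $F(\lambda) - \lambda F(1) = (\lambda - 1)\,h(\lambda)$, which is obtained by dividing $(1+\lambda)(\gamma+1)^2 - 2\lambda(\gamma+\lambda)^2$ by $(\lambda-1)$ and yields
\begin{equation*}
h(\lambda) = \frac{-2\lambda^2 - (4\gamma+2)\lambda - (\gamma+1)^2}{(\gamma+\lambda)^2(\gamma+1)^2}.
\end{equation*}
Inspection shows $h$ is strictly negative on $[0,\infty)$, bounded above on $[\delta',M']$ by a strictly negative constant $-c_h$, and Lipschitz on the same interval with constant $L$. Substituting yields
\begin{equation*}
\mu_i^{t+1} = \mu_i^t\bigl(1 + \eta h(\lambda_i^t)\bigr) + \eta \lambda_i^t \bigl(F(1) - \bar F^t\bigr).
\end{equation*}

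To control the last term I use the sum invariant. Writing $F(\lambda_j) - F(1) = \mu_j(h(\lambda_j) + F(1))$ and invoking $\sum_j \mu_j^t = 0$ gives $\bar F^t - F(1) = \frac1n \sum_j \mu_j^t h(\lambda_j^t) = \frac1n\sum_j \mu_j^t (h(\lambda_j^t) - h(1))$, so by Lipschitz continuity of $h$,
\begin{equation*}
|\bar F^t - F(1)| \le L\,E_t^2, \qquad E_t := \max_i |\mu_i^t|.
\end{equation*}
Combining the two displays and taking the maximum over $i$ produces the one-dimensional recursion $E_{t+1} \le E_t(1 - c_h\eta + CM'L\eta E_t)$. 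Once $E_t < c_h/(2CM'L)$ this is a genuine linear contraction $E_{t+1} \le E_t(1 - c_h\eta/2)$, and iterating gives the claimed $\exp(-c\delta\eta t)$ decay (the constant $c\delta$ tracks the effective rate on the invariant interval determined by $\delta$).

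The main obstacle will be the initial regime where $E_0$ can be as large as $\max\{M-1, 1-\delta\}$ and the quadratic error term is not yet dominated by the linear contraction. Here I would argue using the sign structure of the exact update: since $h(\lambda) < 0$ and $|h|$ is uniformly bounded below, the contractive part $(1+\eta h(\lambda_i^t))\mu_i^t$ always pulls each coordinate towards $1$, while the mean-correction $\eta \lambda_i^t(F(1)-\bar F^t)$ only shifts coordinates uniformly, without altering $\sum \mu_j = 0$. A short monotonicity argument then shows $E_t$ is non-increasing from the outset for $\eta$ small enough, so after a bounded number of steps $E_t$ enters the smallness regime above and the exponential bound takes over; passing to the per-coordinate statement follows from $|\mu_i^{t+1}| \le (1 - c_h\eta)|\mu_i^t| + O(\eta E_t^2)$ together with the established exponential decay of $E_t$.
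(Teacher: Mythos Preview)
Your decomposition via $F(\lambda)-\lambda F(1)=(\lambda-1)h(\lambda)$ is correct and elegant, and the quadratic bound $|\bar F^t-F(1)|\le LE_t^2$ is valid. The difficulty is exactly where you flag it: the initial regime. Your recursion
\[
E_{t+1}\le E_t\bigl(1-c_h\eta+CM'L\eta E_t\bigr)
\]
only contracts once $E_t<c_h/(CM'L)$, and there is no reason this threshold exceeds $E_0=\max\{M-1,1-\delta\}$ for general $M,\delta$ (take $M$ large: $c_h$ is bounded independently of $M$ while the right side scales like $1/M$). Your proposed fix does not work: the mean-correction term $\eta\lambda_i^t(F(1)-\bar F^t)$ is \emph{not} a uniform shift, since it is multiplied by $\lambda_i^t$, so the sign argument you sketch does not show $E_t$ is non-increasing, let alone that it reaches the small regime in bounded time.

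The paper avoids the quadratic error entirely by a different factorisation. Using $\sum_j\lambda_j^t=n$ it rewrites the update as
\[
\lambda_i^{t+1}-\lambda_i^t=\frac{\eta}{n}\sum_{j}\bigl[\lambda_j^t F(\lambda_i^t)-\lambda_i^t F(\lambda_j^t)\bigr]
=\frac{\eta}{n}\sum_j \lambda_i^t\lambda_j^t\,\frac{(2\alpha+\lambda_i^t+\lambda_j^t)(\lambda_j^t-\lambda_i^t)}{(\alpha+\lambda_i^t)^2(\alpha+\lambda_j^t)^2},
\]
so each summand carries the factor $(\lambda_j^t-\lambda_i^t)$ times a strictly positive quantity bounded below by $2\alpha\delta/(\alpha+M)^4$. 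For $i$ attaining the maximum every term is nonpositive, and summing gives directly $(\lambda_{\max}^{t+1}-1)\le(1-c\delta\eta)(\lambda_{\max}^t-1)$; the minimum is handled symmetrically. A separate one-line check that $x\mapsto x+\eta(F(x)-x\bar F^t)$ is increasing for small $\eta$ ensures the extremal indices are preserved from step to step. Thus the paper obtains the per-step linear contraction from $t=0$, with no smallness assumption on $E_0$ and no two-regime argument. If you want to rescue your route, you would need either this pairwise factorisation or some other mechanism that yields a genuine contraction (not just non-increase) in the large-$E_t$ regime.
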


\begin{proof}
    We first show by induction that $\sum\limits_{i=1}^n \lam^t_i = n$ holds for all $t$.
    In fact,
    \begin{align*}
        \sum\limits_{i=1}^n \lam_i^{t+1} 
        &= \sum\limits_{i=1}^n \lam_i + \eta \left(\sum\limits_{i=1}^n F(\lam^t_i) - \sum\limits_{i=1}^n \lam^t_i \cdot \frac{1}{n} \sum\limits_{j=1}^n F(\lam^t_j) \right) \\
        &= n + \eta \left(\sum\limits_{i=1}^n F(\lam^t_i) - \sum\limits_{j=1}^n F(\lam^t_j) \right) =n.
    \end{align*}
    Now, we will show the convergence of $\lam^t_{min}$ and $\lam^t_{max}$. To do so, we assume that $\lambda_{max}^t \le M$ and $\lambda_{min}^t \ge \delta$ holds at time step $t$ (we will verify this later). Define the function $g:\mathbb{R} \rightarrow \mathbb{R}$ as
    \begin{equation}\label{spup:g}
        g(x) := x + \eta \left( F(x) - x \cdot C\right).
    \end{equation}
    By taking the derivative, we have that, for sufficiently large $d$,
    $$
    g'(x) = 1 + \eta \left(F'(x) - C \right) > 0,
    $$
    as $\|F'\|_{\infty} \le C$. This implies that $g(\cdot)$ is a monotone increasing function, which gives that 
    \begin{equation}\label{spup:inc}
    \begin{split}
        &\max_{i\in[n]}g(\lambda^t_i) = g(\lambda_{max}^t),\\
        &\min_{i\in[n]}g(\lambda^t_i) = g(\lambda_{min}^t).
    \end{split}
    \end{equation}
    Note that the updates on $\lambda_i^t$ in \eqref{spup} have a common part for all $i\in[n]$, i.e.,
    $$
    \left|\frac{1}{n} \sum\limits_{j=1}^n F(\lam^t_j)\right| \le C,
    $$
    where we used that $\|F\|_{\infty} \le C$. In this view, by definition of $g$ and \eqref{spup:inc}, we have
    \begin{equation}\label{spup:ext}
    \begin{split}
        &\lambda_{max}^{t+1} = G(\lam^t_{max},\lambda^t),\\
        &\lambda_{min}^{t+1} = G(\lam^t_{min},\lambda^t),
    \end{split}
    \end{equation}
    which means that the min/max value at the previous step are mapped to the min/max value at the next step of \eqref{spup}. Using that $\frac{1}{n}\sum\limits_{i=1}^n \lam^t_i = 1$ we can write
    \begin{equation}\label{spup:rewrite}
    \begin{split}
        \lam_i^{t+1} &= \lam^t_i + \eta \left( \frac{1}{n}\sum\limits_{j=1}^n \lam^t_j \cdot F(\lam^t_i) - \lam^t_i \cdot \frac{1}{n}\sum_{j=1}^n F(\lam^t_j) \right) \\
        &= \lam^t_i + \eta \left(\frac{1}{n} \sum_{j=1}^n \left[ \frac{\lam^t_j \lam^t_i}{(\alpha + \lam^t_i)^2} - \frac{\lam^t_i \lam^t_j}{(\alpha + \lam^t_j)^2} \right]\right)\\
        &=  \lam^t_i + \eta \left(\frac{1}{n} \sum_{j=1}^n \lam^t_i \lam^t_j\left( \frac{(2\alpha + \lam^t_i + \lam^t_j)(\lam^t_j - \lam^t_i)}{(\alpha + \lam^t_i)^2(\alpha + \lam^t_j)^2}  \right)\right).
    \end{split}
    \end{equation}
    Recall that we assumed $\lambda_{max}^t \le M$ and $\lambda_{min}^t \ge \delta$. In this view, we get the following bound
    \begin{equation}\label{spup:lmaxbound}
        \lam^t_{max} \lam^t_j\left( \frac{(2\alpha + \lam^t_{max} + \lam^t_j)(\lam^t_{max}-\lam^t_j)}{(\alpha + \lam^t_{max})^2 (\alpha + \lam^t_j)^2}\right) \geq (\lam^t_{max} - \lam^t_j) \cdot \frac{2\alpha\delta }{(\alpha + M)^4},
    \end{equation}
    which is justified as follows
    \begin{align*}
        \lam^t_{max} \lam^t_j\left( \frac{(2\alpha + \lam^t_{max} + \lam^t_j)(\lam^t_{max}-\lam^t_j)}{(\alpha + \lam^t_{max})^2 (\alpha + \lam^t_j)^2}\right) &= (\lam^t_{max}-\lam^t_j) \cdot \left( \frac{(2\alpha + \lam^t_{max} + \lam^t_j)\lam^t_{max} \lam^t_j}{(\alpha + \lam^t_{max})^2 (\alpha + \lam^t_j)^2}\right) \\
        &\geq (\lam^t_{max}-\lam^t_j) \cdot \frac{2\alpha \cdot 1 \cdot \delta}{(\alpha + M)^2 (\alpha + M)^2},
    \end{align*}
    where we used that $\lambda^t_{max} \geq 1$ since $\sum_{i=1}^n \lambda^t_i = n$.
    Hence, using the previous observation about mapping of extremes in \eqref{spup:ext} and the observation above, we get from \eqref{spup:rewrite} that
    \begin{equation}
        \begin{split}
            \lambda_{max}^{t+1} &\leq \lambda_{max}^t - \eta \cdot \frac{1}{n}\sum_{j=1}^n \left[(\lam^t_{max} - \lam^t_j) \cdot \frac{2\alpha\delta }{(\alpha + M)^4}\right],
        \end{split}
    \end{equation}
    which leads to
    \begin{equation}
        \begin{split}
            \lambda_{max}^{t+1} - 1 &\leq \lambda_{max}^t - 1 - \eta \cdot \frac{1}{n}\sum_{j=1}^n \left[(\lam^t_{max} - \lam^t_j) \cdot \frac{2\alpha\delta }{(\alpha + M)^4}\right] \\
            &= \lambda_{max}^t - 1 - \eta \cdot \left[(\lam^t_{max} - 1) \cdot \frac{2\alpha\delta }{(\alpha + M)^4}\right]\\
            &=(\lambda_{max}^t - 1) \left(1 - \eta \cdot \frac{2\alpha\delta }{(\alpha + M)^4}\right) = (\lambda^t_{max}-1)(1 - c \delta \cdot \eta),
        \end{split}
    \end{equation}
    where we used that $\sum\limits_{j=1}^n \lambda_j^t=n$ in the first equality. Hence,  using that $\lambda_{max}^t \ge 1$ as $\sum_{j=1}^n \lambda_j^t = n$ we have
    \begin{equation}\label{eq:lmax}
    |\lambda_{max}^{t+1} - 1| = \lambda_{max}^{t+1} - 1 \leq |\lambda_{max}^{t} - 1|  \cdot (1-c\delta \cdot \eta).
        \end{equation}
    Similarly to the previous bound, we get that
    $$
    \lam^t_{min} \lam^t_j\left( \frac{(2\alpha + \lam^t_{min} + \lam^t_j)(\lam^t_{min}-\lam^t_j)}{(\alpha + \lam^t_{min})^2 (\alpha + \lam^t_j)^2}\right) \leq \lambda_j^t(\lam^t_{min} - \lam^t_j) \frac{2\alpha\delta }{(\alpha + M)^4},
    $$
    since $\lambda^t_{min} \leq \lambda_t$.
    Hence, using the previous observation about mapping of extremes in \eqref{spup:ext} and the observation above, we deduce from \eqref{spup:rewrite} that
    \begin{equation}
        \begin{split}
            \lambda_{min}^{t+1} - 1 &\geq (\lambda_{min}^{t} - 1) - \eta \cdot \frac{1}{n} \sum_{j=1}^n \left[\lam^t_j(\lam^t_{min} - \lam^t_j) \frac{2\alpha\delta }{(\alpha + M)^4}\right] \\
            &= (\lambda_{min}^{t} - 1) - \eta \cdot \lam^t_{min} \cdot \frac{2\alpha\delta }{(\alpha + M)^4} + \eta \cdot \frac{2\alpha\delta }{(\alpha + M)^4} \cdot \frac{1}{n} \sum_{j=1}^t \left(\lambda_i^t\right)^2\\
            &\geq (\lambda_{min}^{t} - 1) - \eta \cdot (\lambda_{min}^{t} - 1) \cdot \frac{2\alpha\delta }{(\alpha + M)^4} \\
            &= (\lam^t_{min} - 1) \cdot (1 - c \delta \cdot \eta),
        \end{split}
    \end{equation}
    where in the second inequality we used Jensen's inequality for $x^2$ as $\sum_{j=1}^n \lambda_j^t = n$. Hence, we get the following
        \begin{equation}\label{eq:lmin}
    |\lambda_{min}^{t+1}-1| = 1-\lambda_{min}^{t+1} \leq |\lam^t_{min}-1| \cdot (1 - c \delta \cdot \eta),
    \end{equation}
    since $\lambda_{min}^t \le 1$ as $\sum_{j=1}^n \lambda_j^t = n$.
    
    In this view, the assumptions $\lambda_{max}^t \le M$ and $\lambda_{min}^t \ge \delta$ follow from \eqref{eq:lmax} and \eqref{eq:lmin} since the extremes are getting closer to one after each iteration. Recalling that by the assumption on initialization
    $$
    \max_{i}|\lambda_i^0 - 1| \leq \max\{(M-1),(1-\delta)\},
    $$
    the claim follows.
\end{proof}

\section{Proofs for General Covariance}\label{appendix:water_fill_proofs}

\begin{lemma}\label{lemma:two_buckets} Assume that $\{\hat{\gamma}_i\}_{i \in [K]}, \{\hat{s}_i\}_{i \in [K]}$ minimize 
\begin{equation}\label{eq:popriskDreduced2lemma}
 -\frac{\left(\sum_{i=1}^K D_i\gamma_i\right)^2}{\left(g(1) \cdot n + \sum_{i=1}^K\frac{\gamma_i^2}{s_i}\right)}.
\end{equation} 
Then, for any $i < j$, we must have $\hat{s}_i = \min\{\hat{s}_i + \hat{s}_j, k_i\}$.
\end{lemma}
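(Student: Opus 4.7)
The plan is to argue by contradiction using an exchange argument. Suppose there exists a pair $i<j$ for which $\hat{s}_i<\min(\hat{s}_i+\hat{s}_j, k_i)$; equivalently, $\hat{s}_j>0$ and $\hat{s}_i<k_i$. I will construct a feasible perturbation $(\gamma', s')$ that strictly decreases \eqref{eq:popriskDreduced2lemma}, contradicting optimality. Concretely, pick $\delta\in(0,\min(\hat{s}_j, k_i-\hat{s}_i))$ and move $\delta$ units of spectral mass from bucket $j$ to bucket $i$: set $s_i' := \hat{s}_i+\delta$, $s_j' := \hat{s}_j-\delta$, and leave every other $s_l$ and every $\hat\gamma_l$ with $l\notin\{i,j\}$ unchanged. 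Since only the distribution between buckets $i$ and $j$ changes, the feasibility constraints \eqref{eq:rank_constraints} continue to hold.

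The subtle choice is how to update $(\gamma_i,\gamma_j)$. I will pick $(\gamma_i',\gamma_j')\geq 0$ so as to preserve the numerator $A:=\sum_l D_l \gamma_l$, i.e.\ $D_i\gamma_i'+D_j\gamma_j' = A_{\mathrm{part}}:=D_i\hat\gamma_i+D_j\hat\gamma_j$, while minimizing $(\gamma_i')^2/s_i'+(\gamma_j')^2/s_j'$, which are the only terms of the denominator affected by the move. A routine Lagrange calculation yields the minimizer $\gamma_i' = D_i s_i' A_{\mathrm{part}}/(D_i^2 s_i'+D_j^2 s_j')$ and $\gamma_j' = D_j s_j' A_{\mathrm{part}}/(D_i^2 s_i'+D_j^2 s_j')$, both non-negative since $A_{\mathrm{part}}\geq 0$, with optimal value $A_{\mathrm{part}}^2/(D_i^2 s_i'+D_j^2 s_j')$. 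By construction $A'=A$, so the whole comparison reduces to showing that the new denominator $B'$ is strictly smaller than the old $B := g(1)n+\sum_l\hat\gamma_l^2/\hat s_l$.

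The strict drop $B'<B$ is the heart of the proof and will follow by chaining two inequalities: (i) because $D_i>D_j\geq 0$ gives $D_i^2-D_j^2>0$, $D_i^2 s_i'+D_j^2 s_j' = D_i^2\hat s_i+D_j^2\hat s_j + (D_i^2-D_j^2)\delta > D_i^2\hat s_i+D_j^2\hat s_j$; (ii) Cauchy--Schwarz applied to $(D_l\sqrt{\hat s_l})_{l\in\{i,j\}}$ and $(\hat\gamma_l/\sqrt{\hat s_l})_{l\in\{i,j\}}$ gives $A_{\mathrm{part}}^2 \leq (D_i^2\hat s_i+D_j^2\hat s_j)(\hat\gamma_i^2/\hat s_i+\hat\gamma_j^2/\hat s_j)$. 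Chaining (i) and (ii) yields $A_{\mathrm{part}}^2/(D_i^2 s_i'+D_j^2 s_j') < \hat\gamma_i^2/\hat s_i+\hat\gamma_j^2/\hat s_j$, hence $B'<B$. Combined with $A'=A>0$ this produces $-(A')^2/B'<-A^2/B$, contradicting minimality. The case $A=0$ is excluded at the outset because the choice $\gamma_1>0$, $s_1>0$ (feasible since $k_1\geq 1$) already yields a strictly negative objective.

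The step I expect to be the main obstacle is the coupling of the $\gamma$-reassignment with the $s$-transfer: the argument succeeds only because one does not keep $\gamma$ fixed when moving mass in $s$, but simultaneously redistributes $\gamma$ between indices $i$ and $j$ to freeze $A$, which reduces the analysis to a clean Cauchy--Schwarz comparison driven purely by the strict gap $D_i^2>D_j^2$. Boundary situations such as $\hat s_i=0$ or $\hat\gamma_j=0$ are absorbed by the same computation via the conventions $0^2/0=0$ and $c/0=+\infty$ for $c>0$ stated after \eqref{eq:rank_constraints}.
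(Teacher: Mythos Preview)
Your exchange argument is clean and the Cauchy--Schwarz step is correct, but there is a missing constraint that breaks the proof. The minimization in \eqref{eq:popriskDreduced2lemma} is implicitly subject to $\gamma_i\ge 0$ \emph{and} $\sum_{i=1}^K\gamma_i=n$: this comes from the definition $\gamma_i=\tr{\bGamma_i^2}$ together with $\sum_j\bGamma_j^2=\I$ in the block decomposition \eqref{eq:weight_tying_noniso}. Without such a bound the infimum of \eqref{eq:popriskDreduced2lemma} is never attained (rescaling all $\gamma_l\mapsto t\gamma_l$ and sending $t\to\infty$ drives the value toward $-\sum_l D_l^2 s_l$, which is a strict infimum), so the hypothesis ``$\{\hat\gamma_i\},\{\hat s_i\}$ minimize'' would be vacuous. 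The paper's own proof uses this constraint explicitly, restricting to perturbations with $\gamma_i+\gamma_j=\hat\gamma_i+\hat\gamma_j$ fixed.

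Your perturbation instead preserves $D_i\gamma_i+D_j\gamma_j$ so as to freeze the numerator $A$, and in general this changes $\gamma_i+\gamma_j$: with your formula one gets $\gamma_i'+\gamma_j'=A_{\mathrm{part}}\,(D_i s_i'+D_j s_j')/(D_i^2 s_i'+D_j^2 s_j')$, and e.g.\ for $D_i=2,\ D_j=1,\ \hat s_i=\hat s_j=\hat\gamma_i=\hat\gamma_j=1,\ \delta=\tfrac12$ this equals $21/13\neq 2$. Hence your perturbed point is infeasible for the actual problem and no contradiction follows. The paper's route is the mirror image of yours: it fixes $\gamma_i+\gamma_j$, uses a short KKT analysis of the two-variable subproblem to show the partial denominator can be brought down to $(\hat\gamma_i+\hat\gamma_j)^2/(\hat s_i+\hat s_j)$ after the $\delta$-shift in $s$, and then checks that the corresponding feasible reassignment of $\gamma$ strictly \emph{increases} the numerator (because mass in $\gamma$ moves toward the index with the larger $D_l$). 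To salvage your approach you would have to recast it so that the perturbation respects $\sum_l\gamma_l=n$, or append a global rescaling step and verify it never undoes the strict gain in the denominator.
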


\begin{proof}[Proof of Lemma \ref{lemma:two_buckets}]
Since the $\{\hat{\gamma}_i\}_{i\in [K]}, \{\hat{s}_i\}_{i\in [K]}$ are optimal, if we fix two indices $i<j$ the corresponding $\hat{\gamma}_i, \hat{\gamma}_j, \hat{s}_i, \hat{s}_j$ are optimal among all $\gamma_i, \gamma_j, s_i, s_j$ satisfying 
\begin{equation}\label{eq:two_buckets_constraints}
\begin{cases}
0 < \gamma_i + \gamma_j = \gamma:=\hat{\gamma}_i + \hat{\gamma}_j \leq n, \\ 0 <s_i + s_j = s:= \hat{s}_i + \hat{s}_j \leq \min\{n, k_i + k_j\}.
\end{cases}
\end{equation}

Thus, we proceed by analysing the solution for two fixed indices under the constraints  \eqref{eq:two_buckets_constraints} (keeping all other $\hat{\gamma}_l, \hat{s}_l$ for $l \notin \{i, j\}$ fixed).
Note that, for each fixed $(\gamma_i,\gamma_j)$ satisfying the constraints \eqref{eq:two_buckets_constraints}, the following objective
\begin{equation}\label{eq:aux_two_bucket}
\begin{split}
    &\frac{\gamma^2_i}{s_i} + \frac{\gamma^2_j}{s_j} \rightarrow \min_{s_i, s_j}\\
    &\ \mathrm{s.t.}\quad s_i \leq k_i,\ s_j \leq k_j, \ s_i + s_j = s
\end{split}
\end{equation}
is equivalent to finding optimal ranks for \eqref{eq:popriskDreduced2lemma}. 
Importantly, in \eqref{eq:aux_two_bucket} we consider continuous $(s_i,s_j)$. This relaxation has the same minimum, since we will show that the optimal $s_i, s_j$ have integer values. 
We may also assume that $\gamma_j >0$ as otherwise clearly $s_i = \min\{s, k_i\}$ is optimal.

Since \eqref{eq:aux_two_bucket} is strictly convex (on the domain given by the constraints), we can find its unique minimizer by finding a solution to the KKT conditions:
$$-\frac{\gamma_i^2}{s_i^2} + (\lambda + \mu_i) = 0,\quad-\frac{\gamma_j^2}{s_j^2} + (\lambda + \mu_j) = 0, \quad \mu_i, \mu_j \geq 0,\quad\mu_i(s_i -k_i) = 0, \quad\mu_j(s_j-k_j) = 0 ,\quad s = s_i + s_j.$$
If $s_i = k_i$  or $s_j=0$, then the claim is readily obtained. We will now prove that, if this is not the case, then we can find new $\widetilde{s_i}, \widetilde{s_j}, \widetilde{\gamma_i}, \widetilde{\gamma_j}$ which achieve a better value.

We first show that for $s_i < k_i, 0<s_j < k_j$
\begin{equation}\label{eq:frel}
\frac{\gamma_i^2}{s_i} + \frac{\gamma_j^2}{s_j}= \gamma_i \frac{\gamma}{s} + \gamma_j \frac{\gamma}{s}= \frac{\gamma^2}{s}.    
\end{equation}
Note that, in this case, $\mu_i = \mu_j =0$, so the first two KKT conditions imply
$$\frac{\gamma_i}{s_i} = \sqrt{\lam} = \frac{\gamma_j}{s_j}.$$
Thus, we have
\begin{equation}\label{eq:aux_st}
   \frac{\gamma_i}{s_i} = \frac{\gamma_j}{s_j} = \frac{\gamma_i + \gamma_j}{s_i + s_j} = \frac{\gamma}{s},
\end{equation}
from which \eqref{eq:frel} is immediate.

For the case $s_j = k_j$ and $s_i<k_i$, we have that $\mu_j\ge \mu_i=0$, hence
$$\frac{\gamma_i}{s_i} = \sqrt{\lambda + \mu_i} \leq \sqrt{\lambda + \mu_j}=\frac{\gamma_j}{s_j}.$$
From the previous case, we know that without the  constraints on $k_i,k_j$ the optimal value in \eqref{eq:aux_two_bucket} is $\frac{\gamma^2}{s}$.
Thus,
$$ \frac{\gamma_i^2}{s_i} + \frac{\gamma_j^2}{s_j} \geq \frac{\gamma^2}{s}.$$

Now, for $\epsilon >0$, define $\widetilde{s}_i = s_i +\epsilon, \widetilde{s}_j = s_j -\epsilon$. Note that, as $s_i<k_i$ and $s_j>0$, we can choose $\epsilon$ small enough such that $0 < \widetilde{s}_i<k_i,0< \widetilde{s}_j<k_j$.
At this point, let us simply choose $\widetilde{\gamma}_i, \widetilde{\gamma}_j$ such that
$$
\frac{\widetilde{\gamma}_i}{\widetilde{s}_i} = \frac{\widetilde{\gamma}_j}{\widetilde{s}_j}
$$
which as in \eqref{eq:frel}, \eqref{eq:aux_st} implies that 
\begin{equation}\label{eq:better_params1}
    \frac{\widetilde{\gamma}_i^2}{\widetilde{s}_i} + \frac{\widetilde{\gamma}_j^2}{\widetilde{s}_j} = \frac{\gamma^2}{s} \leq \frac{\gamma_i^2}{s_i} + \frac{\gamma_j^2}{s_j}.
\end{equation}
We also have $\widetilde{\gamma}_i> \gamma_i$,
as otherwise
$$\frac{\widetilde{\gamma}_i}{\widetilde{s}_i} < \frac{\gamma_i}{s_i} \leq \frac{\gamma_j}{s_j} < \frac{\widetilde{\gamma}_j}{\widetilde{s}_j}  $$
would be a contradiction. This gives that 
$$D_i \gamma_i + D_j \gamma_j < D_i \widetilde{\gamma}_i + D_j\widetilde{\gamma}_j,$$
which implies that our new choice achieves a lower value for \eqref{eq:popriskDreduced2lemma}, thus giving the desired contradiction.



\end{proof}

\begin{lemma}\label{lem:LBD_KKT_general}
Assume that $f,f_i$ are differentiable strictly convex functions on $\R$ such that 
\begin{equation}\label{eq:ordering_app}
 f_i'(0) < f'_j(0)<0, \ i < j, \quad \lim_{m_i \rightarrow +\infty} f'_i(m_i) = + \infty, \quad \lim_{m_i \rightarrow -\infty} f'_i(m_i) = - \infty,
\end{equation}
and 
\begin{equation}\label{eq:young_app}
f(0) = f'(0) = 0, \quad \lim_{m \to +\infty} f'(m) = +\infty.
\end{equation}
Then, the objective given by
\begin{equation}\label{eq:general_objective_app}
\min_{ m_i \ge 0} f\left(m\right) + \sum_{i=1}^{K} f_i(m_i), \quad m = \sum_{i}^K m_i
\end{equation}
has a unique minimizer. It is uniquely characterised by being of the form
$(m_1,\ldots,m_M, 0,\ldots, 0)$ and satisfying
\begin{equation}\label{eq:opt_m_mi}
    m = \sum_{i=1}^M \left(\left( -f_i'\right)^{-1}\circ f'\right)(m), \quad m_i =\left(\left( -f_i'\right)^{-1}\circ f'\right)(m) \geq 0,\quad f'(m) +f_i'(m_i) \geq 0, \quad  i \in [M].
\end{equation}
Furthermore, it can be obtained via binary search by finding the largest index $M$, such that the corresponding $m_i$ are all strictly positive.
\end{lemma}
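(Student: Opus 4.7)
\textit{Proof plan for Lemma \ref{lem:LBD_KKT_general}.} The overall approach is to invoke strict convexity and KKT for this constrained convex program, and then use the ordering in \eqref{eq:ordering_app} to force the active set to be an initial segment $\{1,\ldots,M\}$.

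First I would establish existence and uniqueness of a minimizer $\mathbf{m}^\star = (m_1^\star,\ldots,m_K^\star)$. Uniqueness is immediate from strict convexity of $f$ and each $f_i$. For existence, the limits in \eqref{eq:ordering_app} together with strict convexity imply that each $f_i$ attains its global minimum at a finite point and is bounded below on $[0,\infty)$, while \eqref{eq:young_app} forces $f(m)\to+\infty$ as $m\to+\infty$; since $m\ge \max_i m_i$ on the feasible set, the full objective is coercive on $\{m_i\ge 0\}$. Because the problem is convex with a convex feasible set, KKT is necessary and sufficient: there exist multipliers $\mu_i\ge 0$ with $\mu_i m_i^\star=0$ and
\[ f'(m^\star) + f_i'(m_i^\star) \;=\; \mu_i, \qquad i\in[K], \]
where $m^\star:=\sum_i m_i^\star$.

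Next I would split on complementary slackness. If $m_i^\star>0$ then $\mu_i=0$, so $f_i'(m_i^\star) = -f'(m^\star)$; strict convexity makes $f_i'$ strictly increasing, and the surjectivity built into \eqref{eq:ordering_app} makes $(-f_i')^{-1}$ well-defined on all of $\mathbb{R}$, giving $m_i^\star = ((-f_i')^{-1}\circ f')(m^\star)$. If $m_i^\star=0$, the condition $\mu_i\ge 0$ becomes $f'(m^\star)\ge -f_i'(0)>0$; in particular, the all-zero vector is infeasible (it would force $f'(0)\ge -f_1'(0)>0$, contradicting \eqref{eq:young_app}), hence $m^\star>0$ and $M\ge 1$. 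The key step — and the place where the ordering in \eqref{eq:ordering_app} is decisive — is showing that the active set is a prefix. If $m_i^\star=0$ and $j>i$ were active, then the active-index identity would give
\[
 f'(m^\star)=-f_j'(m_j^\star)<-f_j'(0)<-f_i'(0)\le f'(m^\star),
\]
a contradiction; here the first strict inequality uses $m_j^\star>0$ and strict monotonicity of $f_j'$, and the second uses the ordering $f_i'(0)<f_j'(0)$. Hence the active set is $\{1,\ldots,M\}$ for some $M\in[K]$.

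Substituting the closed form for the active $m_i^\star$ into $m^\star=\sum_i m_i^\star$ produces the scalar fixed-point equation in \eqref{eq:opt_m_mi}, and the remaining identities there are just the KKT relations for $i\le M$. To justify the binary-search characterization, I would observe that for each fixed $M$ the map $m\mapsto \sum_{i=1}^M((-f_i')^{-1}\circ f')(m)$ is strictly decreasing in $m$ (a decreasing function composed with an increasing one), while the left-hand side $m$ is strictly increasing, so the fixed-point equation admits at most one solution $m^\star(M)$. Moreover, adding a non-negative summand to the right-hand side shifts the crossing rightward, making $m^\star(M)$ and hence $f'(m^\star(M))$ monotonically non-decreasing in $M$; the feasibility requirement $m_i^\star(M)\ge 0$ for all $i\le M$ reduces, by the ordering of $-f_i'(0)$, to the single condition $f'(m^\star(M))\le -f_M'(0)$, which fails monotonically as $M$ grows. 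The largest admissible $M$ is therefore well-defined and locatable by binary search. The only mildly delicate point is the prefix argument above — every other step is a routine application of convex analysis once the KKT system is in hand.
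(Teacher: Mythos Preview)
Your overall approach matches the paper's: existence/uniqueness via strict convexity and coercivity, KKT, a prefix structure for the active set, and a scalar fixed-point equation for each candidate $M$. Your prefix argument via the chain $f'(m^\star)=-f_j'(m_j^\star)<-f_j'(0)<-f_i'(0)\le f'(m^\star)$ is in fact cleaner than the paper's mass-redistribution contradiction, and your observation that the feasibility of all $m_i(M)\ge 0$ collapses to the single inequality $f'(m^\star(M))\le -f_M'(0)$ is correct.

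However, there is a genuine gap in your justification of the binary search. You assert that ``adding a non-negative summand to the right-hand side shifts the crossing rightward, making $m^\star(M)$ monotonically non-decreasing in $M$.'' The added summand $((-f_{M+1}')^{-1}\circ f')(\cdot)$ need not be non-negative at $m^\star(M)$, and in fact $m^\star(M)$ is \emph{not} monotone in $M$ in general. A quick counterexample: $f(m)=m^2$, $f_1(x)=x^2-10x$, $f_2(x)=x^2-x$ gives $m^\star(1)=5/2$ but $m^\star(2)=11/6<m^\star(1)$. So your route from ``$m^\star(M)$ is non-decreasing'' to ``the feasibility condition fails monotonically'' does not go through. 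The paper handles this differently: rather than claiming monotonicity of $m^\star(M)$, it compares every candidate $M$ directly to the optimal $M^*$ and proves two separate statements --- (i) for $M<M^*$ one has $m(M)\le m(M^*)$ and hence all $m_i(M)\ge m_i(M^*)\ge 0$; (ii) for $M>M^*$ one has $m_M(M)\le 0$, argued via a case split on whether $m(M)\ge m(M^*)$ or $m(M)<m(M^*)$ (in the latter case one first finds some negative $m_i(M)$ with $i>M^*$ and then uses the ordering of the $f_i'(0)$ to push this down to $i=M$). You need an argument of this shape; the monotonicity shortcut is not available.
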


While the assumptions of this theorem might seem technical, most of them can be relaxed. However, we note that all such assumptions are fulfilled by the setting being studied and relaxing them would come at the cost of the readability of the proof of Lemma \ref{lem:LBD_KKT_general}. 

\begin{proof}[Proof of Lemma \ref{lem:LBD_KKT_general}] We start by showing that \eqref{eq:general_objective_app} has a unique minimizer. Recall that $f$ and $f_i$ are strictly convex functions, and, hence, their derivatives $f'$ and $f'_i$ are increasing.
From \eqref{eq:young_app}, we also obtain that $\lim_{m \to +\infty} f'(m) = +\infty$. By monotonicity, we have $f_i'(m_i) \geq f_i'(0) $. Therefore,
$$\lim_{m\to +\infty } f'(m) + \sum_{i=1}^{K} f'_i(m_i) = + \infty,$$
and thus 
$$\lim_{m\to +\infty } f(m) + \sum_{i=1}^{K} f_i(m_i) = + \infty.$$
As a consequence, the objective achieves its infimum. Therefore, as $f(m) + \sum_{i=1}^{K} f_i(m_i)$ is strictly convex, the minimum is unique.

Notice that Slater's condition is satisfied, since the feasible set of \eqref{eq:general_objective_app} has an interior point. Hence, $\{m_i\}_{i=1}^{K}$ is a unique minimizer of \eqref{eq:general_objective_app} if and only if it satisfies the following KKT conditions (for the ``if and only if'' statement, see for instance page 244 in \cite{boyd2004convex}):
\begin{enumerate}
    \item\label{eq:kkt_wt_stationary} Stationary condition: $f'(m) +f_i'(m_i) - \lambda_i = 0.$
    \item Primal feasibility: $m_i \geq 0.$
    \item Complementary slackness: $\lambda_i m_i = 0.$
    \item Dual feasibility: $\lambda_i \geq 0.$
\end{enumerate}
In particular, the uniqueness of the minimizer implies that the KKT conditions have a unique solution. 
Thus, we only need to show that the $m_i$ found by this procedure satisfy the above equations.

We now show that the active set $\mathcal{A}:=\{i:m_i > 0\}$ for the optimal $m_i$ is monotone, meaning that $\mathcal{A} = [M]$ for some $M \leq K$.
We prove the statement by contradiction. Assume that there exists $m_i = 0$ and $m_j > 0$ where $i < j$. Recall that $f_j'$ is strictly increasing, which by the ordering condition \eqref{eq:ordering_app} implies that
$$
f'_i(0) + f'\left(\sum_{\ell=1}^{K} m_\ell\right) < f'_j(m_j) + f'\left(\sum_{\ell=1}^{K} m_\ell\right).
$$
Hence, taking some sufficiently small mass from $m_j$ and redistributing it in $m_i$ will decrease the objective value in \eqref{eq:general_objective_app}, which concludes the proof.

Fix $M \leq K$. We now show that the solution of the following system of equations 
\begin{equation}\label{eq:search_system}
    f'(m) + f'_i(m_i) = 0, \quad \forall i \leq M
\end{equation}
exists and unique. 
Note that this system comes from the 1. and 3. KKT conditions.

As $f'_i$ is strictly monotone, its inverse exists and, hence, from \eqref{eq:search_system} we get 
\begin{equation}\label{eq:mi_computation}
m_i = (-f_i')^{-1}(f'(m)),
\end{equation}
which gives 
\begin{equation}\label{eq:m_computation}
m = \sum_{i=1}^M (-f_i')^{-1}(f'(m)).
\end{equation}
Let us argue the existence and uniqueness of the solution of equation \eqref{eq:m_computation} for a fixed $M$. Recall that $f'_i$ is increasing and, thus, $-f'_i$ is decreasing. The inverse of a decreasing function is decreasing, hence $(-f_i')^{-1}$ is decreasing. Recalling that $f'$ is increasing and that the composition of an increasing and a decreasing function is decreasing, it follows that $(-f_i')^{-1}(f'(m))$ is decreasing. By assumption $f'_i(0) < 0$ and $f'_i$ is increasing such that $\lim_{m_i \rightarrow +\infty} f_i(m_i) = +\infty$, therefore the value $(-f_i')^{-1}(0)$ is well-defined and 
$$
(-f_i')^{-1}(0) > 0.
$$
Thus, we have that
$$
g_M(m) =  \sum_{i=1}^M (-f_i')^{-1}(f'(m)) - m
$$
is a strictly decreasing function with
$$
\lim_{m\rightarrow+\infty} g_M(m) = -\infty, \quad g_M(0) > 0.
$$
In this view, the solution of \eqref{eq:m_computation} exists and unique.

Next, we elaborate on why \eqref{eq:mi_computation} is well-defined given the solution of \eqref{eq:m_computation}. Note that, by our assumptions,
$$
\lim_{m_i \rightarrow +\infty} f'_i(m_i) = + \infty, \quad \lim_{m_i \rightarrow -\infty} f'_i(m_i) = - \infty,
$$
hence, the same holds for $(-f'_i)^{-1}$, and, thus, due to continuity the quantity
$$
(-f'_i)^{-1}(x)
$$
is well-defined for any $x\in\mathbb{R}$. Given this, we readily have that the solution of the system \eqref{eq:search_system} exists and unique. Furthermore, this solution can be found using \eqref{eq:m_computation} and \eqref{eq:mi_computation}. Note also that \eqref{eq:m_computation} and \eqref{eq:mi_computation} agree with \eqref{eq:opt_m_mi}.

We now show that the following procedure finds the optimal active set $\mathcal{A}^{*} = [M^{*}]$. Let $m_i(M), \ i \leq M$ be a solution of \eqref{eq:search_system} for fixed value of $M \leq K$, and define $m(M) := \sum_{i=1}^M m_i(M)$. Using \eqref{eq:m_computation} and \eqref{eq:mi_computation} find the smallest $M$ such that the corresponding $m_M(M)$ is non-negative, then $M^{*} = M - 1$ if $M\geq 1$, otherwise, $m = m_i = 0, \ \forall i \in [K]$. If no such $M$ was found, $M^{*} = [K]$. To show that the described procedure in fact gives the optimal active set $\mathcal{A}^{*} = [M^{*}]$, we need to prove that
\begin{enumerate}
\item If $M < M^{*}$, then $m_i(M) \geq 0$. 
\item If $M > M^{*}$, then $m_M(M) \leq 0$. 
\end{enumerate}

Clearly, these two conditions imply that the active set of the minimizer is given by $[M^*]$, and it can be found via binary search.

We start by proving the first property. Note that, by the KKT conditions on the optimizer $M^{*}$, we have that 
$$
m_i(M^{*}) \geq 0.
$$

First assume that $m(M) > m(M^{*})$. By monotonicity, it follows from \eqref{eq:mi_computation} that
$$
m_i(M) < m_i(M^{*}),
$$
but 
$$
m(M^{*}) = \sum_{i=1}^M m_i(M^{*}) + \sum_{i=M+1}^{M^{*}} m_i(M^{*}) \geq \sum_{i=1}^M m_i(M^{*}) >  \sum_{i=1}^M m_i(M) = m(M),
$$
where we have used that $m_i(M^{*}) \geq 0$, which is a contradiction.
Thus, we have that $m(M) \leq m(M^{*})$.
Again, by \eqref{eq:mi_computation} and monotonicity, 
$$
 m_i(M) \geq m_i(M^{*}),
$$
and, hence, all $m_i(M)$ are non-negative. 

We finally argue the second property. We start by proving a weaker statement, i.e., there exists $i \geq M^{*} + 1$ such that $m_i(M) < 0$. Assume that $m(M) < m(M^{*})$. By \eqref{eq:mi_computation} and monotonicity
$$
m_i(M) > m_i(M^{*}),
$$
hence, the following holds:
$$
m(M) =  \sum_{i=1}^M m_i(M) =  \sum_{i=1}^{M^{*}} m_i(M) +   \sum_{i=M^{*}+1}^{M}  m_i(M) > \sum_{i=1}^{M^{*}} m_i(M^{*}) + \sum_{i=M^{*}+1}^{M}  m_i(M) = m(M^{*}) + \sum_{i=M^{*}+1}^{M}  m_i(M),
$$
which since $m(M) < m(M^{*})$ implies that $\sum_{i=M^{*}+1}^{M}  m_i(M)$ is a negative quantity. Thus, there exists $i \geq M^{*} + 1$ such that $m_i(M) < 0$. Assume now that $m(M) \geq m(M^{*})$. Recall that only the minimizer satisfies the KKT conditions, thus
$$
f'(m(M^{*})) + f'_M(0) \geq 0,
$$
which, as $f'$ is increasing, implies that
$$
f'(m(M)) + f'_M(0) \geq 0.
$$
By construction of $m_M(M)$, we know that
$$
f'(m(M)) + f'_M(m_M(M)) = 0,
$$
thus, by monotonicity of $f'_M$ we have $m_M(M) \leq 0$.

It remains to show that it suffices to check $m_M(M) \leq 0$ and not an arbitrary $m_i(M)$ for $i \geq M^{*} + 1$. 
Assume that $m_i(M) \leq 0$ for some $i \leq M$. 
Recall that by assumption
$$
f'_i(0) < f'_M(0) < 0,
$$
and by construction we have
$$
f_i'(m_i(M)) = f_M'(m_M(M)) = -f'(m(M)).
$$
Since $f'_i$ is a decreasing function, we get that $-f'(m(M)) < f_i'(0)$. Recalling that $f'_i(0)<f'_M(0)$, we get $-f'(m(M))<f_M'(0)$ and, hence, by monotonicity of $f_M'$ we obtain that $m_M(M) \leq 0$, which concludes the proof.
\end{proof}

\begin{lemma}\label{lem:LBD_KKT}
The minimizer of \eqref{eq:popriskDLB_cv} 
can be computed in $\log(K)$ steps via binary search by finding the smallest index $M^{*}$ such that
\begin{equation}\label{eq:bulbasaur_cond}
\frac{g(1)}{c_1^2n}\sum_{j=1}^{M^{*} + 1} s_j (D_{M^{*} + 1} - D_j) + D_{M^{*} + 1} \leq 0.
\end{equation}
Then, the optimal active set has the form $\mathcal{A} = [M^{*}]$ and corresponding non-zero $\beta_i\ $, for $i \leq M^{*}$, are computed as
\begin{equation}\label{eq:bulbasaur}
\beta_i =\frac{s_i}{c_1} \cdot \left(\frac{\frac{g(1)}{c_1^2n} \sum_{j\in\mathcal{A}}s_j\Delta_j + D_1 }{\frac{g(1)}{c_1^2n}\sum_{j\in\mathcal{A}}s_j + 1} - \Delta_i\right),
\end{equation}
where $\Delta_j = D_1 - D_j$.
\end{lemma}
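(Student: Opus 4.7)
The plan is to reduce Lemma \ref{lem:LBD_KKT} to a direct application of the general optimization result in Lemma \ref{lem:LBD_KKT_general}, so that the main work is simply identifying the correct $f$ and $f_i$ and then performing the explicit algebra. With the $s_j$ already fixed by the water-filling criterion \eqref{eq:water_filling_main}, the objective \eqref{eq:popriskDLB_cv} (ignoring the inconsequential factor $1/d$ and the constant $\sum_j D_j^2$) splits as $f\bigl(\sum_i \beta_i\bigr) + \sum_i f_i(\beta_i)$ with the natural choice
\begin{equation*}
    f(m) = \frac{g(1)}{c_1^2 n}\, m^2, \qquad f_i(\beta_i) = \frac{c_1^2}{s_i}\,\beta_i^2 - 2 c_1 D_i \beta_i.
\end{equation*}
I would first verify the hypotheses of Lemma \ref{lem:LBD_KKT_general}: each $f_i$ is strictly convex with $f_i'(0) = -2 c_1 D_i$, and the strict ordering $D_1 > \ldots > D_K \geq 0$ produces the required strict ordering $f_1'(0) < \ldots < f_K'(0) < 0$; meanwhile $f$ is strictly convex quadratic with $f(0) = f'(0) = 0$ and blows up at infinity; and all $f_i'$ are affine, so they cover $\mathbb{R}$ bijectively.

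Once the hypotheses are checked, Lemma \ref{lem:LBD_KKT_general} immediately gives that the minimizer has active set $[M^*]$ for some $M^*$ and is characterized by the stationarity system $f'(m) + f_i'(\beta_i) = 0$ for $i \le M^*$. Next I would invert $-f_i'$ to obtain $\beta_i = \frac{s_i}{c_1^2}\bigl(c_1 D_i - \tfrac{1}{2} f'(m)\bigr)$, sum over $i \in [M^*]$ to get a single linear equation for $m$, solve it in closed form, and substitute back. After rearrangement using the identity $D_i = D_1 - \Delta_i$, this would produce the formula \eqref{eq:bulbasaur}; this is a routine but slightly finicky manipulation that turns a ratio with $\sum s_j D_j$ into one with $\sum s_j \Delta_j + D_1$.

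Finally, to prove \eqref{eq:bulbasaur_cond}, I would use the characterization from Lemma \ref{lem:LBD_KKT_general} that $M^*$ is the largest $M$ for which the solution of the system with active set $[M]$ has all components strictly positive. Equivalently, $M^*$ is the smallest index such that evaluating the closed-form expression for $\beta_{M^*+1}$ on active set $[M^* + 1]$ yields a non-positive value. Substituting directly into the formula derived above and clearing the (positive) denominator reduces the sign condition on $\beta_{M^* + 1}$ to precisely the inequality \eqref{eq:bulbasaur_cond}. Since Lemma \ref{lem:LBD_KKT_general} also asserts the monotonicity of the sign of $\beta_M(M)$ as $M$ varies, this critical index can be located in $O(\log K)$ evaluations via binary search, as claimed.

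The main obstacle I foresee is not conceptual but bookkeeping: getting the constants right in the explicit inversion step and matching the resulting expression to the exact form stated in \eqref{eq:bulbasaur}, especially since the statement mixes factors of $c_1$, $c_1^2$ and $g$ in a way that requires care. All the nontrivial optimization content (uniqueness, the monotone active set structure, the binary-search characterization) is already encapsulated in Lemma \ref{lem:LBD_KKT_general}, so beyond the algebraic computation there is nothing further to do.
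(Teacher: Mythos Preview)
Your proposal is correct and follows essentially the same approach as the paper: both identify the objective as $f(\sum_i \beta_i)+\sum_i f_i(\beta_i)$ with $f$ and $f_i$ quadratic, invoke Lemma \ref{lem:LBD_KKT_general}, invert $-f_i'$, solve the resulting linear equation for $m$, and substitute back to obtain \eqref{eq:bulbasaur} and the threshold condition \eqref{eq:bulbasaur_cond}. The only cosmetic difference is that the paper first rescales $g(x)\mapsto g(x)/c_1^2$ and $\beta_i\mapsto c_1\beta_i$ to set $c_1=1$, which removes exactly the bookkeeping with powers of $c_1$ that you anticipate as the main nuisance.
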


\begin{proof}[Proof of Lemma \ref{lem:LBD_KKT}] 
By rescaling $g(x)$ as $\frac{g(x)}{c_1^2}$ and $\beta_i$ as $c_1 \beta_i$, we may without loss of generality assume that $c_1 = 1$.
From the results of Lemma \ref{lem:LBD_KKT_general}, by a direct computation, we get that for $\mathcal{A}=[M]$
$$
\beta_j(M) = m_j(M) = s_i \cdot \left(\frac{\frac{g(1)}{n}\sum_{i=1}^{M} s_i \Delta_{i} + D_1}{\frac{g(1)}{n}\sum_{i=1}^{M} s_i + 1} - \Delta_{j}\right), \ \forall j \leq M,
$$
thus, applying the described binary search procedure to find $M^*$ such that $$M^{*} +1 = \min\left(\argmin_{M} \ind [m_{M}(M) > 0]\right)$$ finishes the proof.

We now elaborate on the computations. For the compactness of the notation, we omit the dependence on active set in $m_i$'s and $m$. We apply Lemma \ref{lem:LBD_KKT_general} with
$$
f(x) = \frac{g(1)}{n} \cdot x^2, \quad f_i(x) = \frac{x^2}{s_i} - 2D_ix,
$$
which gives
$$
f'(x) = \frac{2g(1)}{n} \cdot x, \quad f_i'(x) = \frac{2x}{s_i} - 2D_i.
$$
Hence, we obtain that
$$
(-f'_i)^{-1}(x) = \frac{s_i \cdot (2D_i-x)}{2},
$$
and, thus, by \eqref{eq:m_computation} we obtain
$$
m = \sum_{i=1}^{M} (-f'_i)^{-1}(f'(m)) = -f'(m) \cdot \sum_{i=1}^{M} \frac{s_i}{2} + \sum_{i=1}^{M} D_is_i =  -\frac{g(1)}{n} \cdot m \cdot \sum_{i=1}^{M} s_i + \sum_{i=1}^{M} D_is_i.
$$
In this view, we get
$$
m = \frac{\sum_{i=1}^{M} D_is_i}{\frac{g(1)}{n} \sum_{i=1}^{M} s_i + 1},
$$
and, hence, since by \eqref{eq:mi_computation} the following holds
$$
m_j = (-f'_j)^{-1}(f'(m)),
$$
we get
\begin{equation*}
\begin{split}
m_j &= s_j \cdot \frac{2D_j - f'(m)}{2} = s_j \cdot \frac{2D_j \left(\frac{g(1)}{n} \sum_{i=1}^{M} s_i + 1 \right) - \frac{2g(1)}{n} \cdot \sum_{i=1}^{M} D_is_i }{2 \cdot \left(\frac{g(1)}{n} \sum_{i=1}^{M} s_i + 1 \right)} \\
&= s_j \cdot \frac{\frac{g(1)}{n} \sum_{i=1}^M D_j s_i + D_j - \frac{g(1)}{n} \sum_{i=1}^M D_i s_i +\frac{g(1)}{n} \sum_{i=1}^M D_1 s_i - \frac{g(1)}{n} \sum_{i=1}^M D_1 s_i - D_1 + D_1}{\frac{g(1)}{n} \sum_{i=1}^{M} s_i + 1} = \\
&=  s_j \cdot \left(\frac{\frac{g(1)}{n}\sum_{i=1}^M s_i \Delta_i + D_1}{\frac{g(1)}{n} \sum_{i=1}^{M} s_i + 1} - \Delta_j\right),
\end{split}
\end{equation*}
where $\Delta_j = D_1 - D_j$. It is easy to verify that the condition 
$$
\frac{g(1)}{n}\sum_{j=1}^{M^{*} + 1} s_j (D_{M^{*} + 1} - D_j) + D_{M^{*} + 1} \leq 0
$$
described in the statement of the lemma is equivalent to $\beta_{M^{*} + 1}(M^{*} + 1) = m_{M^{*} + 1}(M^{*} + 1) \leq 0$, which concludes the proof.
\end{proof}

\begin{proof}[Proof of Theorem \ref{thm:wtD_lb}]
    We start by showing how the lower bound reduces to the objective in \eqref{eq:popriskDLB_cv}.
Consider the following block decomposition of $\B$ in accordance with $\D$ as in \eqref{eq:weight_tying_noniso}
\begin{equation*}
    \B = [\bGamma_1\B_1|\cdots|\bGamma_K \B_K],
\end{equation*}
where $\B_j \in \mathbb{R}^{n\times k_j}$ with $\|(\B_j)_{i,:}\|_2=1$ and $\{\bGamma_j\}_{j=1}^K$ are diagonal matrices. 

Since we require $\|\B_{i,:}\|_2=1$, the $\bGamma_i$ must satisfy 
\begin{equation}\label{eq:gamma_constraint}
    \sum_{j=1}^K\bGamma_j^2 = \I.
\end{equation}
Thus, up to a multiplicative factor $1/d$ and an additive term $\tr{\D^2}$, the objective \eqref{eq:DPR_obj} can be written as:
\begin{equation}\label{eq:popriskDreduced}
    \beta^2\left(\tr{\M f(\M)}\right)- 2c_1\beta \cdot \sum_{i=1}^K D_i \cdot \tr{\bGamma_i^2},
\end{equation}
where $\M = \sum_{i=1}^K \M_i:= \sum_{i=1}^K \bGamma_i \B_i \B_i^\top \bGamma_i$. Recall that $f(x) = c_1^2 x + g(x)$, where $g$ is the sum of odd monomials. Hence, we will be able to lower bound the terms in the first trace of \eqref{eq:popriskDreduced} in a similar fashion to Proposition \ref{prop:highrate_min}.
Note that 
$$\tr{\M_i^2} = \langle \1, \M_i^{\circ 2} \1\rangle,$$
so
applying Theorem {A} in \cite{khare2021sharp} gives that
$$
(\bGamma_i\B_i\B_i^\top\bGamma_i)^{\circ{2}} \succeq \frac{1}{s_i}  \cdot \mathrm{Diag}(\bGamma_i^2) \mathrm{Diag}(\bGamma_i^2)^\top,
$$
where $s_i = \mathrm{rank}(\B_i\B_i^\top)$.
Thus, we have the bound
$$\tr{\M_i^2} \geq \frac{1}{s_i}\left(\tr{\bGamma_i^2}\right)^2$$
Since $xg(x)\geq 0$, we can lower bound the rest of the terms with the identity, i.e., 
$$
\tr{\M g(\M)} =\langle \1, \M \circ g(\M) \1 \rangle \geq g(1) \cdot n
$$
as $\mathrm{Diag}(\M) = \I$. Consequently, neglecting the cross-terms $\tr{\M_i \M_j}$ (as the trace of the product of PSD matrices is non-negative) we arrive at
$$
\tr{\M f(\M)}  \geq  g(1) \cdot n + c_1^2 \cdot\sum_{i=1}^K \frac{1}{s_i}\left(\tr{\bGamma_i^2}\right)^2.
$$
Defining $\gamma_i := \tr{\bGamma_i^2} \geq 0$, we arrive at the following lower bound on \eqref{eq:popriskDreduced}:
\begin{equation}\label{eq:popriskDLB}
    \beta^2 \left(g(1) \cdot n + \sum_{i=1}^K \frac{\gamma_i^2}{s_i}\right) - 2 \beta \cdot \sum_{i=1}^K D_i \gamma_i,
\end{equation}
where, with an abuse of notation, we rescale $g(1):=g(1)/c_1^2$ and $\beta := c_1\beta$.
Now, by choosing $\beta_i := \beta \gamma_i$ and using that  $\sum_{i=1}^K\gamma_i = n$ due to \eqref{eq:gamma_constraint}, the objective \eqref{eq:popriskDLB} is seen to be equivalent to \eqref{eq:popriskDLB_cv}. This shows that $  \eqref{eq:DPR_obj} \geq  \mathrm{LB}(\D)$.
We now give a brief outline of how one can obtain the optimal $s_i$ and $\beta_i$ for \eqref{eq:popriskDLB_cv}.

For finding the optimal $s_i$, it is more natural to still consider \eqref{eq:popriskDLB}.
Due to the block form \eqref{eq:weight_tying_noniso}, the $s_i$ have to satisfy the 
constraints in \eqref{eq:rank_constraints}.
Note that \eqref{eq:popriskDLB} evaluated at the optimal $\beta$ is equal to
\begin{equation}\label{eq:popriskDreduced2}
    \eqref{eq:popriskDLB} \geq -\frac{\left(\sum_{i=1}^K D_i\gamma_i\right)^2}{\left(g(1) \cdot n + \sum_{i=1}^K\frac{\gamma_i^2}{s_i}\right)}.
\end{equation}
The optimal $s_i$ for this objective are \emph{water-filled}, i.e.,
\begin{equation}\label{eq:water_filling}
    \begin{cases}
    \boldsymbol{s} = [n,0,\cdots,0], & n \leq k_1,\\
    \boldsymbol{s} = [k_1,k_2,\cdots,k_K], & d \leq n,\\
    \boldsymbol{s} = [k_1,\cdots,k_{\mathrm{id}(n) - 1},\mathrm{res}(n),0,\cdots,0] & \text{otherwise},
    \end{cases}
\end{equation}
where $\boldsymbol{s} = [s_1,\cdots,s_k]$ and $\mathrm{id}(n)$ denotes the first position at which
$$
\min\{n,d\} - \sum_{i=1}^{\mathrm{id}(n)} k_i < 0,
$$
and 
$$
\mathrm{res}(n) = \min\{n, d\} - \sum_{i=1}^{\mathrm{id}(n) - 1} k_i.
$$
This follows directly from Lemma \ref{lemma:two_buckets}.
It only remains to show that the optimal $\beta_i$ can be obtained via \eqref{eq:betaimain}, which is done in Lemma \ref{lem:LBD_KKT}. This concludes the proof.
\end{proof}

\begin{proof}[Proof of Proposition \ref{prop:Dlb_achievability}]
Except for terms of the form $\tr{\B_i \B_i^\top \B_j \B_j^\top}$, all the other terms can be estimated as in the proof of Proposition \ref{prop:highrate_min}. 
The only technical difference is that all the constants now depend on the ratios $\frac{k_i}{n}$.

We will show that, with probability at least $1- c \exp\left(-cd^\epsilon\right)$, for all $i \neq j$,
\begin{equation}\label{eq:trBlast}
 \tr{\B_i \B_i^\top \B_j \B_j^\top} \leq n^{\frac{1}{2}+\epsilon}.  
\end{equation}
Thus, by a simple union bound, we have that, with probability at least $1-\frac{c}{d^2}$, this bound holds jointly for all pairs $\B_i, \B_j$. 
It follows as in the proof of Lemma \ref{lem:optBresult} that we can write 
\begin{equation*}
  \B_i\B_i^\top = \P_i \U \D_i \U^\top \P_i,
\end{equation*}
where by abuse of notation we pushed the factor $\frac{n}{k_i}$ in $\D_i$ (which will only affect the constants $c,C$).
Here, $\P_i$ is a diagonal matrix such that, for any $\epsilon >0$, with probability at least $1-c \exp\left(-cd^\epsilon\right)$, we have that
 $$\opn{\P_i - \I} \leq n^{-\frac{1}{2} + \epsilon}.$$
To see this, first observe that $\Theta : (\R^{n \times n})^4 \mapsto \R$ given by 
$$ \Theta(\X_1,\X_2,\X_3,\X_4) = \tr{\X_1\U\D_i\U^\top \X_2 \X_3 \U\D_j\U^\top \X_4}$$ is  differentiable (as it is the composition of the trace function with 4-linear form).
 Since by construction 
 $$\tr{\U\D_i\U^\top  \U\D_j\U^\top} = \tr{\0} = 0,$$
 this implies that, with probability at least $1-\frac{c}{d^2}$,
 \begin{align*}
     0 &\leq \tr{\B_i\B_i^\top \B_j\B_j^\top} \\&= \tr{\P_i\U\D_i\U^\top \P_i \P_j \U\D_j\U^\top \P_j} \\
     &= \tr{\P_i\U\D_i\U^\top \P_i \P_j \U\D_j\U^\top \P_j}- \tr{\U\D_i\U^\top  \U\D_j\U^\top} \\
     &\leq Cnn^{-\frac{1}{2} + \epsilon},
 \end{align*}
 where in the last step we used that the derivative of the trace function is bounded by $n \cdot \opn{\cdot}$.
Thus, \eqref{eq:trBlast} holds.

By construction, the sum of all the cross terms is of the form 
$$\sum_{i \neq j}\tr{\M_i\M_j} ,$$
where $\M_i = \bGamma_i \B_i \B_i^\top \bGamma_i$, $\,\bGamma_i^2 = \frac{\gamma_i}{n} \I$ and $\sum_{i=1}^K\gamma_i = n$.
We have 
\begin{align*}
    \abs{\sum_{i \neq j}\tr{\M_i\M_j}} &= \abs{\sum_{i \neq j} \frac{\gamma_i \gamma_j}{n^2} \tr{\B_i\B_i^\top \B_j \B_j^\top}}\\
    &\leq \sum_{i\neq j} \frac{\gamma_i \gamma_j}{n^2}\abs{\tr{\B_i\B_i^\top \B_j \B_j^\top}}\\
    &\leq C\sum_{i\neq j} \frac{\gamma_i \gamma_j}{n^2} n^{\frac{1}{2} + \epsilon}\\
    &\leq Cn^{\frac{1}{2} + \epsilon},
\end{align*}
where in the third step we used a union bound on \eqref{eq:trBlast}  and in the last step we used $\sum_{i=1}^K\frac{\gamma_i}{n}=1$.
\end{proof}

\section{Details of Experiments and Additional Numerical Results}\label{appendix:numerics}

We first describe the training details 
and the whitening procedure that is used to preprocess natural images for MNIST (Figure \ref{figure:mnist_app}) and CIFAR-10 (Figures \ref{fig:cifar_white}, \ref{fig:augcomp} and \ref{fig:wcifapp}). Next, we give some remarks about the experiments concerning VAMP (Figure \ref{fig:comparison}) and about the discontinuous behaviour of the derivative of the lower bound highlighted in Figure \ref{fig:noniso_exps}. In addition, we present supplementary numerical experiments which cover extra classes of natural images.

\paragraph{Activation function and weight parameterization.} Note that the derivative of the sign activation is zero almost everywhere (except one point, which is the origin). In this view, we cannot use conventional gradient-based algorithms to find the optimal set of parameters for an autoencoder with the sign activation. We tackle this issue by using a straight-through estimator (see, for instance, \cite{yin2019understanding}) of the sign activation. During the forward pass
the activations of the first layer are computed for $\sigma(x) = \mathrm{sign}(x)$, while during the backward pass $\sigma(x) = \mathrm{tanh}(x/\tau)$ is used. Here, the temperature parameter $\tau>0$ controls how well the differentiable surrogate $\mathrm{tanh}(x/\tau)$ approximates $\mathrm{sign}(x)$, as 
$$
\lim_{\tau\rightarrow 0}\mathrm{tanh}(x/\tau) = \mathrm{sign}(x), \quad \forall x\in\mathbb{R}\setminus\{0\}.
$$
More precisely, the differentiable approximation becomes more accurate for smaller values of $\tau$. However, we also note that extremely small values of $\tau$ might cause numerical issues, since the  derivative of the differentiable surrogate diverges at the origin as $\tau \rightarrow 0$. For the numerical experiments, we 
pick $\tau\in[0.01,0.2]$, with the exact value depending on the specific setting.

Note that the constraint on the encoder weights $\|\B_{i,:}\|_2 = 1$ can be enforced via a simple reparameterization that forces the rows of $\B$ to lie on 
the unit sphere $\mathbb{S}^{d-1}$. More precisely, we use the following classical differentiable reparameterization of $\B^\top = [\b_1,\cdots,\b_n]$, where
$$
\b_i = \frac{\hat{\boldsymbol{b}}_i}{\norm{\hat{\boldsymbol{b}}_i}_2},
$$
with $\{\hat{\b}_i\}_{i=1}^n$ being the trainable parameters. We note that it is not clear a priori
whether we need to impose the constraints directly for the straight-through estimator, since during the forward pass we use the norm-agnostic $\mathrm{sign}$ function.

\begin{figure}[t]
\begin{tabular}{@{}c@{}cc@{}}
    \raisebox{-\height}{\includegraphics[width=0.4\textwidth]{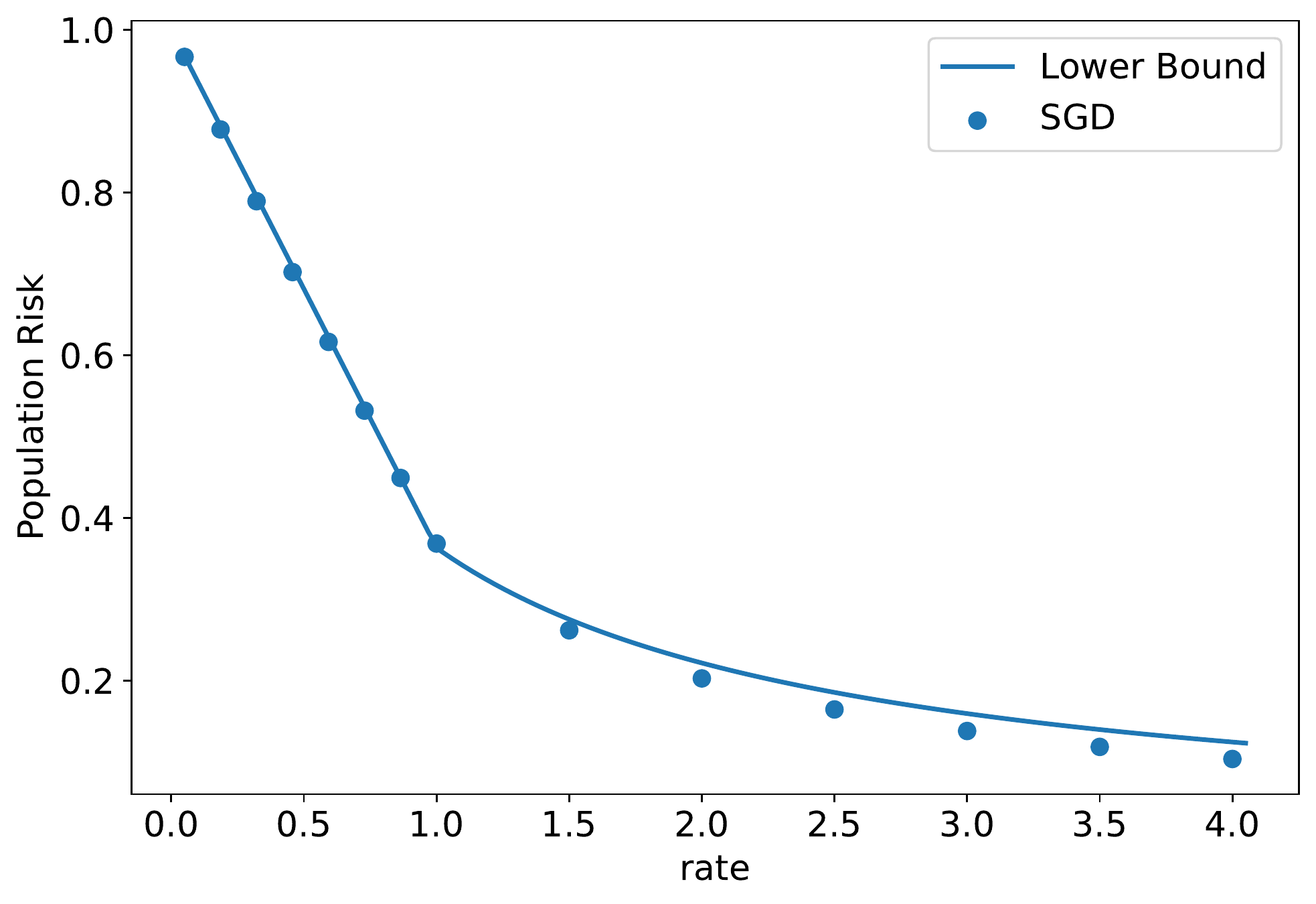}} & 
     \hspace{1.2em}\raisebox{-\height}{\includegraphics[width=0.4\textwidth]{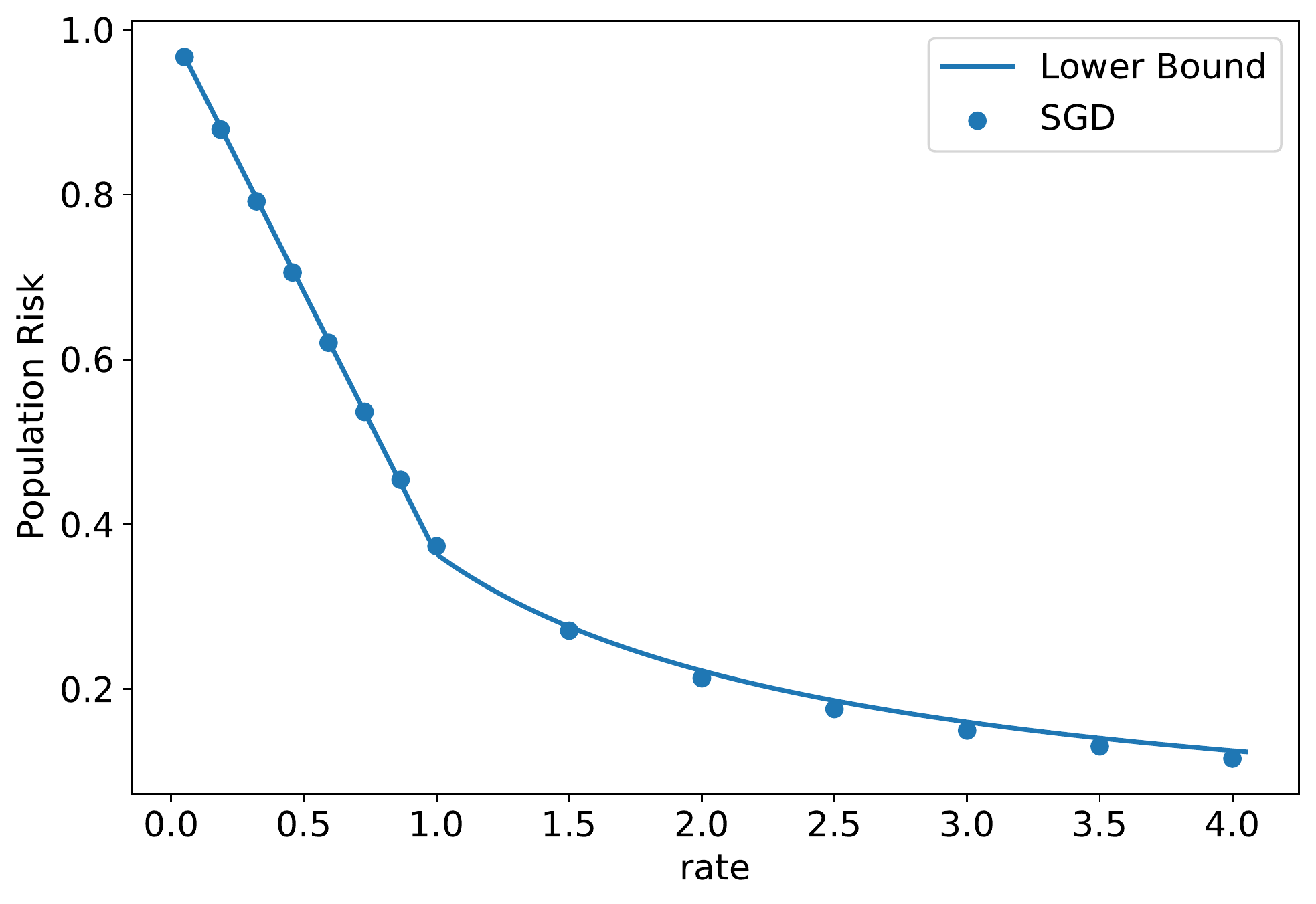}} &
    \hspace{1.0em}\begin{tabular}[t]{@{}cc@{}}
        \raisebox{-\height}{\includegraphics[width=0.13\textwidth]{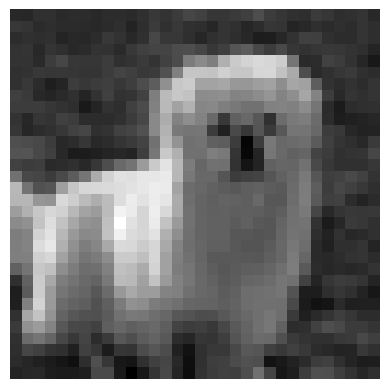}} &  \\[1.cm]
        \raisebox{-\height}{\includegraphics[width=0.13\textwidth]{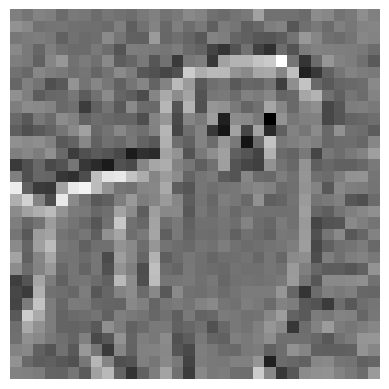}} & 
    \end{tabular}
\end{tabular}

\caption{Compression ($\sigma\equiv {\rm sign}$) of the CIFAR-10 ``dog'' class with a two-layer autoencoder. The data is \emph{whitened} so that $\boldsymbol{\Sigma}=\bI$: on top, an example of a grayscale image; on the bottom, the corresponding whitening. The blue dots are the population risk obtained via SGD, and they agree well with the solid line corresponding to the lower bounds of Theorem \ref{thm:tightlb_lowrate} and Proposition \ref{prop:highratelb}. Here, the effect of the number of augmentations used per image is shown. For the left plot each image was augmented $10$ times, while for the right plot each image was augmented $15$ times.}\label{fig:augcomp}\vspace{-1.2em}
\end{figure}

\paragraph{Augmentation and whitening.} For the experiments on natural images, we augment the data of each class $15$ times. This is done to emulate the optimization of the population risk, since the amount of initial data (approximately $5000$ samples per class) leads to a 
gap between empirical and population risks, 
especially for high rates. 
The effect of the data augmentation is represented in 
Figure \ref{fig:augcomp} for the whitened CIFAR-10 ``dog'' class. It can be seen that a mild amount of augmentation, i.e., $\times 10$ and $\times 15$, is already enough for our purposes, and the difference between the two plots is rather small. Notably, this amount of augmentation brings the dataset to the scale of the original data when all classes are considered (around $50000$ training examples).

The whitening procedure used in the experiments concerning isotropic data is performed as follows: given the \emph{centered} augmented data $\X \in \mathbb{R}^{\mathrm{n_{\rm samples}}\times d}$, we compute its empirical covariance matrix given by
$$
\hat{\bSigma} = \frac{1}{\mathrm{n_{\rm samples}} - 1} \cdot \sum_{i=1}^{\mathrm{n_{\rm samples}}} \X_{i,:} \X_{i,:}^\top,
$$
and then we multiply each input by the inverse square root of it, i.e.,
$$
\hat{\X}_{i,:} = \hat{\bSigma}^{-\frac{1}{2}} \X_{i,:}.
$$
The resulting whitened images
are represented in Figures \ref{fig:cifar_white}, \ref{fig:augcomp} and \ref{figure:mnist_app}.

In the experiments concerning non-isotropic data (Figures \ref{fig:cifar_white} and \ref{fig:nowhitecifarapp}), we center the data with the empirical mean and divide by a \emph{scalar} empirical variance computed across all the pixels, which is the standard preprocessing procedure widely used for computer vision tasks.

\paragraph{VAMP experiments.}
For the VAMP experiments, we implement the State Evolution (SE) recursion which exactly characterizes the limiting performance of VAMP as $d \to \infty$, see \cite{schniter2016vector,rangan2019vector} for an overview. We then plot the fixed point of said SE recursion.
A concrete description for VAMP is provided by Algorithm 2 in \cite{fletcher2018inference}, which however covers a more general multi-layer setting. 

\paragraph{``Jumps'' of the lower bound derivative.} The derivative switch described in Figure \ref{fig:noniso_exps} does not necessarily happen precisely at the point when the block is filled. A switch may occur at a later point since, even if $s_i > 0$, the corresponding optimal $\beta_i$ may be $0$. Intuitively, this phenomenon occurs in cases when it is still better to put more mass in the block where the rank is utilized to the fullest ($s_j = k_j$). This corresponds to the following condition on the derivatives of the objective \eqref{eq:popriskDLB_cv}: $$\frac{\partial \eqref{eq:popriskDLB_cv}}{\partial \beta_i}(0) > \frac{\partial \eqref{eq:popriskDLB_cv}}{\partial \beta_{j}}(\beta_{j}^{*}),$$ where $\beta_{i}^{*}$ stands for the optimal $\beta_{i}$ and $j$ denotes the first index at which $\beta_j^{*} >0$.
This behaviour occurs when
the spectrum $\D$ has a large variation in scale, e.g.,
$$
\D = [5, 0.02, 0.01].
$$
In this case, the last components will be utilized for $n$ significantly larger than $k_1$ ($n=k_1$ precisely characterizes the point where the rank of the first block of $\B$, i.e., $\B_1$, is the maximum possible). Note that, for this choice of $\D$, 
the plot of the derivative analogous to Figure \ref{fig:noniso_exps}
will not indicate such prominent ``jumps''. In fact, the contribution of the last components to the derivative value is less significant in comparison to the analogous quantity evaluated for the top-most eigenvalues.

\paragraph{Additional experimental data.} We also provide additional numerical simulations, similar to those presented in the body of the paper. In particular, we provide more class variations for the natural data experiments (MNIST and CIFAR-10).

\vspace{-1.2em}\begin{figure}[h]
\begin{tabular}{@{}cc@{}@{}cc@{}}
    \raisebox{-\height}{\includegraphics[width=0.35\textwidth]{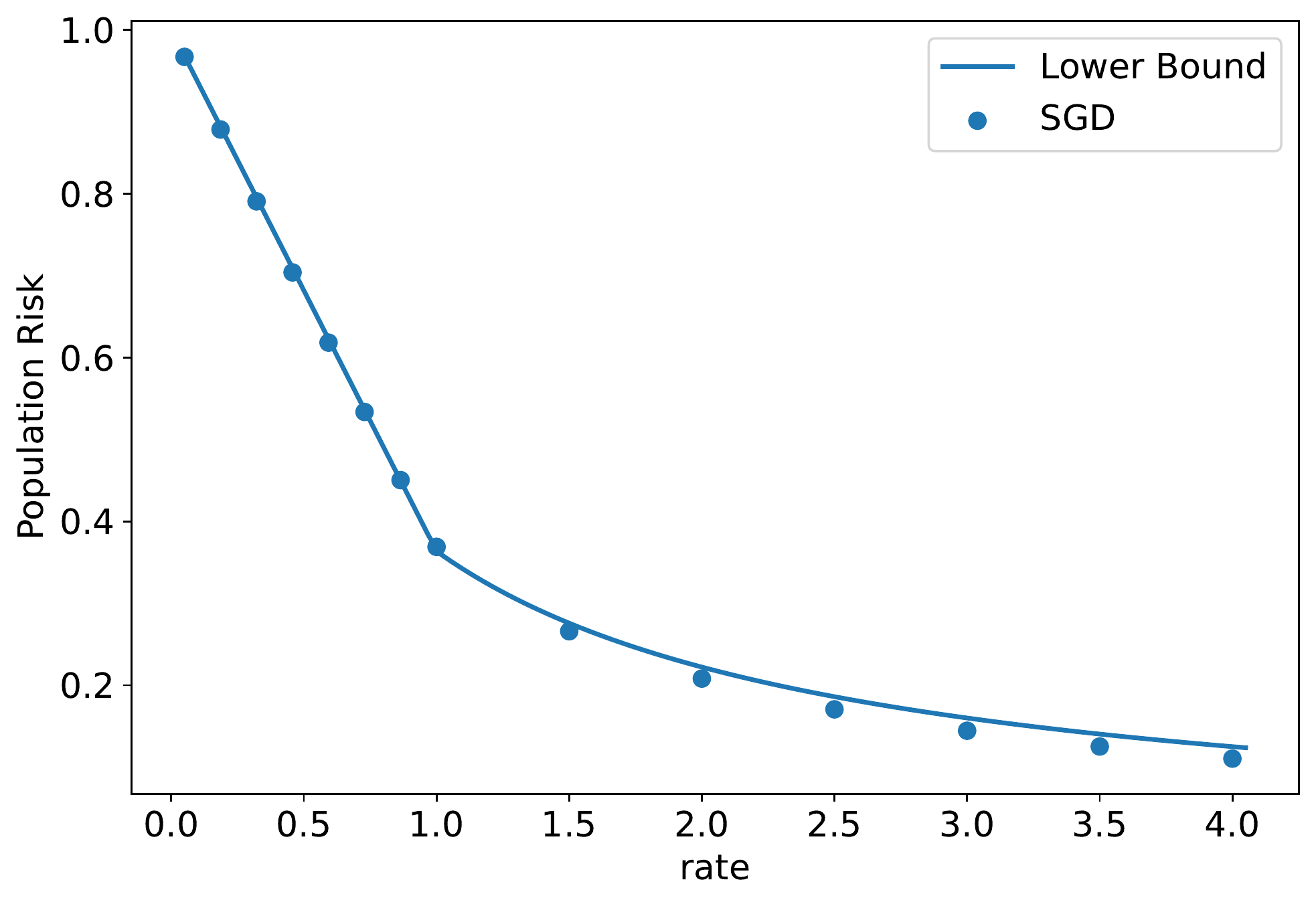}} & 
    \begin{tabular}[t]{@{}cc@{}}
        \raisebox{-\height}{\includegraphics[width=0.11\textwidth]{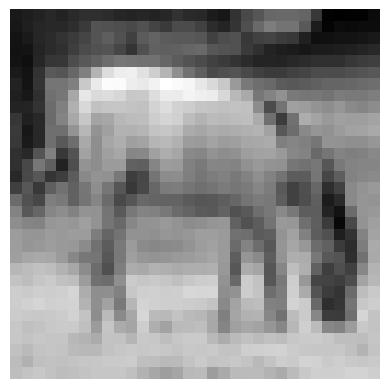}} &  \\[1.cm]
        \raisebox{-\height}{\includegraphics[width=0.11\textwidth]{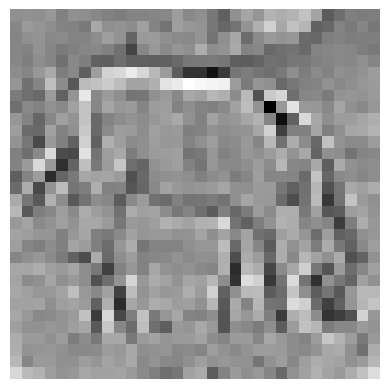}} & 
    \end{tabular}
    \raisebox{-\height}{\includegraphics[width=0.35\textwidth]{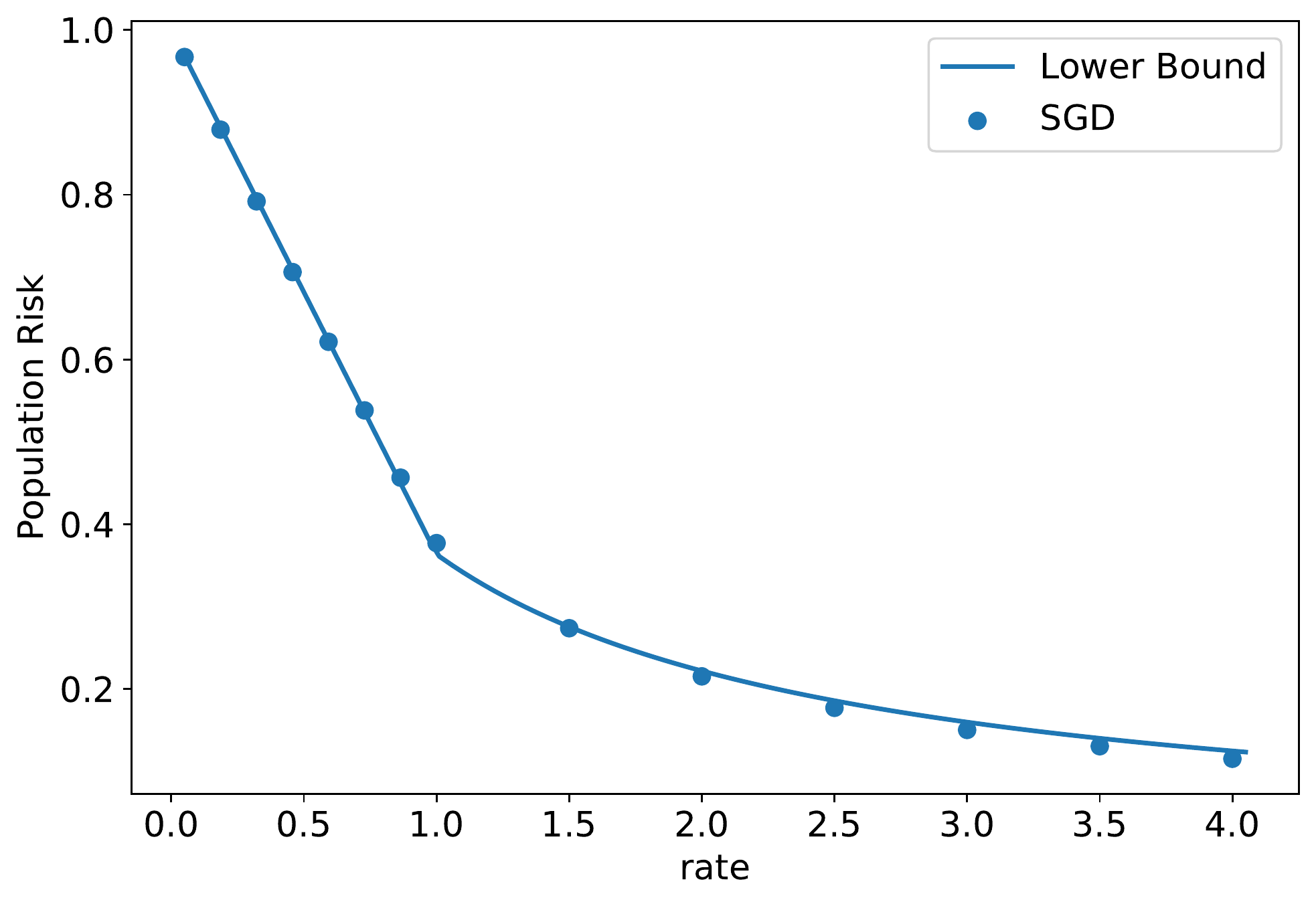}} &
    \hspace{1.2em}\begin{tabular}[t]{@{}cc@{}}
        \raisebox{-\height}{\includegraphics[width=0.11\textwidth]{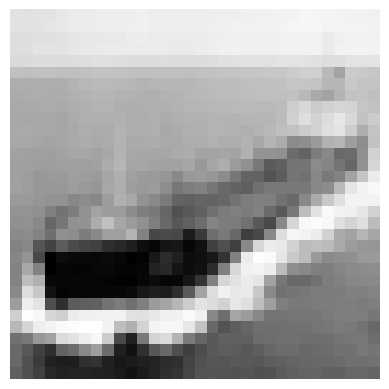}} &  \\[1.cm]
        \raisebox{-\height}{\includegraphics[width=0.11\textwidth]{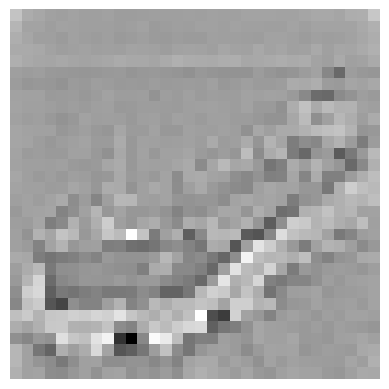}} & 
    \end{tabular}
\end{tabular}
\caption{Compression ($\sigma\equiv {\rm sign}$) of the CIFAR-10 ``horse'' class (left) and ``ship'' class (right) with a two-layer autoencoder. The data is \emph{whitened} so that $\boldsymbol{\Sigma}=\bI$: on top, an example of a grayscale image; on the bottom, the corresponding whitening. The blue dots are the population risk obtained via SGD, and they agree well with the solid line corresponding to the lower bounds of Theorem \ref{thm:tightlb_lowrate} and Proposition \ref{prop:highratelb}. Here, in both cases the amount of augmentations per image is equal to $15$.}\label{fig:wcifapp}\vspace{-2em}
\end{figure}

\begin{figure}[h]
\begin{tabular}{@{}cc@{}@{}cc@{}}
    \raisebox{-\height}{\includegraphics[width=0.35\textwidth]{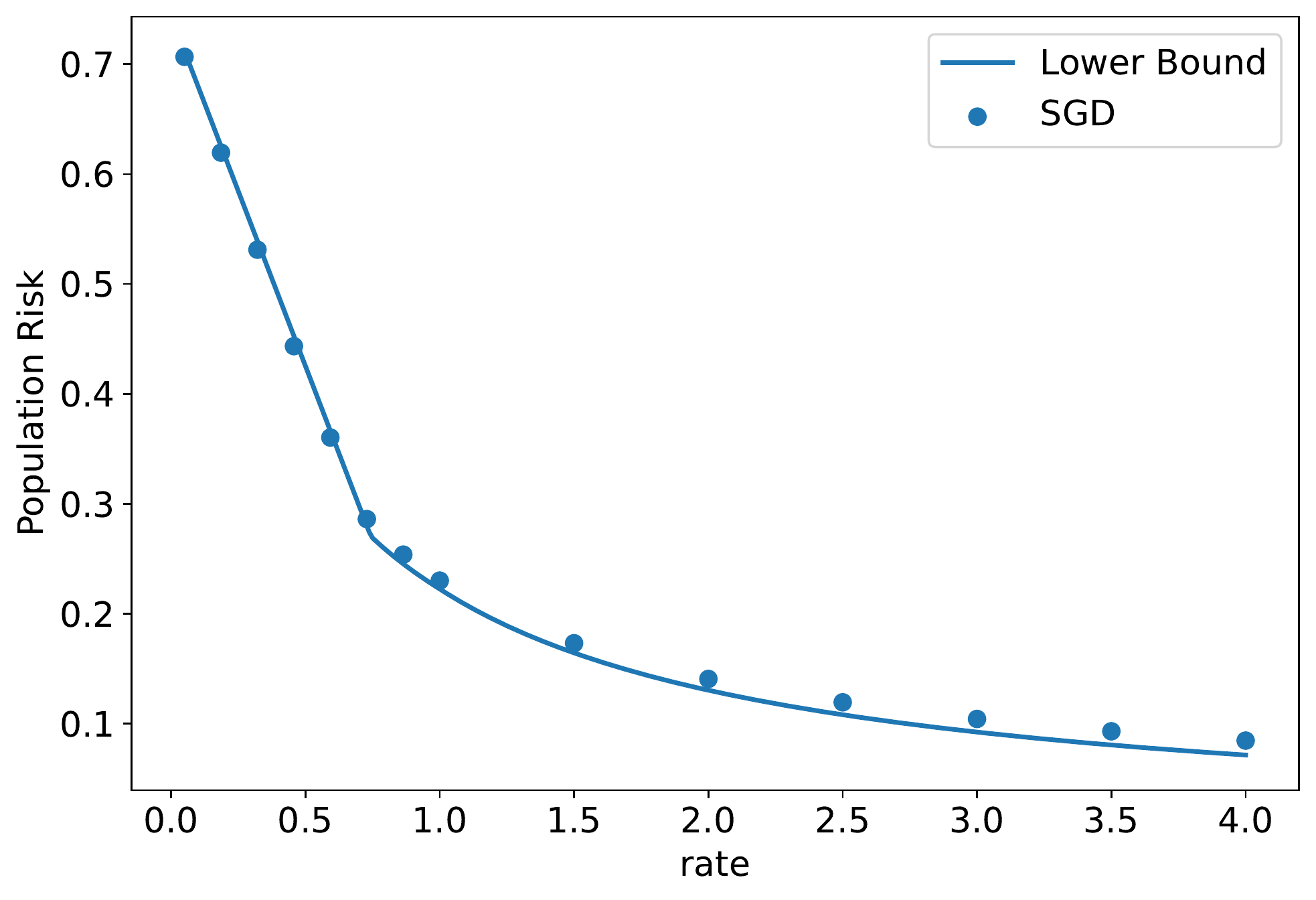}} & 
    \begin{tabular}[t]{@{}cc@{}}
        \raisebox{-\height}{\includegraphics[width=0.11\textwidth]{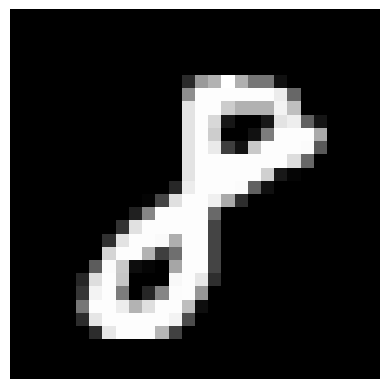}} &  \\[1.cm]
        \raisebox{-\height}{\includegraphics[width=0.11\textwidth]{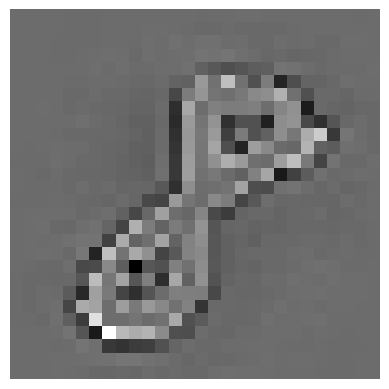}} & 
    \end{tabular}
    \raisebox{-\height}{\includegraphics[width=0.35\textwidth]{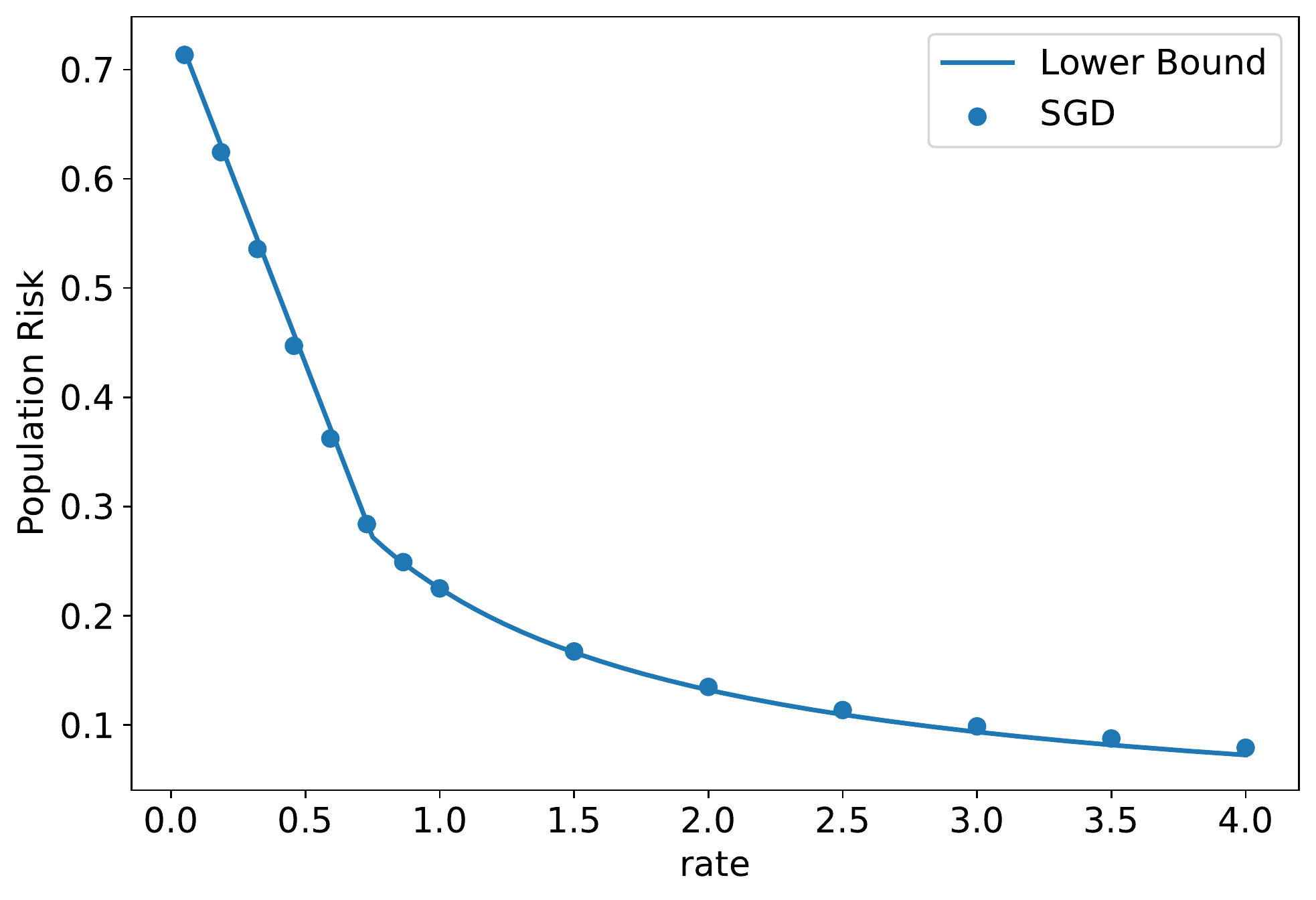}} &
    \hspace{1.2em}\begin{tabular}[t]{@{}cc@{}}
        \raisebox{-\height}{\includegraphics[width=0.11\textwidth]{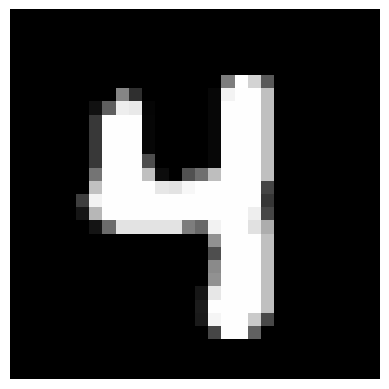}} &  \\[1.cm]
        \raisebox{-\height}{\includegraphics[width=0.11\textwidth]{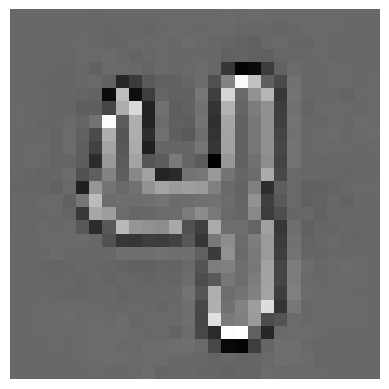}} & 
    \end{tabular}
\end{tabular}
\caption{
Compression ($\sigma\equiv {\rm sign}$) of the MNIST ``8'' class (left) and ``4'' class (right) with a two-layer autoencoder. The data is \emph{whitened} so that $\boldsymbol{\Sigma}=\bI$: on top, an example of a grayscale image; on the bottom, the corresponding whitening. The blue dots are the population risk obtained via SGD, and they agree well with the solid line corresponding to the lower bounds of Theorem \ref{thm:tightlb_lowrate} and Proposition \ref{prop:highratelb}. Here, in both cases the amount of augmentations per image is equal to $10$.
}\label{figure:mnist_app}
\end{figure}

\makeatletter
\setlength{\@fptop}{0pt}
\makeatother
\begin{figure}[t!]
\begin{tabular}{@{}cc@{}@{}cc@{}}
    \raisebox{-\height}{\includegraphics[width=0.35\textwidth]{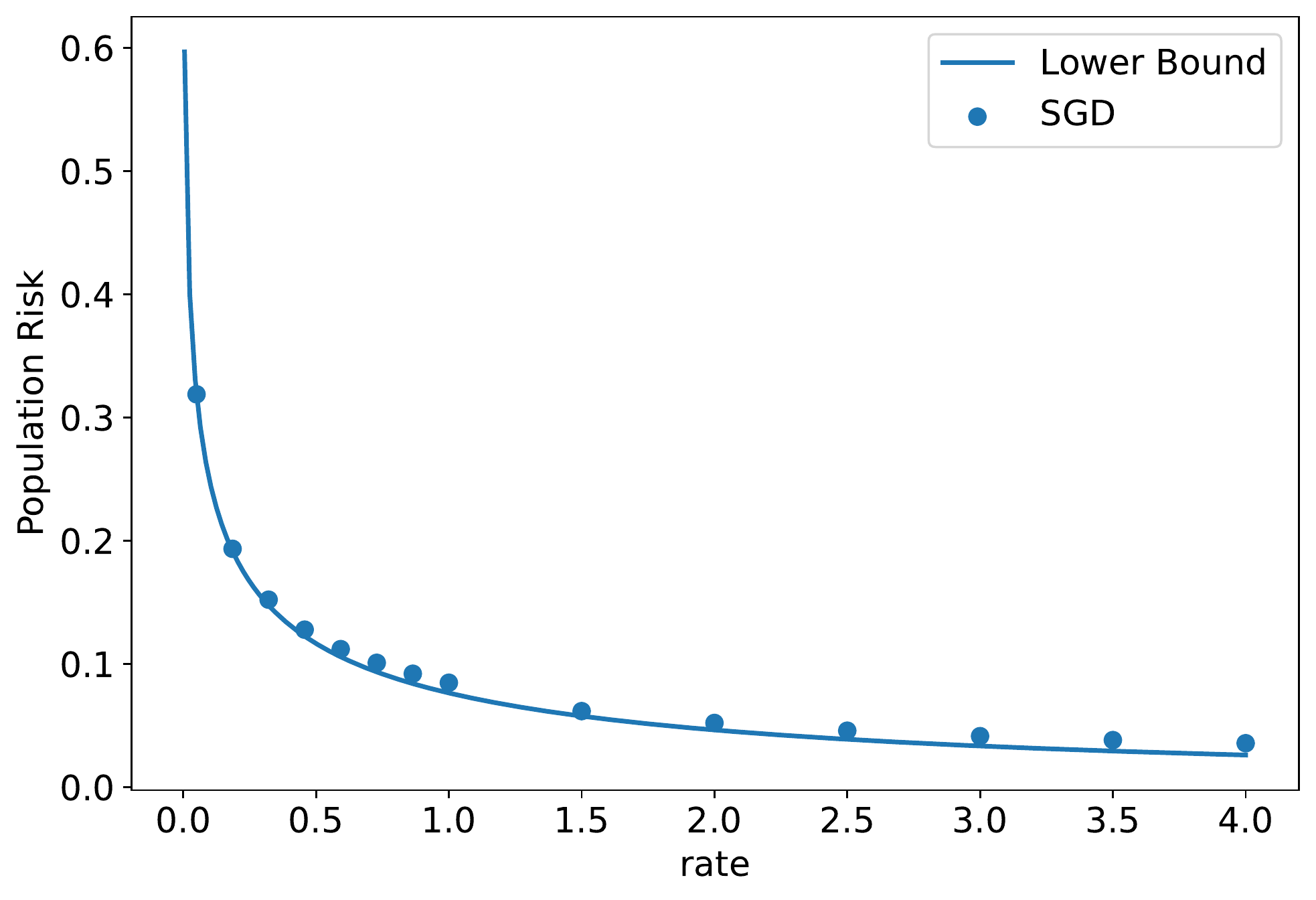}} & 
    \begin{tabular}[t]{@{}cc@{}}
        \raisebox{-\height}{\includegraphics[width=0.11\textwidth]{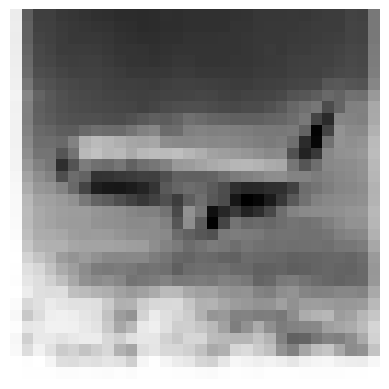}} &  \\[1.cm]
        \raisebox{-\height}{\includegraphics[width=0.11\textwidth]{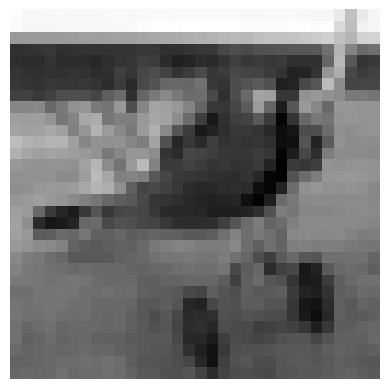}} & 
    \end{tabular}
    \raisebox{-\height}{\includegraphics[width=0.35\textwidth]{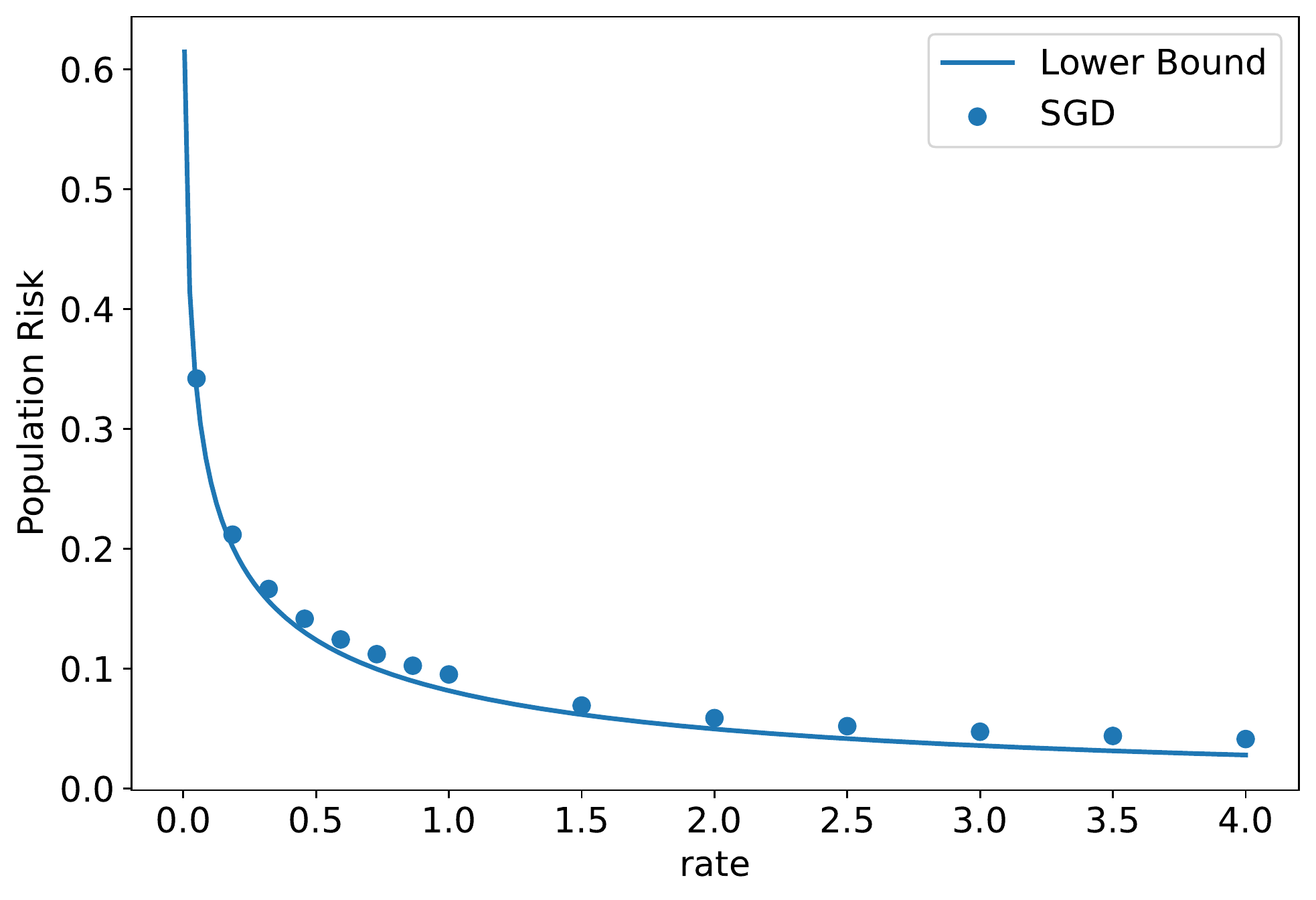}} &
    \hspace{1.2em}\begin{tabular}[t]{@{}cc@{}}
        \raisebox{-\height}{\includegraphics[width=0.11\textwidth]{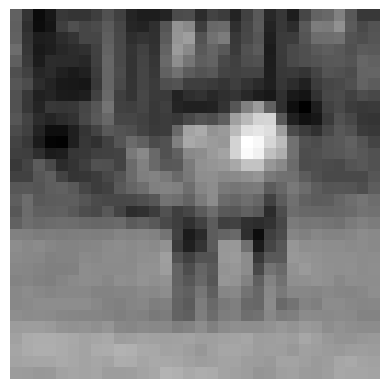}} &  \\[1.cm]
        \raisebox{-\height}{\includegraphics[width=0.11\textwidth]{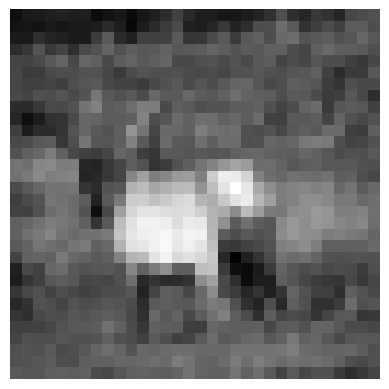}} & 
    \end{tabular}
\end{tabular}
\caption{
Compression ($\sigma\equiv {\rm sign}$) of the CIFAR-10 ``airplane'' class (left) and ``deer'' class (right) with a two-layer autoencoder. The data is \emph{not whitened} ($\boldsymbol{\Sigma}\neq \bI$). The blue dots are the SGD population risk, and they are close to the lower bound of Theorem \ref{thm:wtD_lb}.
Here, in both cases the amount of augmentations per image is equal to $15$.}\label{fig:nowhitecifarapp}
\end{figure}

\end{document}